\def\eqref#1{equation~\ref{#1}}
\def\1{\bm{1}}
\def\vw{{\bm{w}}}
\def\vx{{\bm{x}}}
\DeclareMathAlphabet{\mathsfit}{\encodingdefault}{\sfdefault}{m}{sl}
\SetMathAlphabet{\mathsfit}{bold}{\encodingdefault}{\sfdefault}{bx}{n}
\newcommand{\E}{\mathbb{E}}
\newcommand{\R}{\mathbb{R}}
\newcommand{\Var}{\mathrm{Var}}
\definecolor{halfgray}{gray}{0.55}
\definecolor{webgreen}{rgb}{0,.5,0}
\definecolor{webbrown}{rgb}{.6,0,0}
\definecolor{webblue}{rgb}{0,0,0.930}
\definecolor{RoyalBlue}{cmyk}{1, 0.50, 0, 0}
\definecolor{halfgray}{gray}{0.55}
\definecolor{webgreen}{rgb}{0,.5,0}
\definecolor{webbrown}{rgb}{.6,0,0}
\definecolor{webblue}{HTML}{0000ED}
\definecolor{linkcolor}{HTML}{991408}  
\definecolor{citecolor}{HTML}{2E7E2A}  
\definecolor{filecolor}{HTML}{131877}  
\definecolor{menucolor}{HTML}{727500}  
\definecolor{runcolor} {HTML}{137776}  
\definecolor{urlcolor} {HTML}{0a2bbf}  
\newtheorem{proposition}{Proposition}[section]
\newtheorem{corollary}{Corollary}[proposition]
\newcommand{\aref}[1]{\hyperref[#1]{Appendix~\ref*{#1}}}
\renewcommand{\aref}[1]{\hyperref[#1]{\S\ref*{#1}}}
\newcommand{\opn}[1]{\operatorname{#1}}
\DeclareMathOperator{\sgn}{sgn}                         
\DeclareMathOperator{\PP}{P\/}  
\newcommand{\NSYS}[1]{\mathbb{#1}}                                  
\newcommand{\Real}{\NSYS{R}}                                           
\newcommand\independent{\protect\mathpalette{\protect\independenT}{\perp}}
\def\independenT#1#2{\mathrel{\rlap{$#1#2$}\mkern2mu{#1#2}}}
\newcommand{\mbf}[1]{\textbf{\underline{#1}}}
\newcommand{\mbs}[1]{\underline{#1}}
\newcommand{\mbns}[1]{\textit{#1}}
\newcommand{\wpm}[1]{{\scriptstyle\transparent{0.9} \pm #1}}  
\newcommand{\heatmapcell}[4]{%
	\cellcolor{black!\fpeval{min(100,max(0,100*(#1 - #3)/(#4 - #3)))}}%
	\ifthenelse{\fpeval{round(100*(#1 - #3)/(#4 - #3))}>50}{\color{white}}{}%
	{#2}%
}
\newcommand{\heatmapcellw}[4]{%
	\cellcolor{black!\fpeval{min(100,max(0,100*(#4 - #1)/(#4 - #3)))}}%
	\ifthenelse{\fpeval{round(100*(#4 - #1)/(#4 - #3))}>50}{\color{white}}{}%
	{#2}%
}
\newcommand{\hider}[1]{}
\newcommand{\revision}[1]{{#1}}
\title{Logical Activation Functions: Logit-space equivalents of Probabilistic Boolean Operators}
\author{%
Scott C.~Lowe$^{1,2,*}$, Robert Earle$^{1,2}$, Jason d'Eon$^{1,2}$, Thomas Trappenberg$^{1}$, Sageev Oore$^{1,2}$\\
\AND\\
$^1$Faculty of Computer Science \\
Dalhousie University \\
Halifax, Nova Scotia \\
Canada \\
\\
$^*$Correspondence: scottclowe@gmail.com
\And\\
$^2$Vector Institute for Artificial Intelligence \\
Toronto, Ontario \\
Canada
}
\begin{document}

\maketitle

\begin{abstract}



The choice of activation functions and their motivation is a long-standing issue within the neural network community.
Neuronal representations within artificial neural networks are commonly understood as logits, representing the log-odds score of presence of features within the stimulus.
We derive logit-space operators equivalent to probabilistic Boolean logic-gates AND, OR, and XNOR for independent probabilities.
Such theories are important to formalize more complex dendritic operations in real neurons, and these operations can be used as activation functions within a neural network, introducing probabilistic Boolean-logic as the core operation of the neural network.
Since these functions involve taking multiple exponents and logarithms, they are computationally expensive and not well suited to be directly used within neural networks.
Consequently, we construct efficient approximations named $\text{AND}_\text{AIL}$ (the AND operator Approximate for Independent Logits), $\text{OR}_\text{AIL}$, and $\text{XNOR}_\text{AIL}$, which utilize only comparison and addition operations, have well-behaved gradients, and can be deployed as activation functions in neural networks.
Like MaxOut, $\text{AND}_\text{AIL}$ and $\text{OR}_\text{AIL}$ are generalizations of ReLU to two-dimensions.
While our primary aim is to formalize dendritic computations within a logit-space probabilistic-Boolean framework, we deploy these new activation functions, both in isolation and in conjunction to demonstrate their effectiveness on a variety of tasks including image classification, transfer learning, abstract reasoning, and compositional zero-shot learning.

\end{abstract}


\section{Introduction}

Non-linear activation functions are essential in artificial neural networks \revision{(ANNs)} to form higher-order representations, since otherwise the network would be degeneratively equivalent to a single linear layer. Most activation functions represent a simple non-linearity despite evidence of much more complex non-linear integration and computations in dendrites \citep{Hentschel+Fine2004, London2005, Payeur2019}. For example, \citet{Gidon2020} recently demonstrated that a single biological neuron can compute the XOR of its inputs, a property long known to be lacking from artificial neurons \citep{perceptrons}. \hider{Can we mimic this complexity in an ANN's activation function in a scalable way?} While simple activation functions work in \revision{ANNs
} in the sense that more complex operations can be formed from the combination of several layers of neurons, understanding the function and impact of advanced dendritic operations in networks is important.
\revision{In this work, we add some of this behaviour to neural activations, corresponding to shifting some of the network's complexity from its global structure to the neural level. There is far more complexity in biological neurons than in the abstractions that we consider here, but we make a step in the direction of using more complex neurons in ANNs.} \revision{We also develop a theoretical underpinning for higher-order activation functions (e.g. like MaxOut, \citealp{maxout}) in a probabilistic framework.}
\revision{We hypothesize that such architectures will be more parameter-efficient in situations where their assumptions hold.}


Neuronal representations within ANNs are commonly understood as logits, representing the log-odds score of presence (versus absence) of features within the stimulus.
\revision{From a Bayesian perspective, a ReLU-like operation corresponds to the removal of all evidence for the lack of a feature. Under the logit interpretation of ANN potentiations, this seems unreasonable. 
This can be seen in some of the ways we interact with neural networks: we must apply batch-norm before activations and not after them; when doing transfer learning from an embedding space we must use pre-activation potentiations instead of activations. Can ANNs do better if we design an architecture which treats potentiations as logits?}

We \revision{thus} derive logit-space operators equivalent to probabilistic Boolean logic-gates AND, OR, and XNOR for independent probabilities.
\revision{Networks constructed in this way can be interpreted as \textbf{performing logical operations} using \textbf{point-estimates of probabilities}, in a similar manner to a Bayesian network.
This brings operations from the symbolism framework of AI (which is more similar to deliberative thinking, or System 2 of \citealp{Kahneman2011}) into the connectionist framework of ANNs (which is more like instinctive, System 1, thinking).
}
We \revision{also} construct computationally feasible approximations to these functions with well-behaved gradients.
These new activation functions, which are generalizations of ReLU to two-dimensions, are then applied on benchmark datasets to demonstrate their effectiveness on a diverse range of tasks including image classification, transfer learning, abstract reasoning, and compositional zero-shot learning.
The new principled approach we present introduces new ways to redistribute computation from the network into the neuronal mechanisms, and build more parameter-efficient models.
We demonstrate \revision{the} effective\revision{ness of} activation functions based on these ideas\revision{,} and expect future work to build on this.

\section{Background}

\revision{Early} artificial neural networks featured either logistic-sigmoid or tanh as their activation function, 
\revision{motivated by} the idea that each layer of the network is building another layer of abstraction of the stimulus space from the last layer.
Each neuron in a layer identifies whether certain properties or features are present within the stimulus, and the pre-activation (potentiation) value of the neuron indicates a score or logit for the presence of that feature.
The sigmoid function, $\sigma(x)=\nicefrac{1}{(1+e^{-x})}$, was hence a natural choice of activation function, since as with logistic regression, this will convert the logits of features into probabilities.
There is evidence that this interpretation has merit.
Previous work has been done to identify which features neurons are tuned to. Examples include LSTM neurons tracking quotation marks, line length, and brackets
\citep{karpathy2015visualizing} 
\revision{and} sentiment \citep{radford2017learning}; 
\revision{projecting} features back to the input space to view them \citep{olah2017feature}; and interpretable combinations of neural activities \citep{olah2020zoom}.

Sigmoidal activation functions are no longer commonly used within machine learning between layers of representations, though sigmoid is still widely used for gating operations which scale the magnitude of other features in an attention-like manner.
The primary disadvantage of the sigmoid activation function is its vanishing gradient --- as the potentiation rises, activity converges to a plateau, and hence the gradient goes to zero.
This prevents feedback information propagating back through the network from the loss function to the early layers of the network, which consequently prevents it from learning to complete the task.

The Rectified Linear Unit activation function \citep{neocognitron,Jarrett2009,Nair2010}, $\opn{ReLU}(x)=\opn{max}(0,x)$, does not have this problem, since in its non-zero regime it has a gradient of 1.
Another advantage of ReLU is it has very low computational demands.
Since it is both effective and efficient, it has proven to be a highly popular choice of activation function.
The chief drawback to ReLU is that it has no sensitivity to changes across half of its input domain, which prevents updates on stimuli which trigger its ``off'' state and can even lead to neuronal death\footnote{Though this problem is very rare when using BatchNorm to stabilize feature distributions.}.
Variants of ReLU have emerged, aiming to smooth out its transition between domains and provide a gradient in its inactive regime.
These include ELU ~\citep{elu}, CELU \citep{celu}, SELU \citep{selu}, GELU \citep{gelu}, SiLU \citep{silu,swish}, and Mish \citep{mish}.
However, all these activation functions still bear the general shape of ReLU and truncate negative logits.

Fuzzy logic operators are generalizations of Boolean logic operations to continuous variables, using rules similar to applying logical operators in probability space.
Prior work has explored networks of fuzzy logic operations, including some which use an activation function constituting a learnable interpolation between fuzzy logic operators \citep{Godfrey2017}.
The activation functions we introduce \revision{here} are motivated similarly to fuzzy logic operators, but designed to operate in logit space instead of in probability space, which better reflects the behaviour and space of pre-activation units.

\revision{
Bayesian neural networks (BNNs) are probabilistic models which can represent the uncertainty in their model parameters, and hence uncertainty in their outputs.
This differs from standard ANNs which use only a single point-estimate of each model parameter, and hence also in their neural activations.
By making their priors explicit and modelling their uncertainty, Bayesian networks can be better calibrated and less vulnerable to overfitting than ANNs.
However, BNNs are more challenging to train, and cannot reasonably be scaled up to the deep architectures which are possible with standard ANNs and necessary in order to learn a sufficiently complex model to solve highly complex tasks.
}

In this work we contribute to the theoretical underpinning of neural activation functions by developing activation functions based on the principle that neurons encode logits --- scores that represent the presence of features in the log-odds space.
In \autoref{s:derivation} we derive and define these functions in detail for different logical operators, and then consider their performance on numerous task types including parity (\autoref{s:parity}), image classification (\autoref{s:mnist}~and~\autoref{s:cifar}), transfer learning (\autoref{s:transfer}), abstract reasoning (\aref{s:abstractreasoning}), soft-rule guided classification as exemplified by the Bach chorale dataset (\autoref{s:chorales}), and compositional zero-shot learning~(\aref{s:czsl}).
These tasks were selected to (1) survey the performance of the new activations on existing benchmark tasks, and (2) evaluate their performance on tasks which we suspect in particular may require logical reasoning and hence benefit from activation functions which apply these logical operations to logits.

\section{Derivation}
\label{s:derivation}

Manipulation of probabilities in logit-space is known to be more efficient for many calculations.
For instance, the log-odds form of Bayes' Rule (\autoref{eq:bayesrulelogit}) states that the posterior logit equals the prior logit plus the log of the likelihood ratio for the new evidence (the log of the Bayes factor).
Thus, working in logit-space allows us to perform Bayesian updates on many sources of evidence simultaneously, merely by summing together the log-likelihood ratios for the evidence (see \aref{a:bayesrule}).
A weighted sum may be used if the amount of credence given to the sources differs --- and this is precisely the operation performed by a linear layer in a neural network.

When considering sets of probabilities, a natural operation is \revision{to measure} the joint probability of two events both occurring--- the AND operation.
Suppose our input space is $x\in[0,1]^2$, and the goal is to output $y>0$ if $x_i=1 \,\forall\, i$, and $y<0$ otherwise, using model with a weight vector $w$ and bias term $b$, such that $y=w^T x + b$.
This can be trivially solved with the weight matrix $w=[1,1]$ and bias term $b=-1.5$.
However, since this is only a linear separator, the solution can not generalize to the case $y>0$ iff $x_i>0 \,\forall\, i$.
Similarly, let us consider how the OR function is solved with a linear layer.
Our goal is to output $y>0$ if $\exists \, x_i=1$, and $y<0$ otherwise.
The binary case can be trivially solved with the weight matrix $w=[1,1]$ and bias term $b=-0.5$.
The difference between the solution for OR and the solution for AND is only an offset to our bias term.
In each case, if the input space is expanded beyond binary to $\Real^2$, the output can be violated by changing only one of the arguments.



\subsection{AND}

Suppose we are given $x$ and $y$ as the logits for the presence (vs absence) of two events, $X$ and $Y$.
These logits have equivalent probability values, which can be obtained using the sigmoid function, $\sigma(u) = (1 + e^{-u})^{-1}$.
Let us assume that the events $X$ and $Y$ are independent of each other.
In this case, the probability of both events occurring (the joint probability) is $\PP(X, Y) = \PP(X \land Y) = \PP(X) \, \PP(Y) = \sigma(x) \, \sigma(y)$.
\revision{}
However, we wish to remain in logit-space, and must determine the logit of the joint probability, $\opn{logit}(\PP(X, Y)$.
This is given by
\begin{align}
    \opn{AND_{IL}} :=
    \opn{logit}(\PP(X \land Y)_{x \independent y})
    &= \log \left( \frac{p}{1 - p} \right), \; \text{where} \; p = \sigma(x) \, \sigma(y), \nonumber \\
    &= \log \left( \frac{\sigma(x) \, \sigma(y)}{1 - \sigma(x) \, \sigma(y)} \right), \label{eq:and_il}
\end{align}
which we coin as $\opn{AND_{IL}}$, the AND operator for independent logits (IL).
This 2d function is illustrated as a contour plot (\autoref{fig:AND}, left \revision{panel}).
Across the plane, the order of magnitude of the output is the same as at least one of the two inputs, scaling approximately linearly.

The approximately linear behaviour of the function is suitable for use as an activation function (no vanishing gradient), however taking exponents and logs scales poorly from a computational perspective.
Hence, we developed a computationally efficient approximation as follows.
Observe that we can loosely approximate $\opn{AND_{IL}}$ with the minimum function (\autoref{fig:AND}, right panel).
This is equivalent to assuming the probability of \textbf{both} $X$ and $Y$ being true equals the probability of the \textbf{least likely} of $X$ and $Y$ being true --- a na\"ive approximation which holds well in three quadrants of the plane, but overestimates the probability when both $X$ and $Y$ are unlikely.
In this quadrant, when both $X$ and $Y$ are both unlikely, a better approximation for $\opn{AND_{IL}}$ is the sum of their logits.

We thus propose $\opn{AND_{AIL}}$, a linear-approximate AND function for independent logits (AIL, i.e. approximate IL).
\begin{equation}
    \opn{AND_{AIL}}(x, y) :=
        \begin{cases}
            x + y, & x < 0, \, y < 0 \\
            \opn{min}(x, y), & \text{otherwise}
        \end{cases}
    \label{eq:and_ail}
\end{equation}
As shown in \autoref{fig:AND} (left, middle), we observe that their output values and shape are very similar.

\begin{figure}[h]
    \centering
    \includegraphics[scale=0.47]{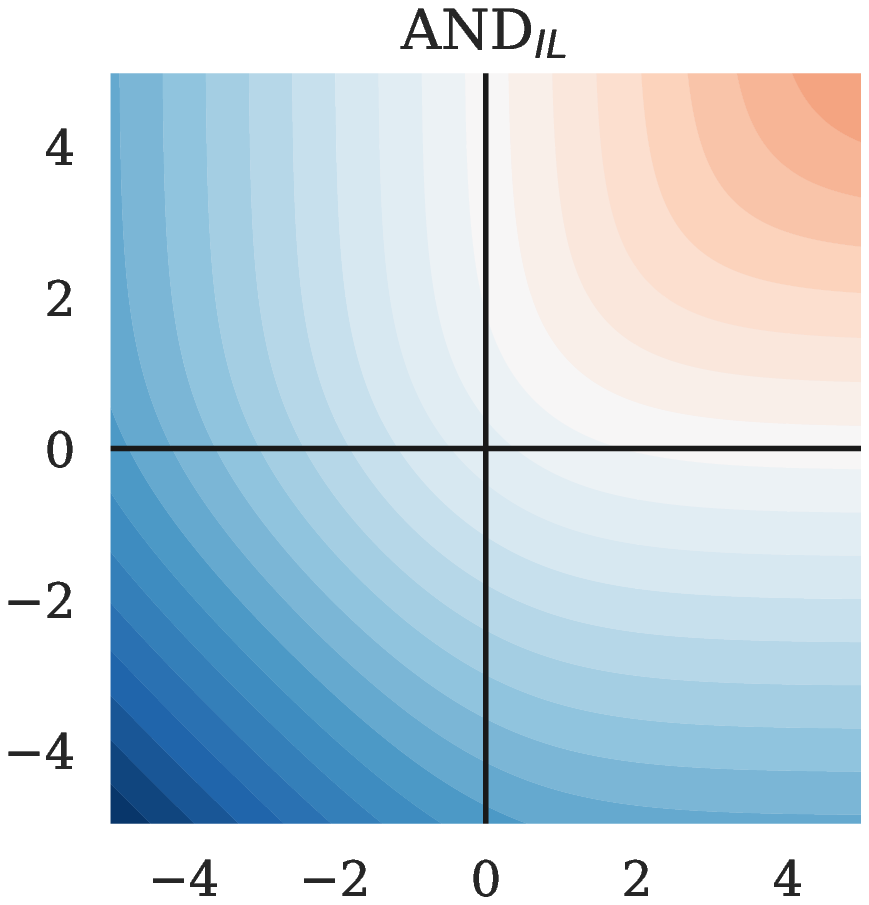}
    \includegraphics[scale=0.47]{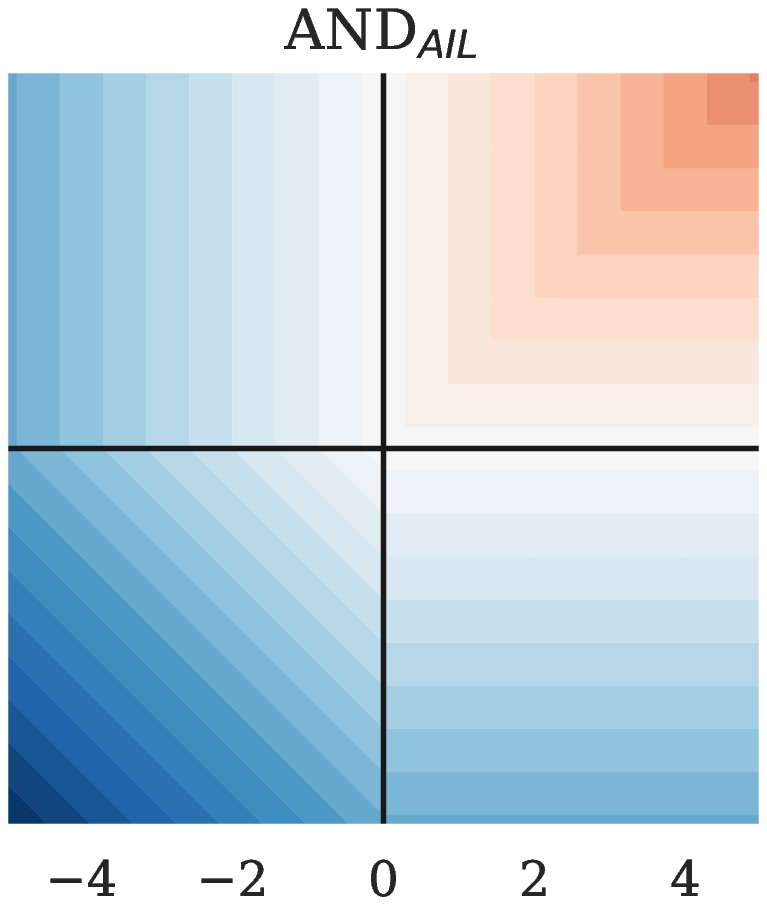}
    \includegraphics[scale=0.47]{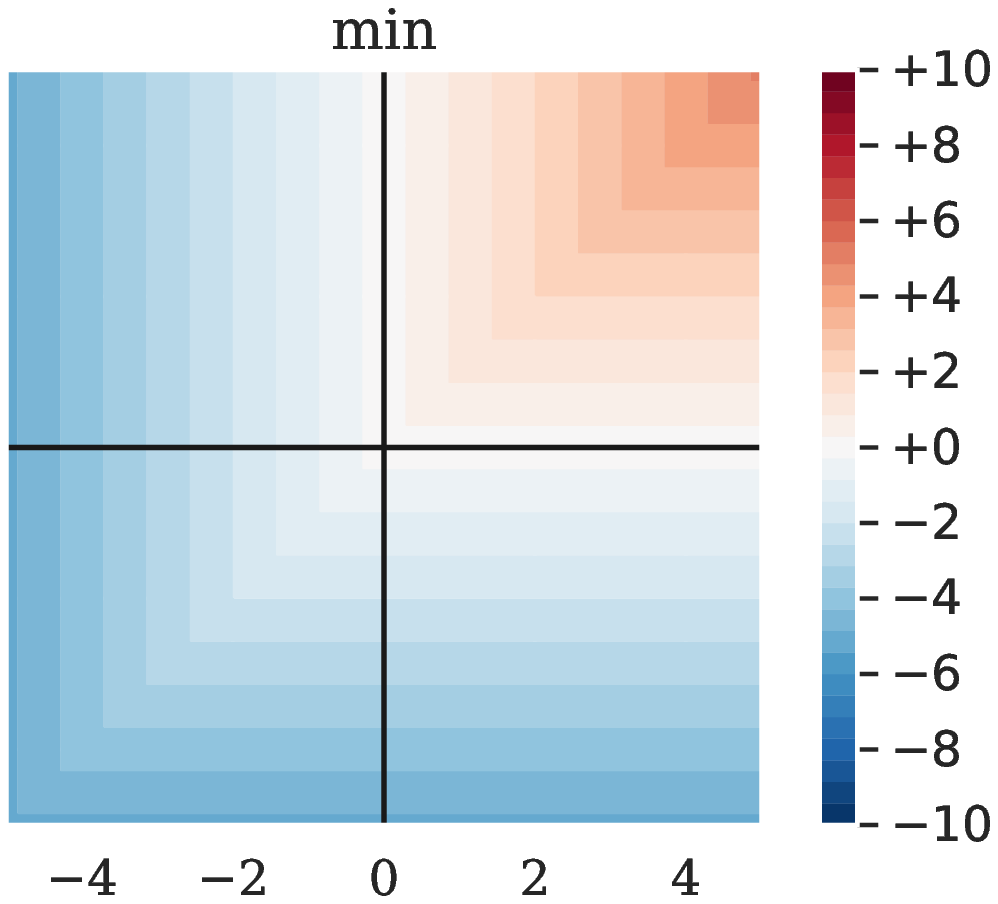}
\caption{
Heatmap comparing the outputs for the exact logit-space probabilistic-and function for independent logits, $\opn{AND_{IL}}(x, y)$; our constructed approximation, $\opn{AND_{AIL}}(x, y)$; and $\opn{max}(x, y)$.
}
\label{fig:AND}
\end{figure}

\subsection{OR}

Similarly, we can construct the logit-space OR function, for independent logits.
For a pair of logits $x$ and $y$, the probability that either of the corresponding events is true is given by $p = 1 - \sigma(-x) \, \sigma(-y)$.
This can be converted into a logit as
\begin{equation}
    \opn{OR_{IL}}(x, y)
    := \opn{logit}(\PP(X \lor Y)_{x \independent y})
    = \log \left( \frac{p}{1 - p} \right), \; \text{with} \; p = 1 - \sigma(-x) \, \sigma(-y)
    \label{eq:or-il}
\end{equation}
which can be roughly approximated by the max function.
This is equivalent to setting the probability of \textbf{either} of event $X$ or $Y$ occurring to be equal to the probability of the \textbf{most likely} event.
This underestimates the upper-right quadrant (below), which we can approximate better as the sum of the input logits, yielding
\begin{equation}
    \opn{OR_{AIL}}(x, y) :=
        \begin{cases}
            x + y, & x > 0, \, y > 0 \\
            \opn{max}(x, y), & \text{otherwise}
        \end{cases}
    \label{eq:or_ail}
\end{equation}

\begin{figure}[h]
    \centering
    \includegraphics[scale=0.47]{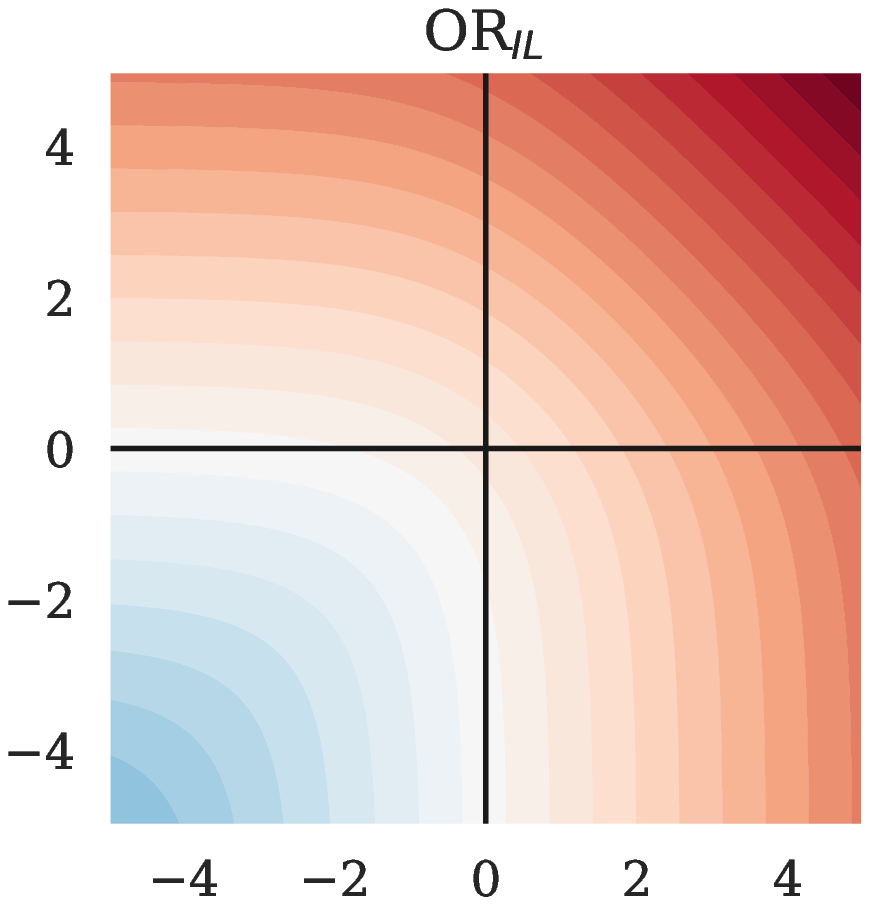}
    \includegraphics[scale=0.47]{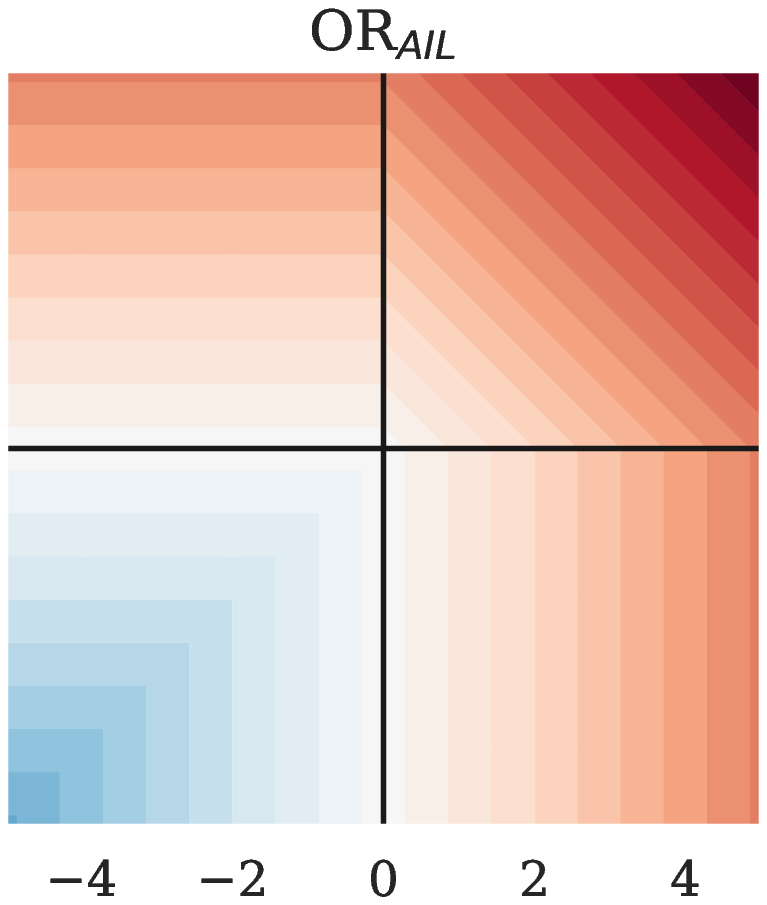}
    \includegraphics[scale=0.47]{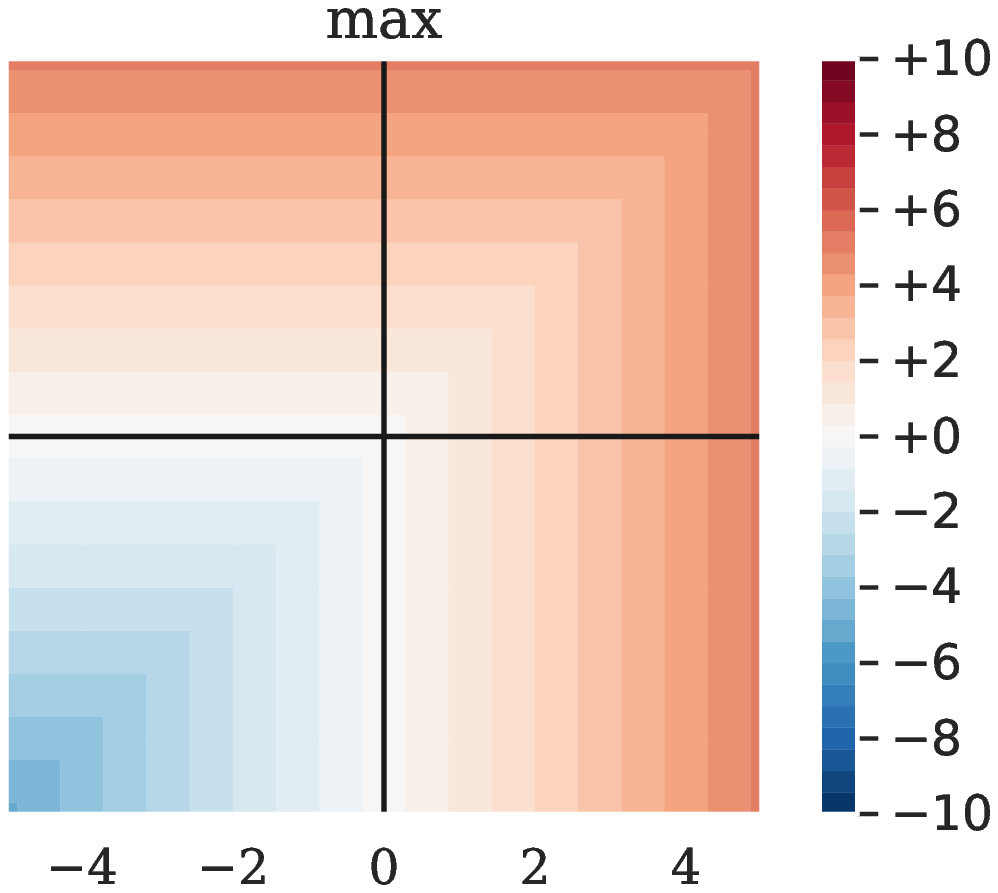}
\caption{
Comparison of the exact logit-space probabilistic-or function for independent logits, $\opn{OR_{IL}}(x, y)$; our constructed approximation, $\opn{OR_{AIL}}(x, y)$; and $\opn{max}(x, y)$.
}
\label{fig:OR}
\end{figure}

\subsection{XNOR}

We also consider the construction of a logit-space XNOR operator.
This is the probability that $X$ and $Y$ occur either together, or not at all, given by
\begin{equation}
    \opn{XNOR_{IL}}(x, y)
    := \opn{logit}(\PP(X \bar\oplus Y)_{x \independent y})
    = \log \left( \frac{p}{1 - p} \right),
    \label{eq:xnor_il}
\end{equation}
where $p = \sigma(x) \, \sigma(y) + \sigma(-x) \, \sigma(-y)$.
We can approximate this with
\begin{equation}
    \opn{XNOR_{AIL}}(x, y) := \sgn(x y) \opn{min}(|x|, |y|),
    \label{eq:xnor_ail}
\end{equation}
which focuses on the logit of the feature \textbf{most likely} to \textbf{flip} the expected \textbf{parity} (\revision{see }\autoref{fig:XNOR}).

We could use other approximations, such as the sign-preserving geometric mean,
\begin{equation}
    \opn{SignedGeomean}(x, y) := \sgn(x y) \sqrt{| x y |},
\end{equation}
but this matches $\opn{XNOR_{IL}}$ less closely, and has a divergent gradient along both $x=0$ and $y=0$.

\begin{figure}[h]
    \centering
    \includegraphics[scale=0.47]{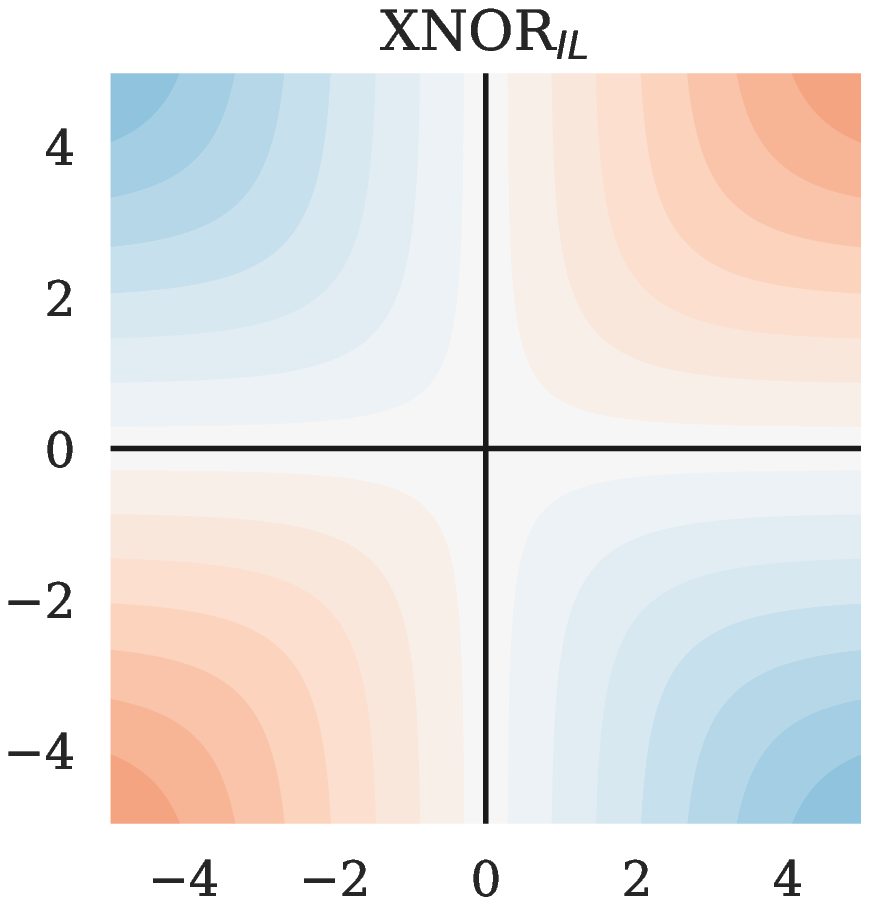}
    \includegraphics[scale=0.47]{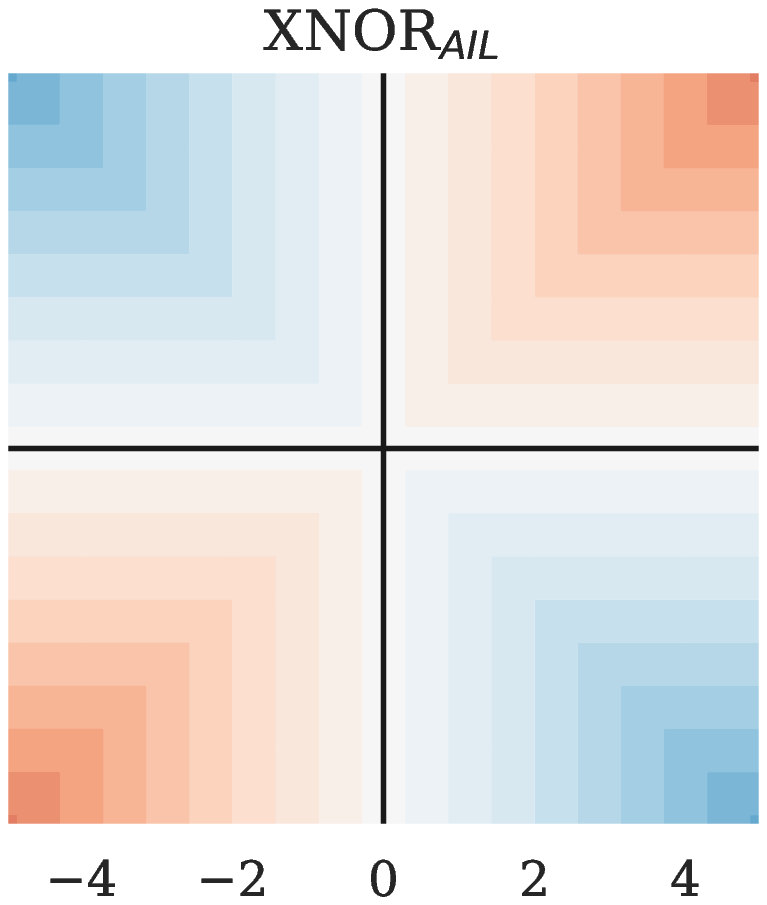}
    \includegraphics[scale=0.47]{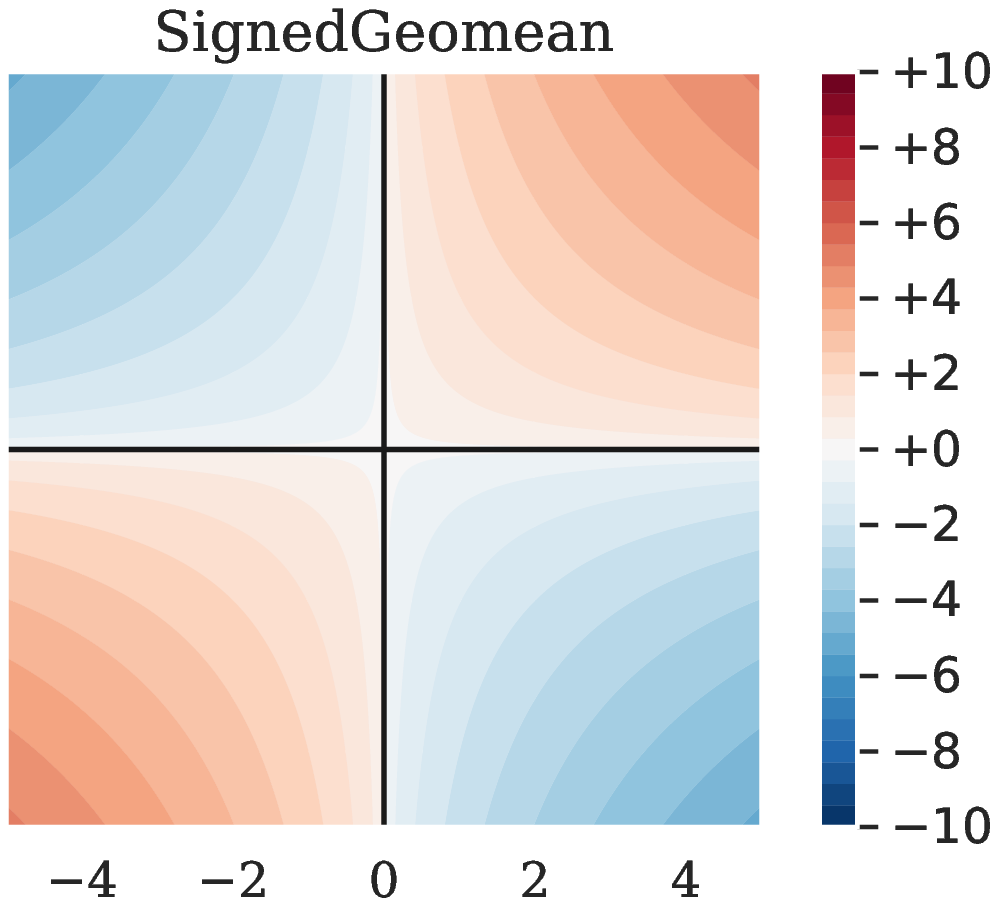}
\caption{
Comparison of the exact logit-space probabilistic-xnor function for independent logits, $\opn{XNOR_{IL}}(x, y)$; our constructed approximation, $\opn{XNOR_{AIL}}(x, y)$; and $\opn{SignedGeomean}(x, y)$.
}
\label{fig:XNOR}
\end{figure}

\subsection{Discussion}

By working via probabilities, and assuming inputs encode independent events, we have derived logit-space equivalents of the Boolean logic functions, AND, OR, and XNOR.
Since these are computationally demanding to compute repeatedly within a neural network, we have constructed approximations of them: $\opn{AND_{AIL}}$, $\opn{OR_{AIL}}$, and $\opn{XNOR_{AIL}}$.
Like ReLU, these involve only comparison, addition, and multiplication operations which are cheap to perform.
In fact, $\opn{AND_{AIL}}$ and $\opn{OR_{AIL}}$ are a generalization of ReLU to an extra dimension, since $\opn{OR_{AIL}}(x, y=0) = max(x, 0)$.

The majority of activation functions are one-dimensional, $f:\R\to\R$.
In contrast to this, our proposed activation functions are all two-dimensional, $f:\R^2\to\R$.
They must be applied to pairs of features from the embedding space, and will reduce the dimensionality of the input space by a factor of 2.
This behaviour is the same as seen in MaxOut networks \citep{maxout} which use $\opn{max}$ as their activation function; $\opn{MaxOut}(x, y; k) := \opn{max}(x, y)$.
Similar to $\opn{MaxOut}$, our activation functions could be generalized to higher dimensional inputs, $f:\R^k\to\R$, by considering the behaviour of the logit-space AND, OR, XNOR operations with regard to more inputs.
For simplicity, we restrict this work to consider only $k\!=\!2$, but note these activation functions also generalize to higher dimensions.

\subsection{Ensembling}
\label{s:ensembling}

By using {multiple} logit-Boolean activation functions \textit{simultaneously} alongside each other, we permit the network multiple options of how to relate features together.
When combining the activation functions, we considered two strategies (illustrated in \aref{a:ensembling}).

In the partition (p) strategy, we split the $n_{c}$ dimensional pre-activation embedding equally into $m$ partitions, apply different activation functions on each partition, and concatenate the results together.
Using AIL activation functions under this strategy, the output dimension will always be half that of the input, as it is for each AIL activation function individually.
In the duplication (d) strategy, we apply $m$ different activation functions in parallel to the same $n_c$ elements.
The output is consequently larger, with dimension $m \, n_c$.
If desired, we can counteract the $2\!\to \!1$ reduction of AIL activation functions by using two of them together under this strategy.
A negative weight in a network is equivalent to the logit-NOT operator.
Hence with sufficient width and depth, a multi-layer network using only the $\opn{OR_{IL}}$ activation function should be able to replicate any probabilistic logic circuit.

Utilizing $\opn{AND_{AIL}}$, $\opn{OR_{AIL}}$ and $\opn{XNOR_{AIL}}$ simultaneously allows our networks to access logit-space equivalent\revision{s} of 12 of the 16 Boolean logical operations with only a single sign inversion (in either one of the inputs or the output).
Including the bias term 
and skip connections, 
the network has easy access to logit-space equivalents of all 16 Boolean logical operations.

We hypothesised that training a network with all three of our activation functions in an ensemble in this manner could yield better results since the network would not need to expend layers having to combine $\opn{OR_{IL}}$ operations together to yield other Boolean operations.

\subsection{Normalization}

Our AIL activation functions are close approximations to exact AND, OR, and XNOR operations in logit-space.
However, when deploying non-linearities within a neural network, it is important that the activation functions have a gain of 1 in order to improve stability during training \citep{selu}, a property the AIL activations do not possess.
We constructed normalized variants of the exact \revision{and approximate} logit operators, dubbed NIL and NAIL respectively, by subtracting the expected mean and dividing by the expected standard deviation, assuming  the operands are sampled independently from the standard normal distribution, $\mathcal{N}(0,1)$.
For more details, see \aref{a:normalization}.

\section{Experiments}
\label{s:experiments}

We evaluated the performance of our AIL activation\revision{s}, both individually and together in an ensemble, on a range of benchmark tasks.
Since $\opn{AND_{AIL}}$ and $\opn{OR_{AIL}}$ are equivalent when the sign of operands and outputs can be freely chosen, we only show results for $\opn{OR_{AIL}}$. 
We compared \revision{against} three primary baselines: (1) ReLU, (2) $\opn{max}(x, y) = \opn{MaxOut}([x, y]; k\!=\!2)$, and (3) the concatenation of $\opn{max}(x, y)$ and $\opn{min}(x, y)$, denoted \{$\opn{Max},\opn{Min}$ (d)\}.
The \{$\opn{Max},\opn{Min}$ (d)\} ensemble is equivalent to $\opn{GroupSort}$ with a group size of 2 \citep{Chernodub2017,groupsort}, sometimes referred to as the $\opn{MaxMin}$ operator\revision{; it} is comparable to the concatenation of $\opn{OR_{AIL}}$ and $\opn{AND_{AIL}}$ under our duplication strategy.


\subsection{Parity}
\label{s:parity}

In a simple initial experiment, we constructed a synthetic dataset whose labels could be derived directly by stacking the logical operation XNOR.
Each sample had four input logits, and target value equal to the parity of the number of positive inputs.
A very small model using $\opn{XNOR}$ with two hidden layers (of 4, then 2 neurons) should be capable of perfect classification accuracy on this dataset with a sparse weight matrix by learning to nest pairwise binary relationships.
We trained such an MLP with either $\opn{ReLU}$ or $\opn{XNOR_{AIL}}$ activations.

\begin{wrapfigure}[26]{r}{0.36\linewidth}
\vspace*{-0.5cm}%
    \centering
    \includegraphics[scale=0.44]{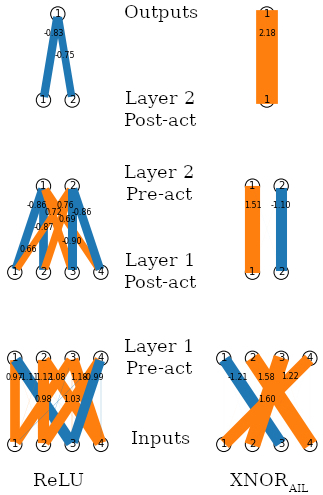}
    \caption{
Visualization of weight matrices learnt by two-layer MLPs on a binary classification task, where the target output is the parity of the inputs.
\revision{Line widths} indicate weight magnitudes (orange: \mbox{+ve}, blue: \mbox{-ve}).
\revision{MLP} with ReLU: 60\% accuracy;  $\opn{XNOR_{AIL}}$: 100\% accuracy.
}
\label{fig:parity}
\end{wrapfigure}

The MLP with $\opn{XNOR_{AIL}}$ activation learned a sparse weight matrix able to perfectly classify any input combination, shown in \autoref{fig:parity}.
In comparison, with $\opn{ReLU}$ the network was only able to produce 60\% classification accuracy.
The accuracy with $\opn{ReLU}$ was improved by increasing the MLP width/depth, but this did not result in a sparse weight matrix.
This experiment demonstrates that $\opn{XNOR_{AIL}}$ can be utilized by a network to find the simplest relationship between inputs that satisfies the objective.
For additional results, see \aref{a:parity}.

\revision{
}

\subsection{MLP on Bach Chorales and Logit Independence}
\label{s:chorales}

The Bach Chorale dataset~\citep{boulanger2012modeling} consists of 382 chorales composed by JS Bach, each $\sim$12 measures long, totalling $\sim$\num{83000} notes.
Represented as discrete sequences of tokens, it has served as a benchmark for music processing for decades, from heuristic methods to HMMs, RNNs, and CNNs \citep{mozer1990connectionist, hild1992harmonet, allan2005harmonising, liang2016bachbot, hadjeres2017deepbach, huang2019counterpoint}.
The chorales are comprised of 4 voices (melodic lines) whose behaviour is guided by soft musical rules depend\revision{ing} on the prior movement of that and other voices, e.g. ``two voices a fifth apart ought not to move in parallel with one another''.
\revision{We tasked 2-layer MLPs with determining} whether a short four-part musical excerpt is taken from a Bach chorale.
Negative examples were created by stochastically corrupting chorale excerpts (see \aref{a:jsb}).
We found \{$\opn{OR},\opn{AND},\opn{XNOR_{AIL}}$ (d)\} \revision{had highest accuracy, but the results were not statistically significant} ($p\!<\!0.1$ between best and worst, \revision{two-tailed} Student's $t$-test, 10 random inits\revision{; see \autoref{s:stats}}).

Additionally, we investigated the independence between logits in the trained pre-activation embeddings.
We expect that an MLP which is optimally engaging its neurons would maintain independence between features in order to maximize information.
We measured correlations between adjacent pre-activations (paired operands for the logical activation functions), and also between non-adjacent pairs of pre-activations.
Our results indicate the network learns features which are independent when they are not passed to the same 2D activation, and \textit{anti-correlated} features when they are.
For more details, see \aref{a:correlations}.

\subsection{CNN and MLP on MNIST}
\label{s:mnist}

We trained 2-layer MLP and 6-layer CNN models on MNIST with ADAM \citep{adam}, 1-cycle schedule \citep{superconvergence,Smith2018}, and using hyperparameters tuned through a random search against a validation set comprised of the last 10k images of the training partition.

\begin{figure}[htb]
    \centering
    \begin{subfigure}[b]{0.49\linewidth}
    \centering
        \includegraphics[scale=0.4]{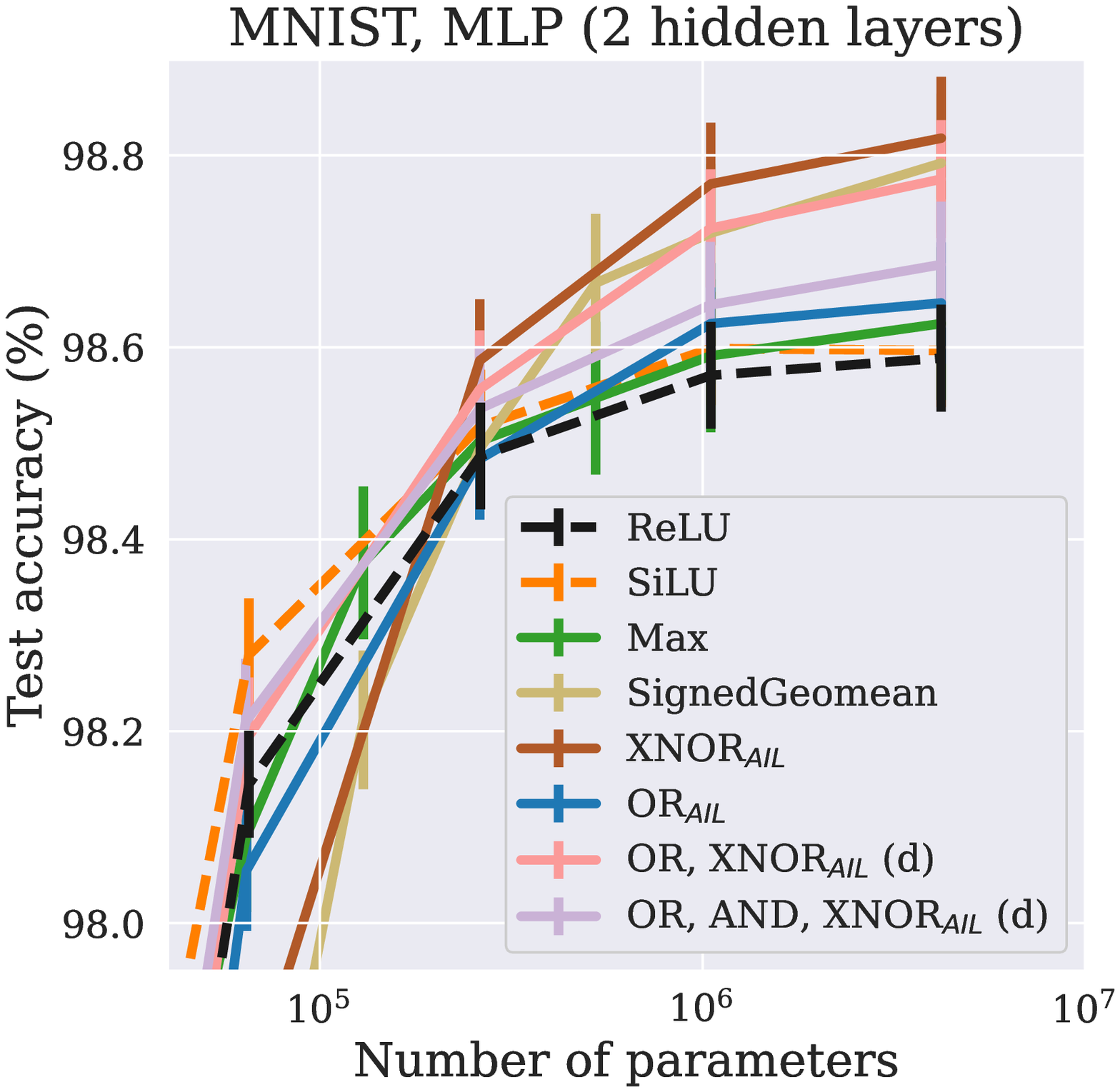}
    \end{subfigure}%
    \hfill{}
    \begin{subfigure}[b]{0.49\linewidth}
    \centering
        \includegraphics[scale=0.4]{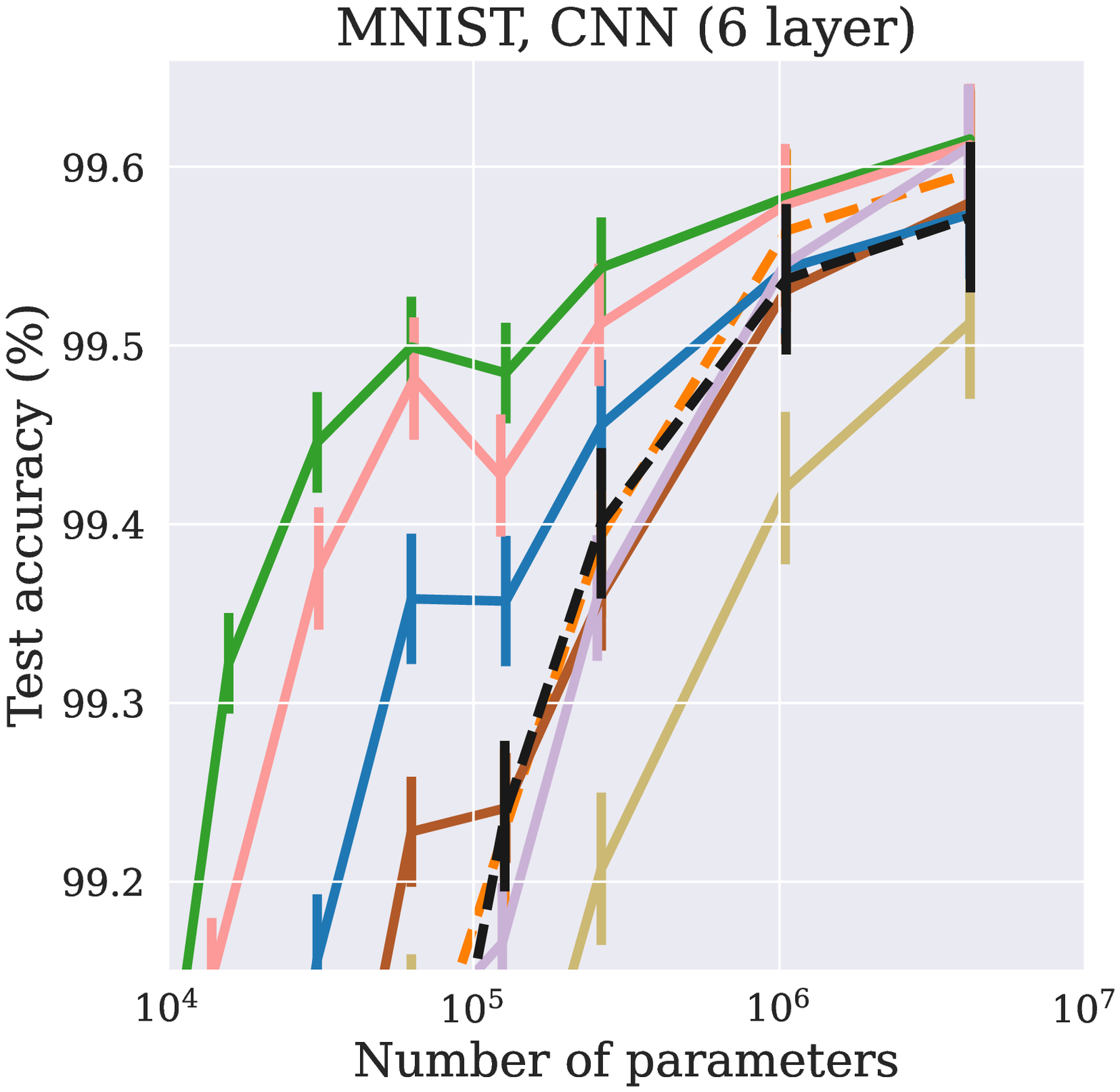}
    \end{subfigure}%
\caption{
We trained CNN on \href{http://yann.lecun.com/exdb/mnist/}{MNIST}, MLP on flattened-MNIST, using ADAM (1-cycle, 10~ep), hyperparams determined by random search. Mean (bars: std~dev) of $n\!=\!40$ weight inits.
}
\label{fig:mnist}
\end{figure}

The MLP used two hidden layers, the widths of which were varied together to evaluate the performance for a range of model sizes.
The CNN used six layers of 3x3 convolution layers, with 2x2 max pooling (stride 2) after every other conv layer, followed by flattening and three MLP layers.
The layer widths were scaled up to explore a range of model sizes (see \aref{a:mnist} for more details).

For the MLP, $\opn{XNOR_{AIL}}$ performed best along with $\opn{SignedGeomean}$ ($p\!<\!0.1$\revision{, two-tailed Student's $t$-test}), ahead of all other activations ($p\!<\!0.01$ \revision{for each}; \autoref{fig:mnist} left panel) \revision{ when considering the best performance across all widths (see \autoref{s:stats} for methodology).
However when the width is reduced below \num{2e5} there is a transition and XNOR-shaped activations perform worst.
We hypothesize this may be because smaller widths embeddings are over-saturated and have individual units corresponding to multiple features, whilst XNOR activations may require single-feature units to perform best.
}

With the CNN, five activation configurations (\{$\opn{OR},\opn{AND},\opn{XNOR_{AIL}}$ (p)\},
\{$\opn{OR},\opn{XNOR_{AIL}}$ (d/p)\}, Max, and SiLU) performed best ($p\!<\!0.05$ \revision{for other activations, two-tailed Student's $t$-test}; \autoref{fig:mnist}, right panel\revision{; \autoref{s:stats}}).
CNNs which used $\opn{OR_{AIL}}$ or $\opn{Max}$ (alone or in an ensemble) maintained high performance with an order of magnitude fewer parameters (\num{3e4}) than \revision{others} (\num{3e5} params).

\subsection{ResNet50 on CIFAR-10/100}
\label{s:cifar}

We explored the impact of our activation functions on deep networks by deploying them in a pre-activation ResNet50 model~\citep{resnet,resnetv2}.
We exchanged all ReLU activation\revision{s} in the network to a candidate activation while maintaining the size of the pass-through embedding.
We experimented with changing the width of the network, scaling up the embedding space and all hidden layers by a common factor, $w$.
The network was trained on CIFAR-10/-100 for 100 epochs using ADAM \citep{adam}, 1-cycle \citep{Smith2018,superconvergence}.
\revision{See \aref{a:cifar} for further details.}

\begin{figure}[htb]
    \centering
    \begin{subfigure}[b]{0.49\linewidth} 
    \centering
        \includegraphics[scale=0.4]{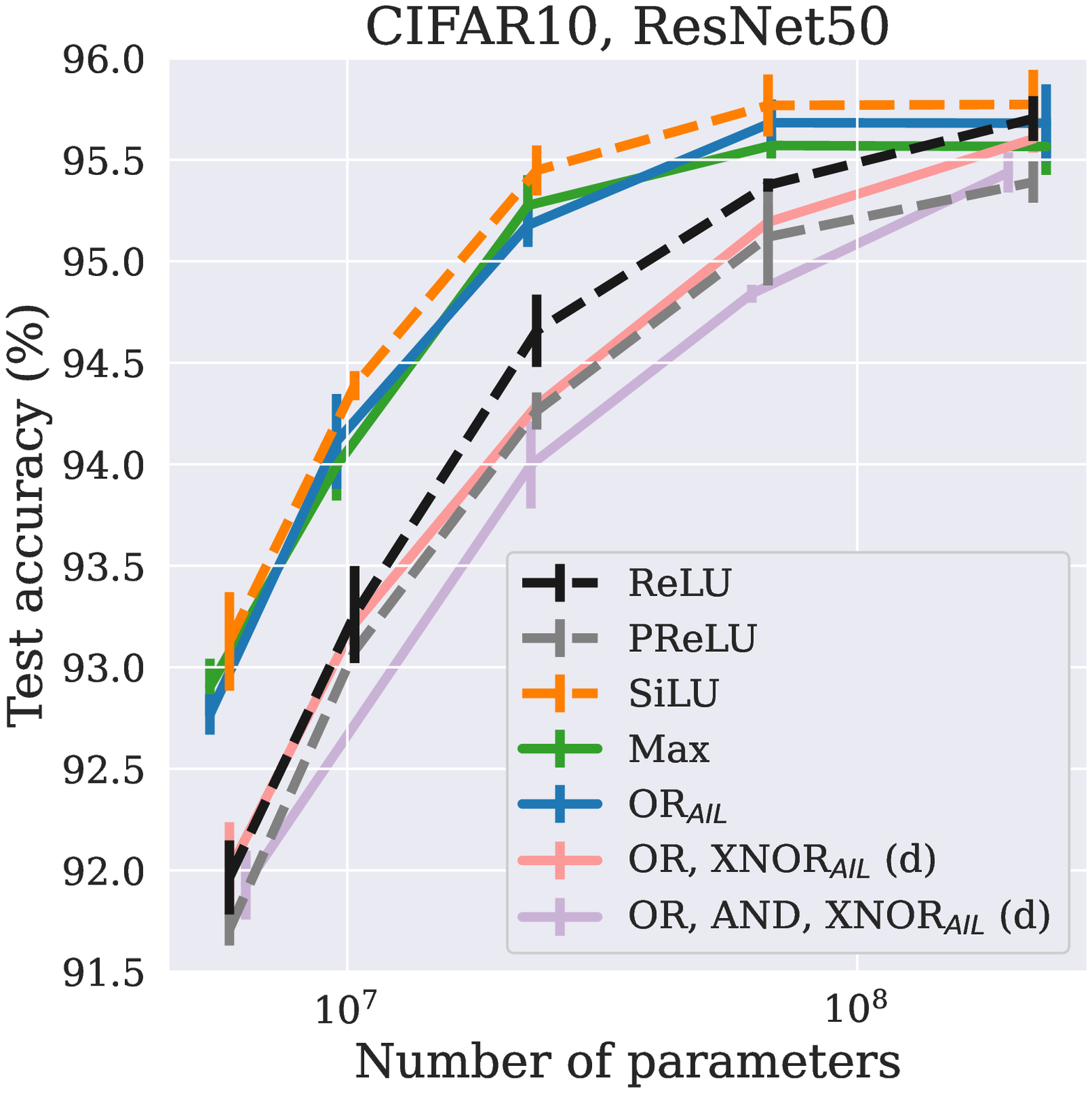} 
    \end{subfigure}%
    \hfill{}
    \begin{subfigure}[b]{0.49\linewidth}
    \centering
        \includegraphics[scale=0.4]{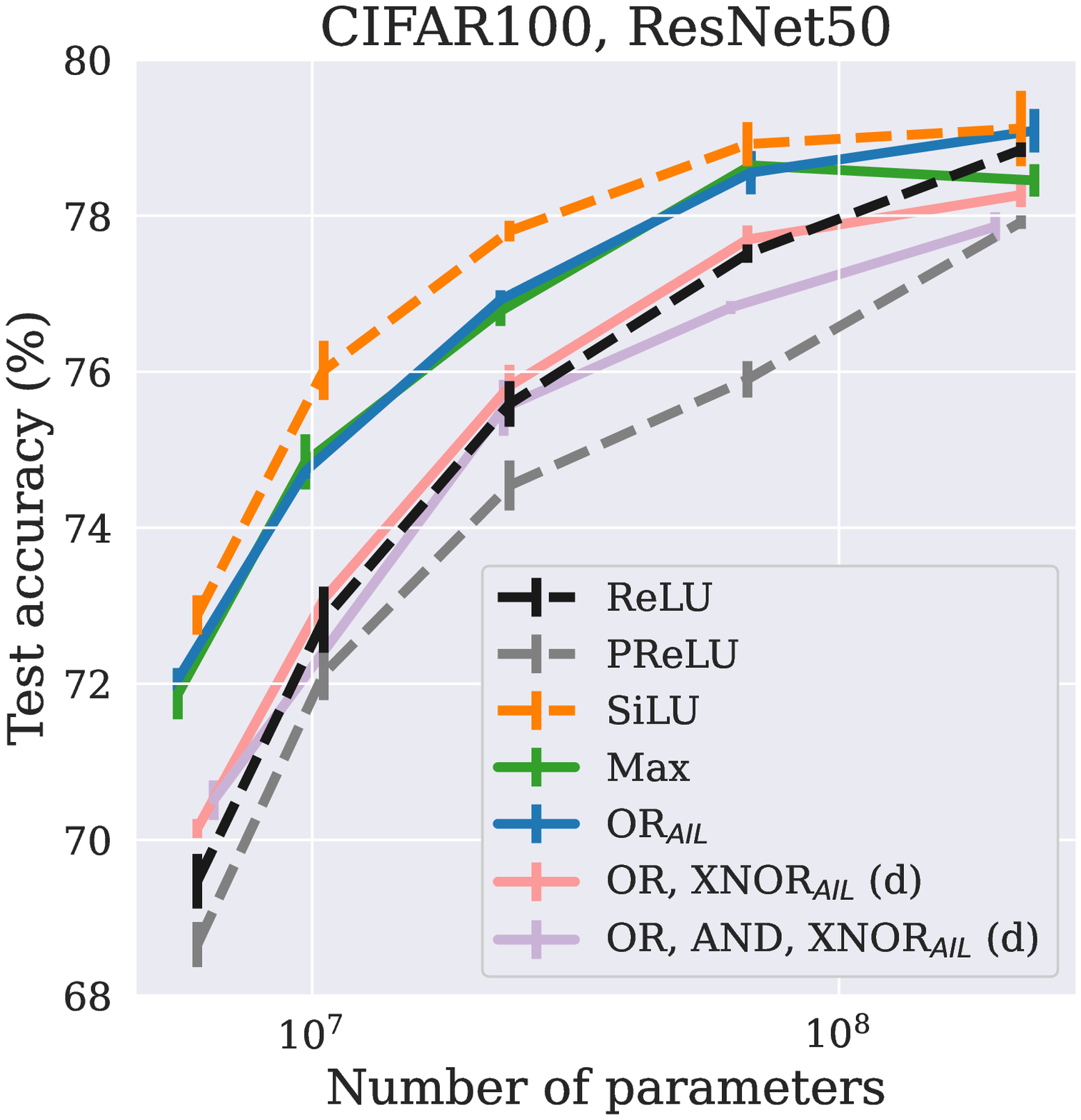}
    \end{subfigure}%
\caption{
ResNet50 on CIFAR-10/100, varying the activation function used through the network.
The width was varied to explore a range of network sizes (see text).
Trained for 100~ep. with ADAM, using hyperparams as determined by random search on CIFAR-100 with width factor $w=2$.
Mean (bars: std dev) of $n\!=\!4$ weight inits.
}
\label{fig:cifar}
\end{figure}

For both CIFAR-10 and -100, SiLU, $\opn{OR_{AIL}}$, and $\opn{Max}$ outperform ReLU across a wide range of width values \revision{(see} \autoref{fig:cifar}\revision{)}.
These three activation functions hold up their performance best as the number of parameters is reduced.
Since SiLU was discovered \revision{by a search of $1\!\to\!1$ activation functions for this type of architecture and task}~\citep{swish}, we expected it to perform well in this setting. 
\revision{Yet, we find $\opn{OR_{AIL}}$ and Max perform better than ReLU and comparable with SiLU generally.}
Meanwhile, other AIL activation\revision{s} perform similarly to ReLU when the width is thin, and slightly worse than ReLU when the width is wide.
When used on its own and not part of an ensemble, the $\opn{XNOR_{AIL}}$ activation function performed poorly (off the bottom of the chart), indicating it is not suited for this task.

\subsection{Transfer learning}
\label{s:transfer}

We considered transfer learning on several image classification datasets.
We used a ResNet18 model~\citep{resnet} pretrained on ImageNet-1k.
The weights were frozen (not fine-tuned) and used to generate embeddings of samples from other image datasets.
We trained a two-layer MLP to classify images from these embeddings using various activation functions.
For a comprehensive set of baselines, we compared against every activation function built into PyTorch~1.10 (see \aref{a:transfer}).
To make the number of parameters similar, we used a width of 512 for activation functions with $1\!\to \!1$ mapping (e.g. ReLU), a width of 650 for activation functions with a $2\!\to \!1$ mapping (e.g. Max, $\opn{OR_{AIL}}$), and a width of 438 for \{$\opn{OR},\opn{AND},\opn{XNOR_{AIL}}$ (d)\}.
See \aref{a:transfer} for more details.

\begin{table}[htb]
\small
  \centering
  \caption{%
Transfer learning from a frozen ResNet-18 architecture pretrained on ImageNet-1k to other computer vision datasets.
Mean (std. error) of $n\!=\!5$ random initializations of the MLP (same pretrained encoder).
Bold: best.
Underlined: top two.
Italic: no sig. diff. from best (two-sided Student's $t$-test, $p\!>\!0.05$).
Background: linear color scale from ReLU baseline (white) to best (black).
}
\label{tab:transfer}
\label{tab:transfer-head-big}
\label{tab:transfer-mink-head-big}
\def\ccAmin{86.5801886792453}
\def\ccAmax{88.34905660377358}
\def\ccA[#1]#2{\heatmapcell{#1}{#2}{\ccAmin}{\ccAmax}}
\def\ccBmin{81.63399999999999}
\def\ccBmax{82.88000000000001}
\def\ccB[#1]#2{\heatmapcell{#1}{#2}{\ccBmin}{\ccBmax}}
\def\ccCmin{58.044000000000004}
\def\ccCmax{60.778}
\def\ccC[#1]#2{\heatmapcell{#1}{#2}{\ccCmin}{\ccCmax}}
\def\ccDmin{90.70731696896436}
\def\ccDmax{93.25785683392981}
\def\ccD[#1]#2{\heatmapcell{#1}{#2}{\ccDmin}{\ccDmax}}
\def\ccEmin{30.969666852908368}
\def\ccEmax{39.835902544398145}
\def\ccE[#1]#2{\heatmapcell{#1}{#2}{\ccEmin}{\ccEmax}}
\def\ccFmin{94.62249999999999}
\def\ccFmax{94.83}
\def\ccF[#1]#2{\heatmapcell{#1}{#2}{\ccFmin}{\ccFmax}}
\def\ccGmin{53.25829748424489}
\def\ccGmax{55.342655196890334}
\def\ccG[#1]#2{\heatmapcell{#1}{#2}{\ccGmin}{\ccGmax}}
\centerline{
  \scalebox{0.835}{
\begin{tabular}{lrrrrrrr}
\toprule
 & \multicolumn{7}{c}{Test Accuracy (\%)} \\
\cmidrule(l){2-8}
Activation function                                     & \multicolumn{1}{c}{Cal101} & \multicolumn{1}{c}{CIFAR10} & \multicolumn{1}{c}{CIFAR100} & \multicolumn{1}{c}{Flowers} & \multicolumn{1}{c}{StfCars} & \multicolumn{1}{c}{STL-10} & \multicolumn{1}{c}{SVHN} \\
\toprule
Linear layer only                                        & \ccA[88.3491]{$\mbf{\mbns{88.35}}\wpm{ 0.15}$} & \ccB[78.5640]{$78.56\wpm{ 0.09}$} & \ccC[57.3920]{$57.39\wpm{ 0.09}$} & \ccD[92.3171]{$92.32\wpm{ 0.20}$} & \ccE[33.5132]{$33.51\wpm{ 0.06}$} & \ccF[94.6825]{$94.68\wpm{ 0.02}$} & \ccG[46.6011]{$46.60\wpm{ 0.14}$} \\
\midrule
$\opn{ReLU}$                                             & \ccA[86.5802]{$86.58\wpm{ 0.17}$} & \ccB[81.6340]{$81.63\wpm{ 0.05}$} & \ccC[58.0440]{$58.04\wpm{ 0.11}$} & \ccD[90.7073]{$90.71\wpm{ 0.26}$} & \ccE[30.9697]{$30.97\wpm{ 0.26}$} & \ccF[94.6225]{$94.62\wpm{ 0.06}$} & \ccG[53.2583]{$53.26\wpm{ 0.08}$} \\
PReLU                                                    & \ccA[87.8302]{$\mbns{87.83}\wpm{ 0.21}$} & \ccB[81.0300]{$81.03\wpm{ 0.13}$} & \ccC[58.8980]{$58.90\wpm{ 0.18}$} & \ccD[93.1707]{$\mbs{\mbns{93.17}}\wpm{ 0.19}$} & \ccE[39.8359]{$\mbf{\mbns{39.84}}\wpm{ 0.18}$} & \ccF[94.5375]{$94.54\wpm{ 0.05}$} & \ccG[53.4750]{$53.47\wpm{ 0.08}$} \\
\midrule
SELU                                                     & \ccA[87.7358]{$87.74\wpm{ 0.09}$} & \ccB[79.9340]{$79.93\wpm{ 0.13}$} & \ccC[58.2380]{$58.24\wpm{ 0.06}$} & \ccD[92.2683]{$92.27\wpm{ 0.13}$} & \ccE[37.5112]{$37.51\wpm{ 0.17}$} & \ccF[94.5300]{$94.53\wpm{ 0.07}$} & \ccG[50.9365]{$50.94\wpm{ 0.12}$} \\
GELU                                                     & \ccA[87.0991]{$87.10\wpm{ 0.15}$} & \ccB[81.3860]{$81.39\wpm{ 0.09}$} & \ccC[58.5060]{$58.51\wpm{ 0.13}$} & \ccD[91.5122]{$91.51\wpm{ 0.15}$} & \ccE[33.4286]{$33.43\wpm{ 0.15}$} & \ccF[94.6250]{$94.62\wpm{ 0.06}$} & \ccG[53.4250]{$53.43\wpm{ 0.23}$} \\
Mish                                                     & \ccA[87.1108]{$87.11\wpm{ 0.12}$} & \ccB[81.0880]{$81.09\wpm{ 0.11}$} & \ccC[58.3720]{$58.37\wpm{ 0.10}$} & \ccD[91.6098]{$91.61\wpm{ 0.15}$} & \ccE[33.7469]{$33.75\wpm{ 0.14}$} & \ccF[94.6075]{$94.61\wpm{ 0.05}$} & \ccG[53.0539]{$53.05\wpm{ 0.12}$} \\
\midrule
Tanh                                                     & \ccA[87.4764]{$87.48\wpm{ 0.06}$} & \ccB[80.5560]{$80.56\wpm{ 0.07}$} & \ccC[57.3480]{$57.35\wpm{ 0.08}$} & \ccD[90.3171]{$90.32\wpm{ 0.20}$} & \ccE[29.5077]{$29.51\wpm{ 0.12}$} & \ccF[94.6275]{$\mbns{94.63}\wpm{ 0.07}$} & \ccG[51.8600]{$51.86\wpm{ 0.05}$} \\
\midrule
$\opn{Max}$                                              & \ccA[86.9575]{$86.96\wpm{ 0.20}$} & \ccB[81.7560]{$81.76\wpm{ 0.14}$} & \ccC[58.5980]{$58.60\wpm{ 0.12}$} & \ccD[90.9756]{$90.98\wpm{ 0.18}$} & \ccE[33.3690]{$33.37\wpm{ 0.15}$} & \ccF[94.7025]{$\mbns{94.70}\wpm{ 0.06}$} & \ccG[53.5257]{$53.53\wpm{ 0.16}$} \\
$\opn{Max},\opn{Min}$ (d)                                & \ccA[87.2288]{$87.23\wpm{ 0.13}$} & \ccB[82.3060]{$82.31\wpm{ 0.10}$} & \ccC[59.0520]{$59.05\wpm{ 0.10}$} & \ccD[91.6829]{$91.68\wpm{ 0.18}$} & \ccE[34.9105]{$34.91\wpm{ 0.12}$} & \ccF[94.6400]{$94.64\wpm{ 0.04}$} & \ccG[53.9136]{$53.91\wpm{ 0.13}$} \\
\midrule
SignedGeomean                                            & \ccA[87.0324]{$87.03\wpm{ 0.23}$} & \ccB[51.4520]{$\mbns{51.45}\wpm{16.92}$} & \ccC[11.7980]{$11.80\wpm{10.80}$} & \ccD[91.3447]{$91.34\wpm{ 0.34}$} & \ccE[26.3748]{$\mbns{26.37}\wpm{ 6.46}$} & \ccF[94.6825]{$94.68\wpm{ 0.06}$} & \ccG[37.1635]{$37.16\wpm{ 7.18}$} \\
\midrule
$\opn{XNOR_{IL}}$                                        & \ccA[85.0118]{$85.01\wpm{ 0.17}$} & \ccB[79.6180]{$79.62\wpm{ 0.09}$} & \ccC[57.1360]{$57.14\wpm{ 0.07}$} & \ccD[84.7561]{$84.76\wpm{ 0.43}$} & \ccE[ 1.3401]{$ 1.34\wpm{ 0.11}$} & \ccF[94.5125]{$94.51\wpm{ 0.03}$} & \ccG[51.9914]{$51.99\wpm{ 0.16}$} \\
$\opn{OR_{IL}}$                                          & \ccA[87.1108]{$87.11\wpm{ 0.08}$} & \ccB[79.7480]{$79.75\wpm{ 0.05}$} & \ccC[58.0700]{$58.07\wpm{ 0.11}$} & \ccD[91.1220]{$91.12\wpm{ 0.36}$} & \ccE[33.1179]{$33.12\wpm{ 0.12}$} & \ccF[94.6000]{$94.60\wpm{ 0.03}$} & \ccG[51.2077]{$51.21\wpm{ 0.17}$} \\
\midrule
$\opn{XNOR_{NIL}}$                                       & \ccA[87.2477]{$87.25\wpm{ 0.22}$} & \ccB[82.8800]{$\mbf{\mbns{82.88}}\wpm{ 0.08}$} & \ccC[60.7780]{$\mbf{\mbns{60.78}}\wpm{ 0.08}$} & \ccD[93.2579]{$\mbf{\mbns{93.26}}\wpm{ 0.26}$} & \ccE[39.4747]{$\mbns{39.47}\wpm{ 0.20}$} & \ccF[94.8300]{$\mbf{\mbns{94.83}}\wpm{ 0.06}$} & \ccG[55.3427]{$\mbf{\mbns{55.34}}\wpm{ 0.19}$} \\
$\opn{OR_{NIL}}$                                         & \ccA[87.1873]{$87.19\wpm{ 0.16}$} & \ccB[79.6100]{$79.61\wpm{ 0.05}$} & \ccC[58.4380]{$58.44\wpm{ 0.10}$} & \ccD[91.6458]{$91.65\wpm{ 0.29}$} & \ccE[35.8205]{$35.82\wpm{ 0.04}$} & \ccF[94.5775]{$94.58\wpm{ 0.03}$} & \ccG[50.9488]{$50.95\wpm{ 0.15}$} \\
$\opn{OR}, \opn{AND_{NIL}}$ (d)                          & \ccA[86.8201]{$86.82\wpm{ 0.18}$} & \ccB[80.0900]{$80.09\wpm{ 0.09}$} & \ccC[58.5980]{$58.60\wpm{ 0.07}$} & \ccD[91.4425]{$91.44\wpm{ 0.20}$} & \ccE[37.0252]{$37.03\wpm{ 0.11}$} & \ccF[94.6500]{$94.65\wpm{ 0.05}$} & \ccG[52.4915]{$52.49\wpm{ 0.09}$} \\
$\opn{OR}, \opn{XNOR_{NIL}}$ (d)                         & \ccA[87.8184]{$\mbns{87.82}\wpm{ 0.19}$} & \ccB[82.6720]{$\mbns{82.67}\wpm{ 0.05}$} & \ccC[60.6000]{$\mbs{\mbns{60.60}}\wpm{ 0.11}$} & \ccD[92.9268]{$\mbns{92.93}\wpm{ 0.12}$} & \ccE[39.2190]{$39.22\wpm{ 0.17}$} & \ccF[94.6275]{$94.63\wpm{ 0.06}$} & \ccG[54.8748]{$\mbns{54.87}\wpm{ 0.09}$} \\
$\opn{OR}, \opn{AND}, \opn{XNOR_{NIL}}$ (d)              & \ccA[87.4114]{$87.41\wpm{ 0.27}$} & \ccB[82.8360]{$\mbs{\mbns{82.84}}\wpm{ 0.06}$} & \ccC[60.3760]{$60.38\wpm{ 0.10}$} & \ccD[92.9756]{$\mbns{92.98}\wpm{ 0.17}$} & \ccE[39.4166]{$\mbns{39.42}\wpm{ 0.18}$} & \ccF[94.7125]{$\mbns{94.71}\wpm{ 0.03}$} & \ccG[55.1137]{$\mbs{\mbns{55.11}}\wpm{ 0.12}$} \\
\midrule
$\opn{XNOR_{AIL}}$                                       & \ccA[86.9693]{$86.97\wpm{ 0.18}$} & \ccB[81.8330]{$81.83\wpm{ 0.06}$} & \ccC[58.4610]{$58.46\wpm{ 0.10}$} & \ccD[90.9268]{$90.93\wpm{ 0.15}$} & \ccE[32.5559]{$32.56\wpm{ 0.10}$} & \ccF[94.7050]{$\mbns{94.71}\wpm{ 0.06}$} & \ccG[53.7485]{$53.75\wpm{ 0.14}$} \\
$\opn{OR_{AIL}}$                                         & \ccA[87.4528]{$87.45\wpm{ 0.14}$} & \ccB[81.8800]{$81.88\wpm{ 0.07}$} & \ccC[59.0960]{$59.10\wpm{ 0.09}$} & \ccD[92.0000]{$92.00\wpm{ 0.15}$} & \ccE[36.0094]{$36.01\wpm{ 0.12}$} & \ccF[94.6875]{$\mbns{94.69}\wpm{ 0.04}$} & \ccG[53.6809]{$53.68\wpm{ 0.14}$} \\
\midrule
$\opn{XNOR_{NAIL}}$                                      & \ccA[87.6135]{$87.61\wpm{ 0.23}$} & \ccB[82.3840]{$82.38\wpm{ 0.07}$} & \ccC[59.7680]{$59.77\wpm{ 0.13}$} & \ccD[93.0732]{$\mbns{93.07}\wpm{ 0.20}$} & \ccE[39.7737]{$\mbs{\mbns{39.77}}\wpm{ 0.04}$} & \ccF[94.8100]{$\mbs{\mbns{94.81}}\wpm{ 0.03}$} & \ccG[53.9136]{$53.91\wpm{ 0.05}$} \\
$\opn{OR_{NAIL}}$                                        & \ccA[87.1934]{$87.19\wpm{ 0.16}$} & \ccB[81.7920]{$81.79\wpm{ 0.09}$} & \ccC[59.4040]{$59.40\wpm{ 0.09}$} & \ccD[92.1220]{$92.12\wpm{ 0.12}$} & \ccE[37.3247]{$37.32\wpm{ 0.17}$} & \ccF[94.6525]{$94.65\wpm{ 0.04}$} & \ccG[53.8199]{$53.82\wpm{ 0.21}$} \\
$\opn{OR}, \opn{AND_{NAIL}}$ (d)                         & \ccA[87.6179]{$87.62\wpm{ 0.11}$} & \ccB[82.2800]{$82.28\wpm{ 0.10}$} & \ccC[59.7140]{$59.71\wpm{ 0.05}$} & \ccD[92.0976]{$92.10\wpm{ 0.20}$} & \ccE[37.6977]{$37.70\wpm{ 0.12}$} & \ccF[94.6125]{$\mbns{94.61}\wpm{ 0.08}$} & \ccG[53.8614]{$53.86\wpm{ 0.10}$} \\
$\opn{OR}, \opn{XNOR_{NAIL}}$ (d)                        & \ccA[87.8538]{$\mbs{\mbns{87.85}}\wpm{ 0.22}$} & \ccB[82.5240]{$82.52\wpm{ 0.11}$} & \ccC[60.0180]{$60.02\wpm{ 0.10}$} & \ccD[93.1220]{$\mbns{93.12}\wpm{ 0.13}$} & \ccE[39.6370]{$\mbns{39.64}\wpm{ 0.09}$} & \ccF[94.7450]{$\mbns{94.75}\wpm{ 0.03}$} & \ccG[54.1334]{$54.13\wpm{ 0.05}$} \\
$\opn{OR}, \opn{AND}, \opn{XNOR_{NAIL}}$ (d)             & \ccA[87.7830]{$87.78\wpm{ 0.14}$} & \ccB[82.6680]{$\mbns{82.67}\wpm{ 0.06}$} & \ccC[60.0100]{$60.01\wpm{ 0.21}$} & \ccD[93.1220]{$\mbns{93.12}\wpm{ 0.21}$} & \ccE[39.6544]{$\mbns{39.65}\wpm{ 0.14}$} & \ccF[94.7750]{$\mbns{94.78}\wpm{ 0.03}$} & \ccG[54.5790]{$54.58\wpm{ 0.12}$} \\
\bottomrule
\end{tabular}
  }
}
\end{table}

Our results are shown in \autoref{tab:transfer-head-big}.
We found that all our (N)AIL activation functions outperformed ReLU on every transfer task.
Normalization had little impact on the performance of $\opn{OR_{\ast{}IL}}$, but a large impact on $\opn{XNOR_{\ast{}IL}}$.
The best overall performance was attained by the exact $\opn{XNOR_{NIL}}$, closely followed by our approximate $\opn{XNOR_{NAIL}}$ and ensembles containing either of these.
Surprisingly, our approximation $\opn{OR_{\ast{}AIL}}$ outperformed the exact form $\opn{OR_{\ast{}IL}}$.
The proposed activation functions were beaten only by PReLU, on Stanford Cars.
On Caltech101, all MLPs were beaten by a linear layer, suggesting our MLPs were overfitting for that dataset.
For further discussion, see \aref{a:transfer}.

\subsection{Additional results}

For results on Covertype, abstract reasoning and compositional zero-shot learning tasks, please see \aref{a:covtype}, \aref{a:abstractreasoning} and \aref{a:czsl}, respectively.

\hider{
\begin{table}[tbhp]
\small
  \centering
  \caption{%
Ranking on each task.
Mean (std. dev) across all subtasks.
Background: linear color scale from second-lowest (white) to best (black) with a given number of parameters.
}
\label{tab:resnet50-cifar100}
\def\ccAmin{1.0}
\def\ccAmax{12.0}
\def\ccA[#1]#2{\heatmapcellw{#1}{#2}{\ccAmin}{\ccAmax}}
\def\ccBmin{1.0}
\def\ccBmax{11.0}
\def\ccB#1{\heatmapcellw{#1}{#1}{\ccBmin}{\ccBmax}}
\def\ccCmin{1.0}
\def\ccCmax{13.0}
\def\ccC#1{\heatmapcellw{#1}{#1}{\ccCmin}{\ccCmax}}
\def\ccTmin{1.0}
\def\ccTmax{13.0}
\def\ccT[#1]#2{\heatmapcellw{#1}{#2}{\ccTmin}{\ccTmax}}
\centerline{
\begin{tabular}{lrrrrr}
\toprule
Activation function                         & \multicolumn{1}{c}{CIFAR10}       & \multicolumn{1}{c}{CIFAR100}      & \multicolumn{1}{c}{Transfer}      &  \multicolumn{1}{c}{I-RVN}     & \multicolumn{1}{c}{CZSL}     \\
\toprule
$\opn{ReLU}$                                 & \ccA[4.4]{$4.4\wpm{ 1.36}$}      & \ccA[6.0]{$6.0\wpm{ 1.90}$}       & \ccT[10.9]{$10.9\wpm{1.4}$}       & \ccB{ 2} & \ccC{ 4  } \\
PReLU                                        & \ccA[8.0]{$8.0\wpm{ 1.26}$}      & \ccA[9.8]{$9.8\wpm{ 1.47}$}       & \ccT[ 6.1]{$ 6.1\wpm{4.9}$}       & \ccB{11} & \ccC{ 2  } \\
SiLU                                         & \ccA[1.0]{$\mbf{1.0}\wpm{ 0.00}$}& \ccA[1.0]{$\mbf{1.0}\wpm{ 0.00}$} & \ccT[10.0]{$10.0\wpm{1.7}$}       & \ccB{ 6} & \ccC{ 1  } \\
$\opn{Max}$                                  & \ccA[3.2]{$3.2\wpm{ 1.47}$}      & \ccA[2.8]{$2.8\wpm{ 0.75}$}       & \ccT[ 9.4]{$ 9.4\wpm{1.7}$}       & \ccB{ 7} & \ccC{10.5} \\
$\opn{Max},\opn{Min}$ (d)                    & \ccA[9.2]{$9.2\wpm{ 1.47}$}      & \ccA[7.4]{$7.4\wpm{ 2.06}$}       & \ccT[ 5.4]{$ 5.4\wpm{1.5}$}       & \ccB{ 4} & \ccC{ 7  } \\
SignedGeomean                                &                                  &                                   & \ccT[11.4]{$11.4\wpm{1.5}$}       &          & \ccC{13  } \\
$\opn{XNOR_{AIL}}$                           & \ccA[12.0]{$12.0\wpm{ 0.00}$}    & \ccA[12.0]{$12.0\wpm{ 0.00}$}     & \ccT[ 9.4]{$ 9.4\wpm{4.1}$}       & \ccB{ 8} & \ccC{12  } \\
$\opn{OR_{AIL}}$                             & \ccA[2.6]{$\mbs{2.6}\wpm{ 0.49}$}& \ccA[2.4]{$\mbs{2.4}\wpm{ 0.49}$} & \ccT[ 5.7]{$ 5.7\wpm{1.6}$}       & \ccB{ 1} & \ccC{ 5  } \\
$\opn{OR}, \opn{AND_{AIL}}$ (d)              & \ccA[4.2]{$4.2\wpm{ 0.40}$}      & \ccA[5.6]{$5.6\wpm{ 1.62}$}       & \ccT[ 3.7]{$\mbs{ 3.7}\wpm{3.4}$} & \ccB{ 9} & \ccC{ 3  } \\
$\opn{OR}, \opn{XNOR_{AIL}}$ (p)             & \ccA[9.6]{$9.6\wpm{ 1.02}$}      & \ccA[10.6]{$10.6\wpm{ 0.49}$}     & \ccT[ 6.3]{$ 6.3\wpm{2.4}$}       & \ccB{ 5} & \ccC{ 8  } \\
$\opn{OR}, \opn{XNOR_{AIL}}$ (d)             & \ccA[5.6]{$5.6\wpm{ 0.49}$}      & \ccA[5.4]{$5.4\wpm{ 1.36}$}       & \ccT[ 4.1]{$ 4.1\wpm{1.2}$}       & \ccB{ 3} & \ccC{ 9  } \\
$\opn{OR}, \opn{AND}, \opn{XNOR_{AIL}}$ (p)  & \ccA[9.4]{$9.4\wpm{ 1.62}$}      & \ccA[7.8]{$7.8\wpm{ 1.47}$}       & \ccT[ 5.9]{$ 5.9\wpm{1.8}$}       & \ccB{10} & \ccC{10.5} \\
$\opn{OR}, \opn{AND}, \opn{XNOR_{AIL}}$ (d)  & \ccA[8.8]{$8.8\wpm{ 0.98}$}      & \ccA[7.2]{$7.2\wpm{ 1.17}$}       & \ccT[ 2.6]{$\mbf{ 2.6}\wpm{2.1}$} &          & \ccC{ 6  } \\
\bottomrule
\end{tabular}
}
\end{table}
}

\section{Conclusion}

In this work, we motivated and introduced novel activation functions analogous to Boolean operators in logit-space.
We designed the AIL functions, fast approximates to the true logit-space probabilistic Boolean operations, and demonstrated their effectiveness on a wide range of tasks.

Although our activation functions assume independence (which is generally approximately true for pre-activation features learnt with 1D activation\revision{s}), we found the network learnt to induce anti-correlations between features which were paired together by our activation\revision{s} (\aref{a:correlations}).
This suggests exact independence of the features is not essential to the performance of our proposed activation\revision{s}.


We found $\opn{XNOR_{\ast{}AIL}}$ was highly effective \revision{for} shallow \revision{MLPs}.
Meanwhile, $\opn{OR_{AIL}}$  was highly effective for representation learning in the setting of a deep ResNet architecture trained on images.
In scenarios which involve manipulating high-level features extracted by an embedding network, we find that using $\opn{XNOR_{\ast{}IL}}$ or an ensemble of AIL activation functions together works best, and that the duplication ensembling strategy outperforms partitioning.
In this work we restricted ourselves to only considering using a single activation function (or ensemble) throughout the network, however our results together indicate stronger results may be found by using $\opn{OR_{\ast{}AIL}}$ for feature extraction and either $\opn{XNOR_{\ast{}IL}}$ or ensemble \{$\opn{OR}, \opn{XNOR_{\ast{}IL}}$ (d)\} for later higher-order reasoning layers.

Our work shows there is more to learn about the importance of more complex activation functions, both for \revision{ANN applications} and \revision{for} non-linear dendritic integration in \revision{biological} neuronal networks.

\hider{
\subsubsection*{Reproducibility Statement}

We have shared code to run all experiments considered in this paper with the reviewers via an anonymous download URL.
The code base contains detailed instructions on how to setup each experiment, including downloading the datasets and installing environments with pinned dependencies.
It should be possible to reproduce all our experimental results with this code.
For the final version of the paper, we will make the code publicly available on an online repository and share a link to it within the paper.
}

%
\begin{ack}
We are grateful to Chandramouli Shama Sastry, Eleni Triantafillou, and Finlay Maguire for insightful discussions.

Resources used in preparing this research were provided, in part, by the Province of Ontario, the Government of Canada through CIFAR, and companies sponsoring the Vector Institute \url{https://vectorinstitute.ai/partners/}, and in part by ACENET \url{https://ace-net.ca/} and Compute Canada \url{https://www.computecanada.ca/}.
Additionally, we gratefully acknowledge the support of NVIDIA Corporation with the donation of the Titan Xp GPU used for this research.
\end{ack}

\FloatBarrier


\bibliography{main}
\bibliographystyle{iclr2022_conference}

\section*{Checklist}

\hider{
The checklist follows the references.  Please
read the checklist guidelines carefully for information on how to answer these
questions.  For each question, change the default \answerTODO{} to \answerYes{},
\answerNo{}, or \answerNA{}.  You are strongly encouraged to include a {\bf
justification to your answer}, either by referencing the appropriate section of
your paper or providing a brief inline description.  For example:
\begin{itemize}
  \item Did you include the license to the code and datasets? \answerYes{See Section~\ref{gen_inst}.}
  \item Did you include the license to the code and datasets? \answerNo{The code and the data are proprietary.}
  \item Did you include the license to the code and datasets? \answerNA{}
\end{itemize}
Please do not modify the questions and only use the provided macros for your
answers.  Note that the Checklist section does not count towards the page
limit.  In your paper, please delete this instructions block and only keep the
Checklist section heading above along with the questions/answers below.
}

\begin{enumerate}

\item For all authors...
\begin{enumerate}
  \item Do the main claims made in the abstract and introduction accurately reflect the paper's contributions and scope?
    \answerYes{}
  \item Did you describe the limitations of your work?
    \answerYes{}
  \item Did you discuss any potential negative societal impacts of your work?
    \answerNo{The methodology we discuss is general-purpose and any societal impacts we could discuss would be highly general and completely speculative at this stage.}
  \item Have you read the ethics review guidelines and ensured that your paper conforms to them?
    \answerYes{}
\end{enumerate}

\item If you are including theoretical results...
\begin{enumerate}
  \item Did you state the full set of assumptions of all theoretical results?
    \answerYes{}
        \item Did you include complete proofs of all theoretical results?
    \answerYes{}
\end{enumerate}

\item If you ran experiments...
\begin{enumerate}
  \item Did you include the code, data, and instructions needed to reproduce the main experimental results (either in the supplemental material or as a URL)?
    \answerYes{See supplemental material.}
  \item Did you specify all the training details (e.g., data splits, hyperparameters, how they were chosen)?
    \answerYes{See \autoref{s:parity}--\ref{s:transfer} and \aref{a:parity}--\ref{a:czsl}.}
        \item Did you report error bars (e.g., with respect to the random seed after running experiments multiple times)?
    \answerYes{All results figures and tables have error over several seeds, and the number of seeds used, clearly indicated.}
        \item Did you include the total amount of compute and the type of resources used (e.g., type of GPUs, internal cluster, or cloud provider)?
    \answerYes{See \aref{a:hardware}. The time to run each experiment is also indicated in the README files which accompany the code.}
\end{enumerate}

\item If you are using existing assets (e.g., code, data, models) or curating/releasing new assets...
\begin{enumerate}
  \item If your work uses existing assets, did you cite the creators?
    \answerYes{See \aref{a:datasets}, \autoref{tab:datasets}.}
  \item Did you mention the license of the assets?
    \answerNo{It is surprisingly difficult to discover the license under which popular ML datasets, such as CIFAR-10, are released! Most of the popular datasets within the community do not appear to be released under open licenses.}
  \item Did you include any new assets either in the supplemental material or as a URL?
    \answerYes{Code to replicate our results is included in the supplemental material zip.}
  \item Did you discuss whether and how consent was obtained from people whose data you're using/curating?
    \answerNA{}
  \item Did you discuss whether the data you are using/curating contains personally identifiable information or offensive content?
    \answerNo{We describe the nature of the datasets we are using. There is no reason to expect that the data we are using contains personally identifiable information.}
\end{enumerate}

\item If you used crowdsourcing or conducted research with human subjects...
\begin{enumerate}
  \item Did you include the full text of instructions given to participants and screenshots, if applicable?
    \answerNA{}
  \item Did you describe any potential participant risks, with links to Institutional Review Board (IRB) approvals, if applicable?
    \answerNA{}
  \item Did you include the estimated hourly wage paid to participants and the total amount spent on participant compensation?
    \answerNA{}
\end{enumerate}

\end{enumerate}


\clearpage

\appendix
\section{Appendix}

\subsection{Approximating logistic AND and OR using ReLU}
\label{a:relu}

In \autoref{fig:ap:relu} (upper panels), we show a representation of a 2D linear layer followed by the ReLU activation function.
Changing projection used in the linear layer allows us to rotate and stretch the output only; each unit is, by construction, empty in half the plane.

\begin{figure}[h]
    \centering
    \includegraphics[scale=0.45]{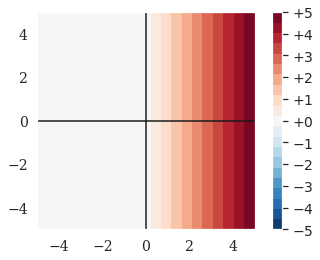}
    \includegraphics[scale=0.45]{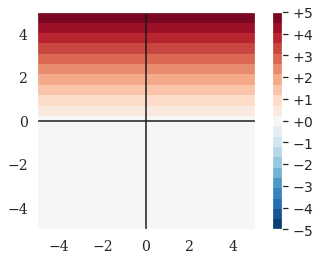}
    \includegraphics[scale=0.45]{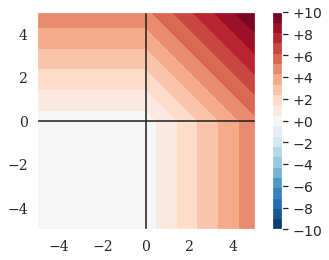}
\caption{
ReLU unit, and ReLU units followed by a linear layer to try to approximate $\opn{OR_{IL}}$, leaving a dead space where the negative logits should be.
}
\label{fig:ap:relu}
\end{figure}

If we apply a second linear layer on top of the outputs of the first two units, we can try to approximate the logit AND or OR function with $z = \opn{ReLU}(x) + \opn{ReLU}(y)$.
However, as shown in  \autoref{fig:ap:relu}, lower panel, the solution using ReLU leaves a quadrant of the output space set to zero due to its behaviour at truncating away information.
This illustrates why a ReLU-based neural network can not accurately approximate the logistic AND and OR functions between a pair of logits with only two hidden units.
Comparing this with Figures \ref{fig:AND} and \ref{fig:OR} demonstrates the advantage of $\opn{AND_{AIL}}$ and $\opn{OR_{AIL}}$ in situations where the network needs to use probabilistic Boolean logic as its basis.


\FloatBarrier

\subsection{Solving XOR}
\label{a:xor}

A long-standing criticism of artificial neural networks is their inability to solve XOR with a single layer \citep{perceptrons}.
Of course, adding a single hidden layer allows a network using ReLU to solve XOR.
However, the way that it solves the problem is to join two of the disconnected regions together in a stripe (see \autoref{fig:toy-xor}), which does not extrapolate to correctly solve the task beyond the bounds of the training data.
Meanwhile, our $\opn{XNOR_{NIL}}$ and $\opn{XNOR_{NAIL}}$ activations are trivially able to solve the XOR problem without any hidden layers.
For comparison here, we include one hidden layer with 2 units for each network.
Including a layer before the activation function makes the task harder for networks using $\opn{XNOR_{*IL}}$, activations which must learn how to project the input space in order to compute the desired separation.
Also, including the linear layer allows the network to generalize to rotations and offset versions of the task.

\begin{figure}[h]
    \centering
    \includegraphics[scale=0.47]{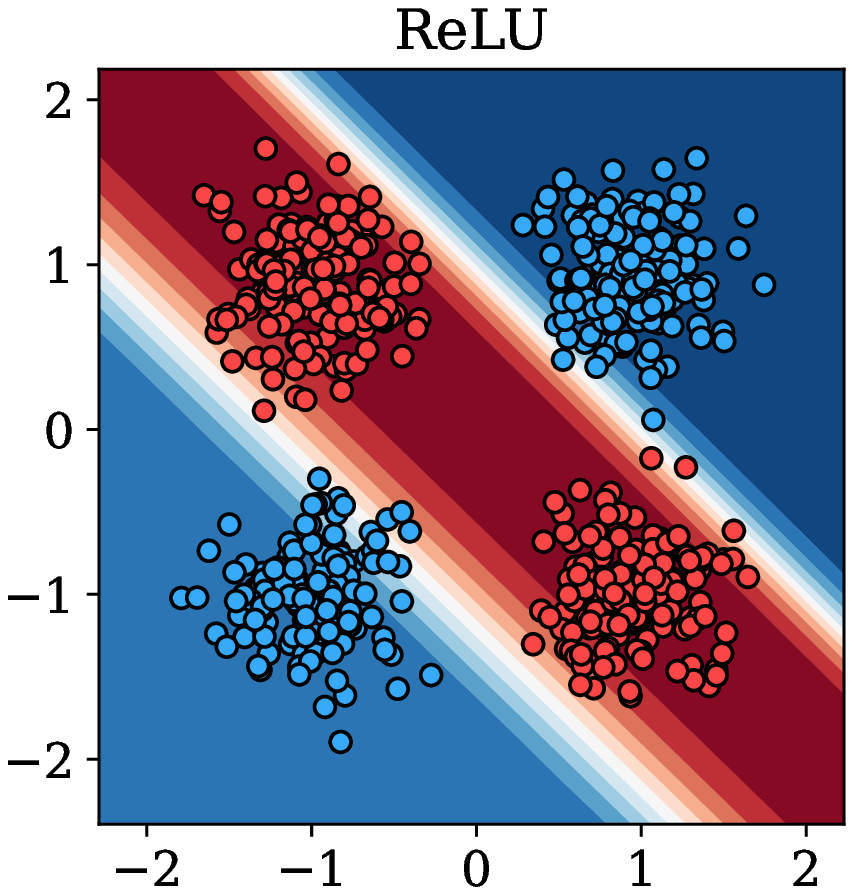}
    \includegraphics[scale=0.47]{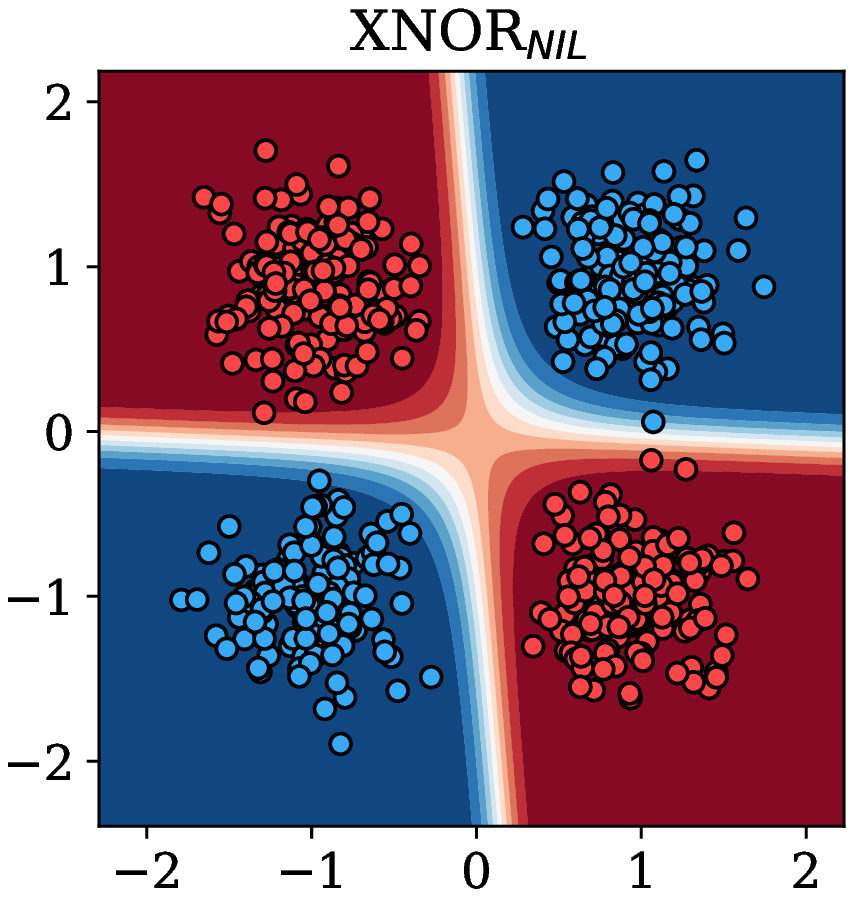}
    \includegraphics[scale=0.47]{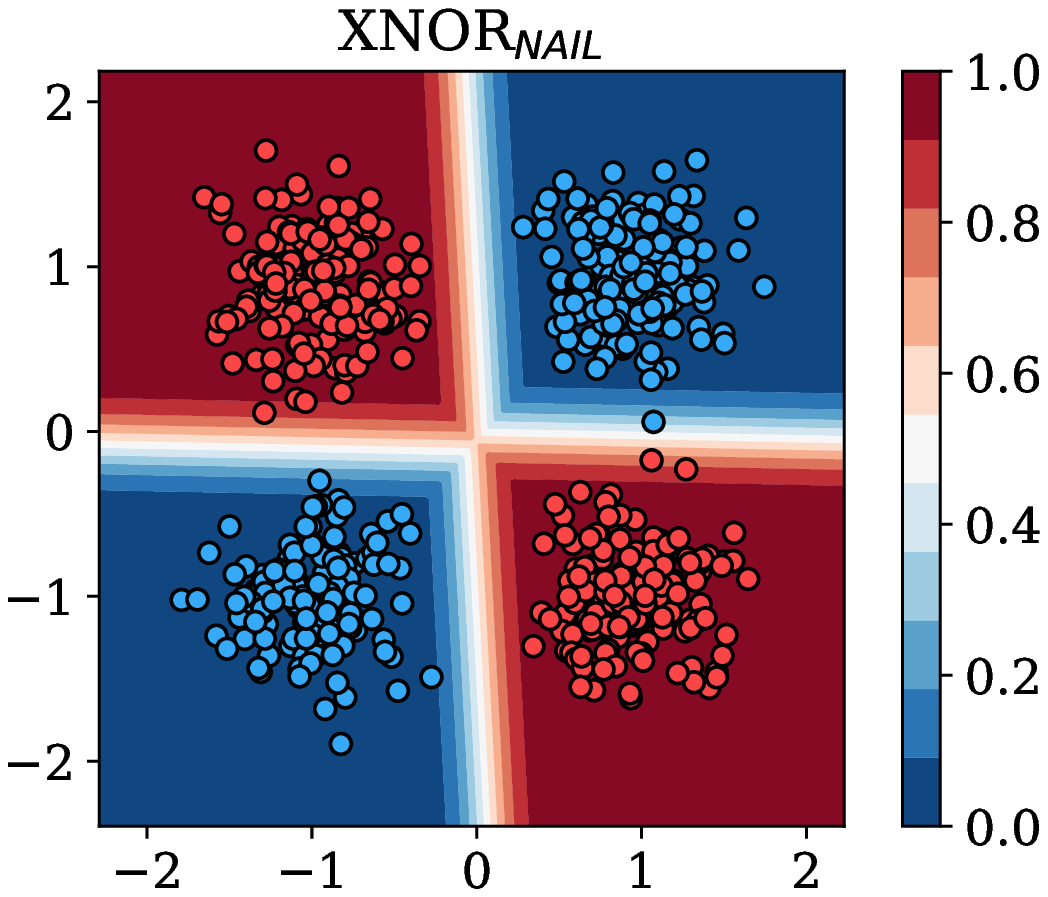}
\caption{
    Solving XOR with a single hidden layer of 2 units, using either ReLU, $\opn{XNOR_{NIL}}$, or $\opn{XNOR_{NAIL}}$ activation.
    Circles indicate negative (blue) and positive (red) training samples.
    The heatmaps indicate the output probabilities of the networks.
}
\label{fig:toy-xor}
\end{figure}

\FloatBarrier

\subsection{Linear layers and Bayes' Rule in log-odds form}
\label{a:bayesrule}

Bayes' Theorem or Bayes' Rule is given by
\begin{equation}
\PP(H|X) = \frac{\PP(X|H) \, \PP(H)}{\PP(X)}
\label{eq:bayesruleprob}
.\end{equation}
In this case, we update the probability of our hypothesis, $H$, based on the event of the observation of a new piece of evidence, $X$.
Our prior belief for the hypothesis is $\PP(H)$, and posterior is $\PP(H|X)$.
To update our belief from the prior and yield the posterior, we multiply by the Bayes factor for the evidence which is given by $\nicefrac{\PP(X|H)}{\PP(X)}$.

Converting the probabilities into log-odds ratios (logits) yields the following representation of Bayes' Rule.
\begin{equation}
\log \left( \frac{\PP(H|X)}{\PP(H^C|X)} \right) = \log \left( \frac{\PP(H)}{\PP(H^C)} \right) + \log \left( \frac{\PP(X|H)}{\PP(X|H^C)} \right)
\label{eq:bayesrulelogit}
\end{equation}
Here, $H^C$ is the complement to $H$ (the event that the hypothesis is false), and $\PP(H^C) = 1 - \PP(H)$.
Our prior log-odds ratio is $\log\left((\nicefrac{\PP(H)}{\PP(H^C)}\right)$, and our posterior after updating based on the observation of new evidence $X$ is $\log\left(\nicefrac{\PP(H|X)}{\PP(H^C|X)}\right)$.
To update our belief from the prior and yield the posterior, we add the log-odds Bayes factor for the evidence which is given by $\log\left(\nicefrac{\PP(X|H)}{\PP(X|H^C)}\right)$.

In log-odds space, a series of updates with multiple pieces of independent evidence can be performed at once with a summation operation.
\begin{equation}
\log \left( \frac{\PP(H|\vx)}{\PP(H^C|\vx)} \right) = \log \left( \frac{\PP(H)}{\PP(H^C)} \right) + \sum_i \log \left( \frac{\PP(X_i|H)}{\PP(X_i|H^C)} \right)
\label{eq:bayesrulelogitmulti}
.\end{equation}

This operation can be represented by the linear layer in an artificial neural network, $z_k=b_k+\vw_k^T\vx$.
Here, the bias term $b_k = \log\left((\nicefrac{\PP(H)}{\PP(H^C)}\right)$ is the prior for hypothesis (the presence of the feature represented by the k-th neuron), and the series of weighted inputs from the previous layer, $w_{ki}\,x_i$ provide evidential updates.
This is also equivalent to the operation of a multinomial naïve Bayes classifier, expressed in log-space, if we choose $w_{{ki}}=\log p_{{ki}}$ \citep{Rennie2003}.

\subsection{Difference between AIL and IL functions}
\label{a:diff}

Here, we measure and show the difference between the true logit-space operations and our AIL approximations, shown in \autoref{fig:ap:AND}, \autoref{fig:ap:OR}, and \autoref{fig:ap:XNOR}.

In each case, we observe that the magnitude of the difference is never more than 1, which occurs along the boundary lines in AIL.
Since the magnitude of the three functions increase as we move away from the origin, the relative difference decreases in magnitude as the size of $x$ and $y$ increase.

\begin{figure}[h]
    \centering
    \includegraphics[scale=0.45]{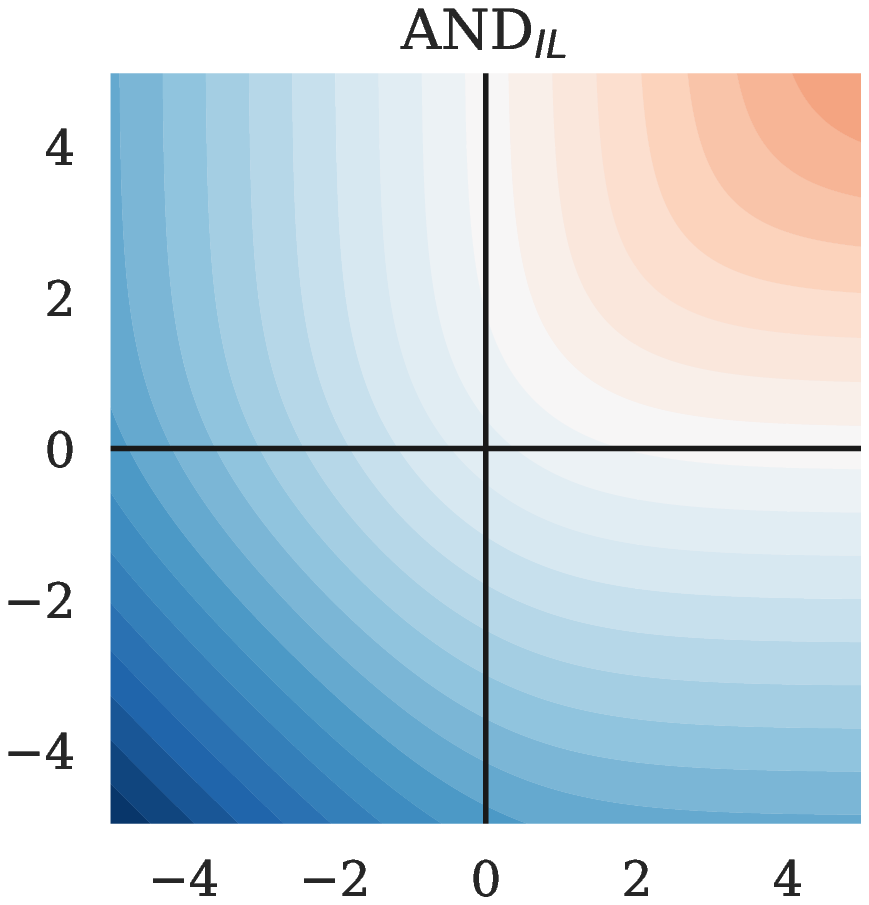}
    \includegraphics[scale=0.45]{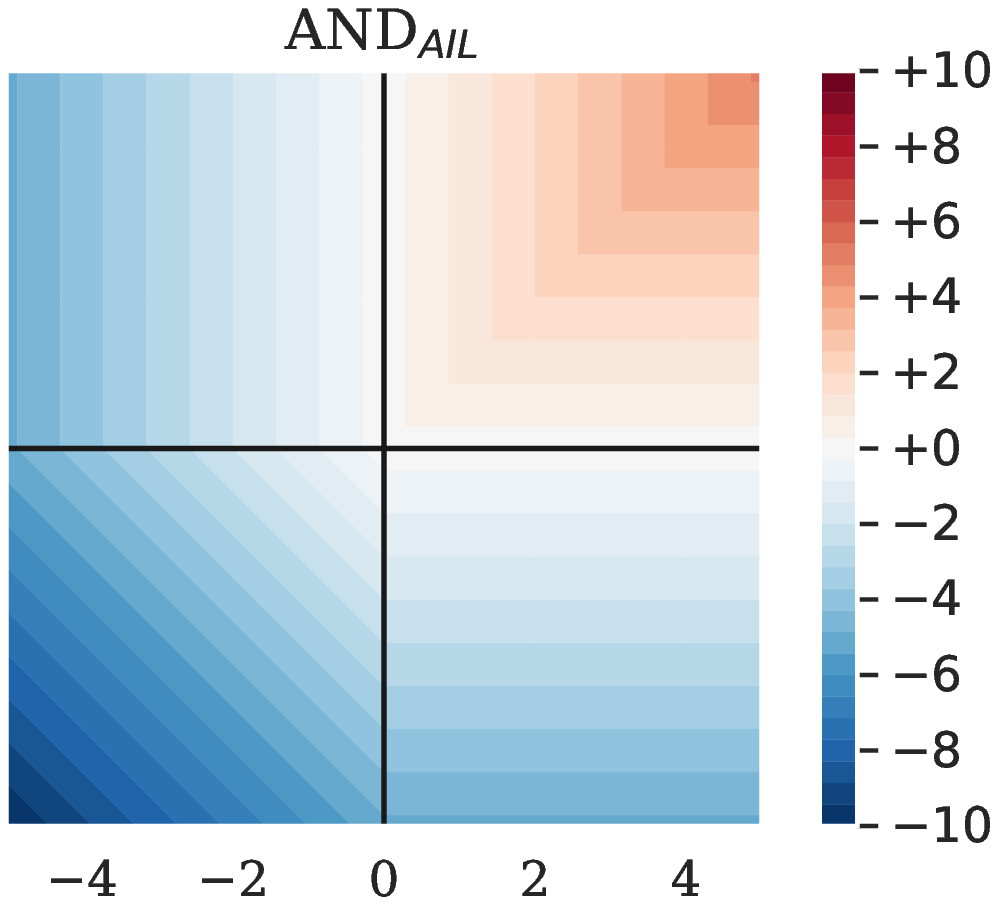}
    \\
    \includegraphics[scale=0.45]{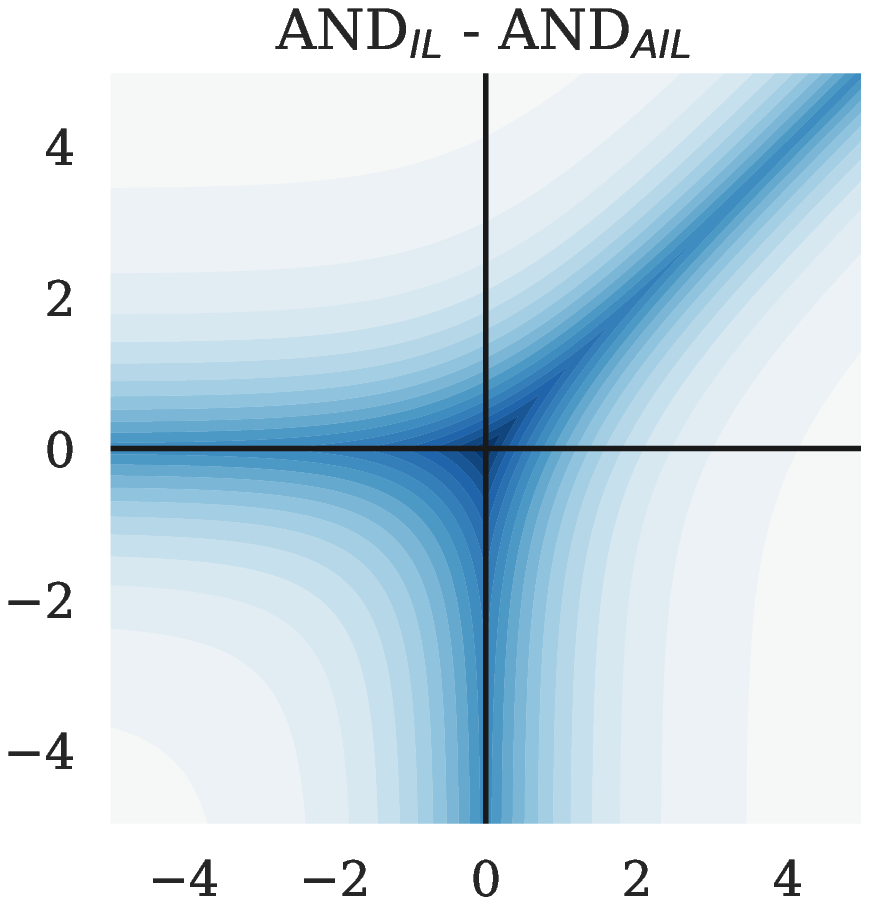}
    \includegraphics[scale=0.45]{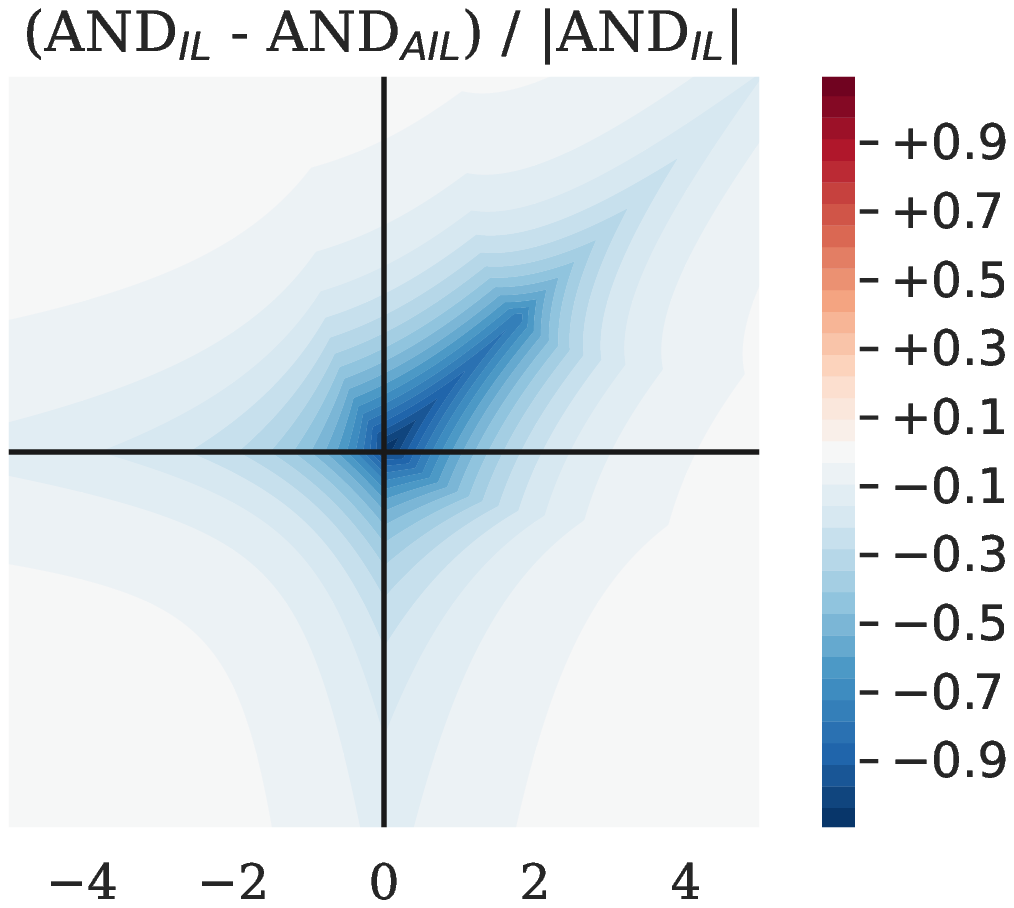}
\caption{
Heatmaps showing $\opn{AND_{IL}}$, $\opn{AND_{AIL}}$, their difference, and their relative difference.
}
\label{fig:ap:AND}
\end{figure}

\begin{figure}[h]
    \centering
    \includegraphics[scale=0.45]{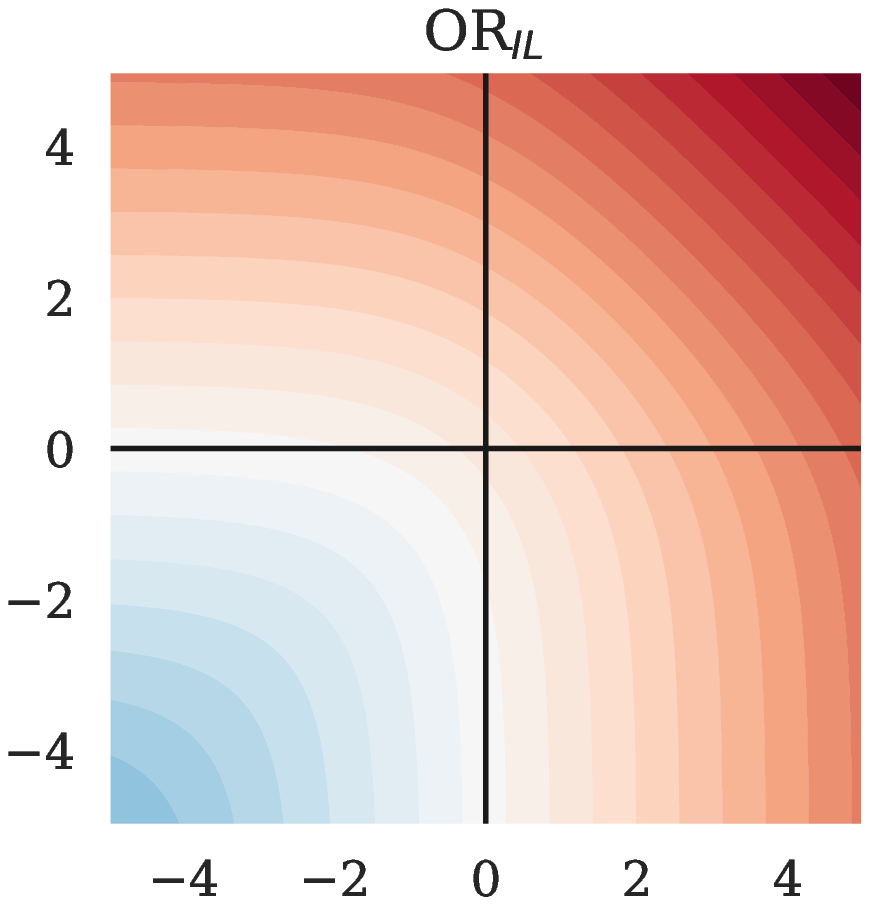}
    \includegraphics[scale=0.45]{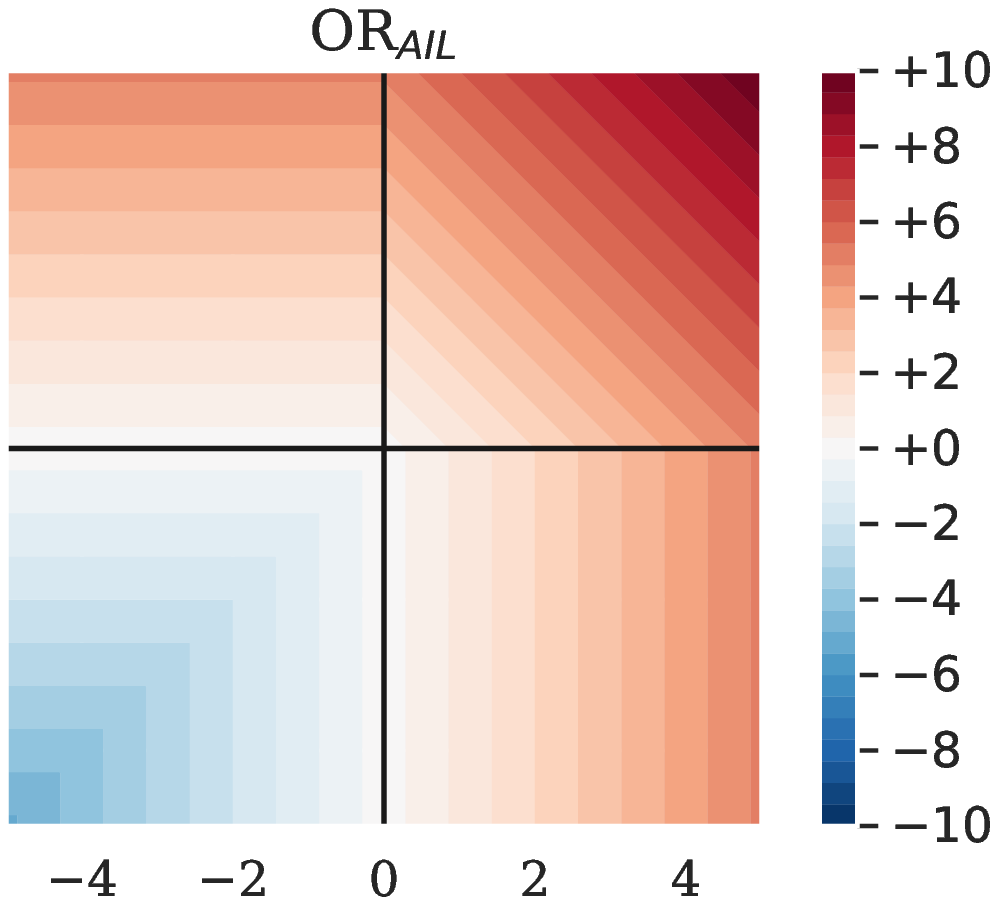}
    \\
    \includegraphics[scale=0.45]{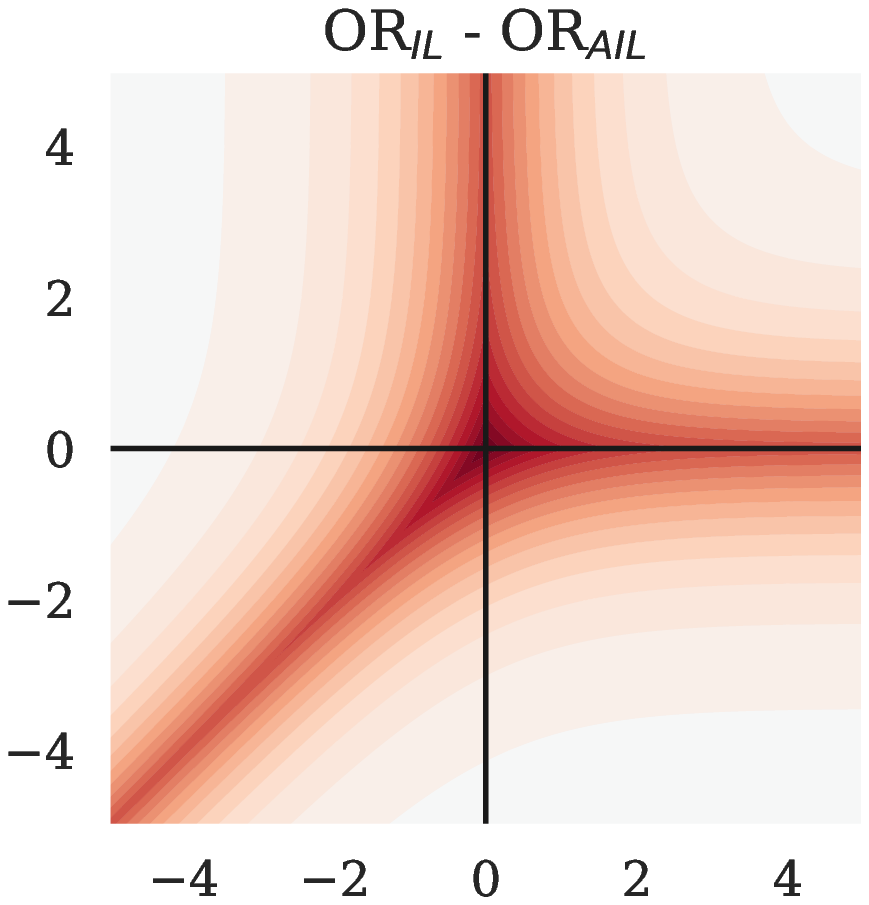}
    \includegraphics[scale=0.45]{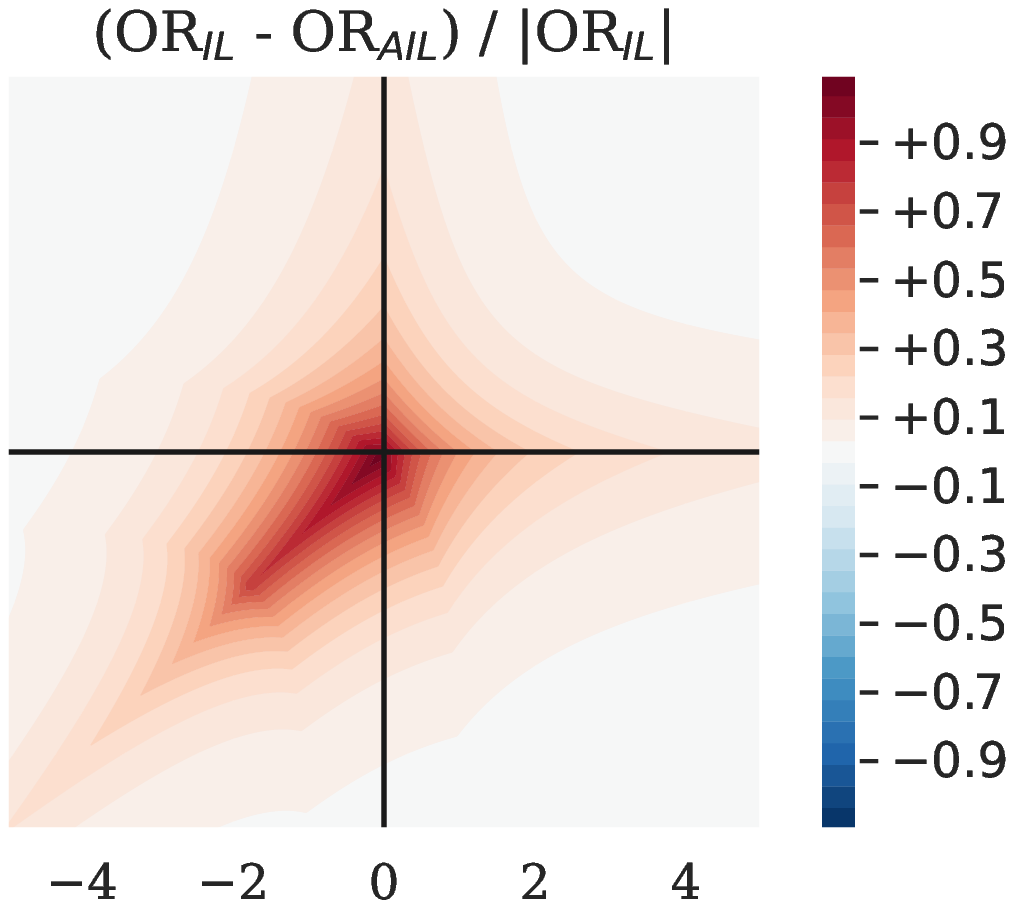}
\caption{
Heatmaps showing $\opn{OR_{IL}}$, $\opn{OR_{AIL}}$, their difference, and their relative difference.
}
\label{fig:ap:OR}
\end{figure}

\begin{figure}[h]
    \centering
    \includegraphics[scale=0.45]{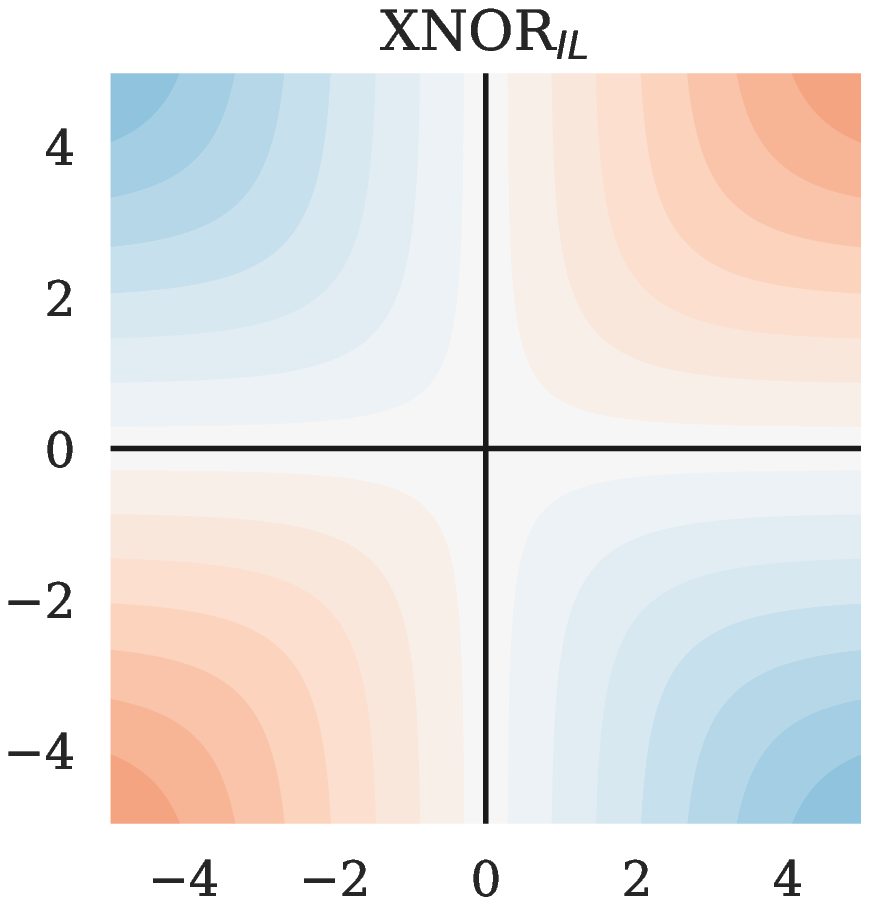}
    \includegraphics[scale=0.45]{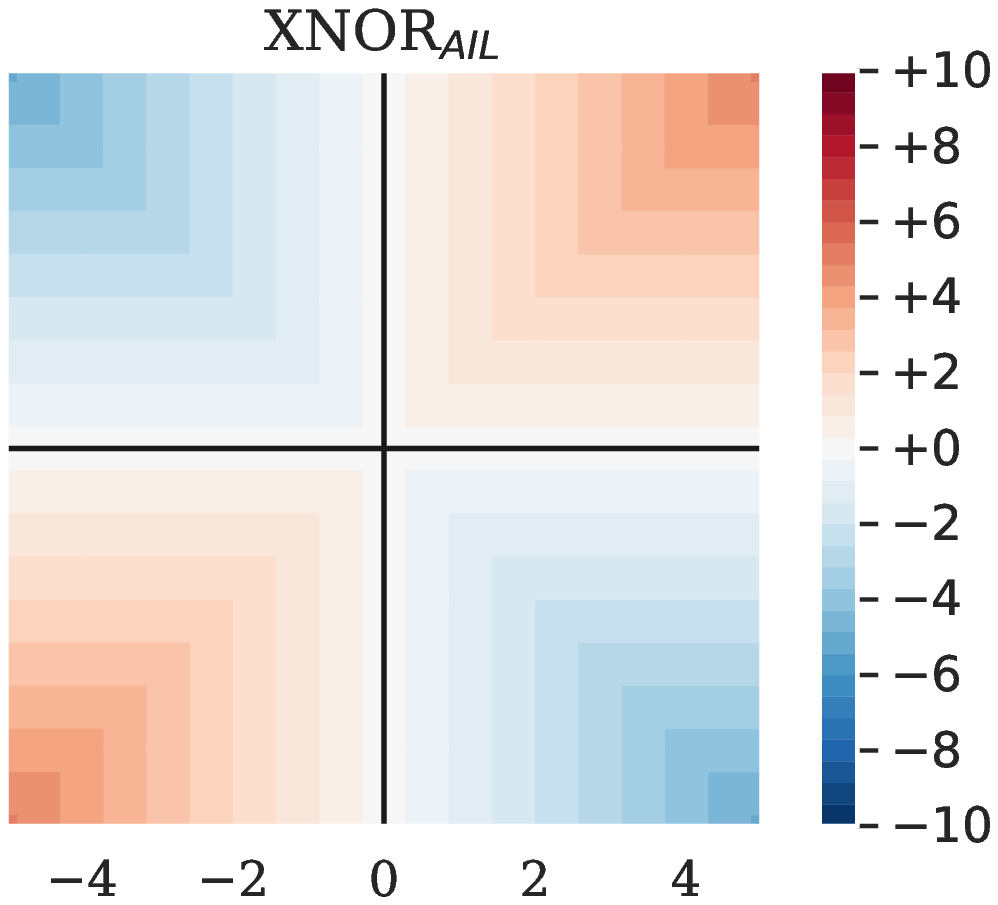}
    \\
    \includegraphics[scale=0.45]{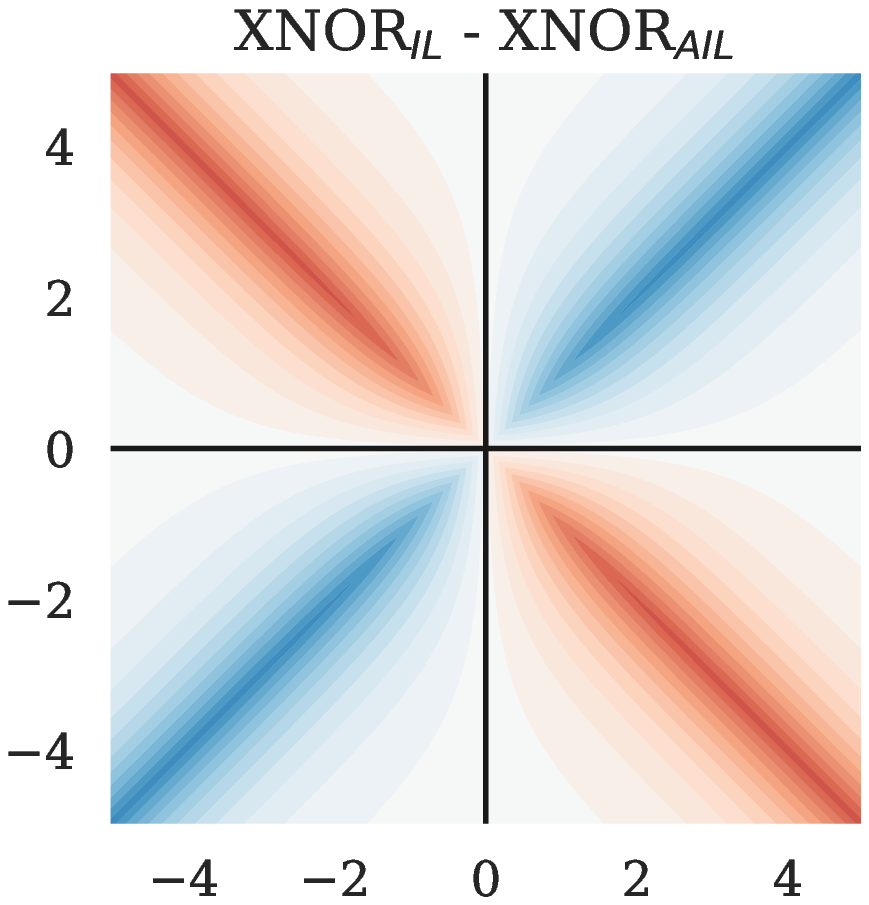}
    \includegraphics[scale=0.45]{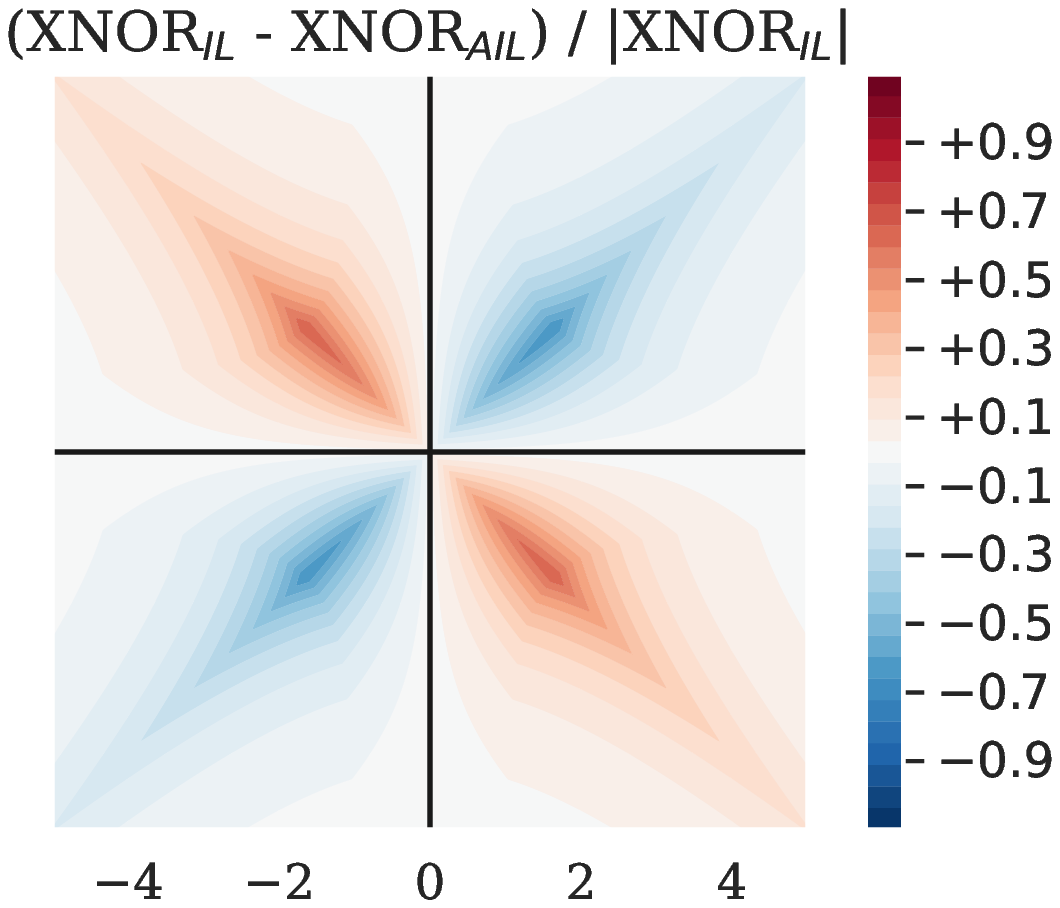}
\caption{
Heatmaps showing $\opn{XNOR_{IL}}$, $\opn{XNOR_{AIL}}$, their difference, and their relative difference.
}
\label{fig:ap:XNOR}
\end{figure}

\FloatBarrier

\subsection{Gradient of AIL and IL functions}
\label{a:gradients}

We show the gradient of each of the logit-space Boolean operators and their AIL approximates in \autoref{fig:ap:grad:AND}, \autoref{fig:ap:grad:OR}, and \autoref{fig:ap:grad:XNOR}.
By the symmetry of each of the functions, the derivative with respect to $y$ is a reflected copy of the gradient with respect to $x$.

We find that the gradient of each AIL function closely matches that of the exact form.
Whilst there are ``dead'' regions where the gradient is zero, this only occurs for one of the derivatives at a time (there is always a gradient with respect to at least one of $x$ and $y$).

\begin{figure}[h]
    \centering
    \includegraphics[scale=0.45]{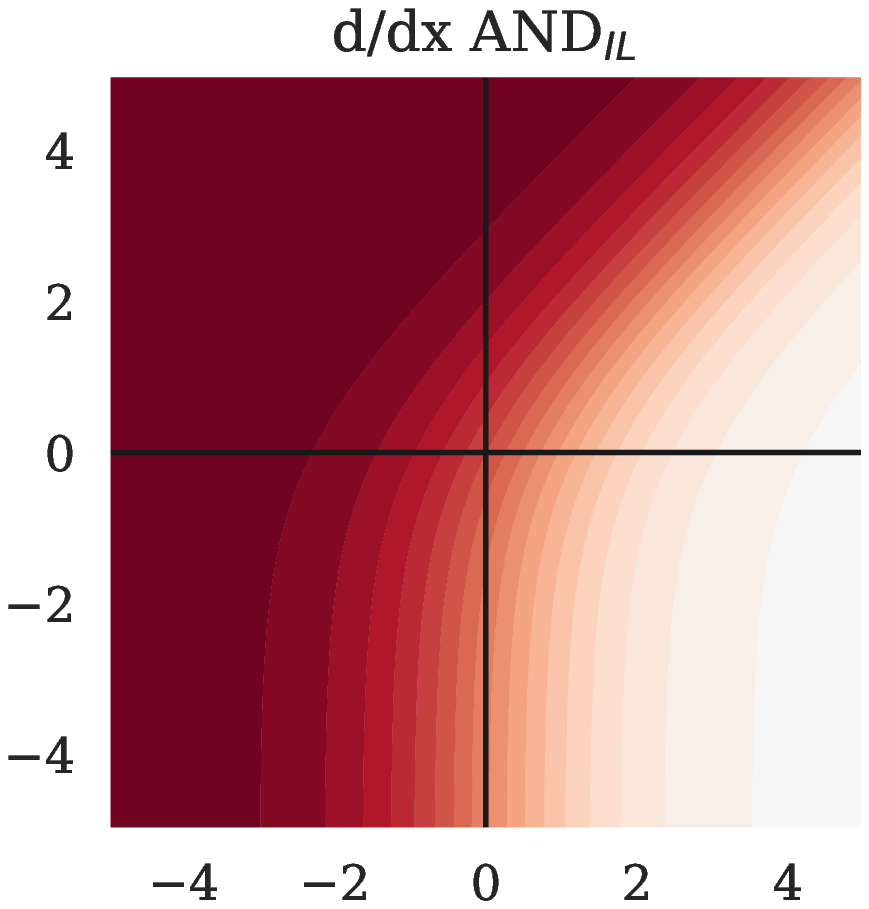}
    \includegraphics[scale=0.45]{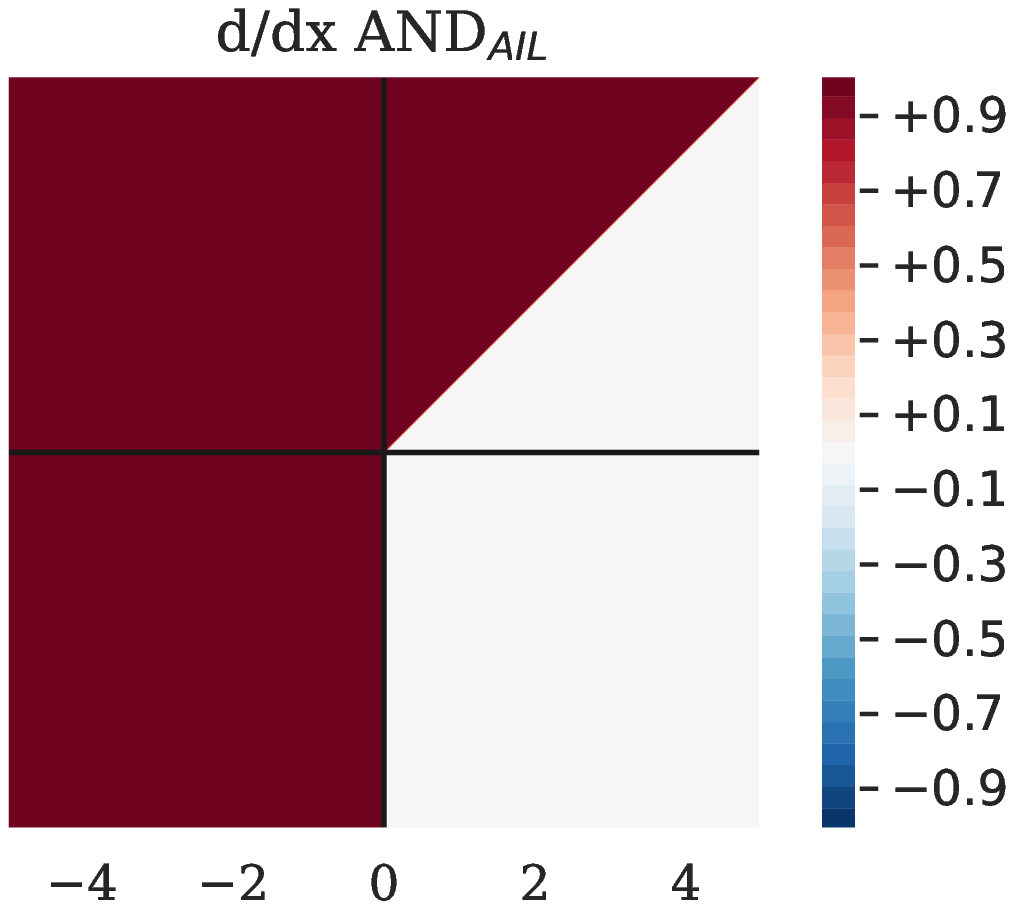}
    \\
    \includegraphics[scale=0.45]{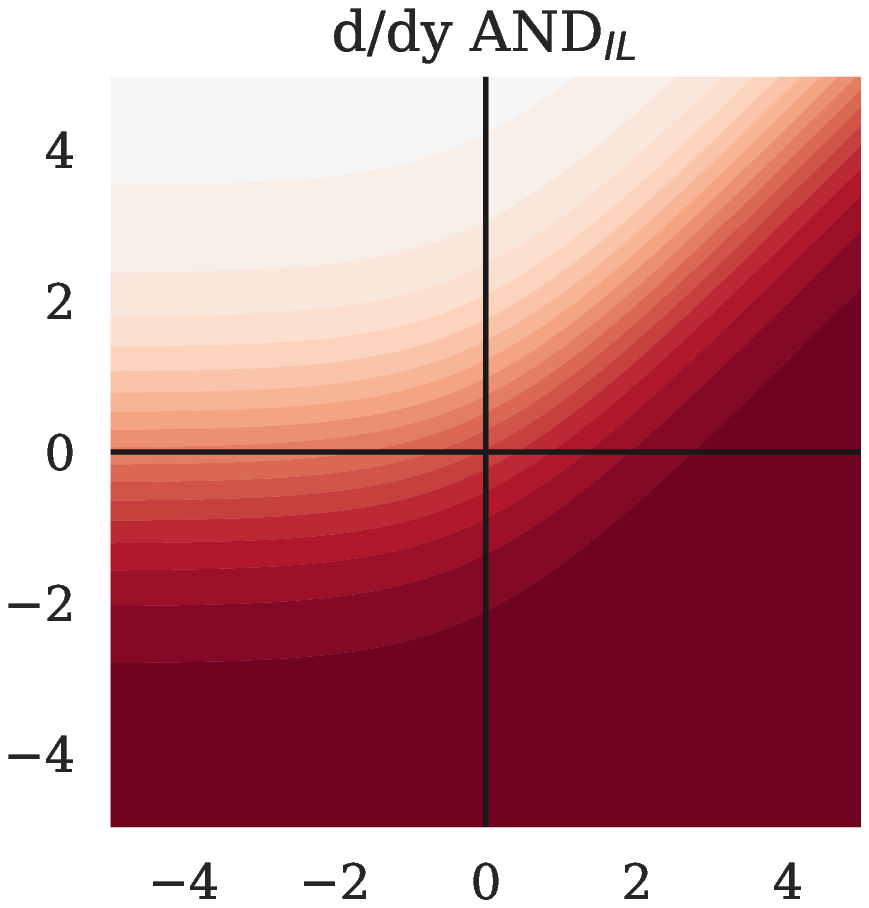}
    \includegraphics[scale=0.45]{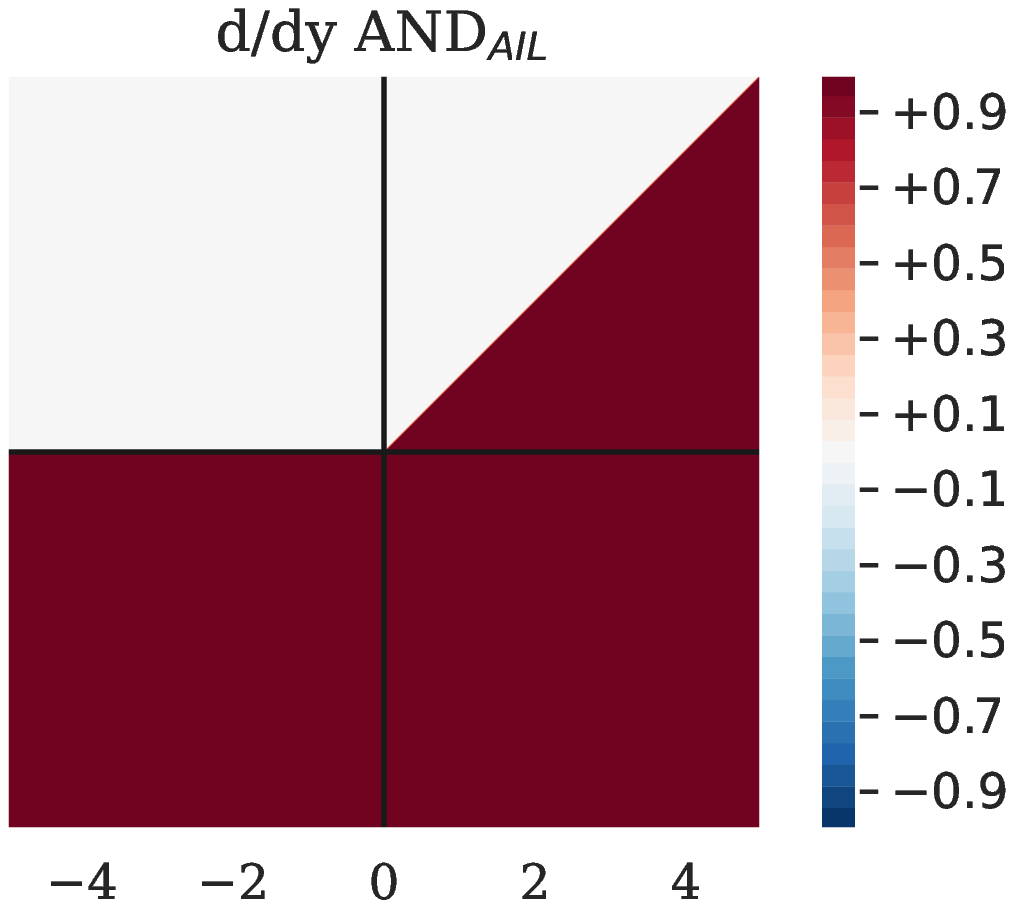}
\caption{
Heatmaps showing the gradient with respect to $x$ and $y$ of $\opn{AND_{IL}}$ and $\opn{AND_{AIL}}$.
}
\label{fig:ap:grad:AND}
\end{figure}

\begin{figure}[h]
    \centering
    \includegraphics[scale=0.45]{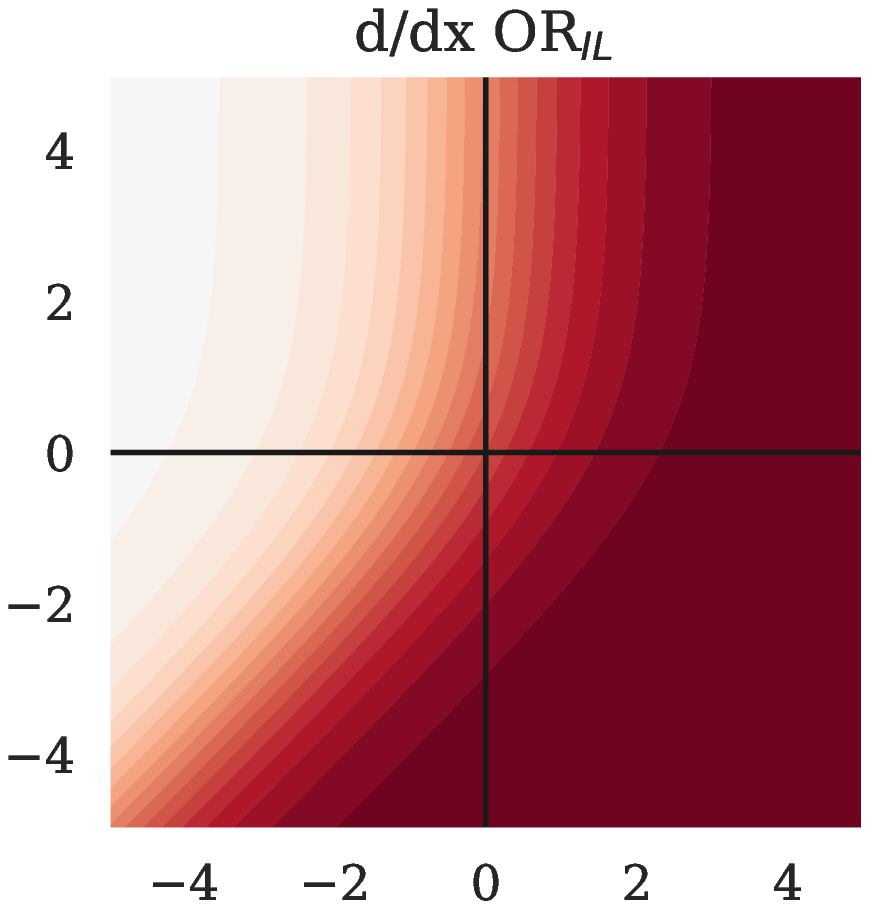}
    \includegraphics[scale=0.45]{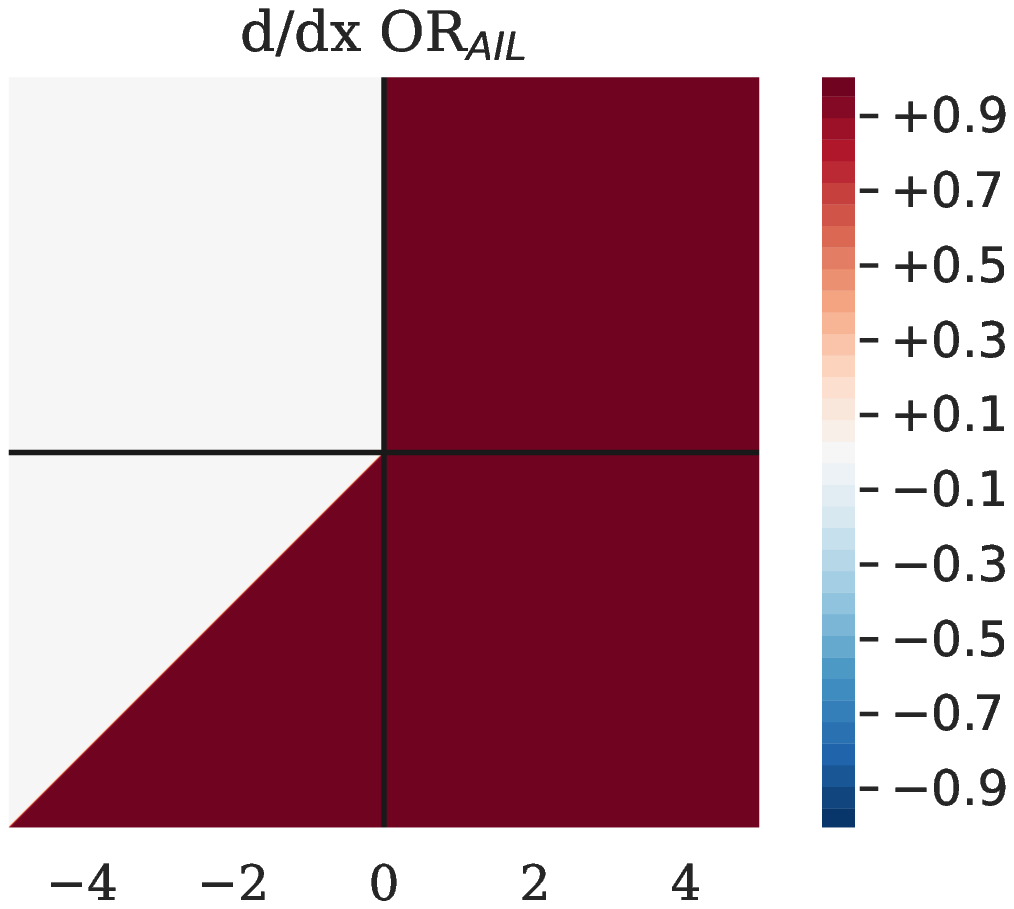}
    \\
    \includegraphics[scale=0.45]{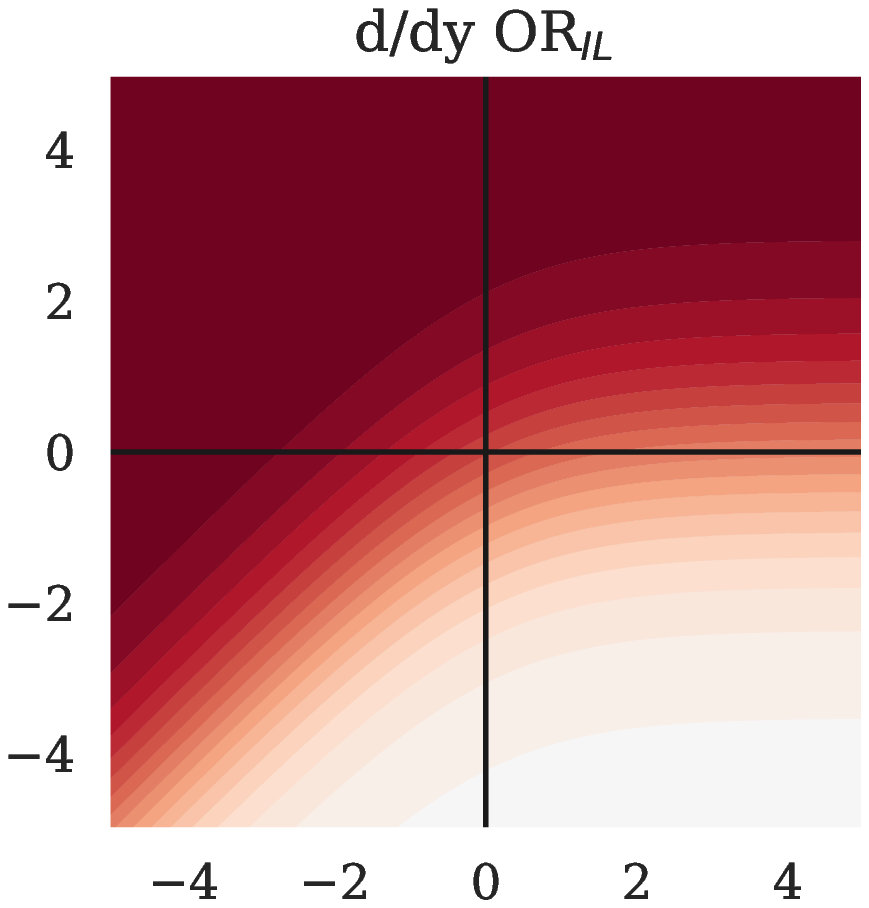}
    \includegraphics[scale=0.45]{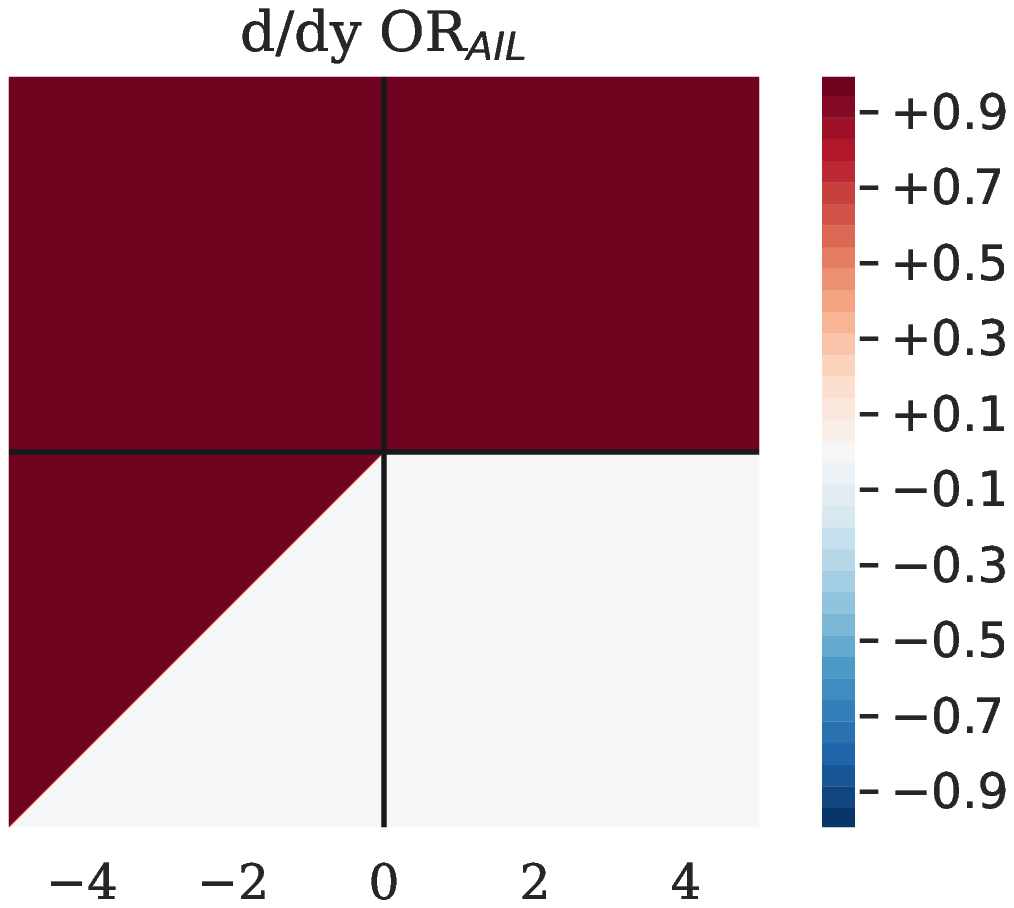}
\caption{
Heatmaps showing the gradient with respect to $x$ and $y$ of $\opn{OR_{IL}}$ or $\opn{OR_{AIL}}$.
}
\label{fig:ap:grad:OR}
\end{figure}

\begin{figure}[h]
    \centering
    \includegraphics[scale=0.45]{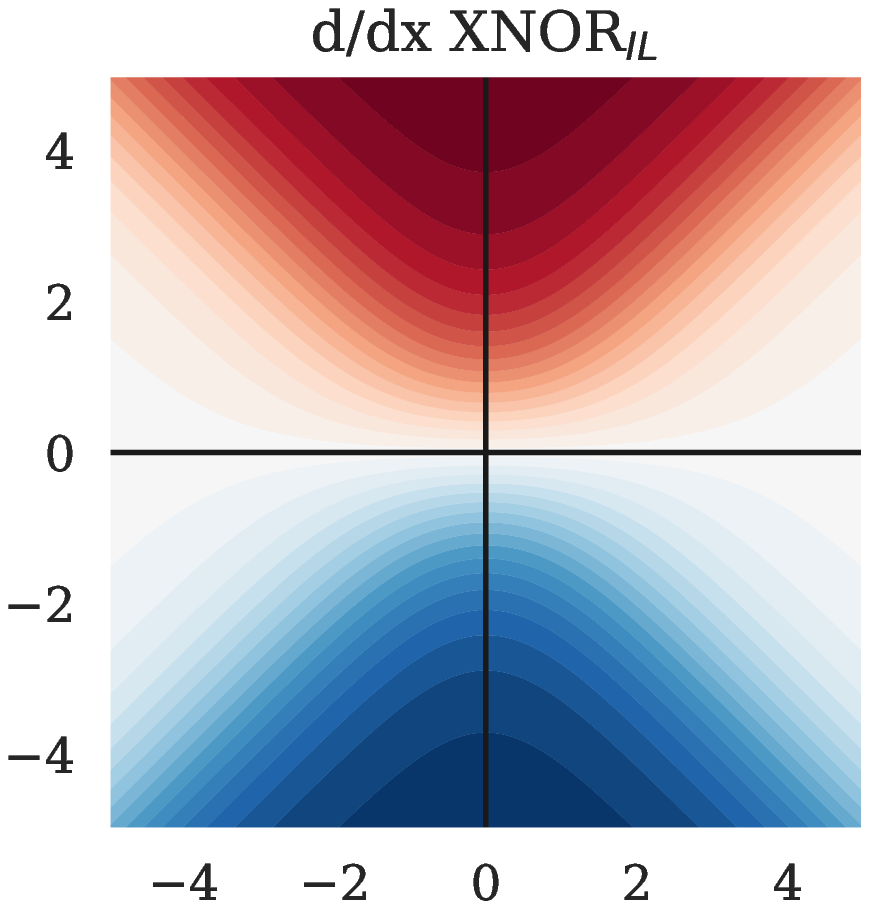}
    \includegraphics[scale=0.45]{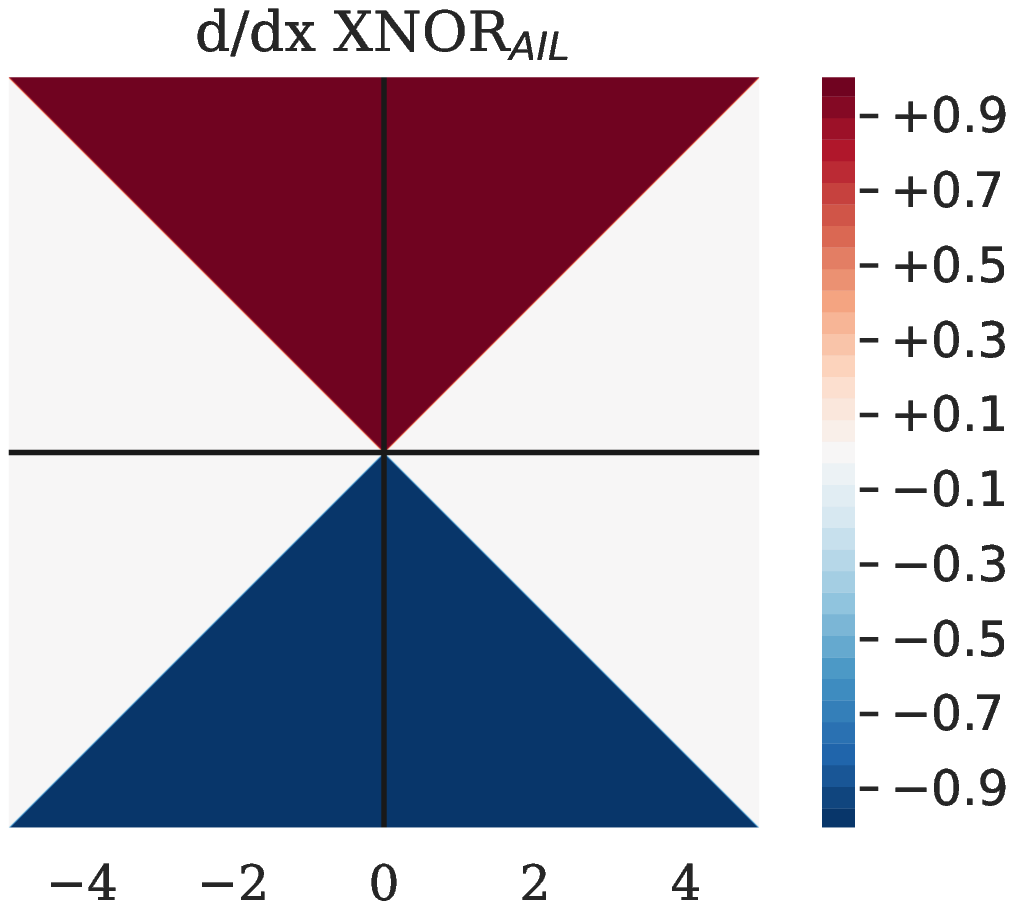}
    \\
    \includegraphics[scale=0.45]{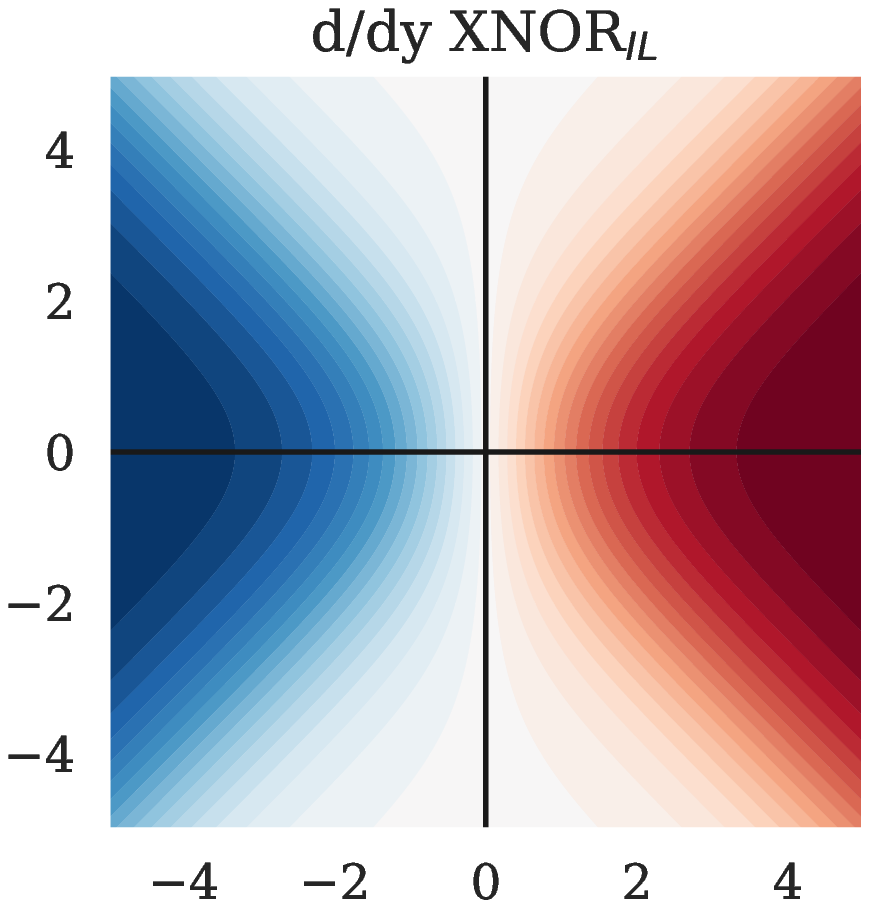}
    \includegraphics[scale=0.45]{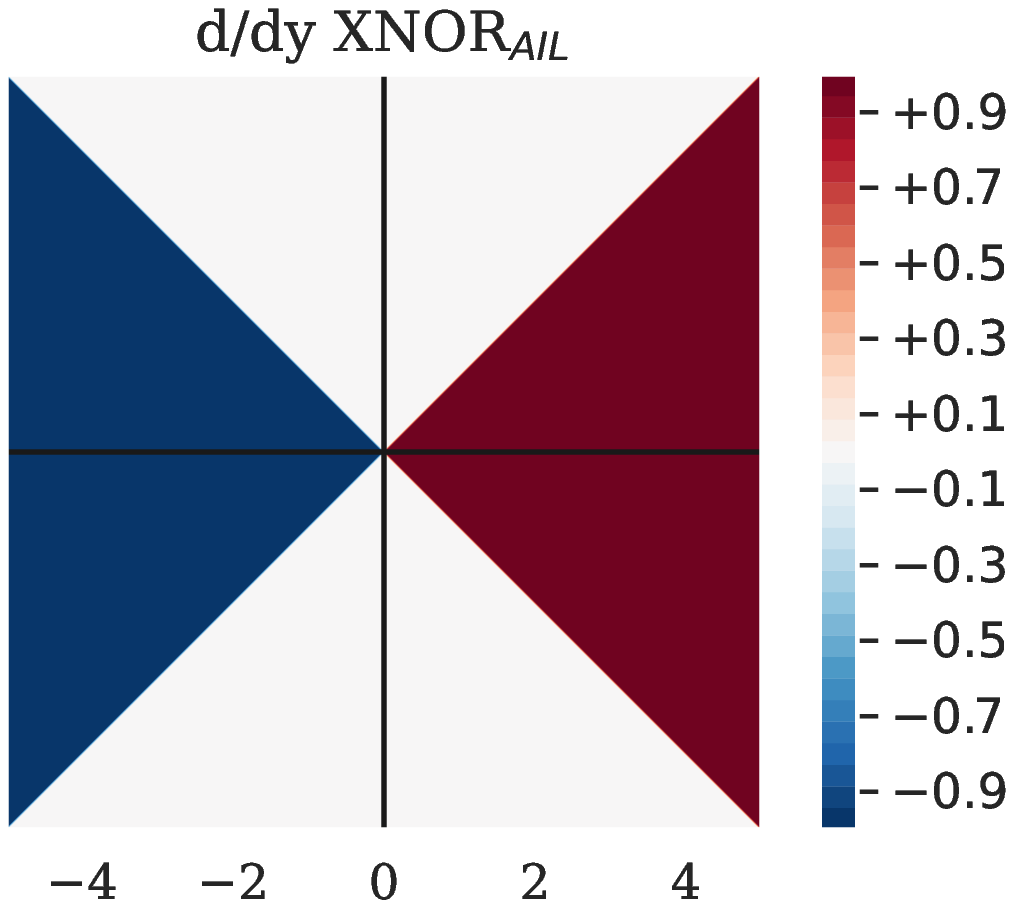}
\caption{
Heatmaps showing the gradient with respect to $x$ and $y$ of $\opn{XNOR_{IL}}$ xnor $\opn{XNOR_{AIL}}$.
}
\label{fig:ap:grad:XNOR}
\end{figure}

\FloatBarrier

\subsection{Activation pathing diagrams}
\label{a:ensembling}

Here we illustrate the implications of using a $2\!\to\!1$ activation function on the network achitecture, and the layout of the partition and duplication ensembling methods described in \autoref{s:ensembling}.

A standard 1D activation function with a $1\!\to\!1$ mapping such as $\opn{ReLU}$ processes each pre-activation logit independently, and the number of post-activation channels equals the pre-activation channels (\autoref{fig:pathway-relu}).
The total number of parameters per layer is $C^2 + C \sim C^2$.

\begin{figure}[h]
    \centering
    \begin{subfigure}[b]{0.47\linewidth}
        \includegraphics[width=0.95\linewidth]{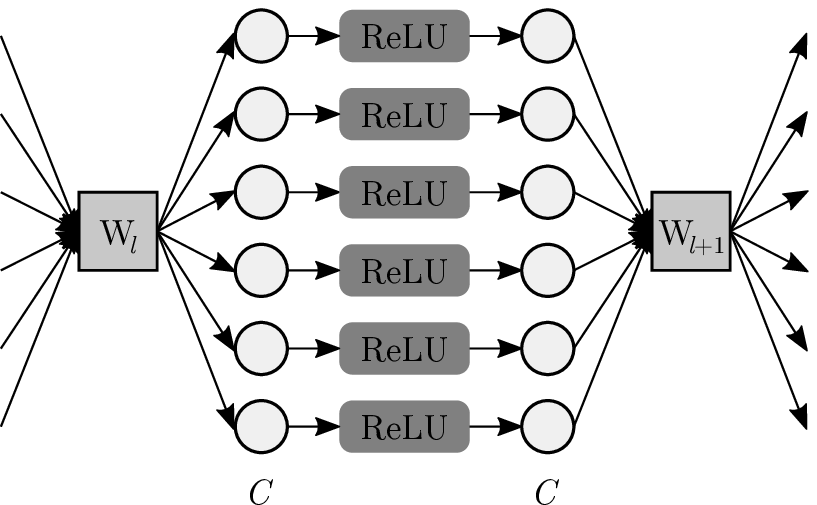}
        \caption{ReLU layer ($1\!\to\!1$ activation function).}
        \label{fig:pathway-relu}
    \end{subfigure}%
    ~~~
    \begin{subfigure}[b]{0.47\linewidth}
        \includegraphics[width=0.95\linewidth]{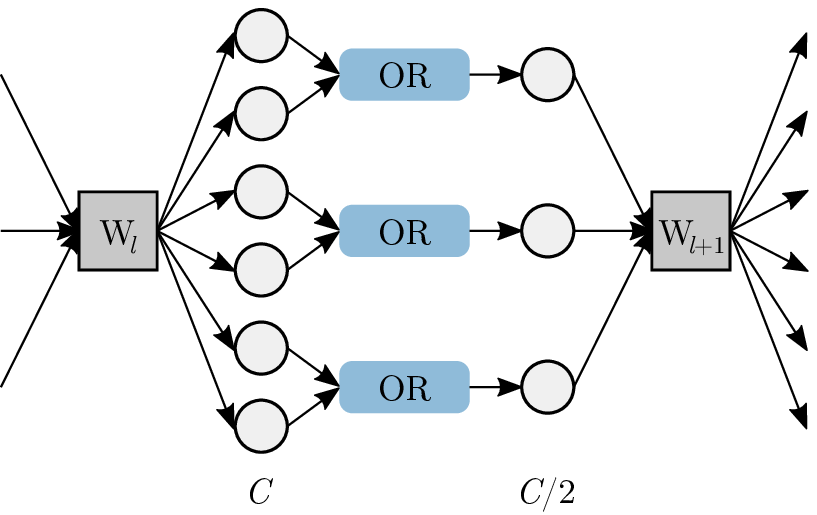}
        \caption{$\opn{OR_{AIL}}$ layer ($2\!\to\!1$ activation function).}
        \label{fig:pathway-or}
    \end{subfigure}%
    \\
    \begin{subfigure}[b]{0.47\linewidth}
        \includegraphics[width=0.95\linewidth]{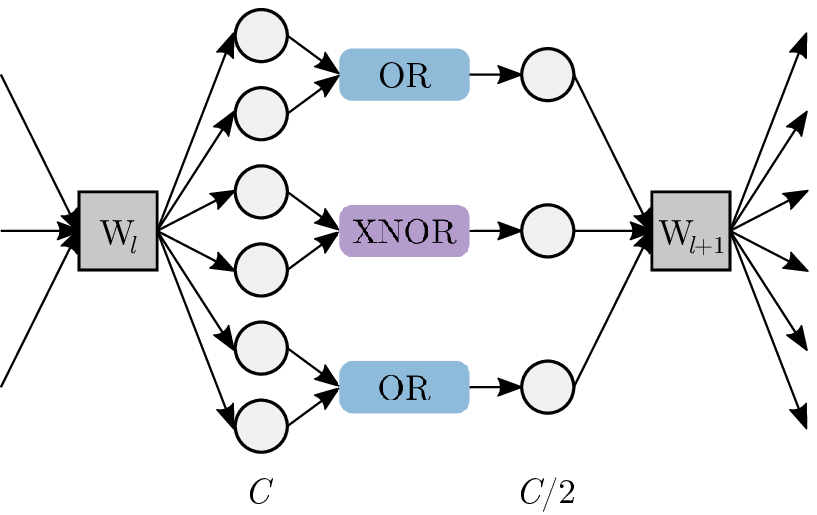}
        \caption{Partition ensembling strategy, using $\opn{OR}$ and $\opn{XNOR}$ (overall $2\!\to\!1$).}
        \label{fig:pathway-partition}
    \end{subfigure}%
    ~~~
    \begin{subfigure}[b]{0.47\linewidth}
        \includegraphics[width=0.95\linewidth]{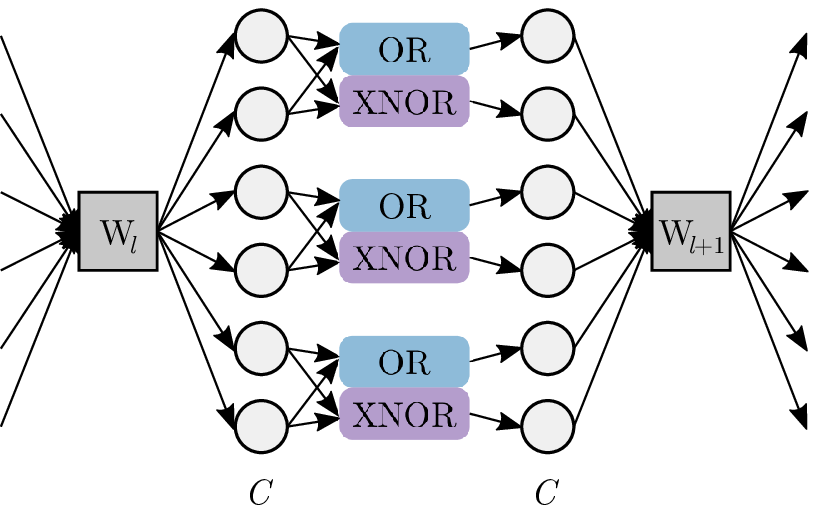}
        \caption{Duplication ensembling strategy, $\opn{OR}$ and $\opn{XNOR}$ (overall $2\!\to\!2$).}
        \label{fig:pathway-duplication}
    \end{subfigure}%
    \\
    \begin{subfigure}[b]{0.47\linewidth}
        \includegraphics[width=0.95\linewidth]{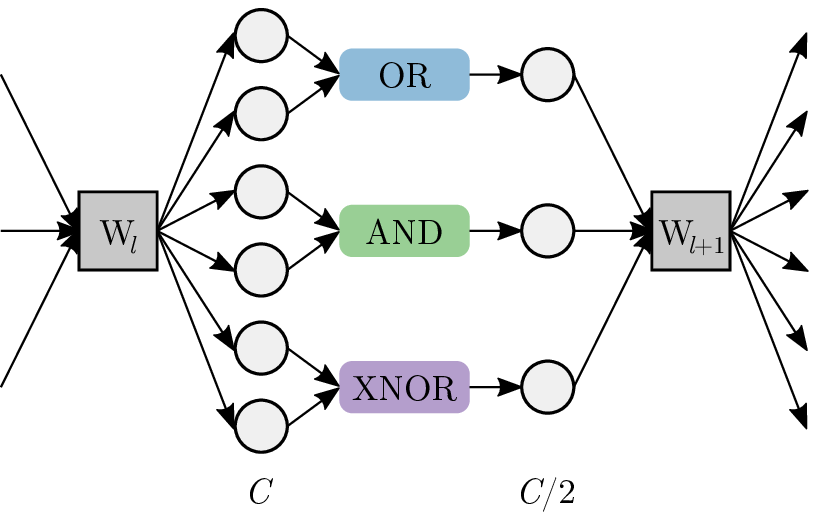}
        \caption{Partition ensembling strategy, using $\opn{OR}$, $\opn{AND}$, and $\opn{XNOR}$ (overall $2\!\to\!1$).}
        \label{fig:pathway-partition3}
    \end{subfigure}%
    ~~~
    \begin{subfigure}[b]{0.47\linewidth}
        \includegraphics[width=0.95\linewidth]{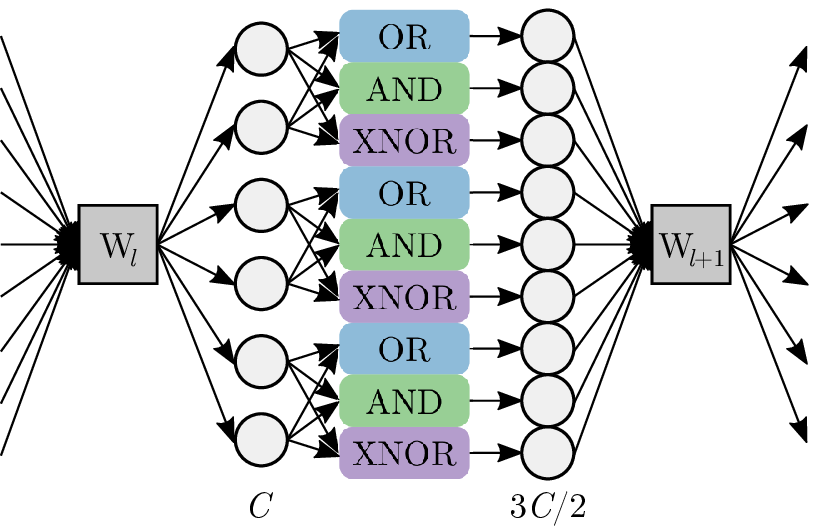}
        \caption{Duplication ensembling strategy, $\opn{OR}$, $\opn{AND}$ and $\opn{XNOR}$ (overall $2\!\to\!3$).}
        \label{fig:pathway-duplication3}
    \end{subfigure}%
    \caption{Network architectures for one layer within an MLP network using various activation functions/ensembles (centered around the activation function). The preceding weight matrix for this layer, $W_l$, and the weight matrix for the next layer, $W_{l+1}$, each produce $C$ features (channels). The activation functions may change the number of features, depending on their mapping.}
    \label{fig:pathways}
\end{figure}

For a $2\!\to\!1$ activation function such as $\opn{OR_{AIL}}$, each activation function requires two arguments and produces a single output (\autoref{fig:pathway-or}).
The number of output channels is half that of the input, and there are $\frac{C^2}{2} + C \sim \frac{C^2}{2}$ parameters per layer.

If we use a combination of $2\!\to\!1$ activation functions with the partition ensembling strategy, each pre-activation neuron is used once, and only seen by one activation function.
The number of output channels is again half that of the input, as seen in \autoref{fig:pathway-partition} and \subref{fig:pathway-partition3}.

Using the duplication ensembling strategy, each $2\!\to\!1$ activation function is applied in parallel to the same set of operands (\autoref{fig:pathway-duplication} and \subref{fig:pathway-duplication3}).
The number of output channels scales up with the number of activation functions used in the ensemble.
With two activation functions, the ensemble performs a $2\!\to\!2$ mapping; the number of channels is unchanged by the layer and the number of parameters per layer is equal to that of a $1\!\to\!1$ activation function, $C^2 + C \sim C^2$.

\FloatBarrier

\subsection{Normalization methodology}
\label{a:normalization}

To normalize the IL and AIL functions, we assumed the pair of arguments to the activation function were drawn independently from the standard normal distribution, $\mathcal{N}(0, 1)$, with probability density function
\begin{equation}
\phi(x) = \frac{1}{\sqrt{2\pi}} \,\exp\left(-\frac{x^2}{2}\right)
.\end{equation}

We found the expected value of the operator, and the standard deviation, and then subtracted by the mean and divided by the standard deviation,
\begin{equation}
\opn{OP}_{N\ast{}IL} = \frac{\opn{OP}_{\ast{}IL} - \mu}{\sigma}
.\end{equation}

For the normalized version of the exact operators ($\text{OR}_{\text{NIL}}$, etc.), the mean and standard deviation were estimated empirically using 600 million samples.
For our AIL functions, we determined the mean and standard deviation analytically by integration.
The values are shown in \autoref{tab:normalization-constants}.

\begin{table}[htp]
  \centering
  \caption{%
  Mean and standard deviation of IL and AIL activation functions, for inputs drawn independently from the standard normal distribution.
}
\label{tab:normalization-constants}
\begin{tabular}{lrr}
\toprule
Activation function                     & Mean & Standard deviation  \\
\midrule
$\opn{OR_{IL}}$                         & \num{ 1.29895} & \num{0.94834} \\
$\opn{AND_{IL}}$                        & \num{-1.29895} & \num{0.94834} \\
$\opn{XNOR_{IL}}$                       & \num{       0} & \num{0.36641} \\
\midrule
$\opn{OR_{AIL}}$                        & \num{ 0.68104} & \num{0.97229} \\
$\opn{AND_{AIL}}$                       & \num{-0.68104} & \num{0.97229} \\
$\opn{XNOR_{AIL}}$                      & \num{       0} & \num{0.60281} \\
\bottomrule
\end{tabular}
\end{table}

\subsubsection{Derivations for $\text{OR}_{\text{AIL}}$}

Here, we analytically derive the mean and variance of $\opn{OR_{AIL}}$.
Recall 
\begin{equation*}
    \opn{OR_{AIL}}(x, y) :=
        \begin{cases}
            x + y           & x > 0, \, y > 0 \\
            \opn{max}(x, y) & \text{otherwise}
        \end{cases}
\end{equation*}
which can be rewritten as
\begin{equation}
    \opn{OR_{AIL}}(x, y)
        = \begin{cases}
            x + y   & \text{if } x > 0, \, y > 0 \\
            x       & \text{if } y \le 0, \, x \ge y \\
            y       & \text{if } x \le 0, \, x < y
        \end{cases}
    \label{eq:or-ail-cases}
\end{equation}

\begin{proposition}
\label{thm:or-mean}
The expected value of $\opn{OR_{AIL}}(x, y)$ for independently sampled $x,y \sim \mathcal{N}(0, 1)$ is
\begin{equation}
\E[\opn{OR_{AIL}}(x, y)] = \frac{1}{\sqrt{2\pi}} + \frac{1}{2\sqrt{\pi}}
.\end{equation}
\end{proposition}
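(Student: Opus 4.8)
The plan is to work directly from the three-branch piecewise form in \autoref{eq:or-ail-cases}, apply linearity of expectation over the (disjoint, exhaustive) regions, use the $x\leftrightarrow y$ symmetry to cut the work in half, and reduce every surviving integral to one of two elementary Gaussian facts: the antiderivative identity $\phi'(t)=-t\phi(t)$ and the Gaussian-squared integral $\int_{\R}\phi(t)^2\,dt = \tfrac{1}{2\sqrt{\pi}}$.

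Concretely, I would first write
\begin{equation}
\E[\opn{OR_{AIL}}(x,y)] = \E\!\left[(x+y)\,\mathbf{1}_{x>0,\,y>0}\right] + \E\!\left[x\,\mathbf{1}_{y\le 0,\,x\ge y}\right] + \E\!\left[y\,\mathbf{1}_{x\le 0,\,x< y}\right],
\end{equation}
and observe that the third expectation equals the second after relabelling the iid variables, so only two terms need evaluation. For the first term I would use independence to factorize, $\E[(x+y)\mathbf{1}_{x>0,y>0}] = 2\,\E[x\,\mathbf{1}_{x>0}]\,\PP(y>0)$, and then use $\int_0^\infty t\phi(t)\,dt = \phi(0) = \tfrac{1}{\sqrt{2\pi}}$ together with $\PP(y>0)=\tfrac12$ to get $\tfrac{1}{\sqrt{2\pi}}$. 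For the second term I would set it up as an iterated integral, integrating in $x$ first: the inner integral $\int_y^\infty t\phi(t)\,dt$ collapses by the antiderivative identity to exactly $\phi(y)$, leaving $\int_{-\infty}^0 \phi(y)^2\,dy = \tfrac12\int_{\R}\phi(y)^2\,dy = \tfrac{1}{4\sqrt{\pi}}$. Summing, $\E[\opn{OR_{AIL}}] = \tfrac{1}{\sqrt{2\pi}} + 2\cdot\tfrac{1}{4\sqrt{\pi}} = \tfrac{1}{\sqrt{2\pi}} + \tfrac{1}{2\sqrt{\pi}}$.

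There is no genuine obstacle here, but the step most worth isolating cleanly is the second-term reduction: recognizing that the inner $x$-integral over $[y,\infty)$ is precisely $\phi(y)$ turns an otherwise awkward two-dimensional region integral into the one-dimensional Gaussian-squared integral. The only thing requiring mild care is making sure the region decomposition one integrates over is exactly the disjoint/exhaustive partition recorded in \autoref{eq:or-ail-cases} (in particular that the "both positive" branch is separated from the two "$\max$" branches), so that linearity of expectation applies without overlap corrections.
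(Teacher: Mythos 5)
Your proof is correct and follows essentially the same route as the paper's: the same three-region decomposition of $\opn{OR_{AIL}}$ from \autoref{eq:or-ail-cases}, the same change-of-variables identification of the third term with the second, and the same reduction of the inner integral $\int_{y}^{\infty}x\,\phi(x)\dif x=\phi(y)$ followed by the one-dimensional Gaussian-squared integral. In fact your value for the second term, $\int_{-\infty}^{0}\phi(y)^2\dif y=\tfrac{1}{4\sqrt{\pi}}$, is the correct one — the paper's own proof misprints this quantity as $\tfrac{1}{4\pi}$ and consequently writes $\tfrac{1}{2\pi}$ in its concluding sum, even though the proposition statement (and the normalization constant $0.68104$ tabulated in the appendix) agrees with your total $\tfrac{1}{\sqrt{2\pi}}+\tfrac{1}{2\sqrt{\pi}}$.
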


\begin{proof}
Using \autoref{eq:or-ail-cases},
\begin{align*}
\E[\opn{OR_{AIL}}(x, y)]
=& \int_{-\infty}^{\infty} \int_{-\infty}^{\infty} \opn{OR_{AIL}}(x, y) \phi(x) \,\phi(y) \dif x \dif y \\
=& \iint_{x>0,\, y>0} (x + y) \, \phi(x) \,\phi(y) \dif x \dif y \\
 & \qquad + \iint_{x \le 0, \, x \ge y} x \, \phi(x) \,\phi(y) \dif x \dif y
 + \iint_{y \le 0, \, x < y} y \, \phi(x) \,\phi(y) \dif x \dif y
\end{align*}

Let us consider the first component, where $x>0\,y>0$.
\begin{align*}
A :=& \iint_{x>0,\, y>0} (x + y) \, \phi(x) \,\phi(y) \dif x \dif y \\
=& \int_{x=0}^\infty \dif x \, \phi(x) \int_{y=0}^\infty (x + y) \,\phi(y) \dif y \\
=& \int_{x=0}^\infty \dif x \, \phi(x) \int_{y=0}^\infty (x \, \phi(y) + y\,\phi(y)) \dif y
\end{align*}

Note that the integral of the probability density function $\phi(x)$ is its cumulative distribution function,
\begin{equation}
\int \phi(u) \, \dif u = \Phi(u) + C
\end{equation}
which has the limits $\lim_{u\to-\infty} \Phi(u) = 0$ and $\lim_{u\to+\infty} \Phi(u) = 1$, and central value $\Phi(0) = 0.5$.

The probability density function $\phi(x)$ has limits $\lim_{u\to-\infty} \phi(u) = \lim_{u\to+\infty} \phi(u) = 0$, and central value $\Phi(0) = \nicefrac{1}{\sqrt{2\pi}}$.

From the integrals of normal densities, equation 11 given in \citet{Owen1980}, we have
\begin{equation}
\int u \, \phi(u) \dif u = -\phi(u) + C
\label{eq:owen-u-phi}
.\end{equation}

Thus,
\begin{align*}
A
&= \int_{x=0}^\infty \dif x \, \phi(x)  [x \, \Phi(y) - \phi(y)]_{y=0}^\infty \\
&= \int_{x=0}^\infty \dif x \, \phi(x)  \left[ \frac{x}{2} + \frac{1}{\sqrt{2\pi}} \right] \\
&= \left[ -\frac{1}{2}\phi(x) + \frac{1}{\sqrt{2\pi}}\Phi(x) \right]_{x=0}^\infty \\
&= \frac{1}{2\sqrt{2\pi}} + \frac{1}{\sqrt{2\pi}} - \frac{1}{2\sqrt{2\pi}} \\
&= \frac{1}{\sqrt{2\pi}}
\end{align*}

Next let us consider the second component of the integral.
\begin{align*}
B
:=& \iint_{x \le 0, \, x \ge y} x \, \phi(x) \,\phi(y) \dif x \dif y \\
=& \int_{y=-\infty}^0 \phi(x) \int_{x=y}^\infty \,\phi(y) \, x \dif x \dif y \\
=& \int_{y=-\infty}^0 \dif y \, \phi(y) \int_{x=y}^\infty x\,\phi(x)\dif x \\
=& \int_{y=-\infty}^0 \dif y \, \phi(y) \left[ -\phi(x) \right]_{x=y}^\infty \\
=& \int_{y=-\infty}^0 \phi(y)^2 \dif y \\
=& \left[ \frac{1}{2\sqrt{\pi}} \Phi(x\sqrt{2}) \right]_{y=-\infty}^0 \\
=& \frac{1}{4\pi}
\end{align*}

By change of variables,
\begin{align*}
\iint_{y \le 0, \, x < y} y \, \phi(x) \,\phi(y) \dif x \dif y
=& \iint_{x \le 0, \, y < x} x \, \phi(y) \,\phi(x) \dif y \dif x \\
=& B
\end{align*}

Thus we conclude
\begin{align*}
\E[\opn{OR_{AIL}}(x, y)]
&= A + 2B \\
&= \frac{1}{\sqrt{2\pi}} + \frac{2}{4\pi} \\
&= \frac{1}{\sqrt{2\pi}} + \frac{1}{2\pi}
\end{align*}
\end{proof}

\begin{proposition}
\label{thm:or-var}
The variance of $\opn{OR_{AIL}}(x, y)$ for independently sampled $x,y \sim \mathcal{N}(0, 1)$ is
\begin{equation}
\Var(\opn{OR_{AIL}}(x, y)) = \frac{5}{4} - \frac{1}{\sqrt{2}\pi} - \frac{1}{4\pi}
\end{equation}
\end{proposition}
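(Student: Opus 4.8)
The plan is to write $\Var(\opn{OR_{AIL}}(x,y)) = \E[\opn{OR_{AIL}}(x,y)^2] - \big(\E[\opn{OR_{AIL}}(x,y)]\big)^2$, take the mean from \autoref{thm:or-mean}, and compute the second moment directly. Exactly as in the proof of \autoref{thm:or-mean}, I would split the integral of $\opn{OR_{AIL}}(x,y)^2\,\phi(x)\phi(y)$ over the three regions of \autoref{eq:or-ail-cases}, obtaining $\E[\opn{OR_{AIL}}^2] = A' + B' + C'$ with $A' = \iint_{x>0,\,y>0}(x+y)^2\phi(x)\phi(y)\dif x\dif y$, $B' = \iint_{x\le 0,\,x\ge y}x^2\phi(x)\phi(y)\dif x\dif y$, and $C' = \iint_{y\le 0,\,x<y}y^2\phi(x)\phi(y)\dif x\dif y$. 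The change of variables $x\leftrightarrow y$ gives $C' = B'$, so only $A'$ and $B'$ need to be evaluated.

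For $A'$, I would expand $(x+y)^2 = x^2 + 2xy + y^2$ and separate the integral over the positive quadrant, using $\int_0^\infty\phi = \nicefrac12$, $\int_0^\infty x^2\phi = \nicefrac12$ (half the second moment of $\mathcal{N}(0,1)$), and $\int_0^\infty x\phi(x)\dif x = \phi(0) = \nicefrac1{\sqrt{2\pi}}$ (from \autoref{eq:owen-u-phi}). This yields $A' = \tfrac14 + \tfrac1\pi + \tfrac14 = \tfrac12 + \tfrac1\pi$. For $B'$, I would integrate over $y$ first: for fixed $x$ the region is $y \le \min(0,x)$, contributing a factor $\nicefrac12$ when $x\ge 0$ and $\Phi(x)$ when $x<0$, so $B' = \tfrac14 + \int_{-\infty}^0 x^2\phi(x)\Phi(x)\dif x$.

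The remaining half-line integral is the real work. I would evaluate it by integration by parts, pairing the antiderivative $\int x^2\phi(x)\dif x = \Phi(x) - x\phi(x)$ (a consequence of \autoref{eq:owen-u-phi}) against $\Phi(x)$, and then clearing the leftover terms with $\int \Phi(x)\phi(x)\dif x = \tfrac12\Phi(x)^2$ and $\int x\phi(x)^2\dif x = -\tfrac1{4\pi}e^{-x^2}$. This should give $\int_{-\infty}^0 x^2\phi(x)\Phi(x)\dif x = \tfrac18 - \tfrac1{4\pi}$, hence $B' = \tfrac38 - \tfrac1{4\pi}$. Combining, $\E[\opn{OR_{AIL}}^2] = A' + 2B' = \tfrac54 + \tfrac1{2\pi}$, and subtracting $\big(\tfrac1{\sqrt{2\pi}}+\tfrac1{2\sqrt\pi}\big)^2 = \tfrac1{2\pi} + \tfrac1{\sqrt2\,\pi} + \tfrac1{4\pi}$ (squaring the mean of \autoref{thm:or-mean}, using $\sqrt{2\pi}\sqrt\pi = \sqrt2\,\pi$) produces exactly $\tfrac54 - \tfrac1{\sqrt2\,\pi} - \tfrac1{4\pi}$.

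The quadrant-integral bookkeeping and the standard Gaussian antiderivatives are routine. The one step requiring genuine care is the half-line integral $\int_{-\infty}^0 x^2\phi(x)\Phi(x)\dif x$: the $\phi$–$\Phi$ coupling rules out the pure-symmetry shortcuts available for the full-line version, so the integration by parts must be carried out explicitly and the boundary terms tracked carefully. I expect that to be the main obstacle.
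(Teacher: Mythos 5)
Your proposal is correct and follows essentially the same route as the paper: the same three-region split of the second moment, the same $x\leftrightarrow y$ change of variables collapsing the third region onto the second, and the same values $A'=\tfrac12+\tfrac1\pi$, $B'=\tfrac38-\tfrac1{4\pi}$, differing only in that you evaluate $B'$ by integrating over $y$ first and an explicit integration by parts, where the paper integrates over $x$ first and quotes identities from Owen's table. (One minor slip: the subscript on your $B'$ should read $y\le 0,\ x\ge y$ rather than $x\le 0,\ x\ge y$, but the region $y\le\min(0,x)$ you actually integrate over is the correct one.)
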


\begin{proof}
Using \autoref{eq:or-ail-cases},
\begin{align*}
\E[\opn{OR_{AIL}}(x, y)^2]
=& \int_{-\infty}^{\infty} \int_{-\infty}^{\infty} \opn{OR_{AIL}}(x, y)^2 \phi(x) \,\phi(y) \dif x \dif y \\
=& \iint_{x>0,\, y>0} (x + y)^2 \, \phi(x) \,\phi(y) \dif x \dif y \\
 & \qquad + \iint_{x \le 0, \, x^2 \ge y} x^2 \, \phi(x) \,\phi(y) \dif x \dif y
 + \iint_{y \le 0, \, x < y} y^2 \, \phi(x) \,\phi(y) \dif x \dif y
\end{align*}

Let us consider the first component, where $x>0,\,y>0$.
\begin{align*}
A :=& \iint_{x>0,\, y>0} (x + y)^2 \, \phi(x) \,\phi(y) \dif x \dif y \\
=& \iint_{x>0,\, y>0} (x^2 + 2xy + y^2) \, \phi(x) \,\phi(y) \dif x \dif y \\
=& \int_{x=0}^\infty \int_{y=0}^\infty (x^2 + 2xy + y^2) \, \phi(x) \,\phi(y) \dif x \dif y \\
=& \int_{x=0}^\infty \int_{y=0}^\infty (x^2 + 2xy + y^2) \, \phi(x) \,\phi(y) \dif x \dif y
\end{align*}

From equation 12 of \citet{Owen1980}, we note the identity
\begin{equation}
\int u^2 \,\phi(u) \dif u = \Phi(u) - u\,\phi(u) + C
.\end{equation}

Additionally, we note that
\begin{align*}
\lim_{u\to\infty} u\,\phi(u)
&= \lim_{u\to\infty} \frac{u}{\sqrt{2\pi}}\,\exp\left(-\frac{u^2}{2}\right) \\
&= 0
\end{align*}

Using this in addition to \autoref{eq:owen-u-phi}, 
\begin{align*}
A
=& \int_{x=0}^\infty \dif x \int_{y=0}^\infty x^2\,\phi(x)\,\phi(y) + 2xy\,\phi(x)\,\phi(y) + y^2\,\phi(x)\,\phi(y) \dif y \\
=& \int_{x=0}^\infty \dif x \left[ x^2\,\phi(x)\,\Phi(y) - 2x\,\phi(x)\,\phi(y) + \phi(x) (\Phi(y) - y\,\phi(y) ) \right]_{y=0}^\infty \\
=& \int_{x=0}^\infty \dif x \left[
(
x^2\,\phi(x) + \phi(x)
)
- \left(
\frac{x^2\,\phi(x)}{2} - \frac{2\,x\,\phi(x)}{\sqrt{2\pi}} + \frac{\phi(x)}{2}
\right)
\right] \\
=& \int_{x=0}^\infty \dif x \left(\frac{\phi(x)}{2} + \frac{\sqrt{2}\,x\,\phi(x)}{\sqrt{\pi}} + \frac{x^2\,\phi(x)}{2} \right) \\
=& \left[ \frac{\Phi(x)}{2} - \frac{\sqrt{2}}{\sqrt{\pi}}\,\phi(x) + \frac{1}{2}(\Phi(x) -x\,\phi(x)) \right]_{x=0}^\infty \\
=& 
\left(
\frac{1}{2} + \frac{1}{2}
\right)
- \left(
\frac{1}{4} - \frac{\sqrt{2}}{\sqrt{\pi}}\,\frac{1}{\sqrt{2\pi}} + \frac{1}{4}
\right) \\
=& \frac{1}{2} + \frac{1}{\pi}
\end{align*}

Next we consider the second component,
\begin{align*}
B
:=& \iint_{x \le 0, \, x \ge y} x^2 \, \phi(x) \,\phi(y) \dif x \dif y \\
=& \int_{y=-\infty}^0 \int_{x=y}^\infty \phi(x) \,\phi(y) \, x^2 \dif x \dif y \\
=& \int_{y=-\infty}^0 \dif y \, \phi(y) \int_{x=y}^\infty x^2\,\phi(x)\dif x \\
=& \int_{y=-\infty}^0 \dif y \, \phi(y) \left[ \Phi(x) - x\,\phi(x) \right]_{x=y}^{\infty} \\
=& \int_{y=-\infty}^0 \dif y \, \phi(y) \left[ 1 - \Phi(y) + y\,\phi(y) \right] \\
=& \int_{y=-\infty}^0 \dif y \, \left[ \phi(y) - \phi(y)\,\Phi(y) + y\,\phi(y)^2 \right] \\
\end{align*}

From equation n1 of \citet{Owen1980},
\begin{align*}
\int u \,\phi(u)^n \dif u &= \frac{-\phi\left(\sqrt{n}u\right)}{n(2\pi)^{(n-1)/2}} \\
\implies
\int u \,\phi(u)^2 \dif u &= \frac{-\phi\left(\sqrt{2}u\right)}{2\sqrt{2\pi}}
.\end{align*}
Additionally from equation 1,010.4 of \citet{Owen1980},
\begin{align*}
\int_{-\infty}^0 \phi(au)\,\Phi(bu)\dif u
&= \frac{1}{2\pi|a|}\left(\frac{\pi}{2} - \arctan\left(\frac{b}{|a|}\right)\right) \\
\implies
\int_{-\infty}^0 \phi(u)\,\Phi(u)\dif u
&= \frac{1}{2\pi}\left(\frac{\pi}{2} - \frac{\pi}{4} \right)
= \frac{1}{8}
.\end{align*}

Thus
\begin{align*}
B
=& \int_{y=-\infty}^0 \dif y \, \left[ \phi(y) - \phi(y)\,\Phi(y) + y\,\phi(y)^2 \right] \\
=& -\frac{1}{8} + \left[ \Phi(y) - \frac{\phi\left(\sqrt{2}y\right)}{2\sqrt{2\pi}} \right]_{y=-\infty}^0 \\
=& -\frac{1}{8} + \frac{1}{2} - \frac{1}{\sqrt{2\pi}} \frac{1}{2\sqrt{2\pi}} \\
=& \frac{3}{8} - \frac{1}{4\pi}
\end{align*}

By change of variables,
\begin{align*}
\iint_{y \le 0, \, x < y} y^2 \, \phi(x) \,\phi(y) \dif x \dif y
=& \iint_{x \le 0, \, y < x} x^2 \, \phi(y) \,\phi(x) \dif y \dif x \\
=& B
\end{align*}

Thus we conclude
\begin{align*}
\E[\opn{OR_{AIL}}(x, y)^2]
&= A + 2B \\
&= \frac{1}{2} + \frac{1}{\pi} + \frac{3}{4} - \frac{1}{2\pi} \\
&= \frac{5}{4} + \frac{1}{2\pi}
\end{align*}

Recall that
\begin{align*}
\Var(X)
&= \E[(X-\E[X])^2] \\
&= \E[X^2] - \E[X]^2
.\end{align*}
Hence
\begin{align*}
\Var(\opn{OR_{AIL}}(x, y))
&= \frac{5}{4} + \frac{1}{2\pi} - \left(\frac{1}{\sqrt{2\pi}} + \frac{1}{2\sqrt{\pi}} \right)^2 \\
&= \frac{5}{4} + \frac{1}{2\pi} - \left(
\frac{1}{2\pi}
+ 2\frac{1}{\sqrt{2\pi}}\frac{1}{2\sqrt{\pi}}
+ \frac{1}{4\pi}
\right) \\
&= \frac{5}{4} + \frac{1}{2\pi} - \left(
\frac{3}{4\pi}
+ \frac{1}{\sqrt{2}\pi}
\right) \\
&= \frac{5}{4} - \frac{1}{\sqrt{2}\pi} - \frac{1}{4\pi}
\end{align*}

\end{proof}

\subsubsection{Derivations for $\text{AND}_{\text{AIL}}$}

Follows from $\text{AND}_{\text{AIL}}(x,y) = -\opn{OR}_{\text{AIL}}(-x,-y)$.

\subsubsection{Derivations for $\text{XNOR}_{\text{AIL}}$}

Here, we analytically derive the mean and variance of $\opn{XNOR_{AIL}}$.
Recall 
\begin{equation}
    \opn{XNOR_{AIL}}(x, y) := \sgn(x y) \opn{min}(|x|, |y|),
    \label{eq:xnor_ail}
\end{equation}


\begin{proposition}
The expected value of $\opn{XNOR_{AIL}}(x, y)$ for independently sampled $x,y \sim \mathcal{N}(0, 1)$ is $0$.
\label{thm:xnor-mean}
\end{proposition}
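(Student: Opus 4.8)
The plan is to establish $\E[\opn{XNOR_{AIL}}(x,y)]=0$ by an odd-symmetry argument, so that essentially no integration is required. First I would check that the expectation is well-defined: since $|\opn{XNOR_{AIL}}(x,y)| = |\sgn(xy)|\,\opn{min}(|x|,|y|) \le |x|$ and $\E|x| = \sqrt{2/\pi} < \infty$ for $x\sim\mathcal{N}(0,1)$, the integral $\iint |\opn{XNOR_{AIL}}(x,y)|\,\phi(x)\,\phi(y)\,\dif x\,\dif y$ converges absolutely. This finiteness is what makes the cancellation in the next step legitimate rather than an indeterminate $\infty-\infty$.

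The core observation is that the substitution $x\mapsto -x$ leaves the product measure $\phi(x)\,\phi(y)\,\dif x\,\dif y$ invariant (because $\phi$ is even), fixes $\opn{min}(|x|,|y|)$, and sends $\sgn(xy)$ to $\sgn(-xy)=-\sgn(xy)$. Hence $\opn{XNOR_{AIL}}(-x,y) = -\opn{XNOR_{AIL}}(x,y)$, and so
\begin{equation*}
\E[\opn{XNOR_{AIL}}(x,y)] = \int_{\R^2} \opn{XNOR_{AIL}}(-x,y)\,\phi(x)\,\phi(y)\,\dif x\,\dif y = -\,\E[\opn{XNOR_{AIL}}(x,y)],
\end{equation*}
which forces $\E[\opn{XNOR_{AIL}}(x,y)] = 0$.

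There is no genuine obstacle here; the only point deserving care is performing the integrability check before invoking the identity $I=-I$. If one wanted a fully elementary version avoiding any appeal to absolute convergence, I would instead partition $\R^2$ into the four open sign-quadrants of $(x,y)$ (the axes have measure zero), note $\opn{XNOR_{AIL}}$ is bounded by $|x|$ on each, and pair the quadrant $\{x>0,y>0\}$ with $\{x<0,y>0\}$ and $\{x>0,y<0\}$ with $\{x<0,y<0\}$ via $x\mapsto -x$; in each pair the contributions are equal in magnitude and opposite in sign, so they cancel. This same quadrant decomposition is the natural setup for the companion variance computation, where the paired regions will instead contribute \emph{equally} rather than cancelling.
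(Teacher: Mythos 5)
Your argument is correct and is essentially the paper's own proof: both exploit that $\opn{XNOR_{AIL}}(-x,y)=-\opn{XNOR_{AIL}}(x,y)$ while $\phi$ is even, so the integrand is odd and the expectation vanishes. Your added integrability check ($|\opn{XNOR_{AIL}}(x,y)|\le|x|$ with $\E|x|<\infty$) is a small but welcome refinement the paper omits; otherwise the two proofs coincide.
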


\begin{proof}
The expected value is given by
\begin{equation}
\E[\opn{XNOR_{AIL}}(x, y)] = \int_{x=-\infty}^{\infty} \int_{y=-\infty}^{\infty} \opn{XNOR_{AIL}}(x, y) \,\phi(x) \,\phi(y) \dif x \dif y
.\end{equation}

Note that $\phi(x)$ is an even function, i.e. $\phi(x) = \phi(-x)$.
Moreover, $\opn{XNOR_{AIL}}(x, y)$ is an odd function with $\opn{XNOR_{AIL}}(-x, y) = -\opn{XNOR_{AIL}}(x, y)$.
Thus $\opn{XNOR_{AIL}}(x, y) \,\phi(x) \,\phi(y)$ is an odd function.
Therefore,
\begin{equation}
\E[\opn{XNOR_{AIL}}(x, y)] = 0
\end{equation}
\end{proof}


\begin{proposition}
The variance of $\opn{XNOR_{AIL}}(x, y)$ for independently sampled $x,y \sim \mathcal{N}(0, 1)$ is
\begin{equation}
\Var(\opn{XNOR_{AIL}}(x, y)) = 1 - \frac{2}{\pi}
\end{equation}
\end{proposition}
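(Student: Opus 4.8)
The plan is to leverage the fact that, by \autoref{thm:xnor-mean}, the mean of $\opn{XNOR_{AIL}}(x,y)$ is zero, so that
$\Var(\opn{XNOR_{AIL}}(x,y)) = \E[\opn{XNOR_{AIL}}(x,y)^2]$. Since $\sgn(xy)^2 = 1$ almost surely (the event $xy=0$ has probability zero under a continuous law), we have $\opn{XNOR_{AIL}}(x,y)^2 = \opn{min}(|x|,|y|)^2 = \opn{min}(x^2,y^2)$, so the task reduces to computing $\E[\opn{min}(x^2,y^2)]$ for independent $x,y\sim\mathcal{N}(0,1)$.

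First I would apply the elementary identity $\opn{min}(a,b) = \tfrac12(a+b) - \tfrac12|a-b|$ with $a=x^2$, $b=y^2$, giving
\[
\E[\opn{min}(x^2,y^2)] = \tfrac12\,\E[x^2+y^2] - \tfrac12\,\E\big[|x-y|\,|x+y|\big] = 1 - \tfrac12\,\E\big[|x-y|\,|x+y|\big],
\]
where I factored $x^2-y^2 = (x-y)(x+y)$. The crucial step is then to observe that $x-y$ and $x+y$ are jointly Gaussian and uncorrelated, hence \emph{independent}, each distributed as $\mathcal{N}(0,2)$. Therefore $\E[|x-y|\,|x+y|] = \E|x-y|\cdot\E|x+y|$, and using $\E|Z| = \sigma\sqrt{2/\pi}$ for $Z\sim\mathcal{N}(0,\sigma^2)$ with $\sigma=\sqrt2$ yields $\E|x-y| = \E|x+y| = 2/\sqrt\pi$. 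Substituting gives $\Var(\opn{XNOR_{AIL}}(x,y)) = 1 - \tfrac12\cdot\tfrac4\pi = 1 - \tfrac2\pi$, as claimed.

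There is no genuine obstacle once the $\opn{min}(a,b)=\tfrac12(a+b-|a-b|)$ decomposition and the independence of $x\pm y$ are noticed; the only points requiring care are discarding the null event $xy=0$ and using the correct scale factor in $\E|Z|$ for variance-$2$ normals. A more mechanical alternative, matching the style of the $\opn{OR_{AIL}}$ derivations, would be to write $\E[\opn{min}(x^2,y^2)] = \int_0^\infty \Pr(x^2>t)\,\Pr(y^2>t)\,\dif t = 4\int_0^\infty\big(1-\Phi(\sqrt t)\big)^2\dif t$ and evaluate this integral directly (substitute $t=s^2$, integrate by parts, and invoke the normal-integral identities of \citet{Owen1980} already cited above); this is the harder route, and I would only fall back to it if the factorization argument were to be avoided for expository uniformity.
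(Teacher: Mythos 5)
Your proof is correct, and it takes a genuinely different route from the paper's. Both arguments start the same way: invoking the zero mean to reduce the variance to $\E[\opn{min}(x^2,y^2)]$ after discarding the null event $xy=0$. From there the paper exploits the three symmetries of the integrand to cut the plane into eight slices of equal contribution, and evaluates one slice by iterated integration using the tabulated normal integrals of \citet{Owen1980} (in particular the identity for $\int u^2\,\phi(u)\,\Phi(u)\,\dif u$), which is mechanical but somewhat laborious. You instead use the algebraic identity $\opn{min}(a,b)=\tfrac{1}{2}(a+b)-\tfrac{1}{2}|a-b|$, factor $x^2-y^2=(x-y)(x+y)$, and observe that $x+y$ and $x-y$ are uncorrelated jointly Gaussian variables and hence independent, each $\mathcal{N}(0,2)$, so the cross expectation factorizes into $\left(\sqrt{2}\cdot\sqrt{2/\pi}\right)^2=4/\pi$. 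This gives $1-\tfrac{1}{2}\cdot\tfrac{4}{\pi}=1-\tfrac{2}{\pi}$ with no special-function integration at all. Your argument is shorter and arguably more illuminating, since it makes clear that the answer comes from the rotation invariance of the standard bivariate normal; the paper's approach is more uniform with its $\opn{OR_{AIL}}$ derivations (where no such factorization is available) and generalizes more readily to asymmetric integrands. The two computations agree, and each step of yours (the half-angle decomposition, the independence of $x\pm y$, and the folded-normal mean $\E|Z|=\sigma\sqrt{2/\pi}$) is standard and correctly applied.
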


\begin{proof}
Recall that
\begin{align*}
\Var(X)
&= \E[(X-\E[X])^2] \\
&= \E[X^2] - \E[X]^2
.\end{align*}

Using \autoref{thm:xnor-mean},
\begin{align*}
\Var(\opn{XNOR_{AIL}}(x, y))
&= \E[\opn{XNOR_{AIL}}(x, y)^2] - \E[\opn{XNOR_{AIL}}(x, y)]^2 \\
&= \E[\opn{XNOR_{AIL}}(x, y)^2] - 0 \\
&= \E[\sgn(x y)^2 \opn{min}(|x|, |y|)^2] \\
&= \E[\min(|x|, |y|)^2] \\
&= \E[\min(x^2, y^2)]
\end{align*}
since $x\in\Real$, $y\in\Real$.

\begin{align*}
\E[\opn{XNOR_{AIL}}(x, y)^2]
&= \int_{-\infty}^{\infty} \int_{-\infty}^{\infty} (\opn{XNOR_{AIL}}(x, y))^2 \,\phi(x) \,\phi(y) \dif x \dif y \\
&= \int_{-\infty}^{\infty} \int_{-\infty}^{\infty} \min(x^2, y^2) \,\phi(x) \,\phi(y) \dif x \dif y \\
&=: 8 \, A
\end{align*}

The integrand has three symmetries: $f(-x)=f(x)$, $f(-y)=f(y)$, and $f(x,y)=f(y,x)$.
Using this, we can break the integral up into 8 slices, each of which have an equal area, $A$.

We next integrate over one of these slices to find $A$.
\begin{align*}
A
&= \int_{y=0}^{\infty} \int_{x=y}^{\infty} y^2 \,\phi(x) \,\phi(y) \dif x \dif y \\
&= \int_{y=0}^{\infty} y^2 \,\phi(y) \dif y \int_{x=y}^{\infty} \phi(x) \dif x \\
&= \int_{y=0}^{\infty} y^2 \,\phi(y) \dif y \left[ \Phi(\infty) - \Phi(y) \right] \\
&= \int_{y=0}^{\infty} y^2 \,\phi(y) \left( 1 - \Phi(y) \right) \dif y \\
&= \int_{y=0}^{\infty} y^2 \,\phi(y) \dif y - \int_{y=0}^{\infty} y^2 \,\phi(y) \,\Phi(y) \dif y
\end{align*}
Equation (1;012-1) from \citet{Owen1980} provides us with the identity
\begin{equation*}
\int_{u=0}^{\infty} u^2 \,\phi(u) \,\Phi(u) \dif u = \frac{\Phi(u)^2}{2} - \frac{\phi(u)^2}{2} - u\,\phi(u)\,\Phi(u)
.\end{equation*}
Thus,
\begin{align*}
A
&= \left[ \left(\Phi(y)-y\,\phi(y)\right) - \left( \frac{\Phi(y)^2}{2} - \frac{\phi(y)^2}{2} - y\,\phi(y)\,\Phi(y) \right) \right]_{y=0}^{\infty} \\
&= \left[ \Phi(y) - y\,\phi(y) - \frac{\Phi(y)^2}{2} + \frac{\phi(y)^2}{2} + y\,\phi(y)\,\Phi(y) \right]_{y=0}^{\infty} \\
&= \left( 1 - 0 - \frac{1}{2} + 0 + 0 \right) - \left( \frac{1}{2} - 0 - \frac{1}{8} + \frac{1}{4\pi} + 0 \right) \\
&= \frac{1}{8} - \frac{1}{4\pi} \\
\implies 8A &= 1 - \frac{2}{\pi}
\end{align*}
$$
\implies \Var(\opn{XNOR_{AIL}}(x, y)) = 1 - \frac{2}{\pi}
$$
\end{proof}

\begin{corollary}
The standard deviation of $\opn{XNOR_{AIL}}(x, y)$ for independently sampled $x,y \sim \mathcal{N}(0, 1)$ is
\begin{equation}
\sigma = \sqrt{1 - \frac{2}{\pi}}
.\end{equation}
\end{corollary}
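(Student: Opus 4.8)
This corollary is an immediate consequence of the variance computation just completed, so the proposal is correspondingly short. The plan is to recall that the standard deviation of a random variable is by definition the nonnegative square root of its variance, to substitute the value $\Var(\opn{XNOR_{AIL}}(x, y)) = 1 - \frac{2}{\pi}$ established in the preceding proposition, and to observe that this number is genuinely nonnegative --- indeed strictly positive --- because $\pi > 2$ forces $1 - \frac{2}{\pi} \in (0,1)$. Hence $\sigma = \sqrt{1 - \frac{2}{\pi}}$ is a well-defined real number and the claim follows.

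If a self-contained argument were wanted in place of the one-line appeal, I would retrace the same route used for the variance: by \autoref{thm:xnor-mean} the mean vanishes, so $\sigma^2 = \E[\opn{XNOR_{AIL}}(x, y)^2]$; since $\sgn(xy)^2 = 1$ off a null set and $\min(|x|,|y|)^2 = \min(x^2, y^2)$, this equals $\E[\min(x^2, y^2)]$; exploiting the evenness of the integrand in each of $x$ and $y$ together with its symmetry under $x \leftrightarrow y$ reduces the double integral to eight copies of an octant integral over $\{0 \le y \le x\}$, on which $\min(x^2,y^2) = y^2$; integrating $x$ out leaves $\int_0^\infty y^2 \phi(y)\bigl(1 - \Phi(y)\bigr)\dif y$, and the Owen integral tables give the value $\frac{1}{8} - \frac{1}{4\pi}$, so that $\E[\min(x^2,y^2)] = 1 - \frac{2}{\pi}$; taking the positive square root yields $\sigma = \sqrt{1 - \frac{2}{\pi}}$.

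The only step that is not completely mechanical is, as in the variance proof, invoking the tabulated antiderivative $\int_0^\infty u^2 \phi(u)\Phi(u)\dif u = \frac{\Phi(u)^2}{2} - \frac{\phi(u)^2}{2} - u\phi(u)\Phi(u)$ from \citet{Owen1980} and evaluating it carefully at the endpoints $0$ and $\infty$ (using $\Phi(0) = \frac12$, $\phi(0) = \frac{1}{\sqrt{2\pi}}$, and $u\phi(u) \to 0$). There is no real obstacle here --- the corollary is a square root away from a result already in hand.
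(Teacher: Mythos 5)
Your proposal is correct and matches the paper, which states this corollary without further argument precisely because it is the nonnegative square root of the variance $1-\frac{2}{\pi}$ established in the preceding proposition; your observation that $\pi>2$ makes this quantity positive is a sensible (if unstated in the paper) sanity check. The optional self-contained retracing you sketch is just a replay of the paper's own variance computation, so there is nothing genuinely different here.
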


\subsection{Dataset summary}
\label{a:datasets}

The datasets used in this work are summarized in \autoref{tab:datasets}.

\begin{table}[h]
\small
  \centering
  \caption{%
Dataset summaries.
}
\label{tab:datasets}
\centerline{
\begin{tabular}{lrrrl}
\toprule
                    & \multicolumn{2}{c}{\textnumero{} Samples} & & \\
\cmidrule(r){2-3}
Dataset             & Train       & Test        & Classes   & Reference \\
\midrule
Bach Chorales       &    \num{229} &    \num{77} &   \num{2} & \citet{boulanger2012modeling} \\
Caltech101          &   \num{6162} &  \num{1695} & \num{101} & \citet{fei2006one} \\
CIFAR-10            &  \num{50000} & \num{10000} &  \num{10} & \citet{cifar-report} \\
CIFAR-100           &  \num{50000} & \num{10000} & \num{100} & \citet{cifar-report} \\
Covertype           & \num{464810} &\num{116202} &   \num{7} & \citet{covertype-phd, covertype-paper} \\
I-RAVEN             &   \num{6000} &  \num{2000} &       --- & \citet{sran} \\
MIT-States          &  \num{30338} & \num{12995} & \num{245} obj, \num{115} attr & \citet{mit-states} \\
MNIST               &  \num{60000} & \num{10000} &  \num{10} & \citet{lecun1998gradient} \\
Oxford Flowers      &   \num{6552} &   \num{818} & \num{102} & \citet{nilsback2008automated} \\
Stanford Cars       &   \num{8144} &  \num{8041} & \num{196} & \citet{KrauseStarkDengFei2013} \\
STL-10              &   \num{5000} &  \num{8000} &  \num{10} & \citet{coates2011analysis} \\
SVHN                &  \num{73257} & \num{26032} &  \num{10} & \citet{svhn} \\
\bottomrule
\end{tabular}
}
\end{table}

We used the same splits for Caltech101 as used in \citet{spinalnet}.

The Covertype dataset was chosen as a dataset that contains only simple features (and not pixels of an image) on which we could study a simple MLP architecture, and was selected based on its popularity on the UCI ML repository.

The Bach Chorales dataset was chosen because --- in addition to being in continued use by ML researchers for decades --- it presents an interesting opportunity to consider a task where logical activation functions are intuitively applicable, as it is a relatively small dataset that has also been approached with rule-based frameworks, e.g. the expert system by Ebcioglu (1988).

MNIST, CIFAR-10, CIFAR-100 are standard image datasets, commonly used. We used small MLP and CNN architectures for the experiments on MNIST so we could investigate the performance of the network for many configurations (varying the size of the network). We used ResNet-50, a very common deep convolutional architecture within the computer vision field, on CIFAR-10/100 to evaluate the performance in the context of a deep network.

The datasets used for the transfer learning task are all common and popular natural image datasets, with some containing coarse-grained classification (CIFAR-10), others fine-grained (Stanford Cars), and with a varying dataset size (\num{5000}–-\num{75000} training samples). We chose to do an experiment involving transfer learning because it is a common practical situation where one must train only a small network that handles high-level features, and is the sort of situation which involves manipulating high-level features, relying on the pretrained network to do the feature extraction.

We considered other domains where logical reasoning is involved as a component of the task, and isolated abstract reasoning and compositional zero-shot learning as suitable tasks.

For abstract reasoning, we wanted to use an IQ style test, and determined that I-RAVEN was a state-of-the-art dataset within this domain (having fixed some problems with the pre-existing RAVEN dataset). We determined that the SRAN architecture from the paper which introduced I-RAVEN was still state-of-the-art on this task, and so used this.

Another problem domain in which we thought it would be interesting to study our activation functions was compositional zero-shot learning (CZSL). This is because the task inherently involves combining an attribute with an object (i.e. the AND operation). For CZSL, we looked at SOTA methods on \url{https://paperswithcode.com}. The best performance was from SymNet, but this was only implemented in TensorFlow and our code was set up in PyTorch so we built our experiments on the back of the second-best instead, which is the TMN architecture. In the TMN paper, they used two datasets: MIT-States and UT-Zappos-1. In our preliminary experiments, we found that the network started overfitting on MIT-States after around 6 epochs, but on UT-Zappos-1 it was overfitting after the first or second epoch (one can not tell beyond the fact the val performance is best for the first epoch). In the context of zero-shot learning, an epoch uses every image once, but there are also only a finite number of tasks in the dataset. Because there are multiple samples for each noun/adjective pair, and each noun only appears with a handful of adjectives and vice versa, there is in a way fewer tasks in one epoch than there are images. Hence it is possible for a zero-shot learning model to overfit to the training tasks in less than one epoch (recall that the network includes a pretrained ResNet model for extracting features from the images). For simplicity, we dropped UT-Zappos-1 and focused on MIT-States.

\subsection{Experiment configuration and hardware}
\label{a:hardware}

Experiments were run using PyTorch (typically v1.10) on an internal cluster of NVIDIA RTX-6000 and Tesla~T4 GPUs.
Our experiments each used either 1 or 4 GPUs.

{
\subsection{Statistical Methodology}
\label{s:stats}

Here we provide information on how we carried out some of the detailed comparisons between activation functions. 

In our experiments on Bach chorales (\autoref{s:chorales}), MNIST (\autoref{s:mnist}), CIFAR-10/100 (\autoref{s:cifar}), and Covertype (\aref{a:covtype}), we explored the performance of networks using various activation functions whilst changing the number of parameters in the network.
This was achieved by scaling up the width of the networks, whilst keeping the depth (number of layers) fixed.

Since the different structure of 1-to-1 and 2-to-1 activations (e.g. ReLU vs Max and the activations introduced in this paper) imply a different organization of the network, to explore the same range of values for the parameter count, the width values spanned different ranges depending on the activation function.
Note that in other cases, it made more sense to preserve a characteristic such as the size of the embedding space, in which case we discuss this explicitly (e.g. \aref{s:czsl}).

In the cases where we explore a range of values for the parameter count, we adopt the following methodology for our statistical tests for the ``best'' activation function.
First we collapsed across the number of params dimension by selecting the best width value for each activation function. Typically this is the widest value, but sometimes some activation functions had their performance peak with fewer parameters (especially on Covertype, where we had disabled weight decay). Then we identified the best performing activation function as the one with the highest accuracy.

For example, in the case of MLP on MNIST (\autoref{s:mnist}), the best activation function (over all num. params considered) was $\opn{XNOR_{AIL}}$. We compared its performance to each of the other activation functions using a (non-paired) two-tailed Student's $t$-test, implemented with \texttt{scipy.stats.ttest\_ind}, one test for each of the other activations. The total number of tests performed is $(\text{num\_actfuns} - 1)$. We did not use a Bonferroni correction for multiple comparisons. In the case of the MLP on MNIST, the p-value vs $\opn{SignedGeomean}$ was between 0.05 and 0.1 and hence did not meet our threshold for statistical significance. For all the other activations, the p-value was less than 0.01, more than exceeding our threshold for significance. Hence, we found that $\opn{XNOR_{AIL}}$ was significantly better than all other activations besides $\opn{SignedGeomean}$.

Note that this methodology (collapsing across the number of parameters by selecting the best for each activation function) compares each of the activation functions at their best. This is in keeping with how results are frequently reported in the literature (i.e. in a table showing one value per comparator, after performing a hyperparam search), and makes it possible to do a simple statistical test, such as a $t$-test. However, such a simplification does not tell the whole story about the results. For example, \autoref{fig:mnist} also shows that for the MLP, XNOR-shaped activation functions also lose performance sooner than other activation functions when the number of parameters is reduced. Meanwhile for the CNN, some activation functions which perform highly with a large number of parameters maintain high performance even as the number of parameters is reduced, while others do not. This seemed to be associated with those activation functions that included Max/OR operations as opposed to those that did not. The methodology we used to evaluate the performance was the same as described above for the MLP. Selecting the best width for each activation function, we found there are 5 functions/ensembles which have indistinguishably best performance: ({OR, AND, $XNOR_{AIL}$ (p)}, {OR, XNOR AIL (d/p)}, Max, and SiLU) performed best, whilst other activations had significantly lower performance than top-performing activation function, $p < 0.05$.

}

\subsection{Parity Experiments}
\label{a:parity}

\revision{
Our parity experiment shown in \autoref{s:parity} was trained for 100 epochs using ADAM, one-cycle learning rate schedule, max LR~\num{0.01}, weight decay~\num{1e-4}.
}

Following on from the parity experiment described in the main text, we also introduced a second synthetic dataset with a labelling function that, while slightly more complex than the first, was still solvable by applying the logical XNOR operation to the network inputs. In this dataset increased our number of inputs to 8, and derived our labels by applying a set of nested $\opn{XNOR_{IL}}$ operations: 

\begin{align*}
\opn{XNOR_{IL}}( &
     \opn{XNOR_{IL}}( \opn{XNOR_{IL}}( x_2, x_5 ), \opn{XNOR_{IL}}( x_3, x_4 ) ), \\
    &\opn{XNOR_{IL}}( \opn{XNOR_{IL}}( x_6, x_7 ), \opn{XNOR_{IL}}( x_0, x_1 )).
\end{align*}

For this more difficult task we also reformulated our initial experiment into a regression problem, as the continuous targets produced by this labelling function are more informative than the rounded binary targets used in the first experiment. We also adjusted our network setup to have an equal number of neurons at each hidden layer as we found that this significantly improved model performance\footnote{We hypothesize that, because our $\opn{XNOR_{AIL}}$ activation function reduces the number of hidden layer neurons by a factor of $k$, having a reduced number of neurons at each layer creates a bottleneck in the later layers of the network which restricts the amount of information that made its way through to the final layer.}. We again trained using the same model hyperparameters for 100 epochs.

\begin{figure}[tbh]
    \centering
    \begin{subfigure}[b]{0.49\linewidth}
        \includegraphics[width=\linewidth]{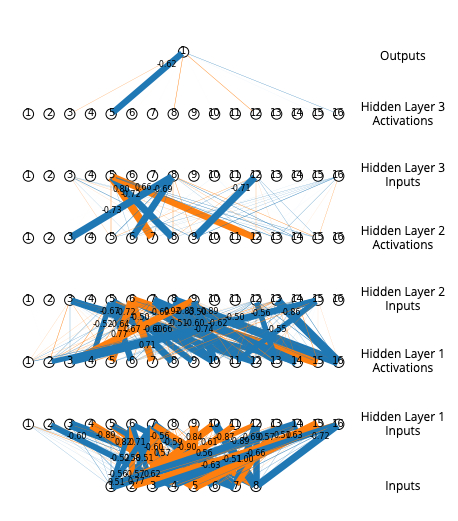}
        \caption{ReLU learned weight matrix}
    \end{subfigure}%
    ~~~
    \begin{subfigure}[b]{0.49\linewidth}
        \includegraphics[width=\linewidth]{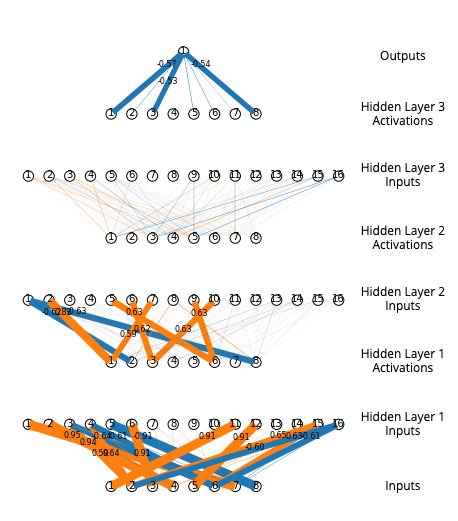}
        \caption{$\opn{XNOR_{AIL}}$ learned weight matrix}
    \end{subfigure}%
    \caption{Training results, regression experiment on second synthetic dataset.}
    \label{fig:2nd_parity}
\end{figure}

While this time the model was not able to learn a sparse weight matrix that exactly reflected our labelling function (see \autoref{fig:2nd_parity}), the model was again able to leverage the $\opn{XNOR_{AIL}}$ activation function to significantly outperform an identical model utilizing the ReLU activation function.

We found that a simple model with three hidden layers, each with eight neurons, utilizing $\opn{XNOR_{AIL}}$ was able to go from a validation RMSE of 0.287 at the beginning of training to a validation RMSE of 0.016 after 100 epochs. Comparatively, an identical model utilizing the ReLU activation function was only able to achieve a validation RMSE of 0.271 after 100 epochs. In order for our ReLU network to match the validation RMSE of our 8-neuron-per-layer $\opn{XNOR_{AIL}}$ model, we had to increase the model size by 32 times to 256 neurons at each hidden layer.

\subsection{MLP on Covertype}
\label{a:covtype}

We trained small one, two, and three-layer MLP models on the Covertype dataset~\citep{covertype-dataset,covertype-paper} from the UCI Machine Learning Repository.
The Covertype dataset is a classification task consisting of \num{581012} samples of forest coverage across 7 classes.
Each sample has 54 attributes.

Networks were trained using one-cycle \citep{superconvergence,Smith2018} for 50 epochs, with a batch size of \num{1024}, without weight decay.
We used a fixed random 80:20 train:test split throughout our experiments.
The learning rate was selected using an automated learning rate finder approach.
No data augmentation was performed.

\begin{figure}[tbh]
    \centering
    \begin{subfigure}[b]{0.8\linewidth}
        \centering
        \includegraphics[scale=0.4]{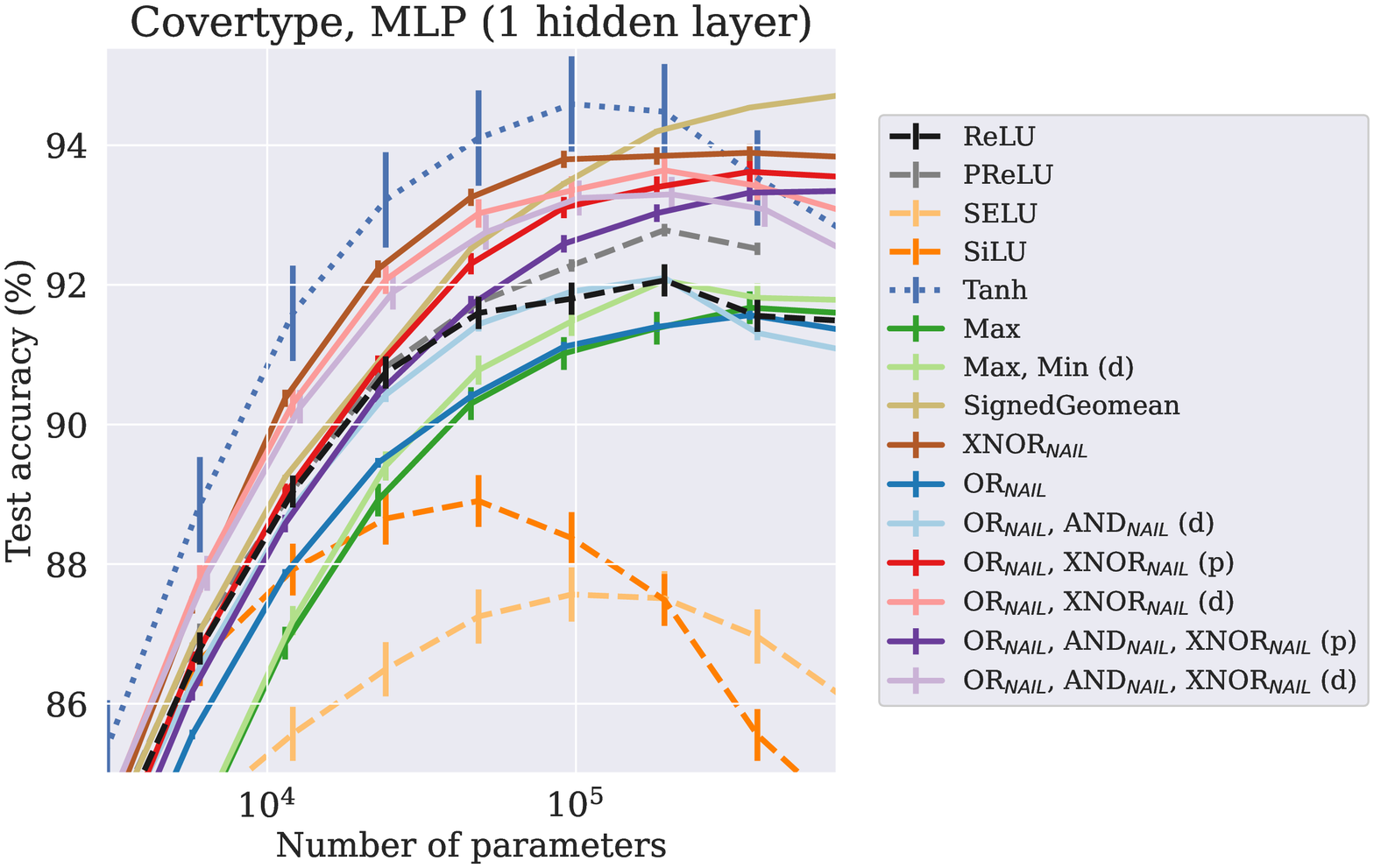}
    \end{subfigure}%
    \\
    \begin{subfigure}[b]{0.48\linewidth}
        \centering
        \includegraphics[scale=0.4]{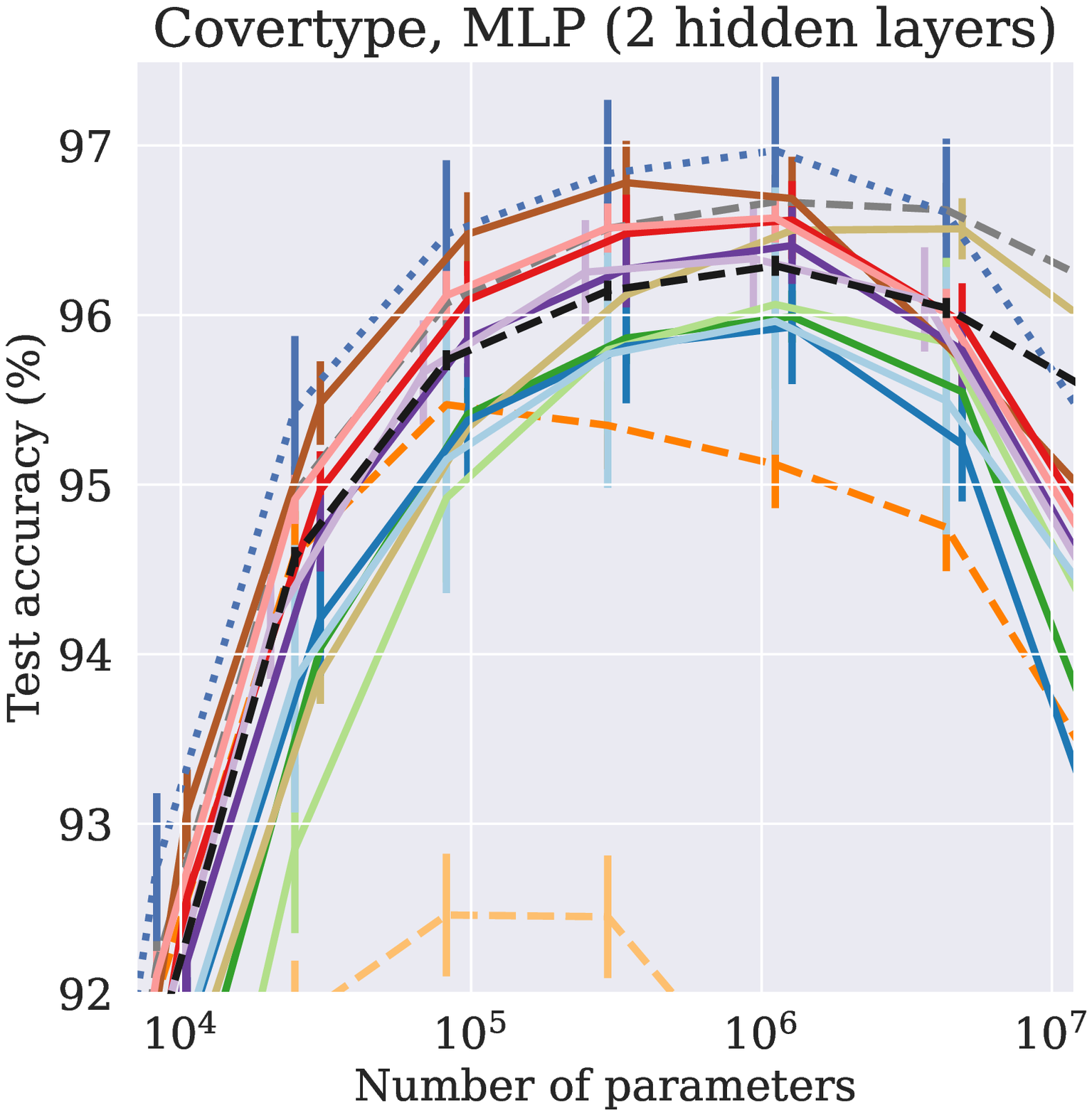}
    \end{subfigure}%
    \hfill
    \begin{subfigure}[b]{0.48\linewidth}
        \centering
        \includegraphics[scale=0.4]{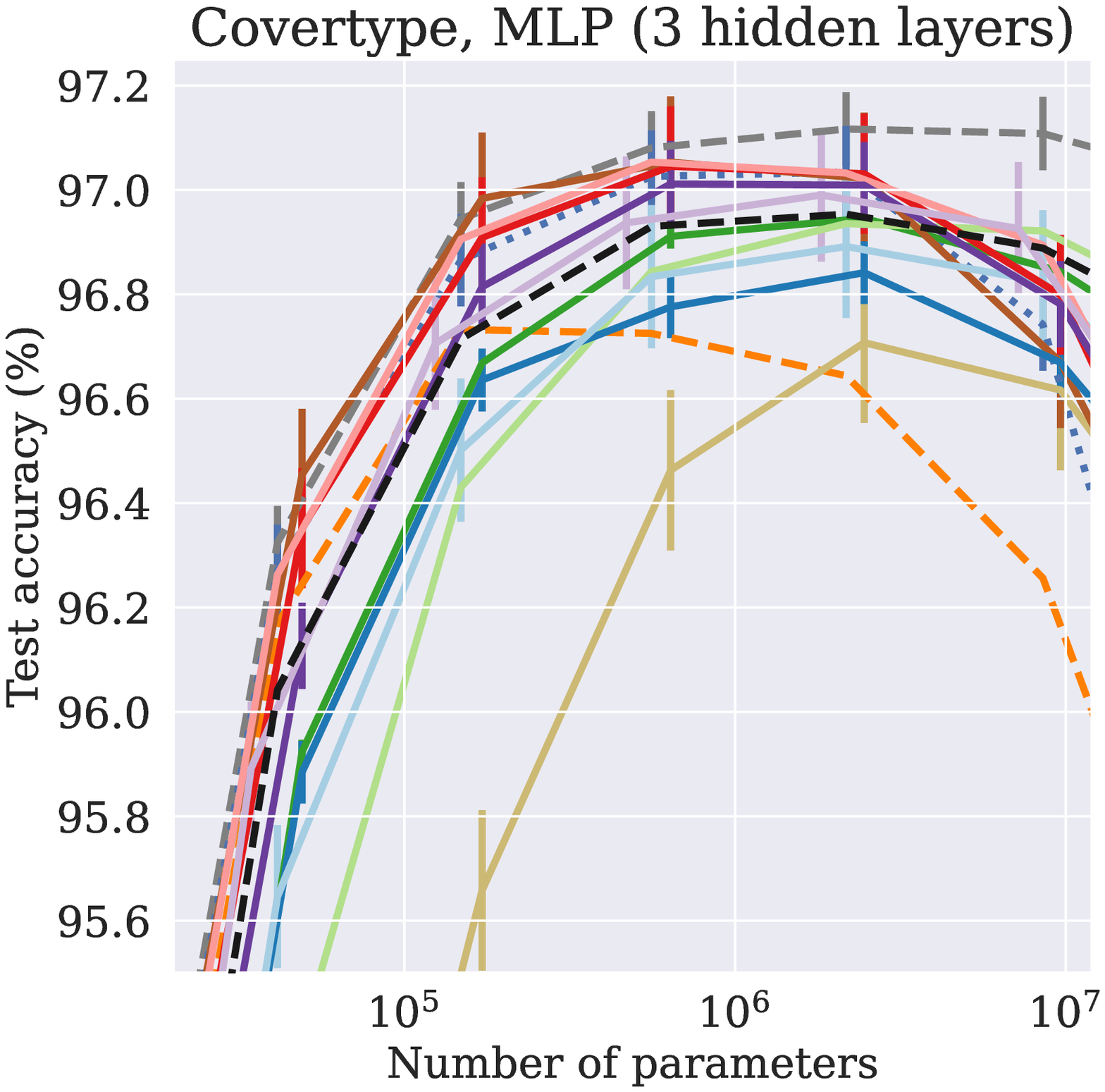}
    \end{subfigure}%
\caption{
We trained MLPs on the Covertype dataset, with a fixed 80:20 random split.
Trained with ADAM, 50~ep., 1-cycle, using LRs determined automatically with LR-finder.
Mean (bars: std~dev) of $n\!=\!5$ weight inits.
}
\label{fig:covtype}
\end{figure}

\begin{table}[tbhp]
\small
  \centering
  \caption{%
MLP on Covertype, with 1--3 hidden layers, while varying the activation function and network width.
For each activation, we show the best performance across all widths.
Bold: best.
Underlined: top two.
Italic: no sig. diff. from best (two-sided Student's $t$-test, $p\!>\!0.05$, $n\!=\!5$ weight inits).
Background: linear color scale from worst (white) to best (black) with a given number of layers.
}
\label{tab:mlp-covtype}
\def\ccAmin{84.78507439566962}
\def\ccAmax{94.7213066788293}
\def\ccA[#1]#2{\heatmapcell{#1}{#2}{\ccAmin}{\ccAmax}}
\def\ccBmin{92.46215674294123}
\def\ccBmax{96.97012985895373}
\def\ccB[#1]#2{\heatmapcell{#1}{#2}{\ccBmin}{\ccBmax}}
\def\ccCmin{94.45281102897516}
\def\ccCmax{97.1178024663735}
\def\ccC[#1]#2{\heatmapcell{#1}{#2}{\ccCmin}{\ccCmax}}
\centerline{
\begin{tabular}{lcrrr}
\toprule
 & & \multicolumn{3}{c}{Test Accuracy (\%) by \textnumero{} Layers} \\
\cmidrule(l){3-5}
Activation function                                     & Map & \multicolumn{1}{c}{1} & \multicolumn{1}{c}{2} & \multicolumn{1}{c}{3} \\
\toprule
$\opn{ReLU}$                                            & $1\!\to\! 1$ & \ccA[92.0644]{$92.06\wpm{ 0.07}$} & \ccB[96.2906]{$96.29\wpm{ 0.02}$} & \ccC[96.9536]{$96.95\wpm{ 0.01}$} \\
LeakyReLU                                               & $1\!\to\! 1$ & \ccA[92.1176]{$92.12\wpm{ 0.18}$} & \ccB[96.2448]{$96.24\wpm{ 0.02}$} & \ccC[96.9330]{$96.93\wpm{ 0.01}$} \\
PReLU                                                   & $1\!\to\! 1$ & \ccA[92.7844]{$92.78\wpm{ 0.04}$} & \ccB[96.6688]{$96.67\wpm{ 0.01}$} & \ccC[97.1168]{$\mbf{\mbns{97.12}}\wpm{ 0.01}$} \\
Softplus                                                & $1\!\to\! 1$ & \ccA[86.4561]{$86.46\wpm{ 0.05}$} & \ccB[93.5117]{$93.51\wpm{ 0.10}$} & \ccC[95.5044]{$95.50\wpm{ 0.12}$} \\
\midrule
ELU                                                     & $1\!\to\! 1$ & \ccA[86.7706]{$86.77\wpm{ 0.14}$} & \ccB[93.4177]{$93.42\wpm{ 0.05}$} & \ccC[95.4406]{$95.44\wpm{ 0.06}$} \\
CELU                                                    & $1\!\to\! 1$ & \ccA[86.7150]{$86.71\wpm{ 0.14}$} & \ccB[93.4177]{$93.42\wpm{ 0.05}$} & \ccC[95.4406]{$95.44\wpm{ 0.06}$} \\
SELU                                                    & $1\!\to\! 1$ & \ccA[87.5643]{$87.56\wpm{ 0.04}$} & \ccB[92.4622]{$92.46\wpm{ 0.10}$} & \ccC[94.4528]{$94.45\wpm{ 0.12}$} \\
GELU                                                    & $1\!\to\! 1$ & \ccA[90.7412]{$90.74\wpm{ 0.16}$} & \ccB[96.0855]{$96.09\wpm{ 0.01}$} & \ccC[96.9108]{$96.91\wpm{ 0.02}$} \\
SiLU                                                    & $1\!\to\! 1$ & \ccA[88.9041]{$88.90\wpm{ 0.07}$} & \ccB[95.4721]{$95.47\wpm{ 0.04}$} & \ccC[96.7326]{$96.73\wpm{ 0.02}$} \\
Hardswish                                               & $1\!\to\! 1$ & \ccA[87.4547]{$87.45\wpm{ 0.05}$} & \ccB[95.2154]{$95.22\wpm{ 0.05}$} & \ccC[96.7927]{$96.79\wpm{ 0.03}$} \\
Mish                                                    & $1\!\to\! 1$ & \ccA[89.2366]{$89.24\wpm{ 0.18}$} & \ccB[95.4705]{$95.47\wpm{ 0.03}$} & \ccC[96.7682]{$96.77\wpm{ 0.04}$} \\
\midrule
Softsign                                                & $1\!\to\! 1$ & \ccA[94.2401]{$94.24\wpm{ 0.05}$} & \ccB[96.6099]{$96.61\wpm{ 0.03}$} & \ccC[96.8459]{$96.85\wpm{ 0.02}$} \\
Tanh                                                    & $1\!\to\! 1$ & \ccA[94.5898]{$\mbs{94.59}\wpm{ 0.03}$} & \ccB[96.9701]{$\mbf{\mbns{96.97}}\wpm{ 0.02}$} & \ccC[97.0343]{$\mbns{97.03}\wpm{ 0.04}$} \\
\midrule
GLU                                                     & $2\!\to\! 1$ & \ccA[93.9683]{$93.97\wpm{ 0.08}$} & \ccB[96.8550]{$96.85\wpm{ 0.03}$} & \ccC[97.0942]{$\mbns{97.09}\wpm{ 0.01}$} \\
\midrule
$\opn{Max}$                                             & $2\!\to\! 1$ & \ccA[91.6728]{$91.67\wpm{ 0.05}$} & \ccB[95.9913]{$95.99\wpm{ 0.03}$} & \ccC[96.9457]{$96.95\wpm{ 0.01}$} \\
$\opn{Max},\opn{Min}$ (d)                               & $2\!\to\! 2$ & \ccA[92.0613]{$92.06\wpm{ 0.01}$} & \ccB[96.0629]{$96.06\wpm{ 0.01}$} & \ccC[96.9355]{$96.94\wpm{ 0.02}$} \\
\midrule
SignedGeomean                                           & $2\!\to\! 1$ & \ccA[94.7213]{$\mbf{\mbns{94.72}}\wpm{ 0.02}$} & \ccB[96.5087]{$96.51\wpm{ 0.06}$} & \ccC[96.7075]{$96.71\wpm{ 0.02}$} \\
\midrule
$\opn{XNOR_{IL}}$                                       & $2\!\to\! 1$ & \ccA[92.2274]{$92.23\wpm{ 0.03}$} & \ccB[96.6416]{$96.64\wpm{ 0.05}$} & \ccC[97.0391]{$97.04\wpm{ 0.02}$} \\
$\opn{OR_{IL}}$                                         & $2\!\to\! 1$ & \ccA[85.4200]{$85.42\wpm{ 0.06}$} & \ccB[93.3849]{$93.38\wpm{ 0.05}$} & \ccC[95.4994]{$95.50\wpm{ 0.06}$} \\
$\opn{OR}, \opn{AND_{IL}}$ (d)                          & $2\!\to\! 2$ & \ccA[84.7851]{$84.79\wpm{ 0.23}$} & \ccB[92.8284]{$92.83\wpm{ 0.18}$} & \ccC[95.1860]{$95.19\wpm{ 0.10}$} \\
$\opn{OR}, \opn{XNOR_{IL}}$ (p)                         & $2\!\to\! 1$ & \ccA[91.3643]{$91.36\wpm{ 0.08}$} & \ccB[96.1843]{$96.18\wpm{ 0.05}$} & \ccC[96.8868]{$96.89\wpm{ 0.01}$} \\
$\opn{OR}, \opn{XNOR_{IL}}$ (d)                         & $2\!\to\! 2$ & \ccA[89.6419]{$89.64\wpm{ 0.06}$} & \ccB[95.3395]{$95.34\wpm{ 0.01}$} & \ccC[96.5106]{$96.51\wpm{ 0.05}$} \\
$\opn{OR}, \opn{AND}, \opn{XNOR_{IL}}$ (p)              & $2\!\to\! 1$ & \ccA[90.4503]{$90.45\wpm{ 0.17}$} & \ccB[95.5996]{$95.60\wpm{ 0.13}$} & \ccC[96.6564]{$96.66\wpm{ 0.03}$} \\
$\opn{OR}, \opn{AND}, \opn{XNOR_{IL}}$ (d)              & $2\!\to\! 3$ & \ccA[88.5864]{$88.59\wpm{ 0.09}$} & \ccB[94.5075]{$94.51\wpm{ 0.04}$} & \ccC[96.0540]{$96.05\wpm{ 0.02}$} \\
\midrule
$\opn{XNOR_{NIL}}$                                      & $2\!\to\! 1$ & \ccA[90.5331]{$90.53\wpm{ 0.07}$} & \ccB[96.2288]{$96.23\wpm{ 0.03}$} & \ccC[96.9269]{$96.93\wpm{ 0.01}$} \\
$\opn{OR_{NIL}}$                                        & $2\!\to\! 1$ & \ccA[85.0820]{$85.08\wpm{ 0.09}$} & \ccB[93.0943]{$93.09\wpm{ 0.01}$} & \ccC[95.3215]{$95.32\wpm{ 0.05}$} \\
$\opn{OR}, \opn{AND_{NIL}}$ (d)                         & $2\!\to\! 2$ & \ccA[85.1648]{$85.16\wpm{ 0.11}$} & \ccB[92.8635]{$92.86\wpm{ 0.11}$} & \ccC[95.3540]{$95.35\wpm{ 0.06}$} \\
$\opn{OR}, \opn{XNOR_{NIL}}$ (p)                        & $2\!\to\! 1$ & \ccA[89.9915]{$89.99\wpm{ 0.10}$} & \ccB[95.8199]{$95.82\wpm{ 0.06}$} & \ccC[96.6689]{$96.67\wpm{ 0.03}$} \\
$\opn{OR}, \opn{XNOR_{NIL}}$ (d)                        & $2\!\to\! 2$ & \ccA[88.6311]{$88.63\wpm{ 0.16}$} & \ccB[95.3731]{$95.37\wpm{ 0.05}$} & \ccC[96.6245]{$96.62\wpm{ 0.03}$} \\
$\opn{OR}, \opn{AND}, \opn{XNOR_{NIL}}$ (p)             & $2\!\to\! 1$ & \ccA[89.9056]{$89.91\wpm{ 0.18}$} & \ccB[95.5289]{$95.53\wpm{ 0.06}$} & \ccC[96.4767]{$96.48\wpm{ 0.06}$} \\
$\opn{OR}, \opn{AND}, \opn{XNOR_{NIL}}$ (d)             & $2\!\to\! 3$ & \ccA[87.9864]{$87.99\wpm{ 0.10}$} & \ccB[94.7618]{$94.76\wpm{ 0.07}$} & \ccC[96.1562]{$96.16\wpm{ 0.06}$} \\
\midrule
$\opn{XNOR_{AIL}}$                                      & $2\!\to\! 1$ & \ccA[94.2189]{$94.22\wpm{ 0.07}$} & \ccB[96.9021]{$\mbs{\mbns{96.90}}\wpm{ 0.02}$} & \ccC[97.1178]{$\mbf{\mbns{97.12}}\wpm{ 0.01}$} \\
$\opn{OR_{AIL}}$                                        & $2\!\to\! 1$ & \ccA[91.1383]{$91.14\wpm{ 0.05}$} & \ccB[95.7404]{$95.74\wpm{ 0.05}$} & \ccC[96.8271]{$96.83\wpm{ 0.01}$} \\
$\opn{OR}, \opn{AND_{AIL}}$ (d)                         & $2\!\to\! 2$ & \ccA[91.3247]{$91.32\wpm{ 0.16}$} & \ccB[95.8492]{$95.85\wpm{ 0.03}$} & \ccC[96.8701]{$96.87\wpm{ 0.01}$} \\
$\opn{OR}, \opn{XNOR_{AIL}}$ (p)                        & $2\!\to\! 1$ & \ccA[93.6577]{$93.66\wpm{ 0.06}$} & \ccB[96.6046]{$96.60\wpm{ 0.04}$} & \ccC[97.1111]{$\mbns{97.11}\wpm{ 0.01}$} \\
$\opn{OR}, \opn{XNOR_{AIL}}$ (d)                        & $2\!\to\! 2$ & \ccA[93.5842]{$93.58\wpm{ 0.04}$} & \ccB[96.5958]{$96.60\wpm{ 0.01}$} & \ccC[97.0624]{$97.06\wpm{ 0.01}$} \\
$\opn{OR}, \opn{AND}, \opn{XNOR_{AIL}}$ (p)             & $2\!\to\! 1$ & \ccA[93.2952]{$93.30\wpm{ 0.05}$} & \ccB[96.4288]{$96.43\wpm{ 0.03}$} & \ccC[97.0601]{$\mbns{97.06}\wpm{ 0.02}$} \\
$\opn{OR}, \opn{AND}, \opn{XNOR_{AIL}}$ (d)             & $2\!\to\! 3$ & \ccA[93.0940]{$93.09\wpm{ 0.11}$} & \ccB[96.3159]{$96.32\wpm{ 0.02}$} & \ccC[96.9894]{$96.99\wpm{ 0.00}$} \\
\midrule
$\opn{XNOR_{NAIL}}$                                     & $2\!\to\! 1$ & \ccA[93.8921]{$93.89\wpm{ 0.03}$} & \ccB[96.7810]{$96.78\wpm{ 0.01}$} & \ccC[97.0533]{$97.05\wpm{ 0.01}$} \\
$\opn{OR_{NAIL}}$                                       & $2\!\to\! 1$ & \ccA[91.5661]{$91.57\wpm{ 0.03}$} & \ccB[95.9301]{$95.93\wpm{ 0.03}$} & \ccC[96.8416]{$96.84\wpm{ 0.02}$} \\
$\opn{OR}, \opn{AND_{NAIL}}$ (d)                        & $2\!\to\! 2$ & \ccA[92.0992]{$92.10\wpm{ 0.09}$} & \ccB[95.9648]{$95.96\wpm{ 0.05}$} & \ccC[96.8916]{$96.89\wpm{ 0.03}$} \\
$\opn{OR}, \opn{XNOR_{NAIL}}$ (p)                       & $2\!\to\! 1$ & \ccA[93.6200]{$93.62\wpm{ 0.05}$} & \ccB[96.5605]{$96.56\wpm{ 0.02}$} & \ccC[97.0453]{$97.05\wpm{ 0.01}$} \\
$\opn{OR}, \opn{XNOR_{NAIL}}$ (d)                       & $2\!\to\! 2$ & \ccA[93.6404]{$93.64\wpm{ 0.07}$} & \ccB[96.5727]{$96.57\wpm{ 0.01}$} & \ccC[97.0534]{$97.05\wpm{ 0.01}$} \\
$\opn{OR}, \opn{AND}, \opn{XNOR_{NAIL}}$ (p)            & $2\!\to\! 1$ & \ccA[93.3451]{$93.35\wpm{ 0.06}$} & \ccB[96.4102]{$96.41\wpm{ 0.01}$} & \ccC[97.0114]{$97.01\wpm{ 0.01}$} \\
$\opn{OR}, \opn{AND}, \opn{XNOR_{NAIL}}$ (d)            & $2\!\to\! 3$ & \ccA[93.2960]{$93.30\wpm{ 0.02}$} & \ccB[96.3330]{$96.33\wpm{ 0.02}$} & \ccC[96.9903]{$96.99\wpm{ 0.01}$} \\
\bottomrule
\end{tabular}
}
\end{table}

For each activation function, we varied the number of hidden units per layer to investigate how the activation functions affected the performance of the networks as its capacity changed.
We did not use weight decay or data augmentation for this experiment, and so the network exhibits clear overfitting with larger architectures when the network is over-parameterized.

As shown in \autoref{fig:covtype}, we found that $\opn{XNOR_{NAIL}}$ performed well on this task (outperforming ReLU), whilst $\opn{OR_{NAIL}}$ and Max were less successful, performing worse than ReLU.
SignedGeomean performed well when using 1 hidden layer, but its relative performance drastically decreased as the MLP depth was increased.
Tanh performed well throughout, and was best when using 2 hidden layers, but the margin vs $\opn{XNOR_{NAIL}}$ was not statistically significant (two-sided Student's $t$-test, $p\!>\!0.05$).
SiLU performed poorly at this task, and the ELU family of activation functions (of which SELU is shown in the figures) performed even worse (reaching only $94.15\pm0.09\%$ with 3 hidden layers).

\subsection{Bach Chorale training details}
\label{a:jsb}

For each of the 4 voices, we restricted the available pitches to 3 octaves, resulting in 37 one-hot tokens (including silence). Since we only planned on feeding small time-windows of the chorale into our model, for pre-processing, we converted the data into a shape of $(L, 4, 37)$, where we set the sequence length to be $L=4$.

To generate training examples for the discriminator, we transposed by $\{-5,-4,\dots,5,6\}$ semitones, chosen uniformly at random, and there was a $0.5$ probability that the sample was corrupted by the following method:
\begin{itemize}
    \setlength\itemsep{0em}
    \item Choose 2--3 notes in the $4\,L$ window to be corrupted
    \item For each note, corrupt by the following mixture distribution:
    \begin{itemize}
        \item $(p=0.6)$ Sample a pitch from a Gaussian centered on the existing note, with $\sigma = 3$ semitones, forcing the new pitch to be distinct
        \item $(p=0.2)$ Copy a pitch from the current voice, forcing the new pitch to be distinct
        \item $(p=0.2)$ Extend the previous note in time
        \item $(p=0.1)$ Sample uniformly from all 37 possible tokens
    \end{itemize}
\end{itemize}

\begin{figure}[htb]
    \centering
    \includegraphics[scale=0.45]{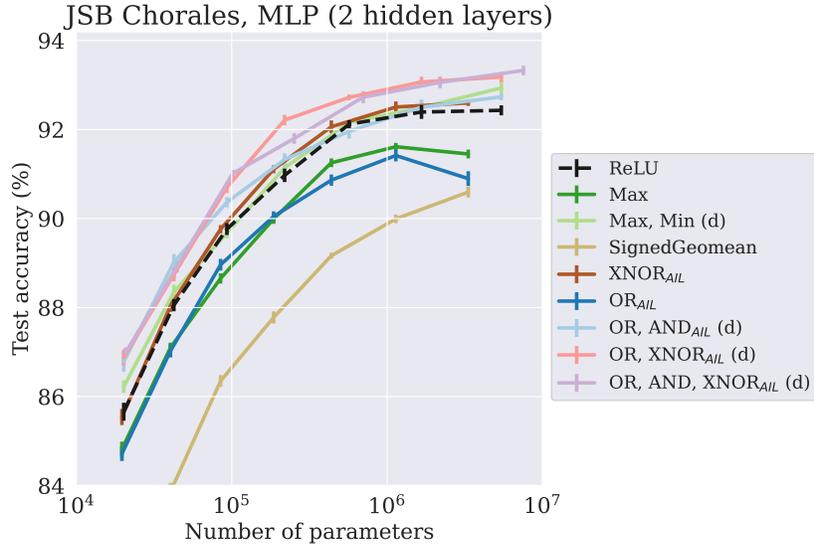}
\caption{
We trained 2-layer MLPs on JSB Chorales using ADAM (constant LR \num{1e-3}, 150~ep.), Mean (bars: std dev) of $n\!=\!10$ weight inits.
}
\label{fig:jsb}
\end{figure}

\subsection{Correlations between pre-activations}
\label{a:correlations}

\revision{
To capture the existence of correlations, we took the cosine similarity between rows of the weight matrix.
Since the inputs to all features in a given layer are the same, this is equivalent to measuring the similarity between corresponding pair of pre-activation features.
}

Results on correlations between weights in the JSB Chorale models are shown in \autoref{fig:correlations}. We found that when taking all pre-activations into account, every activation function generally showed independence between features.
Interestingly, the cosine similarities between inputs that were paired together for the bivariate activation functions showed anticorrelation in almost all cases where $\opn{Max}$ or $\opn{OR_{AIL}}$ were used, and other cases generally showed more correlation than $\opn{ReLU}$.

\begin{figure}[htb]
    \centering
    \begin{subfigure}[b]{\linewidth}
        \includegraphics[width=\linewidth]{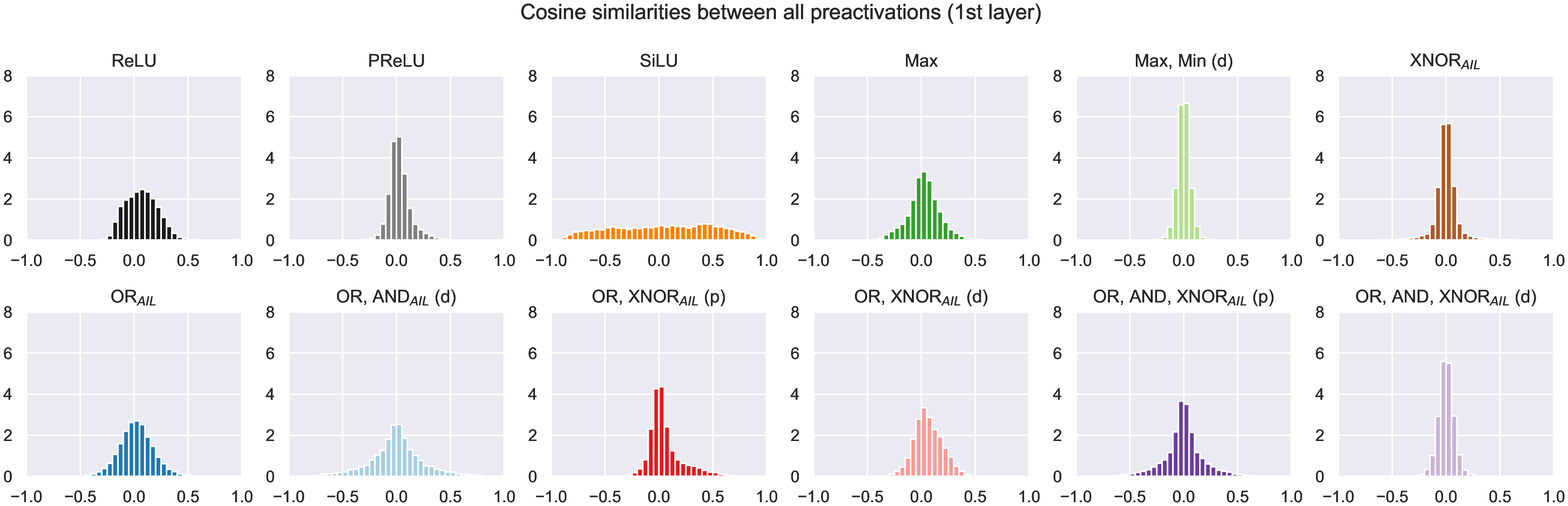}
    \end{subfigure}
    \begin{subfigure}[b]{\linewidth}
        \includegraphics[width=\linewidth]{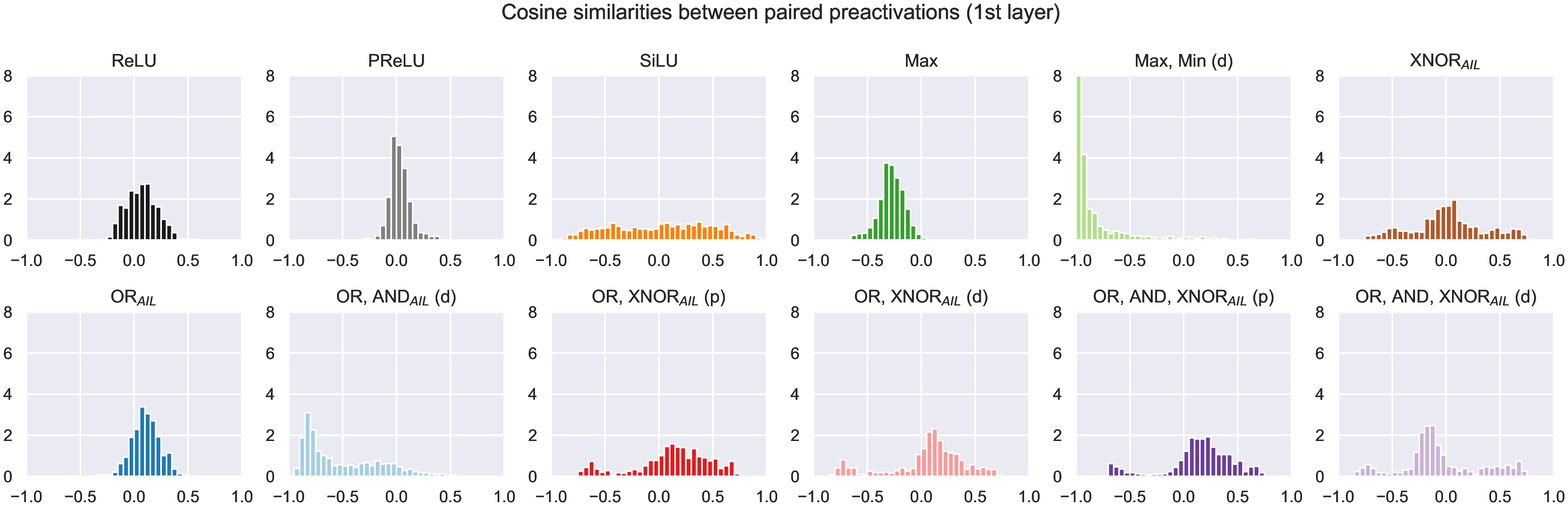}
    \end{subfigure}
    \caption{Cosine similarities between pre-activation weights of two activation functions in the first layer of an MLP trained on JSB Chorales. }
    \label{fig:correlations}
\end{figure}

We found that randomly selected pairs of preactivation features within the same layer have correlations that are given by a Gaussian-like distribution centered around zero.
This was the case for all of the activation functions we tested.
The behaviour of randomly selected pairs of features is thus reasonably consistent with the assumption of independence which we have made.
We also investigated the correlation between the pairs of preactivation features which were passing into our two-dimensional activation functions.
Here, we found the correlation structure is different, and the correlation depends on the activation function being used. With Max and $\opn{OR_{AIL}}$ activations, the network learns to make the columns of the weight matrix (and hence the preactivation scores for the pair of features) be inversely correlated.
With $\opn{XNOR_{AIL}}$, the network learns features which are either positively or negatively correlated (a wider distribution of correlations than seen with random pairs of features).
We observe that (in all cases) the network learns to make the features passed to the AIL activation functions be correlated instead of independent, despite our assumption of independence.
So it appears clear that the assumption of independence is violated, but also that it doesn't \textit{really} matter because the network is choosing to break the assumption and induce these correlations between the features to get better performance.

\subsection{CNN and MLP on MNIST}
\label{a:mnist}

In this experiment we trained MLP and CNN models for 10 epochs on the MNIST dataset using ADAM optimizer, one-cycle learning rate schedule, and cross entropy loss, with batch size of 256. We augmented our training samples using a random affine transformation from: rotation of $\pm10$ degrees, scale of factor $0.8$ to $1.2$, translation with max absolute fraction for horizontal and vertical directions of $0.08$, and shear parallel to the x-axes of $\pm0.3$.

The hyperparameters for the optimizer and scheduler were selected through a random search of the hyperparameter space.
We chose to do random search instead of grid search because it typically yields better results for the same number of test cases.

For the hyperparameter search, we trained on the first \num{50000} samples of the training partition and used the final \num{10000} samples as a validation set.
We ran the search for four iterations, with each iteration sampling 120 different hyperparameter settings. The initial bounds for our hyperparameter samples were set as described in \autoref{tab:hparam-search}.
These bounds were chosen to be suitably wide such that the optimal configuration should be contained within them for all activation functions considered.

\begin{table}[ht]
\small
  \centering
  \caption{%
Hyperparameter random search parameters.
}
\label{tab:hparam-search}
\centerline{
\begin{tabular}{lrl}
\toprule
Hyperparameter  & Variable & Sampling (initial bounds) \\
\midrule
ADAM beta1       & $1-10^x$ & $x\sim \mathrm{Uniform}(-3, -0.5)$ \\
ADAM beta2       & $1-10^x$ & $x\sim \mathrm{Uniform}(-5, -1)$   \\
ADAM epsilon     &   $10^x$ & $x\sim \mathrm{Uniform}(-10, -6)$  \\
Weight decay     &   $10^x$ & $x\sim \mathrm{Uniform}(-7, -3)$   \\
One-cycle max LR &   $10^x$ & $x\sim \mathrm{Uniform}(-4, 0)$    \\
One-cycle peak   &    $x$   & $x\sim \mathrm{Uniform}(0.1, 0.5)$ \\
\bottomrule
\end{tabular}
}
\end{table}

The bounds on our uniform random variable $x$ were tightened at each iteration by selecting the top-5 performing hyperparameter settings, taking the mean $\bar{x}$ and weighted standard deviation $\sigma$ across those settings, and re-setting the bounds for the next iteration to be equal to $\bar{x}\pm 1.5\,\sigma$. After the fourth iteration, we ran a final iteration where we selected the top-10 performing hyperparameter settings across the four previous iterations, and then re-ran these for another 120 seeds (i.e. random weight initializations). We then selected the hyperparameter settings which had the highest performance across all 120 seeds.

We found that the $\opn{XNOR_{AIL}}$ activation function required quite different hyperparameters than the other activation functions, but $\opn{AND_{AIL}}$ and $\opn{Max}$ activation functions used similar hyperparameters to ReLU. However, we have no reason to believe that the proposed AIL activation functions are more susceptible than others to the choice of hyperparameters or the way in which hyperparameters are selected.

Our MLP model consisted of two hidden layers of equal size with batch norm applied to each. The number of neurons in each layer was set, taking into consideration the current activation function being tested, to ensure the number of trainable parameters in the network remained static across experiments

Our CNN model consisted of six layers, where each layer was comprised of a set of 2D convolution filters (kernel size 3, stride 1, padding 1), batch normalization, and a non-linear activation. The network also applied a pooling layer (kernel size 2, stride 2) at the end of every second layer. After the 6 CNN layers the network flattens the output and applies three linear layers. The number of output channels (i.e. pre-activations) for the six 2D convolutions were $[c, 2c, 4c, 4c, 8c, 8c]$, and the number of pre-activation neurons produced by the subsequent linear layers were $[32c, 16c]$. Similar to the MLP model, $c$ was chosen to ensure the number of trainable parameters in the network remained fixed across experiments. When varying the number of network parameters in our experiments, if the number of parameters dropped below \num{1e6} in the CNN model, we changed the structure of the convolution layers and linear layers to $[c, c, c, c, 1.5c, 1.5c]$ and $[2c, 1.5c]$, respectively. This ensured that each layer had at least two channels for our activation functions to aggregate.

Once our optimal hyperparameters were selected for both our MLP and CNN models, we then trained each model several times with varying number of trainable parameters. The reason we vary the number of parameters in the network instead of the network size/structure is because a network using our logical activation functions can have a significantly different number of parameters than an identical network using ReLU activation, because our logical activations aggregate across neurons at each layer.

\begin{figure}[htb]
    \centering
    \begin{subfigure}[b]{0.49\linewidth}
    \centering
        \includegraphics[scale=0.4]{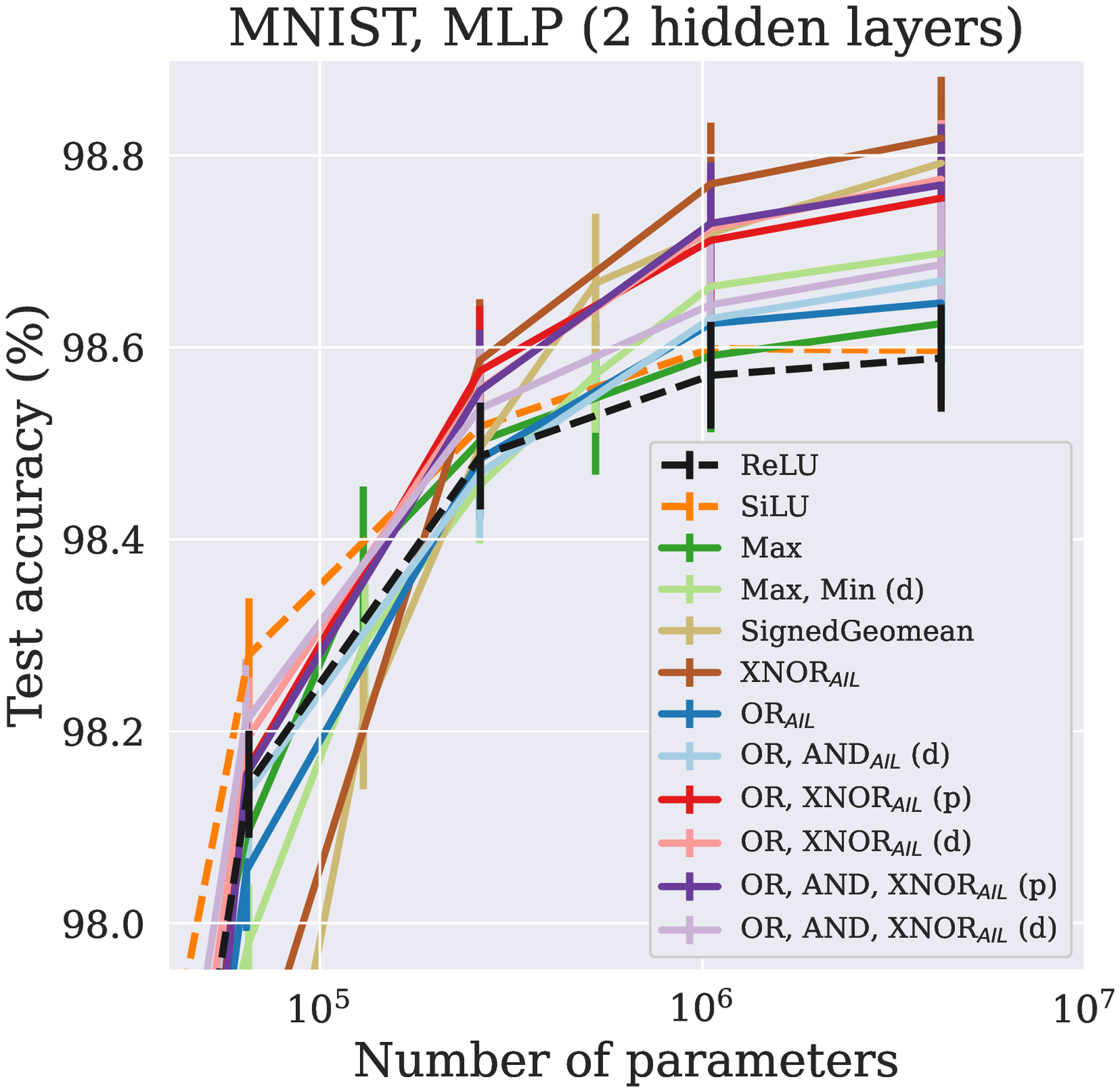}
    \end{subfigure}%
    \hfill{}
    \begin{subfigure}[b]{0.49\linewidth}
    \centering
        \includegraphics[scale=0.4]{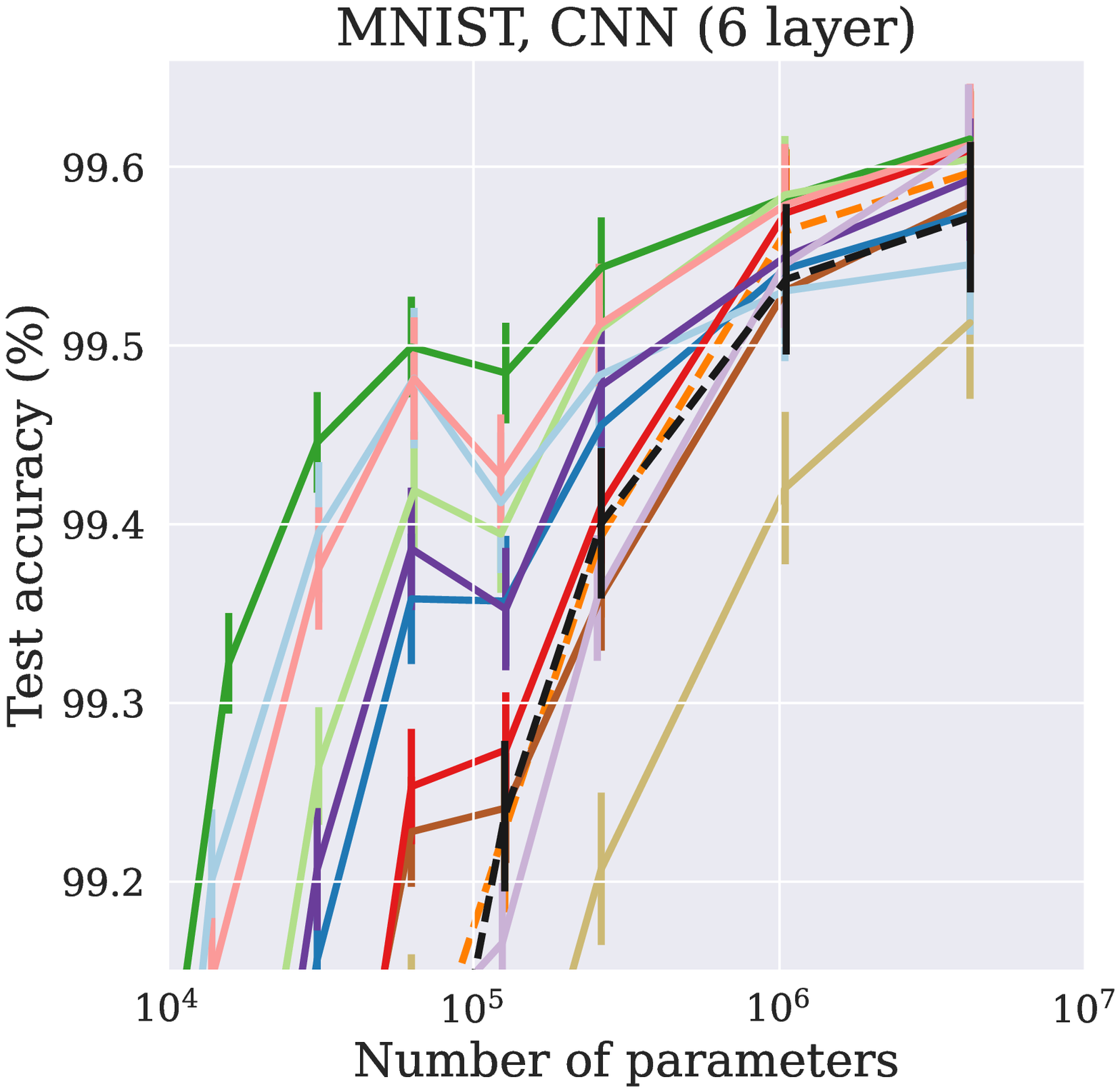}
    \end{subfigure}%
\caption{
We trained CNN on \href{http://yann.lecun.com/exdb/mnist/}{MNIST}, MLP on flattened-MNIST, using ADAM (1-cycle, 10~ep), hyperparams determined by random search. Mean (bars: std~dev) of $n\!=\!40$ weight inits.
}
\label{fig:mnist-extra}
\end{figure}

Our abridged results are plotted in \autoref{fig:mnist}, with results for more activation functions shown in \autoref{fig:mnist-extra}.
Performance measurements from four slices of total parameter count are tabulated in \autoref{tab:mlp-mnist} and \autoref{tab:cnn-mnist}.

\begin{table}[tbhp]
\small
  \centering
  \caption{%
MLP on MNIST, with two hidden layers, while varying the activation function and network width.
Bold: best.
Underlined: top two.
Italic: no sig. diff. from best (two-sided Student's $t$-test, $p\!>\!0.05$).
Background: linear color scale from worst (white) to best (black) with a given number of parameters.
}
\label{tab:mlp-mnist}
\def\ccAmin{91.03449746966362}
\def\ccAmax{97.03299760818481}
\def\ccA[#1]#2{\heatmapcell{#1}{#2}{\ccAmin}{\ccAmax}}
\def\ccBmin{97.59056365714883}
\def\ccBmax{98.2794976234436}
\def\ccB[#1]#2{\heatmapcell{#1}{#2}{\ccBmin}{\ccBmax}}
\def\ccCmin{98.45604943601708}
\def\ccCmax{98.58624771237373}
\def\ccC[#1]#2{\heatmapcell{#1}{#2}{\ccCmin}{\ccCmax}}
\def\ccDmin{98.58874768018723}
\def\ccDmax{98.81799772381783}
\def\ccD[#1]#2{\heatmapcell{#1}{#2}{\ccDmin}{\ccDmax}}
\centerline{
\begin{tabular}{lrrrr}
\toprule
 & \multicolumn{4}{c}{Test Accuracy (\%) by \textnumero{} Params.} \\
\cmidrule(l){2-5}
Activation function                                     & \multicolumn{1}{c}{$\sim\! 16\,\text{k}$} & \multicolumn{1}{c}{$\sim\! 65\,\text{k}$} & \multicolumn{1}{c}{$\sim\! 262\,\text{k}$} & \multicolumn{1}{c}{$\sim\! 4\,\text{M}$} \\
\toprule
$\opn{ReLU}$                                             & \ccA[96.6022]{$96.60\wpm{ 0.03}$} & \ccB[98.1448]{$98.14\wpm{ 0.01}$} & \ccC[98.4868]{$98.49\wpm{ 0.01}$} & \ccD[98.5887]{$98.59\wpm{ 0.01}$} \\
\midrule
SiLU                                                     & \ccA[97.0330]{$\mbf{\mbns{97.03}}\wpm{ 0.02}$} & \ccB[98.2795]{$\mbf{\mbns{98.28}}\wpm{ 0.01}$} & \ccC[98.5177]{$98.52\wpm{ 0.01}$} & \ccD[98.5967]{$98.60\wpm{ 0.01}$} \\
\midrule
$\opn{Max}$                                              & \ccA[96.3570]{$96.36\wpm{ 0.03}$} & \ccB[98.0913]{$98.09\wpm{ 0.01}$} & \ccC[98.5022]{$98.50\wpm{ 0.01}$} & \ccD[98.6247]{$98.62\wpm{ 0.01}$} \\
$\opn{Max},\opn{Min}$ (d)                                & \ccA[96.3828]{$96.38\wpm{ 0.03}$} & \ccB[97.9797]{$97.98\wpm{ 0.02}$} & \ccC[98.4560]{$98.46\wpm{ 0.01}$} & \ccD[98.6981]{$98.70\wpm{ 0.01}$} \\
\midrule
SignedGeomean                                            & \ccA[91.0345]{$91.03\wpm{ 0.15}$} & \ccB[97.5906]{$97.59\wpm{ 0.02}$} & \ccC[98.4943]{$98.49\wpm{ 0.01}$} & \ccD[98.7920]{$\mbs{\mbns{98.79}}\wpm{ 0.01}$} \\
\midrule
$\opn{XNOR_{AIL}}$                                       & \ccA[91.9057]{$91.91\wpm{ 0.13}$} & \ccB[97.8132]{$97.81\wpm{ 0.02}$} & \ccC[98.5862]{$\mbf{\mbns{98.59}}\wpm{ 0.01}$} & \ccD[98.8180]{$\mbf{\mbns{98.82}}\wpm{ 0.01}$} \\
$\opn{OR_{AIL}}$                                         & \ccA[96.2232]{$96.22\wpm{ 0.03}$} & \ccB[98.0550]{$98.05\wpm{ 0.01}$} & \ccC[98.4835]{$98.48\wpm{ 0.01}$} & \ccD[98.6462]{$98.65\wpm{ 0.01}$} \\
$\opn{OR}, \opn{AND_{AIL}}$ (d)                          & \ccA[96.6400]{$\mbs{96.64}\wpm{ 0.03}$} & \ccB[98.1370]{$98.14\wpm{ 0.01}$} & \ccC[98.4687]{$98.47\wpm{ 0.01}$} & \ccD[98.6692]{$98.67\wpm{ 0.01}$} \\
$\opn{OR}, \opn{XNOR_{AIL}}$ (p)                         & \ccA[95.8482]{$95.85\wpm{ 0.03}$} & \ccB[98.1587]{$98.16\wpm{ 0.01}$} & \ccC[98.5752]{$\mbs{\mbns{98.58}}\wpm{ 0.01}$} & \ccD[98.7555]{$98.76\wpm{ 0.01}$} \\
$\opn{OR}, \opn{XNOR_{AIL}}$ (d)                         & \ccA[96.4980]{$96.50\wpm{ 0.03}$} & \ccB[98.1945]{$98.19\wpm{ 0.01}$} & \ccC[98.5562]{$\mbns{98.56}\wpm{ 0.01}$} & \ccD[98.7752]{$98.78\wpm{ 0.01}$} \\
$\opn{OR}, \opn{AND}, \opn{XNOR_{AIL}}$ (p)              & \ccA[95.6955]{$95.70\wpm{ 0.06}$} & \ccB[98.1552]{$98.16\wpm{ 0.01}$} & \ccC[98.5547]{$\mbns{98.55}\wpm{ 0.01}$} & \ccD[98.7692]{$98.77\wpm{ 0.01}$} \\
$\opn{OR}, \opn{AND}, \opn{XNOR_{AIL}}$ (d)              & \ccA[96.5752]{$96.58\wpm{ 0.02}$} & \ccB[98.2097]{$\mbs{98.21}\wpm{ 0.01}$} & \ccC[98.5360]{$98.54\wpm{ 0.01}$} & \ccD[98.6862]{$98.69\wpm{ 0.01}$} \\
\bottomrule
\end{tabular}
}
\end{table}

\begin{table}[tbhp]
\small
  \centering
  \caption{%
CNN on MNIST, while varying the activation function and network width.
Bold: best.
Underlined: top two.
Italic: no sig. diff. from best (two-sided Student's $t$-test, $p\!>\!0.05$).
Background: linear color scale from worst (white) to best (black) with a given number of parameters.
}
\label{tab:cnn-mnist}
\def\ccAmin{97.07624763250351}
\def\ccAmax{99.32224780321121}
\def\ccA[#1]#2{\heatmapcell{#1}{#2}{\ccAmin}{\ccAmax}}
\def\ccBmin{98.97299781441689}
\def\ccBmax{99.499122351408}
\def\ccB[#1]#2{\heatmapcell{#1}{#2}{\ccBmin}{\ccBmax}}
\def\ccCmin{99.20724749565125}
\def\ccCmax{99.543496966362}
\def\ccC[#1]#2{\heatmapcell{#1}{#2}{\ccCmin}{\ccCmax}}
\def\ccDmin{99.51274693012238}
\def\ccDmax{99.61549758911133}
\def\ccD[#1]#2{\heatmapcell{#1}{#2}{\ccDmin}{\ccDmax}}
\centerline{
\begin{tabular}{lrrrr}
\toprule
 & \multicolumn{4}{c}{Test Accuracy (\%) by \textnumero{} Params.} \\
\cmidrule(l){2-5}
Activation function                                     & \multicolumn{1}{c}{$\sim\! 15\,\text{k}$} & \multicolumn{1}{c}{$\sim\! 63\,\text{k}$} & \multicolumn{1}{c}{$\sim\! 260\,\text{k}$} & \multicolumn{1}{c}{$\sim\! 4\,\text{M}$} \\
\toprule
$\opn{ReLU}$                                             & \ccA[97.0762]{$97.08\wpm{ 0.12}$} & \ccB[98.9730]{$98.97\wpm{ 0.02}$} & \ccC[99.4005]{$99.40\wpm{ 0.01}$} & \ccD[99.5717]{$99.57\wpm{ 0.01}$} \\
\midrule
SiLU                                                     & \ccA[97.0762]{$97.08\wpm{ 0.14}$} & \ccB[99.0735]{$99.07\wpm{ 0.01}$} & \ccC[99.3935]{$99.39\wpm{ 0.01}$} & \ccD[99.5967]{$99.60\wpm{ 0.01}$} \\
\midrule
$\opn{Max}$                                              & \ccA[99.3222]{$\mbf{\mbns{99.32}}\wpm{ 0.01}$} & \ccB[99.4991]{$\mbf{\mbns{99.50}}\wpm{ 0.01}$} & \ccC[99.5435]{$\mbf{\mbns{99.54}}\wpm{ 0.01}$} & \ccD[99.6155]{$\mbf{\mbns{99.62}}\wpm{ 0.00}$} \\
$\opn{Max},\opn{Min}$ (d)                                & \ccA[98.8953]{$98.90\wpm{ 0.02}$} & \ccB[99.4189]{$99.42\wpm{ 0.01}$} & \ccC[99.5087]{$\mbs{99.51}\wpm{ 0.01}$} & \ccD[99.6045]{$\mbns{99.60}\wpm{ 0.01}$} \\
\midrule
SignedGeomean                                            & \ccA[98.2782]{$98.28\wpm{ 0.03}$} & \ccB[99.1167]{$99.12\wpm{ 0.01}$} & \ccC[99.2072]{$99.21\wpm{ 0.01}$} & \ccD[99.5127]{$99.51\wpm{ 0.01}$} \\
\midrule
$\opn{XNOR_{AIL}}$                                       & \ccA[98.5065]{$98.51\wpm{ 0.02}$} & \ccB[99.2280]{$99.23\wpm{ 0.01}$} & \ccC[99.3600]{$99.36\wpm{ 0.01}$} & \ccD[99.5800]{$99.58\wpm{ 0.00}$} \\
$\opn{OR_{AIL}}$                                         & \ccA[98.8285]{$98.83\wpm{ 0.02}$} & \ccB[99.3582]{$99.36\wpm{ 0.01}$} & \ccC[99.4555]{$99.46\wpm{ 0.01}$} & \ccD[99.5735]{$99.57\wpm{ 0.01}$} \\
$\opn{OR}, \opn{AND_{AIL}}$ (d)                          & \ccA[99.2012]{$\mbs{99.20}\wpm{ 0.01}$} & \ccB[99.4817]{$\mbs{\mbns{99.48}}\wpm{ 0.01}$} & \ccC[99.4832]{$99.48\wpm{ 0.01}$} & \ccD[99.5452]{$99.55\wpm{ 0.01}$} \\
$\opn{OR}, \opn{XNOR_{AIL}}$ (p)                         & \ccA[98.3372]{$98.34\wpm{ 0.04}$} & \ccB[99.2532]{$99.25\wpm{ 0.01}$} & \ccC[99.4102]{$99.41\wpm{ 0.01}$} & \ccD[99.6092]{$\mbs{\mbns{99.61}}\wpm{ 0.01}$} \\
$\opn{OR}, \opn{XNOR_{AIL}}$ (d)                         & \ccA[99.1455]{$99.15\wpm{ 0.01}$} & \ccB[99.4815]{$\mbs{\mbns{99.48}}\wpm{ 0.01}$} & \ccC[99.5115]{$\mbs{99.51}\wpm{ 0.01}$} & \ccD[99.6122]{$\mbf{\mbns{99.61}}\wpm{ 0.01}$} \\
$\opn{OR}, \opn{AND}, \opn{XNOR_{AIL}}$ (p)              & \ccA[98.8720]{$98.87\wpm{ 0.02}$} & \ccB[99.3862]{$99.39\wpm{ 0.01}$} & \ccC[99.4775]{$99.48\wpm{ 0.01}$} & \ccD[99.5927]{$99.59\wpm{ 0.01}$} \\
$\opn{OR}, \opn{AND}, \opn{XNOR_{AIL}}$ (d)              & \ccA[97.8797]{$97.88\wpm{ 0.05}$} & \ccB[99.1062]{$99.11\wpm{ 0.01}$} & \ccC[99.3587]{$99.36\wpm{ 0.01}$} & \ccD[99.6110]{$\mbf{\mbns{99.61}}\wpm{ 0.01}$} \\
\bottomrule
\end{tabular}
}
\end{table}

\subsection{ResNet50 on CIFAR-10/100}
\label{a:cifar}

\revision{In this experiment, we explored the impact of the choice of activation functions on the performance of a pre-activation ResNet50 model~\citep{resnet,resnetv2} applied to the CIFAR-10 and -100 datasets.
Our base network was a pre-activation ResNet50 with 4 layers comprised of $[3, 4, 6, 3]$ bottleneck residual blocks \citep{resnetv2} with an expansion factor of 4.
As described in the main text, we exchanged all ReLU activation functions in the network to a candidate activation function while maintaining the size of the pass-through embedding, with the exception of the activation function between the first convolutional layer (stem) and the first residual block which was ReLU throughout.
We did not use any activation on the identity/pass-through branches of the residual blocks.
The base widths for the residual blocks were $[64, 128, 256, 512]$, respectively.
We experimented with changing the width of the network, scaling up the embedding space and all hidden layers by a common factor, $w$.
We used width factors of 0.25, 0.5, 1, 2, and 4 for $1\!\to \!1$ activation functions (ReLU, etc), widths of 0.375, 0.75, 1.5, 3, and 6 for $2\!\to \!1$ activation functions (Max, etc), and widths of 0.2, 0.4, 0.75, 1.5, and 3 for \{$\opn{OR_{AIL}}$, $\opn{AND_{AIL}}$, $\opn{XNOR_{AIL}}$ (d)\}.}
The stem width was held constant at 64 channels throughout.

For this experiment we trained a ResNet50 model for 100 epochs on CIFAR-10 and CIFAR-100 using ADAM optimizer, one-cycle learning rate schedule, cross entropy loss, and augmentations derived for CIFAR-10 by AutoAugment \citep{autoaugment}, with batch size of 128. The hyperparameters for the optimizer and scheduler were determined through a random search of the hyper parameter space on the CIFAR-100 dataset \revision{for a fixed width factor $w=2$ only}.
We used the same set of hyperparameters from this search for both the CIFAR-10 and CIFAR-100 experiments.
The hyperparameter search was tuned against a partition of 10\% of the CIFAR-100 training samples. Our hyperparameter search follows a similar approach to the one used for our MLP and CNN models on MNIST (\aref{a:mnist}), but due to compute constraints with our larger models here we only trained 100 seeds each iteration, and only re-examined the top 10 seeds in the final round.

For PReLU, we did not perform a new hyperparameter search and simply re-used the same hyperparameters as discovered in the search with ReLU.

\begin{figure}[htb]
    \centering
    \begin{subfigure}[b]{0.49\linewidth} 
    \centering
        \includegraphics[scale=0.4]{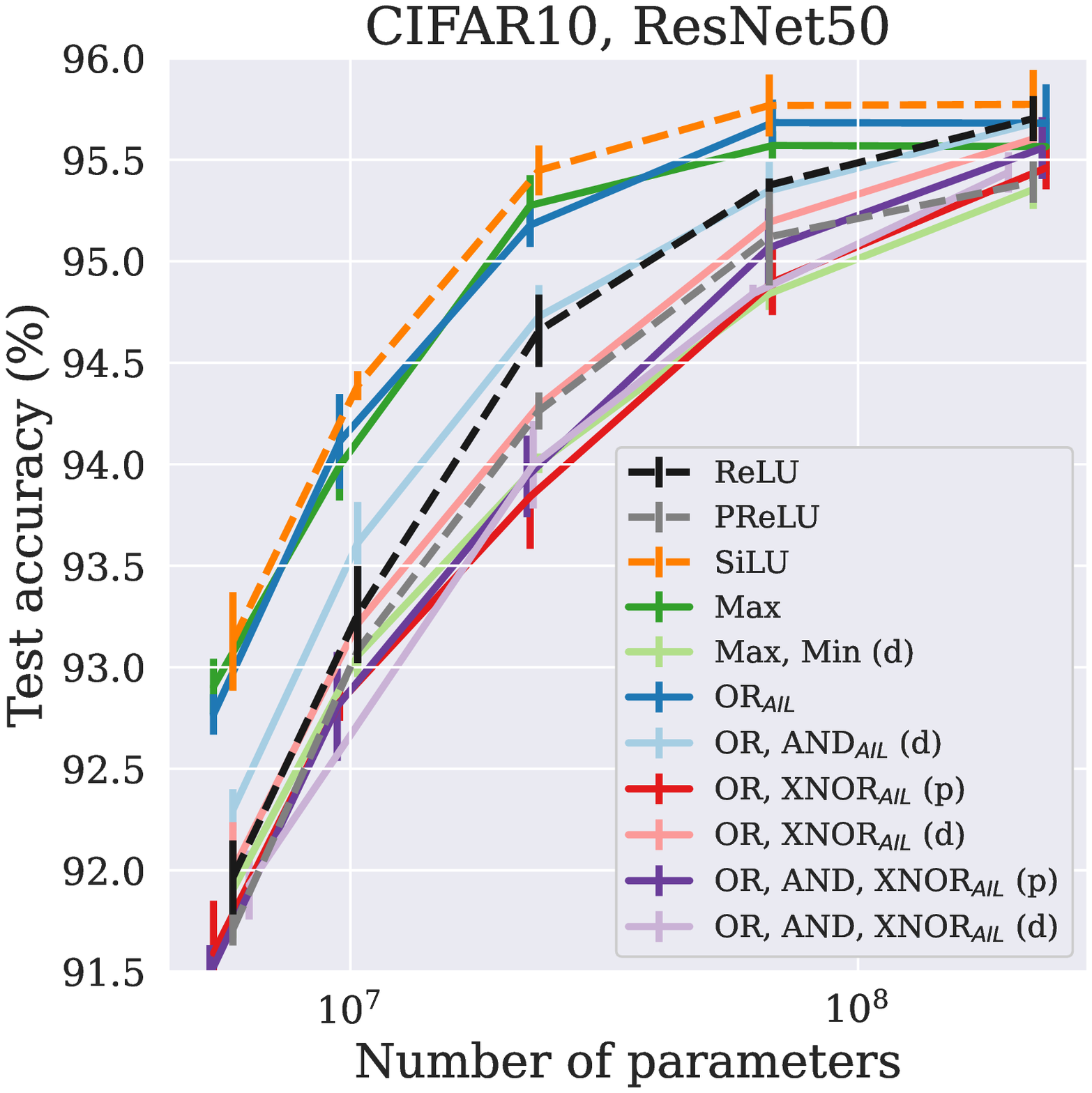} 
    \end{subfigure}%
    \hfill{}
    \begin{subfigure}[b]{0.49\linewidth}
    \centering
        \includegraphics[scale=0.4]{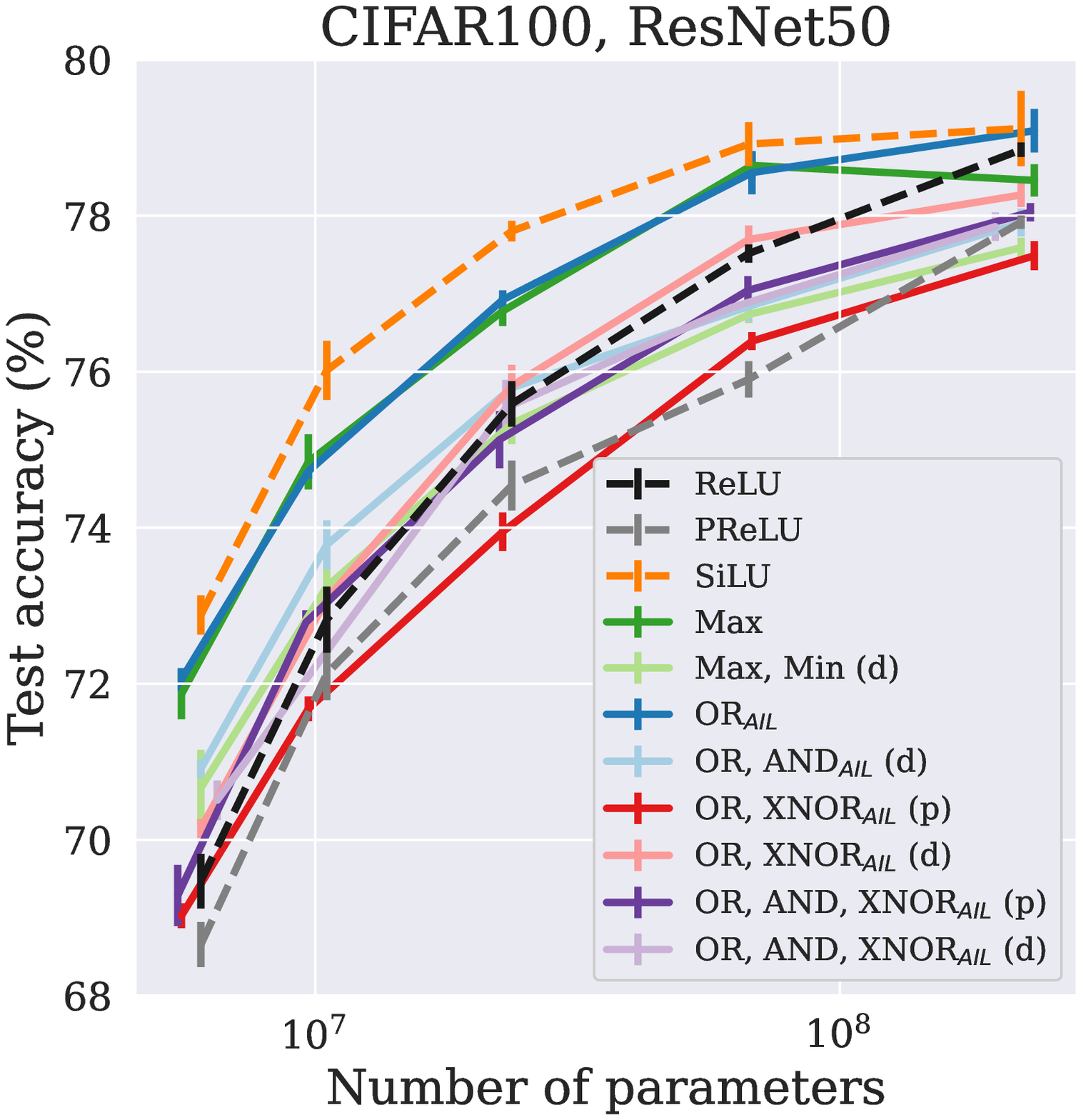}
    \end{subfigure}%
\caption{
ResNet50 on CIFAR-10/100, varying the activation function used through the network.
The width was varied to explore a range of network sizes (see text).
Trained for 100~ep. with ADAM, using hyperparams as determined by random search on CIFAR-100 with width factor $w=2$.
Mean (bars: std dev) of $n\!=\!4$ weight inits.
}
\label{fig:cifar-extra}
\end{figure}

Our abridged results are plotted in \autoref{fig:cifar}, plotted in full with results for more activation functions shown in \autoref{fig:cifar-extra}.
Performance measurements from five slices of total parameter count are tabulated in \autoref{tab:resnet50-cifar10} and \autoref{tab:resnet50-cifar100}.

\begin{table}[tbhp]
\small
  \centering
  \caption{%
ResNet50 on CIFAR-10, by number of parameters.
Mean (std. error) of $n\!=\!4$ random initializations.
Bold: best.
Underlined: top two.
Italic: no sig. diff. from best (two-sided Student's $t$-test, $p\!>\!0.05$).
Background: linear color scale from second-lowest (white) to best (black) with a given number of parameters.
}
\label{tab:resnet50-cifar10}
\def\ccAmin{91.48749709129333}
\def\ccAmax{93.12749803066254}
\def\ccA[#1]#2{\heatmapcell{#1}{#2}{\ccAmin}{\ccAmax}}
\def\ccBmin{92.80749708414078}
\def\ccBmax{94.38749701247559}
\def\ccB[#1]#2{\heatmapcell{#1}{#2}{\ccBmin}{\ccBmax}}
\def\ccCmin{93.83999705314636}
\def\ccCmax{95.44749856573867}
\def\ccC[#1]#2{\heatmapcell{#1}{#2}{\ccCmin}{\ccCmax}}
\def\ccDmin{94.83999758958817}
\def\ccDmax{95.76749801635742}
\def\ccD[#1]#2{\heatmapcell{#1}{#2}{\ccDmin}{\ccDmax}}
\def\ccEmin{95.35333116849264}
\def\ccEmax{95.77249735593796}
\def\ccE[#1]#2{\heatmapcell{#1}{#2}{\ccEmin}{\ccEmax}}
\centerline{
\begin{tabular}{lrrrrr}
\toprule
 & \multicolumn{5}{c}{Test Accuracy (\%) by \textnumero{} Params.} \\
\cmidrule(l){2-6}
Activation function                                     & \multicolumn{1}{c}{$\sim\! 6\,\text{M}$} & \multicolumn{1}{c}{$\sim\! 10\,\text{M}$} & \multicolumn{1}{c}{$\sim\! 23\,\text{M}$} & \multicolumn{1}{c}{$\sim\! 67\,\text{M}$} & \multicolumn{1}{c}{$\sim\! 225\,\text{M}$} \\
\toprule
$\opn{ReLU}$                                             & \ccA[91.9650]{$91.96\wpm{ 0.09}$} & \ccB[93.2600]{$93.26\wpm{ 0.12}$} & \ccC[94.6575]{$94.66\wpm{ 0.09}$} & \ccD[95.3750]{$95.37\wpm{ 0.02}$} & \ccE[95.7025]{$\mbs{\mbns{95.70}}\wpm{ 0.06}$} \\
PReLU                                                    & \ccA[91.7100]{$91.71\wpm{ 0.04}$} & \ccB[93.0825]{$93.08\wpm{ 0.03}$} & \ccC[94.2625]{$94.26\wpm{ 0.05}$} & \ccD[95.1200]{$95.12\wpm{ 0.12}$} & \ccE[95.3900]{$95.39\wpm{ 0.05}$} \\
\midrule
SiLU                                                     & \ccA[93.1275]{$\mbf{\mbns{93.13}}\wpm{ 0.12}$} & \ccB[94.3875]{$\mbf{\mbns{94.39}}\wpm{ 0.04}$} & \ccC[95.4475]{$\mbf{\mbns{95.45}}\wpm{ 0.06}$} & \ccD[95.7675]{$\mbf{\mbns{95.77}}\wpm{ 0.08}$} & \ccE[95.7725]{$\mbf{\mbns{95.77}}\wpm{ 0.09}$} \\
\midrule
$\opn{Max}$                                              & \ccA[92.9050]{$\mbs{\mbns{92.90}}\wpm{ 0.07}$} & \ccB[93.9925]{$93.99\wpm{ 0.09}$} & \ccC[95.2750]{$\mbs{\mbns{95.27}}\wpm{ 0.07}$} & \ccD[95.5700]{$\mbns{95.57}\wpm{ 0.04}$} & \ccE[95.5650]{$\mbns{95.56}\wpm{ 0.07}$} \\
$\opn{Max},\opn{Min}$ (d)                                & \ccA[91.9050]{$91.90\wpm{ 0.12}$} & \ccB[93.0580]{$93.06\wpm{ 0.05}$} & \ccC[94.0050]{$94.00\wpm{ 0.02}$} & \ccD[94.8400]{$94.84\wpm{ 0.04}$} & \ccE[95.3533]{$95.35\wpm{ 0.06}$} \\
\midrule
$\opn{XNOR_{AIL}}$                                       & \ccA[88.9700]{$88.97\wpm{ 0.11}$} & \ccB[88.9325]{$88.93\wpm{ 0.20}$} & \ccC[90.0175]{$90.02\wpm{ 0.29}$} & \ccD[90.7725]{$90.77\wpm{ 0.53}$} & \ccE[89.9650]{$89.96\wpm{ 0.34}$} \\
$\opn{OR_{AIL}}$                                         & \ccA[92.7675]{$\mbns{92.77}\wpm{ 0.05}$} & \ccB[94.1125]{$\mbs{\mbns{94.11}}\wpm{ 0.12}$} & \ccC[95.1775]{$95.18\wpm{ 0.05}$} & \ccD[95.6825]{$\mbs{\mbns{95.68}}\wpm{ 0.06}$} & \ccE[95.6800]{$\mbns{95.68}\wpm{ 0.10}$} \\
$\opn{OR}, \opn{AND_{AIL}}$ (d)                          & \ccA[92.2975]{$92.30\wpm{ 0.05}$} & \ccB[93.6125]{$93.61\wpm{ 0.10}$} & \ccC[94.7275]{$94.73\wpm{ 0.08}$} & \ccD[95.3475]{$95.35\wpm{ 0.07}$} & \ccE[95.6800]{$\mbns{95.68}\wpm{ 0.05}$} \\
$\opn{OR}, \opn{XNOR_{AIL}}$ (p)                         & \ccA[91.5925]{$91.59\wpm{ 0.13}$} & \ccB[92.8275]{$92.83\wpm{ 0.05}$} & \ccC[93.8400]{$93.84\wpm{ 0.13}$} & \ccD[94.8975]{$94.90\wpm{ 0.08}$} & \ccE[95.4600]{$95.46\wpm{ 0.05}$} \\
$\opn{OR}, \opn{XNOR_{AIL}}$ (d)                         & \ccA[92.0000]{$92.00\wpm{ 0.12}$} & \ccB[93.2150]{$93.21\wpm{ 0.04}$} & \ccC[94.2900]{$94.29\wpm{ 0.02}$} & \ccD[95.1925]{$95.19\wpm{ 0.04}$} & \ccE[95.6025]{$\mbns{95.60}\wpm{ 0.03}$} \\
$\opn{OR}, \opn{AND}, \opn{XNOR_{AIL}}$ (p)              & \ccA[91.4875]{$91.49\wpm{ 0.07}$} & \ccB[92.8075]{$92.81\wpm{ 0.13}$} & \ccC[93.9400]{$93.94\wpm{ 0.10}$} & \ccD[95.0650]{$95.06\wpm{ 0.10}$} & \ccE[95.5575]{$\mbns{95.56}\wpm{ 0.08}$} \\
$\opn{OR}, \opn{AND}, \opn{XNOR_{AIL}}$ (d)              & \ccA[91.9275]{$91.93\wpm{ 0.08}$} & \ccB[92.9100]{$92.91\wpm{ 0.07}$} & \ccC[93.9975]{$94.00\wpm{ 0.11}$} & \ccD[94.8400]{$94.84\wpm{ 0.02}$} & \ccE[95.4375]{$95.44\wpm{ 0.05}$} \\
\bottomrule
\end{tabular}
}
\end{table}

\begin{table}[tbhp]
\small
  \centering
  \caption{%
ResNet50 on CIFAR-100, by number of parameters.
Mean (std. error) of $n\!=\!4$ random initializations.
Bold: best.
Underlined: top two.
Italic: no sig. diff. from best (two-sided Student's $t$-test, $p\!>\!0.05$).
Background: linear color scale from second-lowest (white) to best (black) with a given number of parameters.
}
\label{tab:resnet50-cifar100}
\def\ccAmin{68.65749955177307}
\def\ccAmax{72.88249731063843}
\def\ccA[#1]#2{\heatmapcell{#1}{#2}{\ccAmin}{\ccAmax}}
\def\ccBmin{71.67999893426895}
\def\ccBmax{76.01749747991562}
\def\ccB[#1]#2{\heatmapcell{#1}{#2}{\ccBmin}{\ccBmax}}
\def\ccCmin{73.94999712705612}
\def\ccCmax{77.80249863862991}
\def\ccC[#1]#2{\heatmapcell{#1}{#2}{\ccCmin}{\ccCmax}}
\def\ccDmin{75.9024977684021}
\def\ccDmax{78.91749888658524}
\def\ccD[#1]#2{\heatmapcell{#1}{#2}{\ccDmin}{\ccDmax}}
\def\ccEmin{77.48749703168869}
\def\ccEmax{79.11749631166458}
\def\ccE[#1]#2{\heatmapcell{#1}{#2}{\ccEmin}{\ccEmax}}
\centerline{
\begin{tabular}{lrrrrr}
\toprule
 & \multicolumn{5}{c}{Test Accuracy (\%) by \textnumero{} Params.} \\
\cmidrule(l){2-6}
Activation function                                     & \multicolumn{1}{c}{$\sim\! 6\,\text{M}$} & \multicolumn{1}{c}{$\sim\! 10\,\text{M}$} & \multicolumn{1}{c}{$\sim\! 23\,\text{M}$} & \multicolumn{1}{c}{$\sim\! 67\,\text{M}$} & \multicolumn{1}{c}{$\sim\! 225\,\text{M}$} \\
\toprule
$\opn{ReLU}$                                             & \ccA[69.4700]{$69.47\wpm{ 0.17}$} & \ccB[72.8225]{$72.82\wpm{ 0.21}$} & \ccC[75.5875]{$75.59\wpm{ 0.15}$} & \ccD[77.5150]{$77.51\wpm{ 0.06}$} & \ccE[78.8450]{$\mbns{78.84}\wpm{ 0.05}$} \\
PReLU                                                    & \ccA[68.6575]{$68.66\wpm{ 0.14}$} & \ccB[72.1300]{$72.13\wpm{ 0.17}$} & \ccC[74.5420]{$74.54\wpm{ 0.14}$} & \ccD[75.9025]{$75.90\wpm{ 0.12}$} & \ccE[77.9175]{$77.92\wpm{ 0.04}$} \\
\midrule
SiLU                                                     & \ccA[72.8825]{$\mbf{\mbns{72.88}}\wpm{ 0.13}$} & \ccB[76.0175]{$\mbf{\mbns{76.02}}\wpm{ 0.19}$} & \ccC[77.8025]{$\mbf{\mbns{77.80}}\wpm{ 0.07}$} & \ccD[78.9175]{$\mbf{\mbns{78.92}}\wpm{ 0.14}$} & \ccE[79.1175]{$\mbf{\mbns{79.12}}\wpm{ 0.24}$} \\
\midrule
$\opn{Max}$                                              & \ccA[71.8675]{$71.87\wpm{ 0.16}$} & \ccB[74.8450]{$\mbs{74.84}\wpm{ 0.18}$} & \ccC[76.7775]{$76.78\wpm{ 0.09}$} & \ccD[78.6450]{$\mbs{\mbns{78.64}}\wpm{ 0.09}$} & \ccE[78.4525]{$\mbns{78.45}\wpm{ 0.10}$} \\
$\opn{Max},\opn{Min}$ (d)                                & \ccA[70.6650]{$70.66\wpm{ 0.24}$} & \ccB[73.2450]{$73.24\wpm{ 0.12}$} & \ccC[75.3340]{$75.33\wpm{ 0.12}$} & \ccD[76.7340]{$76.73\wpm{ 0.05}$} & \ccE[77.5900]{$77.59\wpm{ 0.06}$} \\
\midrule
$\opn{XNOR_{AIL}}$                                       & \ccA[62.0825]{$62.08\wpm{ 0.35}$} & \ccB[63.7750]{$63.77\wpm{ 0.35}$} & \ccC[64.3600]{$64.36\wpm{ 0.31}$} & \ccD[66.3000]{$66.30\wpm{ 0.24}$} & \ccE[68.6050]{$68.60\wpm{ 0.31}$} \\
$\opn{OR_{AIL}}$                                         & \ccA[72.0575]{$\mbs{72.06}\wpm{ 0.07}$} & \ccB[74.7250]{$74.72\wpm{ 0.05}$} & \ccC[76.9225]{$\mbs{76.92}\wpm{ 0.06}$} & \ccD[78.5525]{$\mbns{78.55}\wpm{ 0.12}$} & \ccE[79.0900]{$\mbs{\mbns{79.09}}\wpm{ 0.12}$} \\
$\opn{OR}, \opn{AND_{AIL}}$ (d)                          & \ccA[70.9125]{$70.91\wpm{ 0.06}$} & \ccB[73.7820]{$73.78\wpm{ 0.14}$} & \ccC[75.7940]{$75.79\wpm{ 0.10}$} & \ccD[76.8380]{$76.84\wpm{ 0.09}$} & \ccE[77.9120]{$77.91\wpm{ 0.08}$} \\
$\opn{OR}, \opn{XNOR_{AIL}}$ (p)                         & \ccA[69.0275]{$69.03\wpm{ 0.08}$} & \ccB[71.6800]{$71.68\wpm{ 0.08}$} & \ccC[73.9500]{$73.95\wpm{ 0.12}$} & \ccD[76.3925]{$76.39\wpm{ 0.06}$} & \ccE[77.4875]{$77.49\wpm{ 0.09}$} \\
$\opn{OR}, \opn{XNOR_{AIL}}$ (d)                         & \ccA[70.1450]{$70.14\wpm{ 0.06}$} & \ccB[73.0860]{$73.09\wpm{ 0.06}$} & \ccC[75.8180]{$75.82\wpm{ 0.12}$} & \ccD[77.6950]{$77.69\wpm{ 0.09}$} & \ccE[78.2660]{$78.27\wpm{ 0.07}$} \\
$\opn{OR}, \opn{AND}, \opn{XNOR_{AIL}}$ (p)              & \ccA[69.2875]{$69.29\wpm{ 0.20}$} & \ccB[72.7875]{$72.79\wpm{ 0.08}$} & \ccC[75.1300]{$75.13\wpm{ 0.18}$} & \ccD[77.0425]{$77.04\wpm{ 0.09}$} & \ccE[78.0450]{$78.04\wpm{ 0.06}$} \\
$\opn{OR}, \opn{AND}, \opn{XNOR_{AIL}}$ (d)              & \ccA[70.5075]{$70.51\wpm{ 0.13}$} & \ccB[73.1720]{$73.17\wpm{ 0.06}$} & \ccC[75.5378]{$75.54\wpm{ 0.15}$} & \ccD[76.8233]{$76.82\wpm{ 0.05}$} & \ccE[77.8600]{$77.86\wpm{ 0.11}$} \\
\bottomrule
\end{tabular}
}
\end{table}

\FloatBarrier

\subsection{Transfer learning}
\label{a:transfer}

We used a pretrained ResNet50 model taken from the PyTorch hub.
The 2-layer MLP head was trained for 25 epochs on each dataset.
Since we are interested in transfer learning in a data-limited regime for these experiments, we used only a small amount of data augmentation.
We applied horizontal flip ($p\!=\!0.5$), scaling (0.7 to 1), aspect ratio stretching ($\nicefrac{3}{4}$ to $\nicefrac{4}{3}$), and colour jitter (intensity 0.4) only.
For the SVHN dataset, horizontal flip was not performed.
Pixel intensity normalization was done against the ImageNet mean and standard deviation.

The MLP head was optimized using SGD, momentum 0.9.
We used a batch size of 128, maximum learning rate 0.01, and weight decay \num{1e-4}.
Before training began, we passed one epoch worth of inputs through the network without updating the weights in order to refresh the batch normalization statistics to the new dataset.
We also performed one epoch of training with a warmup learning rate of \num{1e-5} before commencing training at the maximum learning rate of 0.01.
The learning rate was decayed with a cosine annealing schedule over 24 epochs.
We report the performance of the final model at the end of the 25 epochs.

We used a pre-activation width of 512 neurons for ReLU and other and activation functions which map $1\!\to \!1$.
To approximately match up the number of parameters, we used a width of 650 for activation functions which map $2\!\to \!1$, and 438 for \{$\opn{OR}, \opn{AND}, \opn{XNOR_{AIL}}$ (d)\} which maps features $3\!\to \!2$.
These values control the total number of parameters in the head to be the same for datasets with 100 classes (the median number of classes in the datasets we considered).
Our results with widths 438/512/650 and $\sim$600k parameters are shown in the main results, \autoref{tab:transfer-head-big}.
To investigate a comprehensive set of baselines, we compared against every activation function implemented in PyTorch~v1.10.
This was too many comparisons to include in the main results, so for brevity we narrowed the activation functions down by omitting some activation functions where there were several similar functions and kept the better performing one (SiLU vs Mish, LeakyReLU vs PReLU, etc).
The full set of results is shown in \autoref{tab:transfer-extra}.
Additionally, the precise widths and number of trainable parameters used for these experiments are shown \autoref{tab:transfer-nparam-big}.

\begin{table}[htb]
\small
  \centering
  \caption{%
Transfer learning from a frozen ResNet-18 architecture pretrained on ImageNet-1k to other computer vision datasets.
Mean (std. error) of $n\!=\!5$ random initializations of the MLP (same pretrained encoder).
Bold: best.
Underlined: top two.
Italic: no sig. diff. from best (two-sided Student's $t$-test, $p\!>\!0.05$).
Background: linear color scale from ReLU baseline (white) to best (black).
}
\label{tab:transfer-extra}
\def\ccAmin{86.5801886792453}
\def\ccAmax{88.34905660377358}
\def\ccA[#1]#2{\heatmapcell{#1}{#2}{\ccAmin}{\ccAmax}}
\def\ccBmin{81.63399999999999}
\def\ccBmax{82.88000000000001}
\def\ccB[#1]#2{\heatmapcell{#1}{#2}{\ccBmin}{\ccBmax}}
\def\ccCmin{58.044000000000004}
\def\ccCmax{60.778}
\def\ccC[#1]#2{\heatmapcell{#1}{#2}{\ccCmin}{\ccCmax}}
\def\ccDmin{90.70731696896436}
\def\ccDmax{93.25785683392981}
\def\ccD[#1]#2{\heatmapcell{#1}{#2}{\ccDmin}{\ccDmax}}
\def\ccEmin{30.969666852908368}
\def\ccEmax{39.835902544398145}
\def\ccE[#1]#2{\heatmapcell{#1}{#2}{\ccEmin}{\ccEmax}}
\def\ccFmin{94.62249999999999}
\def\ccFmax{94.94500000000001}
\def\ccF[#1]#2{\heatmapcell{#1}{#2}{\ccFmin}{\ccFmax}}
\def\ccGmin{53.25829748424489}
\def\ccGmax{55.342655196890334}
\def\ccG[#1]#2{\heatmapcell{#1}{#2}{\ccGmin}{\ccGmax}}
\centerline{
  \scalebox{0.835}{
\begin{tabular}{lrrrrrrr}
\toprule
 & \multicolumn{7}{c}{Test Accuracy (\%)} \\
\cmidrule(l){2-8}
Activation function                                     & \multicolumn{1}{c}{Cal101} & \multicolumn{1}{c}{CIFAR10} & \multicolumn{1}{c}{CIFAR100} & \multicolumn{1}{c}{Flowers} & \multicolumn{1}{c}{StfCars} & \multicolumn{1}{c}{STL-10} & \multicolumn{1}{c}{SVHN} \\
\toprule
Linear layer only                                        & \ccA[88.3491]{$\mbf{\mbns{88.35}}\wpm{ 0.15}$} & \ccB[78.5640]{$78.56\wpm{ 0.09}$} & \ccC[57.3920]{$57.39\wpm{ 0.09}$} & \ccD[92.3171]{$92.32\wpm{ 0.20}$} & \ccE[33.5132]{$33.51\wpm{ 0.06}$} & \ccF[94.6825]{$94.68\wpm{ 0.02}$} & \ccG[46.6011]{$46.60\wpm{ 0.14}$} \\
\midrule
$\opn{ReLU}$                                             & \ccA[86.5802]{$86.58\wpm{ 0.17}$} & \ccB[81.6340]{$81.63\wpm{ 0.05}$} & \ccC[58.0440]{$58.04\wpm{ 0.11}$} & \ccD[90.7073]{$90.71\wpm{ 0.26}$} & \ccE[30.9697]{$30.97\wpm{ 0.26}$} & \ccF[94.6225]{$94.62\wpm{ 0.06}$} & \ccG[53.2583]{$53.26\wpm{ 0.08}$} \\
LeakyReLU                                                & \ccA[86.6038]{$86.60\wpm{ 0.13}$} & \ccB[81.6660]{$81.67\wpm{ 0.11}$} & \ccC[58.0080]{$58.01\wpm{ 0.09}$} & \ccD[90.7317]{$90.73\wpm{ 0.32}$} & \ccE[31.0890]{$31.09\wpm{ 0.24}$} & \ccF[94.6150]{$94.61\wpm{ 0.05}$} & \ccG[53.2383]{$53.24\wpm{ 0.10}$} \\
PReLU                                                    & \ccA[87.8302]{$\mbns{87.83}\wpm{ 0.21}$} & \ccB[81.0300]{$81.03\wpm{ 0.13}$} & \ccC[58.8980]{$58.90\wpm{ 0.18}$} & \ccD[93.1707]{$\mbs{\mbns{93.17}}\wpm{ 0.19}$} & \ccE[39.8359]{$\mbf{\mbns{39.84}}\wpm{ 0.18}$} & \ccF[94.5375]{$94.54\wpm{ 0.05}$} & \ccG[53.4750]{$53.47\wpm{ 0.08}$} \\
Softplus                                                 & \ccA[86.1557]{$86.16\wpm{ 0.18}$} & \ccB[79.1300]{$79.13\wpm{ 0.08}$} & \ccC[56.5840]{$56.58\wpm{ 0.07}$} & \ccD[89.3902]{$89.39\wpm{ 0.29}$} & \ccE[21.2332]{$21.23\wpm{ 0.13}$} & \ccF[94.6300]{$94.63\wpm{ 0.03}$} & \ccG[48.7892]{$48.79\wpm{ 0.08}$} \\
\midrule
ELU                                                      & \ccA[87.1816]{$87.18\wpm{ 0.09}$} & \ccB[80.4400]{$80.44\wpm{ 0.08}$} & \ccC[58.0840]{$58.08\wpm{ 0.10}$} & \ccD[91.7073]{$91.71\wpm{ 0.14}$} & \ccE[34.7041]{$34.70\wpm{ 0.06}$} & \ccF[94.5500]{$94.55\wpm{ 0.05}$} & \ccG[51.8869]{$51.89\wpm{ 0.09}$} \\
CELU                                                     & \ccA[87.1816]{$87.18\wpm{ 0.09}$} & \ccB[80.4400]{$80.44\wpm{ 0.08}$} & \ccC[58.0840]{$58.08\wpm{ 0.10}$} & \ccD[91.7073]{$91.71\wpm{ 0.14}$} & \ccE[34.7041]{$34.70\wpm{ 0.06}$} & \ccF[94.5500]{$94.55\wpm{ 0.05}$} & \ccG[51.8869]{$51.89\wpm{ 0.09}$} \\
SELU                                                     & \ccA[87.7358]{$87.74\wpm{ 0.09}$} & \ccB[79.9340]{$79.93\wpm{ 0.13}$} & \ccC[58.2380]{$58.24\wpm{ 0.06}$} & \ccD[92.2683]{$92.27\wpm{ 0.13}$} & \ccE[37.5112]{$37.51\wpm{ 0.17}$} & \ccF[94.5300]{$94.53\wpm{ 0.07}$} & \ccG[50.9365]{$50.94\wpm{ 0.12}$} \\
GELU                                                     & \ccA[87.0991]{$87.10\wpm{ 0.15}$} & \ccB[81.3860]{$81.39\wpm{ 0.09}$} & \ccC[58.5060]{$58.51\wpm{ 0.13}$} & \ccD[91.5122]{$91.51\wpm{ 0.15}$} & \ccE[33.4286]{$33.43\wpm{ 0.15}$} & \ccF[94.6250]{$94.62\wpm{ 0.06}$} & \ccG[53.4250]{$53.43\wpm{ 0.23}$} \\
SiLU                                                     & \ccA[86.9104]{$86.91\wpm{ 0.11}$} & \ccB[80.5280]{$80.53\wpm{ 0.11}$} & \ccC[58.1440]{$58.14\wpm{ 0.12}$} & \ccD[91.3659]{$91.37\wpm{ 0.18}$} & \ccE[32.1482]{$32.15\wpm{ 0.17}$} & \ccF[94.5875]{$94.59\wpm{ 0.05}$} & \ccG[52.3548]{$52.35\wpm{ 0.16}$} \\
Hardswish                                                & \ccA[87.1226]{$87.12\wpm{ 0.12}$} & \ccB[80.1040]{$80.10\wpm{ 0.10}$} & \ccC[58.2540]{$58.25\wpm{ 0.10}$} & \ccD[91.5610]{$91.56\wpm{ 0.25}$} & \ccE[33.1726]{$33.17\wpm{ 0.23}$} & \ccF[94.6225]{$94.62\wpm{ 0.05}$} & \ccG[51.8231]{$51.82\wpm{ 0.13}$} \\
Mish                                                     & \ccA[87.1108]{$87.11\wpm{ 0.12}$} & \ccB[81.0880]{$81.09\wpm{ 0.11}$} & \ccC[58.3720]{$58.37\wpm{ 0.10}$} & \ccD[91.6098]{$91.61\wpm{ 0.15}$} & \ccE[33.7469]{$33.75\wpm{ 0.14}$} & \ccF[94.6075]{$94.61\wpm{ 0.05}$} & \ccG[53.0539]{$53.05\wpm{ 0.12}$} \\
\midrule
Softsign                                                 & \ccA[81.4741]{$81.47\wpm{ 0.18}$} & \ccB[80.0340]{$80.03\wpm{ 0.09}$} & \ccC[54.8420]{$54.84\wpm{ 0.09}$} & \ccD[82.3415]{$82.34\wpm{ 0.22}$} & \ccE[17.3322]{$17.33\wpm{ 0.10}$} & \ccF[94.7050]{$94.70\wpm{ 0.03}$} & \ccG[51.2500]{$51.25\wpm{ 0.08}$} \\
Tanh                                                     & \ccA[87.4764]{$87.48\wpm{ 0.06}$} & \ccB[80.5560]{$80.56\wpm{ 0.07}$} & \ccC[57.3480]{$57.35\wpm{ 0.08}$} & \ccD[90.3171]{$90.32\wpm{ 0.20}$} & \ccE[29.5077]{$29.51\wpm{ 0.12}$} & \ccF[94.6275]{$94.63\wpm{ 0.07}$} & \ccG[51.8600]{$51.86\wpm{ 0.05}$} \\
\midrule
GLU                                                      & \ccA[86.7099]{$86.71\wpm{ 0.31}$} & \ccB[79.1920]{$79.19\wpm{ 0.07}$} & \ccC[57.6380]{$57.64\wpm{ 0.10}$} & \ccD[90.3415]{$90.34\wpm{ 0.19}$} & \ccE[27.0363]{$27.04\wpm{ 0.12}$} & \ccF[94.5700]{$94.57\wpm{ 0.03}$} & \ccG[50.1160]{$50.12\wpm{ 0.19}$} \\
\midrule
$\opn{Max}$                                              & \ccA[86.9575]{$86.96\wpm{ 0.20}$} & \ccB[81.7560]{$81.76\wpm{ 0.14}$} & \ccC[58.5980]{$58.60\wpm{ 0.12}$} & \ccD[90.9756]{$90.98\wpm{ 0.18}$} & \ccE[33.3690]{$33.37\wpm{ 0.15}$} & \ccF[94.7025]{$94.70\wpm{ 0.06}$} & \ccG[53.5257]{$53.53\wpm{ 0.16}$} \\
$\opn{Max},\opn{Min}$ (d)                                & \ccA[87.2288]{$87.23\wpm{ 0.13}$} & \ccB[82.3060]{$82.31\wpm{ 0.10}$} & \ccC[59.0520]{$59.05\wpm{ 0.10}$} & \ccD[91.6829]{$91.68\wpm{ 0.18}$} & \ccE[34.9105]{$34.91\wpm{ 0.12}$} & \ccF[94.6400]{$94.64\wpm{ 0.04}$} & \ccG[53.9136]{$53.91\wpm{ 0.13}$} \\
\midrule
SignedGeomean                                            & \ccA[87.0324]{$87.03\wpm{ 0.23}$} & \ccB[51.4520]{$\mbns{51.45}\wpm{16.92}$} & \ccC[11.7980]{$11.80\wpm{10.80}$} & \ccD[91.3447]{$91.34\wpm{ 0.34}$} & \ccE[26.3748]{$\mbns{26.37}\wpm{ 6.46}$} & \ccF[94.6825]{$94.68\wpm{ 0.06}$} & \ccG[37.1635]{$37.16\wpm{ 7.18}$} \\
\midrule
$\opn{XNOR_{IL}}$                                        & \ccA[85.0118]{$85.01\wpm{ 0.17}$} & \ccB[79.6180]{$79.62\wpm{ 0.09}$} & \ccC[57.1360]{$57.14\wpm{ 0.07}$} & \ccD[84.7561]{$84.76\wpm{ 0.43}$} & \ccE[ 1.3401]{$ 1.34\wpm{ 0.11}$} & \ccF[94.5125]{$94.51\wpm{ 0.03}$} & \ccG[51.9914]{$51.99\wpm{ 0.16}$} \\
$\opn{OR_{IL}}$                                          & \ccA[87.1108]{$87.11\wpm{ 0.08}$} & \ccB[79.7480]{$79.75\wpm{ 0.05}$} & \ccC[58.0700]{$58.07\wpm{ 0.11}$} & \ccD[91.1220]{$91.12\wpm{ 0.36}$} & \ccE[33.1179]{$33.12\wpm{ 0.12}$} & \ccF[94.6000]{$94.60\wpm{ 0.03}$} & \ccG[51.2077]{$51.21\wpm{ 0.17}$} \\
$\opn{OR}, \opn{AND_{IL}}$ (d)                           & \ccA[87.2170]{$87.22\wpm{ 0.14}$} & \ccB[79.5620]{$79.56\wpm{ 0.06}$} & \ccC[57.9100]{$57.91\wpm{ 0.12}$} & \ccD[91.4390]{$91.44\wpm{ 0.18}$} & \ccE[36.0443]{$36.04\wpm{ 0.07}$} & \ccF[94.4400]{$94.44\wpm{ 0.05}$} & \ccG[50.4172]{$50.42\wpm{ 0.25}$} \\
$\opn{OR}, \opn{XNOR_{IL}}$ (p)                          & \ccA[86.3127]{$86.31\wpm{ 0.24}$} & \ccB[80.2860]{$80.29\wpm{ 0.10}$} & \ccC[58.1340]{$58.13\wpm{ 0.08}$} & \ccD[90.5134]{$90.51\wpm{ 0.20}$} & \ccE[31.4140]{$31.41\wpm{ 0.11}$} & \ccF[94.7625]{$94.76\wpm{ 0.02}$} & \ccG[52.7774]{$52.78\wpm{ 0.13}$} \\
$\opn{OR}, \opn{XNOR_{IL}}$ (d)                          & \ccA[86.7571]{$86.76\wpm{ 0.12}$} & \ccB[80.8260]{$80.83\wpm{ 0.08}$} & \ccC[58.5480]{$58.55\wpm{ 0.10}$} & \ccD[90.7561]{$90.76\wpm{ 0.07}$} & \ccE[33.5579]{$33.56\wpm{ 0.14}$} & \ccF[94.4700]{$94.47\wpm{ 0.07}$} & \ccG[52.5415]{$52.54\wpm{ 0.09}$} \\
$\opn{OR}, \opn{AND}, \opn{XNOR_{IL}}$ (p)               & \ccA[86.6077]{$86.61\wpm{ 0.10}$} & \ccB[80.3260]{$80.33\wpm{ 0.06}$} & \ccC[58.1240]{$58.12\wpm{ 0.04}$} & \ccD[90.8802]{$90.88\wpm{ 0.31}$} & \ccE[32.4039]{$32.40\wpm{ 0.12}$} & \ccF[94.7375]{$94.74\wpm{ 0.04}$} & \ccG[52.6905]{$52.69\wpm{ 0.10}$} \\
$\opn{OR}, \opn{AND}, \opn{XNOR_{IL}}$ (d)               & \ccA[86.7807]{$86.78\wpm{ 0.10}$} & \ccB[80.1760]{$80.18\wpm{ 0.02}$} & \ccC[58.7020]{$58.70\wpm{ 0.17}$} & \ccD[91.6829]{$91.68\wpm{ 0.07}$} & \ccE[34.8160]{$34.82\wpm{ 0.14}$} & \ccF[94.4875]{$94.49\wpm{ 0.04}$} & \ccG[52.0874]{$52.09\wpm{ 0.18}$} \\
\midrule
$\opn{XNOR_{NIL}}$                                       & \ccA[87.2477]{$87.25\wpm{ 0.22}$} & \ccB[82.8800]{$\mbf{\mbns{82.88}}\wpm{ 0.08}$} & \ccC[60.7780]{$\mbf{\mbns{60.78}}\wpm{ 0.08}$} & \ccD[93.2579]{$\mbf{\mbns{93.26}}\wpm{ 0.26}$} & \ccE[39.4747]{$\mbns{39.47}\wpm{ 0.20}$} & \ccF[94.8300]{$\mbns{94.83}\wpm{ 0.06}$} & \ccG[55.3427]{$\mbf{\mbns{55.34}}\wpm{ 0.19}$} \\
$\opn{OR_{NIL}}$                                         & \ccA[87.1873]{$87.19\wpm{ 0.16}$} & \ccB[79.6100]{$79.61\wpm{ 0.05}$} & \ccC[58.4380]{$58.44\wpm{ 0.10}$} & \ccD[91.6458]{$91.65\wpm{ 0.29}$} & \ccE[35.8205]{$35.82\wpm{ 0.04}$} & \ccF[94.5775]{$94.58\wpm{ 0.03}$} & \ccG[50.9488]{$50.95\wpm{ 0.15}$} \\
$\opn{OR}, \opn{AND_{NIL}}$ (d)                          & \ccA[86.8201]{$86.82\wpm{ 0.18}$} & \ccB[80.0900]{$80.09\wpm{ 0.09}$} & \ccC[58.5980]{$58.60\wpm{ 0.07}$} & \ccD[91.4425]{$91.44\wpm{ 0.20}$} & \ccE[37.0252]{$37.03\wpm{ 0.11}$} & \ccF[94.6500]{$94.65\wpm{ 0.05}$} & \ccG[52.4915]{$52.49\wpm{ 0.09}$} \\
$\opn{OR}, \opn{XNOR_{NIL}}$ (p)                         & \ccA[87.6460]{$87.65\wpm{ 0.21}$} & \ccB[82.6700]{$\mbns{82.67}\wpm{ 0.08}$} & \ccC[60.2380]{$60.24\wpm{ 0.18}$} & \ccD[92.5183]{$\mbns{92.52}\wpm{ 0.20}$} & \ccE[38.8559]{$38.86\wpm{ 0.29}$} & \ccF[94.7825]{$94.78\wpm{ 0.02}$} & \ccG[55.2482]{$\mbs{\mbns{55.25}}\wpm{ 0.11}$} \\
$\opn{OR}, \opn{XNOR_{NIL}}$ (d)                         & \ccA[87.8184]{$\mbns{87.82}\wpm{ 0.19}$} & \ccB[82.6720]{$\mbns{82.67}\wpm{ 0.05}$} & \ccC[60.6000]{$\mbs{\mbns{60.60}}\wpm{ 0.11}$} & \ccD[92.9268]{$\mbns{92.93}\wpm{ 0.12}$} & \ccE[39.2190]{$39.22\wpm{ 0.17}$} & \ccF[94.6275]{$94.63\wpm{ 0.06}$} & \ccG[54.8748]{$\mbns{54.87}\wpm{ 0.09}$} \\
$\opn{OR}, \opn{AND}, \opn{XNOR_{NIL}}$ (p)              & \ccA[87.5044]{$87.50\wpm{ 0.17}$} & \ccB[82.3300]{$82.33\wpm{ 0.11}$} & \ccC[59.9860]{$59.99\wpm{ 0.06}$} & \ccD[92.6161]{$\mbns{92.62}\wpm{ 0.35}$} & \ccE[38.4131]{$38.41\wpm{ 0.10}$} & \ccF[94.8150]{$94.81\wpm{ 0.03}$} & \ccG[54.9063]{$\mbns{54.91}\wpm{ 0.08}$} \\
$\opn{OR}, \opn{AND}, \opn{XNOR_{NIL}}$ (d)              & \ccA[87.4114]{$87.41\wpm{ 0.27}$} & \ccB[82.8360]{$\mbs{\mbns{82.84}}\wpm{ 0.06}$} & \ccC[60.3760]{$60.38\wpm{ 0.10}$} & \ccD[92.9756]{$\mbns{92.98}\wpm{ 0.17}$} & \ccE[39.4166]{$\mbns{39.42}\wpm{ 0.18}$} & \ccF[94.7125]{$94.71\wpm{ 0.03}$} & \ccG[55.1137]{$\mbns{55.11}\wpm{ 0.12}$} \\
\midrule
$\opn{XNOR_{AIL}}$                                       & \ccA[86.9693]{$86.97\wpm{ 0.18}$} & \ccB[81.8330]{$81.83\wpm{ 0.06}$} & \ccC[58.4610]{$58.46\wpm{ 0.10}$} & \ccD[90.9268]{$90.93\wpm{ 0.15}$} & \ccE[32.5559]{$32.56\wpm{ 0.10}$} & \ccF[94.7050]{$94.71\wpm{ 0.06}$} & \ccG[53.7485]{$53.75\wpm{ 0.14}$} \\
$\opn{OR_{AIL}}$                                         & \ccA[87.4528]{$87.45\wpm{ 0.14}$} & \ccB[81.8800]{$81.88\wpm{ 0.07}$} & \ccC[59.0960]{$59.10\wpm{ 0.09}$} & \ccD[92.0000]{$92.00\wpm{ 0.15}$} & \ccE[36.0094]{$36.01\wpm{ 0.12}$} & \ccF[94.6875]{$94.69\wpm{ 0.04}$} & \ccG[53.6809]{$53.68\wpm{ 0.14}$} \\
$\opn{OR}, \opn{AND_{AIL}}$ (d)                          & \ccA[87.4292]{$87.43\wpm{ 0.11}$} & \ccB[82.3800]{$82.38\wpm{ 0.06}$} & \ccC[59.8980]{$59.90\wpm{ 0.08}$} & \ccD[92.0732]{$92.07\wpm{ 0.18}$} & \ccE[37.1631]{$37.16\wpm{ 0.15}$} & \ccF[94.5500]{$94.55\wpm{ 0.05}$} & \ccG[54.0542]{$54.05\wpm{ 0.07}$} \\
$\opn{OR}, \opn{XNOR_{AIL}}$ (p)                         & \ccA[87.0560]{$87.06\wpm{ 0.19}$} & \ccB[82.0780]{$82.08\wpm{ 0.03}$} & \ccC[59.2940]{$59.29\wpm{ 0.10}$} & \ccD[91.4670]{$91.47\wpm{ 0.35}$} & \ccE[35.0653]{$35.07\wpm{ 0.18}$} & \ccF[94.8075]{$94.81\wpm{ 0.03}$} & \ccG[54.4092]{$54.41\wpm{ 0.19}$} \\
$\opn{OR}, \opn{XNOR_{AIL}}$ (d)                         & \ccA[87.0873]{$87.09\wpm{ 0.21}$} & \ccB[82.1960]{$82.20\wpm{ 0.04}$} & \ccC[59.4360]{$59.44\wpm{ 0.07}$} & \ccD[91.9024]{$91.90\wpm{ 0.10}$} & \ccE[36.8772]{$36.88\wpm{ 0.10}$} & \ccF[94.6850]{$94.69\wpm{ 0.06}$} & \ccG[54.0258]{$54.03\wpm{ 0.11}$} \\
$\opn{OR}, \opn{AND}, \opn{XNOR_{AIL}}$ (p)              & \ccA[87.2212]{$87.22\wpm{ 0.12}$} & \ccB[82.1020]{$82.10\wpm{ 0.03}$} & \ccC[59.3420]{$59.34\wpm{ 0.13}$} & \ccD[91.7115]{$91.71\wpm{ 0.19}$} & \ccE[35.6523]{$35.65\wpm{ 0.11}$} & \ccF[94.7675]{$94.77\wpm{ 0.05}$} & \ccG[54.3185]{$54.32\wpm{ 0.12}$} \\
$\opn{OR}, \opn{AND}, \opn{XNOR_{AIL}}$ (d)              & \ccA[87.4882]{$87.49\wpm{ 0.11}$} & \ccB[82.4980]{$82.50\wpm{ 0.08}$} & \ccC[59.8320]{$59.83\wpm{ 0.12}$} & \ccD[92.3659]{$92.37\wpm{ 0.08}$} & \ccE[37.5982]{$37.60\wpm{ 0.20}$} & \ccF[94.7150]{$94.72\wpm{ 0.02}$} & \ccG[54.5513]{$54.55\wpm{ 0.09}$} \\
\midrule
$\opn{XNOR_{NAIL}}$                                      & \ccA[87.6135]{$87.61\wpm{ 0.23}$} & \ccB[82.3840]{$82.38\wpm{ 0.07}$} & \ccC[59.7680]{$59.77\wpm{ 0.13}$} & \ccD[93.0732]{$\mbns{93.07}\wpm{ 0.20}$} & \ccE[39.7737]{$\mbs{\mbns{39.77}}\wpm{ 0.04}$} & \ccF[94.8100]{$94.81\wpm{ 0.03}$} & \ccG[53.9136]{$53.91\wpm{ 0.05}$} \\
$\opn{OR_{NAIL}}$                                        & \ccA[87.1934]{$87.19\wpm{ 0.16}$} & \ccB[81.7920]{$81.79\wpm{ 0.09}$} & \ccC[59.4040]{$59.40\wpm{ 0.09}$} & \ccD[92.1220]{$92.12\wpm{ 0.12}$} & \ccE[37.3247]{$37.32\wpm{ 0.17}$} & \ccF[94.6525]{$94.65\wpm{ 0.04}$} & \ccG[53.8199]{$53.82\wpm{ 0.21}$} \\
$\opn{OR}, \opn{AND_{NAIL}}$ (d)                         & \ccA[87.6179]{$87.62\wpm{ 0.11}$} & \ccB[82.2800]{$82.28\wpm{ 0.10}$} & \ccC[59.7140]{$59.71\wpm{ 0.05}$} & \ccD[92.0976]{$92.10\wpm{ 0.20}$} & \ccE[37.6977]{$37.70\wpm{ 0.12}$} & \ccF[94.6125]{$94.61\wpm{ 0.08}$} & \ccG[53.8614]{$53.86\wpm{ 0.10}$} \\
$\opn{OR}, \opn{XNOR_{NAIL}}$ (p)                        & \ccA[87.6106]{$87.61\wpm{ 0.15}$} & \ccB[82.3680]{$82.37\wpm{ 0.11}$} & \ccC[59.9520]{$59.95\wpm{ 0.13}$} & \ccD[92.6650]{$\mbns{92.67}\wpm{ 0.10}$} & \ccE[38.6047]{$38.60\wpm{ 0.08}$} & \ccF[94.8825]{$\mbs{\mbns{94.88}}\wpm{ 0.03}$} & \ccG[54.4852]{$54.49\wpm{ 0.13}$} \\
$\opn{OR}, \opn{XNOR_{NAIL}}$ (d)                        & \ccA[87.8538]{$\mbs{\mbns{87.85}}\wpm{ 0.22}$} & \ccB[82.5240]{$82.52\wpm{ 0.11}$} & \ccC[60.0180]{$60.02\wpm{ 0.10}$} & \ccD[93.1220]{$\mbns{93.12}\wpm{ 0.13}$} & \ccE[39.6370]{$\mbns{39.64}\wpm{ 0.09}$} & \ccF[94.7450]{$94.75\wpm{ 0.03}$} & \ccG[54.1334]{$54.13\wpm{ 0.05}$} \\
$\opn{OR}, \opn{AND}, \opn{XNOR_{NAIL}}$ (p)             & \ccA[87.4808]{$87.48\wpm{ 0.14}$} & \ccB[82.2660]{$82.27\wpm{ 0.08}$} & \ccC[59.8720]{$59.87\wpm{ 0.05}$} & \ccD[92.1027]{$92.10\wpm{ 0.29}$} & \ccE[38.2589]{$38.26\wpm{ 0.12}$} & \ccF[94.9450]{$\mbf{\mbns{94.95}}\wpm{ 0.02}$} & \ccG[54.4199]{$54.42\wpm{ 0.13}$} \\
$\opn{OR}, \opn{AND}, \opn{XNOR_{NAIL}}$ (d)             & \ccA[87.7830]{$87.78\wpm{ 0.14}$} & \ccB[82.6680]{$\mbns{82.67}\wpm{ 0.06}$} & \ccC[60.0100]{$60.01\wpm{ 0.21}$} & \ccD[93.1220]{$\mbns{93.12}\wpm{ 0.21}$} & \ccE[39.6544]{$\mbns{39.65}\wpm{ 0.14}$} & \ccF[94.7750]{$94.78\wpm{ 0.03}$} & \ccG[54.5790]{$54.58\wpm{ 0.12}$} \\
\bottomrule
\end{tabular}
  }
}
\end{table}

\begin{table}[htb]
\small
  \centering
  \caption{%
Hidden pre-activation widths and number of trainable parameters used in transfer learning experiments (corresponding to results shown in \autoref{tab:transfer-head-big}).
}
\label{tab:transfer-nparam-big}
\centerline{
  \scalebox{0.835}{
\begin{tabular}{lrrrrrrrrr}
\toprule
 & & & \multicolumn{7}{c}{\textnumero{} Trainable Parameters} \\
\cmidrule(l){4-10}
Activation function                                     & Map & Width & \multicolumn{1}{c}{Cal101} & \multicolumn{1}{c}{CIFAR10} & \multicolumn{1}{c}{CIFAR100} & \multicolumn{1}{c}{Flowers} & \multicolumn{1}{c}{StfCars} & \multicolumn{1}{c}{STL-10} & \multicolumn{1}{c}{SVHN} \\
\toprule
Linear layer only                                       & &  & 52k & 5k & 51k & 52k & 101k & 5k & 5k \\
\midrule
$\opn{ReLU}$                                            & $1\!\to\! 1$& 512 & 577k & 530k & 577k & 578k & 626k & 530k & 530k \\
LeakyReLU                                               & $1\!\to\! 1$& 512 & 577k & 530k & 577k & 578k & 626k & 530k & 530k \\
PReLU                                                   & $1\!\to\! 1$& 512 & 577k & 530k & 577k & 578k & 626k & 530k & 530k \\
Softplus                                                & $1\!\to\! 1$& 512 & 577k & 530k & 577k & 578k & 626k & 530k & 530k \\
\midrule
ELU                                                     & $1\!\to\! 1$& 512 & 577k & 530k & 577k & 578k & 626k & 530k & 530k \\
CELU                                                    & $1\!\to\! 1$& 512 & 577k & 530k & 577k & 578k & 626k & 530k & 530k \\
SELU                                                    & $1\!\to\! 1$& 512 & 577k & 530k & 577k & 578k & 626k & 530k & 530k \\
GELU                                                    & $1\!\to\! 1$& 512 & 577k & 530k & 577k & 578k & 626k & 530k & 530k \\
SiLU                                                    & $1\!\to\! 1$& 512 & 577k & 530k & 577k & 578k & 626k & 530k & 530k \\
Hardswish                                               & $1\!\to\! 1$& 512 & 577k & 530k & 577k & 578k & 626k & 530k & 530k \\
Mish                                                    & $1\!\to\! 1$& 512 & 577k & 530k & 577k & 578k & 626k & 530k & 530k \\
\midrule
Softsign                                                & $1\!\to\! 1$& 512 & 577k & 530k & 577k & 578k & 626k & 530k & 530k \\
Tanh                                                    & $1\!\to\! 1$& 512 & 577k & 530k & 577k & 578k & 626k & 530k & 530k \\
\midrule
GLU                                                     & $2\!\to\! 1$& 650 & 578k & 549k & 578k & 579k & 609k & 549k & 549k \\
\midrule
$\opn{Max}$                                             & $2\!\to\! 1$& 650 & 578k & 549k & 578k & 579k & 609k & 549k & 549k \\
$\opn{Max},\opn{Min}$ (d)                               & $2\!\to\! 2$& 512 & 577k & 530k & 577k & 578k & 626k & 530k & 530k \\
\midrule
SignedGeomean                                           & $2\!\to\! 1$& 650 & 578k & 549k & 578k & 579k & 609k & 549k & 549k \\
\midrule
$\opn{XNOR_{IL}}$                                       & $2\!\to\! 1$& 650 & 578k & 549k & 578k & 579k & 609k & 549k & 549k \\
$\opn{OR_{IL}}$                                         & $2\!\to\! 1$& 650 & 578k & 549k & 578k & 579k & 609k & 549k & 549k \\
$\opn{OR}, \opn{AND_{IL}}$ (d)                          & $2\!\to\! 2$& 512 & 577k & 530k & 577k & 578k & 626k & 530k & 530k \\
$\opn{OR}, \opn{XNOR_{IL}}$ (p)                         & $2\!\to\! 1$& 648 & 576k & 546k & 576k & 576k & 607k & 546k & 546k \\
$\opn{OR}, \opn{XNOR_{IL}}$ (d)                         & $2\!\to\! 2$& 512 & 577k & 530k & 577k & 578k & 626k & 530k & 530k \\
$\opn{OR}, \opn{AND}, \opn{XNOR_{IL}}$ (p)              & $2\!\to\! 1$& 648 & 576k & 546k & 576k & 576k & 607k & 546k & 546k \\
$\opn{OR}, \opn{AND}, \opn{XNOR_{IL}}$ (d)              & $2\!\to\! 3$& 438 & 579k & 519k & 579k & 580k & 642k & 519k & 519k \\
\midrule
$\opn{XNOR_{NIL}}$                                      & $2\!\to\! 1$& 650 & 578k & 549k & 578k & 579k & 609k & 549k & 549k \\
$\opn{OR_{NIL}}$                                        & $2\!\to\! 1$& 650 & 578k & 549k & 578k & 579k & 609k & 549k & 549k \\
$\opn{OR}, \opn{AND_{NIL}}$ (d)                         & $2\!\to\! 2$& 512 & 577k & 530k & 577k & 578k & 626k & 530k & 530k \\
$\opn{OR}, \opn{XNOR_{NIL}}$ (p)                        & $2\!\to\! 1$& 648 & 576k & 546k & 576k & 576k & 607k & 546k & 546k \\
$\opn{OR}, \opn{XNOR_{NIL}}$ (d)                        & $2\!\to\! 2$& 512 & 577k & 530k & 577k & 578k & 626k & 530k & 530k \\
$\opn{OR}, \opn{AND}, \opn{XNOR_{NIL}}$ (p)             & $2\!\to\! 1$& 648 & 576k & 546k & 576k & 576k & 607k & 546k & 546k \\
$\opn{OR}, \opn{AND}, \opn{XNOR_{NIL}}$ (d)             & $2\!\to\! 3$& 438 & 579k & 519k & 579k & 580k & 642k & 519k & 519k \\
\midrule
$\opn{XNOR_{AIL}}$                                      & $2\!\to\! 1$& 650 & 578k & 549k & 578k & 579k & 609k & 549k & 549k \\
$\opn{OR_{AIL}}$                                        & $2\!\to\! 1$& 650 & 578k & 549k & 578k & 579k & 609k & 549k & 549k \\
$\opn{OR}, \opn{AND_{AIL}}$ (d)                         & $2\!\to\! 2$& 512 & 577k & 530k & 577k & 578k & 626k & 530k & 530k \\
$\opn{OR}, \opn{XNOR_{AIL}}$ (p)                        & $2\!\to\! 1$& 648 & 576k & 546k & 576k & 576k & 607k & 546k & 546k \\
$\opn{OR}, \opn{XNOR_{AIL}}$ (d)                        & $2\!\to\! 2$& 512 & 577k & 530k & 577k & 578k & 626k & 530k & 530k \\
$\opn{OR}, \opn{AND}, \opn{XNOR_{AIL}}$ (p)             & $2\!\to\! 1$& 648 & 576k & 546k & 576k & 576k & 607k & 546k & 546k \\
$\opn{OR}, \opn{AND}, \opn{XNOR_{AIL}}$ (d)             & $2\!\to\! 3$& 438 & 579k & 519k & 579k & 580k & 642k & 519k & 519k \\
\midrule
$\opn{XNOR_{NAIL}}$                                     & $2\!\to\! 1$& 650 & 578k & 549k & 578k & 579k & 609k & 549k & 549k \\
$\opn{OR_{NAIL}}$                                       & $2\!\to\! 1$& 650 & 578k & 549k & 578k & 579k & 609k & 549k & 549k \\
$\opn{OR}, \opn{AND_{NAIL}}$ (d)                        & $2\!\to\! 2$& 512 & 577k & 530k & 577k & 578k & 626k & 530k & 530k \\
$\opn{OR}, \opn{XNOR_{NAIL}}$ (p)                       & $2\!\to\! 1$& 648 & 576k & 546k & 576k & 576k & 607k & 546k & 546k \\
$\opn{OR}, \opn{XNOR_{NAIL}}$ (d)                       & $2\!\to\! 2$& 512 & 577k & 530k & 577k & 578k & 626k & 530k & 530k \\
$\opn{OR}, \opn{AND}, \opn{XNOR_{NAIL}}$ (p)            & $2\!\to\! 1$& 648 & 576k & 546k & 576k & 576k & 607k & 546k & 546k \\
$\opn{OR}, \opn{AND}, \opn{XNOR_{NAIL}}$ (d)            & $2\!\to\! 3$& 438 & 579k & 519k & 579k & 580k & 642k & 519k & 519k \\
\bottomrule
\end{tabular}
  }
}
\end{table}

Since the transfer learning experiments were performed without retraining the base network, we found (\autoref{tab:transfer-head-big}) the performance is limited by the features which the base network produces, whose relevance depends on the overlap between the domain of the pretraining task (ImageNet) and the new task.
This information bottleneck, and variation in its utility, means the variation between performance on different datasets is much larger than the variation for different activation functions within the same dataset.

For transfer tasks which involve coarse-grained discrimination on images which are similar to ImageNet, the embedding generated by the pretrained model is already sufficient to separate the classes in the new dataset. Examples of this are Caltech101, where linear layer beats out using additional layers with non-linearities; and STL-10, where our best model is on-par with the linear model.
The results on these two transfer learning tasks are too simple to be of interest to study. The performance is limited by the information retained by the pretrained embedding, but it appears that what task-relevant information is there is readily available with a linear layer without needing additional logic to interpret it.

Other transfer learning tasks we attempted are less trivial and show a larger difference in performance across the activation functions, and a gap from the linear layer readout model. The Stanford Cars dataset involves fine-grained discrimination between different car models, for which features generated by a model pretrained on ImageNet are not effective. The SVHN dataset, which contains images of house numbers, is coarse-grained but uses images which are outside the domain of ImageNet, which makes the transfer-learning task more difficult.

As shown in \autoref{tab:transfer-extra}, we found the duplication ensemble strategy consistently outperformed the partition strategy on the transfer learning experiments.
Compared to using $\opn{OR_{NAIL}}$, there was no benefit to using an ensemble with the partition strategy.
However, it was beneficial to ensemble $\opn{OR_{NAIL}}$ with $\opn{AND_{NAIL}}$, and better still with $\opn{XNOR_{NAIL}}$, under the duplication strategy.
The performance of duplication-ensembles using $\opn{XNOR_{NAIL}}$, i.e. \{$\opn{OR_{NAIL}}$, $\opn{AND_{NAIL}}$, $\opn{XNOR_{NAIL}}$ (d)\} and \{$\opn{OR_{NAIL}}$, $\opn{AND_{NAIL}}$, $\opn{XNOR_{NAIL}}$ (d)\}, was comparable to using $\opn{XNOR_{NAIL}}$ alone, so in this case an ensemble was neither beneficial nor detrimental.

Additionally, we ran the experiment again with a pre-activation width of $w=512$ for all activation functions.
The results with constant pre-activation width $w=512$ are shown in \autoref{tab:transfer-head-512}.
By using a constant pre-activation width, the number of parameters used is not consistent across activation functions.
The number of trainable parameters in each experiment is shown in \autoref{tab:transfer-nparam-512}.
Note: some experiments on SVHN are missing due to an issue with the job scheduler which was not noticed in time to run the missing experiments.

\begin{table}[htb]
\small
  \centering
  \caption{%
Transfer learning from a frozen ResNet-18 architecture pretrained on ImageNet-1k to other computer vision datasets.
As with \autoref{tab:transfer}, but in this case we show results where the MLP has a pre-activation width of $w\!=\!512$.
Note that although the pre-activation width is constant, the number of parameters in the network is not consistent between experiments.
The number of parameters is shown in \autoref{tab:transfer-nparam-512}.
Mean (standard error) of $n\!=\!5$ inits of the MLP (same pretrained network).
}
\label{tab:transfer-head-512}
\def\ccAmin{86.5801886792453}
\def\ccAmax{88.34905660377358}
\def\ccA[#1]#2{\heatmapcell{#1}{#2}{\ccAmin}{\ccAmax}}
\def\ccBmin{81.63399999999999}
\def\ccBmax{83.106}
\def\ccB[#1]#2{\heatmapcell{#1}{#2}{\ccBmin}{\ccBmax}}
\def\ccCmin{58.044000000000004}
\def\ccCmax{60.898}
\def\ccC[#1]#2{\heatmapcell{#1}{#2}{\ccCmin}{\ccCmax}}
\def\ccDmin{90.70731696896436}
\def\ccDmax{93.1707316031107}
\def\ccD[#1]#2{\heatmapcell{#1}{#2}{\ccDmin}{\ccDmax}}
\def\ccEmin{30.969666852908368}
\def\ccEmax{39.835902544398145}
\def\ccE[#1]#2{\heatmapcell{#1}{#2}{\ccEmin}{\ccEmax}}
\def\ccFmin{94.62249999999999}
\def\ccFmax{94.88}
\def\ccF[#1]#2{\heatmapcell{#1}{#2}{\ccFmin}{\ccFmax}}
\def\ccGmin{53.25829748424489}
\def\ccGmax{55.76521204999422}
\def\ccG[#1]#2{\heatmapcell{#1}{#2}{\ccGmin}{\ccGmax}}
\centerline{
  \scalebox{0.835}{
\begin{tabular}{lrrrrrrr}
\toprule
 & \multicolumn{7}{c}{Test Accuracy (\%)} \\
\cmidrule(l){2-8}
Activation function                                     & \multicolumn{1}{c}{Cal101} & \multicolumn{1}{c}{CIFAR10} & \multicolumn{1}{c}{CIFAR100} & \multicolumn{1}{c}{Flowers} & \multicolumn{1}{c}{StfCars} & \multicolumn{1}{c}{STL-10} & \multicolumn{1}{c}{SVHN} \\
\toprule
Linear layer only                                        & \ccA[88.3491]{$\mbf{\mbns{88.35}}\wpm{ 0.15}$} & \ccB[78.5640]{$78.56\wpm{ 0.09}$} & \ccC[57.3920]{$57.39\wpm{ 0.09}$} & \ccD[92.3171]{$92.32\wpm{ 0.20}$} & \ccE[33.5132]{$33.51\wpm{ 0.06}$} & \ccF[94.6825]{$94.68\wpm{ 0.02}$} & \ccG[46.6011]{$46.60\wpm{ 0.14}$} \\
\midrule
$\opn{ReLU}$                                             & \ccA[86.5802]{$86.58\wpm{ 0.17}$} & \ccB[81.6340]{$81.63\wpm{ 0.05}$} & \ccC[58.0440]{$58.04\wpm{ 0.11}$} & \ccD[90.7073]{$90.71\wpm{ 0.26}$} & \ccE[30.9697]{$30.97\wpm{ 0.26}$} & \ccF[94.6225]{$94.62\wpm{ 0.06}$} & \ccG[53.2583]{$53.26\wpm{ 0.08}$} \\
LeakyReLU                                                & \ccA[86.6038]{$86.60\wpm{ 0.13}$} & \ccB[81.6660]{$81.67\wpm{ 0.11}$} & \ccC[58.0080]{$58.01\wpm{ 0.09}$} & \ccD[90.7317]{$90.73\wpm{ 0.32}$} & \ccE[31.0890]{$31.09\wpm{ 0.24}$} & \ccF[94.6150]{$94.61\wpm{ 0.05}$} & \ccG[53.2383]{$53.24\wpm{ 0.10}$} \\
PReLU                                                    & \ccA[87.8302]{$\mbns{87.83}\wpm{ 0.21}$} & \ccB[81.0300]{$81.03\wpm{ 0.13}$} & \ccC[58.8980]{$58.90\wpm{ 0.18}$} & \ccD[93.1707]{$\mbf{\mbns{93.17}}\wpm{ 0.19}$} & \ccE[39.8359]{$\mbf{\mbns{39.84}}\wpm{ 0.18}$} & \ccF[94.5375]{$94.54\wpm{ 0.05}$} & \ccG[53.4750]{$53.47\wpm{ 0.08}$} \\
Softplus                                                 & \ccA[86.1557]{$86.16\wpm{ 0.18}$} & \ccB[79.1300]{$79.13\wpm{ 0.08}$} & \ccC[56.5840]{$56.58\wpm{ 0.07}$} & \ccD[89.3902]{$89.39\wpm{ 0.29}$} & \ccE[21.2332]{$21.23\wpm{ 0.13}$} & \ccF[94.6300]{$94.63\wpm{ 0.03}$} & \ccG[48.7892]{$48.79\wpm{ 0.08}$} \\
\midrule
ELU                                                      & \ccA[87.1816]{$87.18\wpm{ 0.09}$} & \ccB[80.4400]{$80.44\wpm{ 0.08}$} & \ccC[58.0840]{$58.08\wpm{ 0.10}$} & \ccD[91.7073]{$91.71\wpm{ 0.14}$} & \ccE[34.7041]{$34.70\wpm{ 0.06}$} & \ccF[94.5500]{$94.55\wpm{ 0.05}$} & \ccG[51.8869]{$51.89\wpm{ 0.09}$} \\
CELU                                                     & \ccA[87.1816]{$87.18\wpm{ 0.09}$} & \ccB[80.4400]{$80.44\wpm{ 0.08}$} & \ccC[58.0840]{$58.08\wpm{ 0.10}$} & \ccD[91.7073]{$91.71\wpm{ 0.14}$} & \ccE[34.7041]{$34.70\wpm{ 0.06}$} & \ccF[94.5500]{$94.55\wpm{ 0.05}$} & \ccG[51.8869]{$51.89\wpm{ 0.09}$} \\
SELU                                                     & \ccA[87.7358]{$87.74\wpm{ 0.09}$} & \ccB[79.9340]{$79.93\wpm{ 0.13}$} & \ccC[58.2380]{$58.24\wpm{ 0.06}$} & \ccD[92.2683]{$92.27\wpm{ 0.13}$} & \ccE[37.5112]{$37.51\wpm{ 0.17}$} & \ccF[94.5300]{$94.53\wpm{ 0.07}$} & \ccG[50.9365]{$50.94\wpm{ 0.12}$} \\
GELU                                                     & \ccA[87.0991]{$87.10\wpm{ 0.15}$} & \ccB[81.3860]{$81.39\wpm{ 0.09}$} & \ccC[58.5060]{$58.51\wpm{ 0.13}$} & \ccD[91.5122]{$91.51\wpm{ 0.15}$} & \ccE[33.4286]{$33.43\wpm{ 0.15}$} & \ccF[94.6250]{$94.62\wpm{ 0.06}$} & \ccG[53.4250]{$53.43\wpm{ 0.23}$} \\
SiLU                                                     & \ccA[86.9104]{$86.91\wpm{ 0.11}$} & \ccB[80.5280]{$80.53\wpm{ 0.11}$} & \ccC[58.1440]{$58.14\wpm{ 0.12}$} & \ccD[91.3659]{$91.37\wpm{ 0.18}$} & \ccE[32.1482]{$32.15\wpm{ 0.17}$} & \ccF[94.5875]{$94.59\wpm{ 0.05}$} & \ccG[52.3548]{$52.35\wpm{ 0.16}$} \\
Hardswish                                                & \ccA[87.1226]{$87.12\wpm{ 0.12}$} & \ccB[80.1040]{$80.10\wpm{ 0.10}$} & \ccC[58.2540]{$58.25\wpm{ 0.10}$} & \ccD[91.5610]{$91.56\wpm{ 0.25}$} & \ccE[33.1726]{$33.17\wpm{ 0.23}$} & \ccF[94.6225]{$94.62\wpm{ 0.05}$} & \ccG[51.8231]{$51.82\wpm{ 0.13}$} \\
Mish                                                     & \ccA[87.1108]{$87.11\wpm{ 0.12}$} & \ccB[81.0880]{$81.09\wpm{ 0.11}$} & \ccC[58.3720]{$58.37\wpm{ 0.10}$} & \ccD[91.6098]{$91.61\wpm{ 0.15}$} & \ccE[33.7469]{$33.75\wpm{ 0.14}$} & \ccF[94.6075]{$94.61\wpm{ 0.05}$} & \ccG[53.0539]{$53.05\wpm{ 0.12}$} \\
\midrule
Softsign                                                 & \ccA[81.4741]{$81.47\wpm{ 0.18}$} & \ccB[80.0340]{$80.03\wpm{ 0.09}$} & \ccC[54.8420]{$54.84\wpm{ 0.09}$} & \ccD[82.3415]{$82.34\wpm{ 0.22}$} & \ccE[17.3322]{$17.33\wpm{ 0.10}$} & \ccF[94.7050]{$94.70\wpm{ 0.03}$} & \ccG[51.2500]{$51.25\wpm{ 0.08}$} \\
Tanh                                                     & \ccA[87.4764]{$87.48\wpm{ 0.06}$} & \ccB[80.5560]{$80.56\wpm{ 0.07}$} & \ccC[57.3480]{$57.35\wpm{ 0.08}$} & \ccD[90.3171]{$90.32\wpm{ 0.20}$} & \ccE[29.5077]{$29.51\wpm{ 0.12}$} & \ccF[94.6275]{$94.63\wpm{ 0.07}$} & \ccG[51.8600]{$51.86\wpm{ 0.05}$} \\
\midrule
GLU                                                      & \ccA[86.3443]{$86.34\wpm{ 0.16}$} & \ccB[79.3460]{$79.35\wpm{ 0.05}$} & \ccC[57.2200]{$57.22\wpm{ 0.12}$} & \ccD[90.0976]{$90.10\wpm{ 0.20}$} & \ccE[26.7205]{$26.72\wpm{ 0.13}$} & \ccF[94.6325]{$94.63\wpm{ 0.05}$} & \ccG[50.6369]{$50.64\wpm{ 0.10}$} \\
\midrule
$\opn{Max}$                                              & \ccA[86.8632]{$86.86\wpm{ 0.11}$} & \ccB[81.5580]{$81.56\wpm{ 0.06}$} & \ccC[58.1180]{$58.12\wpm{ 0.10}$} & \ccD[90.5854]{$90.59\wpm{ 0.25}$} & \ccE[32.8046]{$32.80\wpm{ 0.04}$} & \ccF[94.6500]{$94.65\wpm{ 0.05}$} & \ccG[53.5533]{$53.55\wpm{ 0.09}$} \\
$\opn{Max},\opn{Min}$ (d)                                & \ccA[87.2288]{$87.23\wpm{ 0.13}$} & \ccB[82.3060]{$82.31\wpm{ 0.10}$} & \ccC[59.0520]{$59.05\wpm{ 0.10}$} & \ccD[91.6829]{$91.68\wpm{ 0.18}$} & \ccE[34.9105]{$34.91\wpm{ 0.12}$} & \ccF[94.6400]{$94.64\wpm{ 0.04}$} & \ccG[53.9136]{$53.91\wpm{ 0.13}$} \\
\midrule
SignedGeomean                                            & \ccA[86.7375]{$86.74\wpm{ 0.33}$} & \ccB[23.7940]{$23.79\wpm{13.79}$} & \ccC[11.7520]{$11.75\wpm{10.75}$} & \ccD[90.9780]{$90.98\wpm{ 0.21}$} & \ccE[32.4537]{$32.45\wpm{ 0.15}$} & \ccF[94.5875]{$94.59\wpm{ 0.04}$} & \ccG[19.5874]{$19.59\wpm{ 0.00}$} \\
\midrule
$\opn{XNOR_{IL}}$                                        & \ccA[85.1061]{$85.11\wpm{ 0.10}$} & \ccB[79.7700]{$79.77\wpm{ 0.08}$} & \ccC[56.9180]{$56.92\wpm{ 0.09}$} & \ccD[84.2927]{$84.29\wpm{ 0.29}$} & \ccE[ 1.7752]{$ 1.78\wpm{ 0.21}$} & \ccF[94.5625]{$94.56\wpm{ 0.05}$} & \ccG[52.2657]{$52.27\wpm{ 0.08}$} \\
$\opn{OR_{IL}}$                                          & \ccA[87.0150]{$87.01\wpm{ 0.21}$} & \ccB[79.8840]{$79.88\wpm{ 0.04}$} & \ccC[57.6820]{$57.68\wpm{ 0.08}$} & \ccD[91.1220]{$91.12\wpm{ 0.10}$} & \ccE[32.5485]{$32.55\wpm{ 0.04}$} & \ccF[94.5675]{$94.57\wpm{ 0.06}$} & \ccG[51.5435]{$51.54\wpm{ 0.07}$} \\
$\opn{OR}, \opn{AND_{IL}}$ (d)                           & \ccA[87.2170]{$87.22\wpm{ 0.14}$} & \ccB[79.5620]{$79.56\wpm{ 0.06}$} & \ccC[57.9100]{$57.91\wpm{ 0.12}$} & \ccD[91.4390]{$91.44\wpm{ 0.18}$} & \ccE[36.0443]{$36.04\wpm{ 0.07}$} & \ccF[94.4400]{$94.44\wpm{ 0.05}$} & \ccG[50.4172]{$50.42\wpm{ 0.25}$} \\
$\opn{OR}, \opn{XNOR_{IL}}$ (p)                          & \ccA[86.7375]{$86.74\wpm{ 0.21}$} & \ccB[80.4140]{$80.41\wpm{ 0.06}$} & \ccC[57.8520]{$57.85\wpm{ 0.08}$} & \ccD[90.3912]{$90.39\wpm{ 0.25}$} & \ccE[30.9787]{$30.98\wpm{ 0.17}$} & \ccF[94.6275]{$94.63\wpm{ 0.04}$} & \ccG[52.7651]{$52.77\wpm{ 0.04}$} \\
$\opn{OR}, \opn{XNOR_{IL}}$ (d)                          & \ccA[86.7571]{$86.76\wpm{ 0.12}$} & \ccB[80.8260]{$80.83\wpm{ 0.08}$} & \ccC[58.5480]{$58.55\wpm{ 0.10}$} & \ccD[90.7561]{$90.76\wpm{ 0.07}$} & \ccE[33.5579]{$33.56\wpm{ 0.14}$} & \ccF[94.4700]{$94.47\wpm{ 0.07}$} & \ccG[52.5415]{$52.54\wpm{ 0.09}$} \\
$\opn{OR}, \opn{AND}, \opn{XNOR_{IL}}$ (p)               & \ccA[86.5841]{$86.58\wpm{ 0.15}$} & \ccB[80.3120]{$80.31\wpm{ 0.08}$} & \ccC[57.9920]{$57.99\wpm{ 0.10}$} & \ccD[90.3912]{$90.39\wpm{ 0.28}$} & \ccE[31.7597]{$31.76\wpm{ 0.16}$} & \ccF[94.6550]{$94.66\wpm{ 0.04}$} & \ccG[52.5154]{$52.52\wpm{ 0.06}$} \\
$\opn{OR}, \opn{AND}, \opn{XNOR_{IL}}$ (d)               & \ccA[86.9796]{$86.98\wpm{ 0.24}$} & \ccB[80.2220]{$80.22\wpm{ 0.10}$} & \ccC[58.8940]{$58.89\wpm{ 0.10}$} & \ccD[91.5565]{$91.56\wpm{ 0.26}$} & \ccE[35.2662]{$35.27\wpm{ 0.08}$} & \ccF[94.4925]{$94.49\wpm{ 0.05}$} & \ccG[52.5300]{$52.53\wpm{ 0.04}$} \\
\midrule
$\opn{XNOR_{NIL}}$                                       & \ccA[87.9410]{$\mbs{87.94}\wpm{ 0.08}$} & \ccB[82.8880]{$\mbs{\mbns{82.89}}\wpm{ 0.09}$} & \ccC[60.2160]{$60.22\wpm{ 0.12}$} & \ccD[93.1051]{$\mbns{93.11}\wpm{ 0.22}$} & \ccE[38.9330]{$38.93\wpm{ 0.12}$} & \ccF[94.7825]{$94.78\wpm{ 0.03}$} & \ccG[55.2259]{$\mbs{55.23}\wpm{ 0.11}$} \\
$\opn{OR_{NIL}}$                                         & \ccA[87.2330]{$87.23\wpm{ 0.14}$} & \ccB[79.7620]{$79.76\wpm{ 0.05}$} & \ccC[58.6160]{$58.62\wpm{ 0.02}$} & \ccD[91.4670]{$91.47\wpm{ 0.07}$} & \ccE[35.5702]{$35.57\wpm{ 0.11}$} & \ccF[94.6625]{$94.66\wpm{ 0.07}$} & \ccG[51.1832]{$51.18\wpm{ 0.05}$} \\
$\opn{OR}, \opn{AND_{NIL}}$ (d)                          & \ccA[86.8201]{$86.82\wpm{ 0.18}$} & \ccB[80.0900]{$80.09\wpm{ 0.09}$} & \ccC[58.5980]{$58.60\wpm{ 0.07}$} & \ccD[91.4425]{$91.44\wpm{ 0.20}$} & \ccE[37.0252]{$37.03\wpm{ 0.11}$} & \ccF[94.6500]{$94.65\wpm{ 0.05}$} & \ccG[52.4915]{$52.49\wpm{ 0.09}$} \\
$\opn{OR}, \opn{XNOR_{NIL}}$ (p)                         & \ccA[87.5988]{$87.60\wpm{ 0.16}$} & \ccB[82.6120]{$82.61\wpm{ 0.05}$} & \ccC[59.8960]{$59.90\wpm{ 0.16}$} & \ccD[92.3227]{$92.32\wpm{ 0.29}$} & \ccE[38.3062]{$38.31\wpm{ 0.14}$} & \ccF[94.6925]{$94.69\wpm{ 0.03}$} & \ccG[55.0092]{$55.01\wpm{ 0.06}$} \\
$\opn{OR}, \opn{XNOR_{NIL}}$ (d)                         & \ccA[87.8184]{$\mbns{87.82}\wpm{ 0.19}$} & \ccB[82.6720]{$82.67\wpm{ 0.05}$} & \ccC[60.6000]{$\mbs{\mbns{60.60}}\wpm{ 0.11}$} & \ccD[92.9268]{$\mbns{92.93}\wpm{ 0.12}$} & \ccE[39.2190]{$39.22\wpm{ 0.17}$} & \ccF[94.6275]{$94.63\wpm{ 0.06}$} & \ccG[54.8748]{$54.87\wpm{ 0.09}$} \\
$\opn{OR}, \opn{AND}, \opn{XNOR_{NIL}}$ (p)              & \ccA[87.2330]{$87.23\wpm{ 0.14}$} & \ccB[82.3620]{$82.36\wpm{ 0.09}$} & \ccC[59.4180]{$59.42\wpm{ 0.12}$} & \ccD[92.3227]{$92.32\wpm{ 0.02}$} & \ccE[37.9231]{$37.92\wpm{ 0.20}$} & \ccF[94.7675]{$94.77\wpm{ 0.04}$} & \ccG[54.6028]{$54.60\wpm{ 0.08}$} \\
$\opn{OR}, \opn{AND}, \opn{XNOR_{NIL}}$ (d)              & \ccA[87.3274]{$87.33\wpm{ 0.26}$} & \ccB[83.1060]{$\mbf{\mbns{83.11}}\wpm{ 0.10}$} & \ccC[60.8980]{$\mbf{\mbns{60.90}}\wpm{ 0.16}$} & \ccD[93.1540]{$\mbs{\mbns{93.15}}\wpm{ 0.28}$} & \ccE[39.2663]{$\mbns{39.27}\wpm{ 0.18}$} & \ccF[94.7100]{$\mbns{94.71}\wpm{ 0.09}$} & \ccG[55.7652]{$\mbf{\mbns{55.77}}\wpm{ 0.13}$} \\
\midrule
$\opn{XNOR_{AIL}}$                                       & \ccA[86.4210]{$86.42\wpm{ 0.13}$} & \ccB[81.7370]{$81.74\wpm{ 0.05}$} & \ccC[57.8790]{$57.88\wpm{ 0.09}$} & \ccD[90.5000]{$90.50\wpm{ 0.17}$} & \ccE[31.5490]{$31.55\wpm{ 0.11}$} & \ccF[94.7787]{$\mbns{94.78}\wpm{ 0.04}$} & \ccG[53.7700]{$53.77\wpm{ 0.04}$} \\
$\opn{OR_{AIL}}$                                         & \ccA[87.2759]{$87.28\wpm{ 0.09}$} & \ccB[81.8140]{$81.81\wpm{ 0.02}$} & \ccC[58.6780]{$58.68\wpm{ 0.05}$} & \ccD[91.7805]{$91.78\wpm{ 0.23}$} & \ccE[35.2760]{$35.28\wpm{ 0.09}$} & \ccF[94.6700]{$94.67\wpm{ 0.07}$} & \ccG[53.6755]{$53.68\wpm{ 0.03}$} \\
$\opn{OR}, \opn{AND_{AIL}}$ (d)                          & \ccA[87.4292]{$87.43\wpm{ 0.11}$} & \ccB[82.3800]{$82.38\wpm{ 0.06}$} & \ccC[59.8980]{$59.90\wpm{ 0.08}$} & \ccD[92.0732]{$92.07\wpm{ 0.18}$} & \ccE[37.1631]{$37.16\wpm{ 0.15}$} & \ccF[94.5500]{$94.55\wpm{ 0.05}$} & \ccG[54.0542]{$54.05\wpm{ 0.07}$} \\
$\opn{OR}, \opn{XNOR_{AIL}}$ (p)                         & \ccA[87.2448]{$87.24\wpm{ 0.22}$} & \ccB[81.9360]{$81.94\wpm{ 0.03}$} & \ccC[58.5720]{$58.57\wpm{ 0.07}$} & \ccD[90.9291]{$90.93\wpm{ 0.17}$} & \ccE[34.5853]{$34.59\wpm{ 0.18}$} & \ccF[94.7675]{$\mbns{94.77}\wpm{ 0.06}$} & \ccG[53.8891]{$53.89\wpm{ 0.11}$} \\
$\opn{OR}, \opn{XNOR_{AIL}}$ (d)                         & \ccA[87.0873]{$87.09\wpm{ 0.21}$} & \ccB[82.1960]{$82.20\wpm{ 0.04}$} & \ccC[59.4360]{$59.44\wpm{ 0.07}$} & \ccD[91.9024]{$91.90\wpm{ 0.10}$} & \ccE[36.8772]{$36.88\wpm{ 0.10}$} & \ccF[94.6850]{$94.69\wpm{ 0.06}$} & \ccG[54.0258]{$54.03\wpm{ 0.11}$} \\
$\opn{OR}, \opn{AND}, \opn{XNOR_{AIL}}$ (p)              & \ccA[87.0206]{$87.02\wpm{ 0.11}$} & \ccB[81.8300]{$81.83\wpm{ 0.04}$} & \ccC[58.9260]{$58.93\wpm{ 0.16}$} & \ccD[91.1736]{$91.17\wpm{ 0.29}$} & \ccE[34.8564]{$34.86\wpm{ 0.15}$} & \ccF[94.7500]{$94.75\wpm{ 0.04}$} & \ccG[53.9206]{$53.92\wpm{ 0.09}$} \\
$\opn{OR}, \opn{AND}, \opn{XNOR_{AIL}}$ (d)              & \ccA[87.2642]{$87.26\wpm{ 0.15}$} & \ccB[82.4520]{$82.45\wpm{ 0.08}$} & \ccC[60.2040]{$60.20\wpm{ 0.11}$} & \ccD[92.4146]{$92.41\wpm{ 0.17}$} & \ccE[37.8891]{$37.89\wpm{ 0.14}$} & \ccF[94.6475]{$94.65\wpm{ 0.04}$} & \ccG[55.0453]{$55.05\wpm{ 0.09}$} \\
\midrule
$\opn{XNOR_{NAIL}}$                                      & \ccA[87.4690]{$87.47\wpm{ 0.12}$} & \ccB[82.3800]{$82.38\wpm{ 0.05}$} & \ccC[59.3420]{$59.34\wpm{ 0.08}$} & \ccD[92.8362]{$\mbns{92.84}\wpm{ 0.18}$} & \ccE[39.0325]{$39.03\wpm{ 0.05}$} & \ccF[94.8800]{$\mbf{\mbns{94.88}}\wpm{ 0.02}$} & \ccG[54.2256]{$54.23\wpm{ 0.15}$} \\
$\opn{OR_{NAIL}}$                                        & \ccA[87.3274]{$87.33\wpm{ 0.14}$} & \ccB[81.7760]{$81.78\wpm{ 0.03}$} & \ccC[59.2720]{$59.27\wpm{ 0.06}$} & \ccD[91.8337]{$91.83\wpm{ 0.14}$} & \ccE[36.7914]{$36.79\wpm{ 0.20}$} & \ccF[94.7775]{$\mbns{94.78}\wpm{ 0.07}$} & \ccG[53.7869]{$53.79\wpm{ 0.15}$} \\
$\opn{OR}, \opn{AND_{NAIL}}$ (d)                         & \ccA[87.6179]{$87.62\wpm{ 0.11}$} & \ccB[82.2800]{$82.28\wpm{ 0.10}$} & \ccC[59.7140]{$59.71\wpm{ 0.05}$} & \ccD[92.0976]{$92.10\wpm{ 0.20}$} & \ccE[37.6977]{$37.70\wpm{ 0.12}$} & \ccF[94.6125]{$94.61\wpm{ 0.08}$} & \ccG[53.8614]{$53.86\wpm{ 0.10}$} \\
$\opn{OR}, \opn{XNOR_{NAIL}}$ (p)                        & \ccA[87.7876]{$87.79\wpm{ 0.10}$} & \ccB[82.2580]{$82.26\wpm{ 0.05}$} & \ccC[59.2700]{$59.27\wpm{ 0.10}$} & \ccD[92.1760]{$92.18\wpm{ 0.19}$} & \ccE[38.1992]{$38.20\wpm{ 0.15}$} & \ccF[94.8200]{$\mbs{\mbns{94.82}}\wpm{ 0.05}$} & \ccG[54.0911]{$54.09\wpm{ 0.09}$} \\
$\opn{OR}, \opn{XNOR_{NAIL}}$ (d)                        & \ccA[87.8538]{$\mbns{87.85}\wpm{ 0.22}$} & \ccB[82.5240]{$82.52\wpm{ 0.11}$} & \ccC[60.0180]{$60.02\wpm{ 0.10}$} & \ccD[93.1220]{$\mbns{93.12}\wpm{ 0.13}$} & \ccE[39.6370]{$\mbs{\mbns{39.64}}\wpm{ 0.09}$} & \ccF[94.7450]{$94.75\wpm{ 0.03}$} & \ccG[54.1334]{$54.13\wpm{ 0.05}$} \\
$\opn{OR}, \opn{AND}, \opn{XNOR_{NAIL}}$ (p)             & \ccA[87.3392]{$87.34\wpm{ 0.26}$} & \ccB[82.1140]{$82.11\wpm{ 0.18}$} & \ccC[59.5580]{$59.56\wpm{ 0.10}$} & \ccD[92.2738]{$92.27\wpm{ 0.22}$} & \ccE[37.5700]{$37.57\wpm{ 0.10}$} & \ccF[94.7825]{$\mbns{94.78}\wpm{ 0.04}$} & \ccG[54.1180]{$54.12\wpm{ 0.08}$} \\
$\opn{OR}, \opn{AND}, \opn{XNOR_{NAIL}}$ (d)             & \ccA[87.4218]{$87.42\wpm{ 0.10}$} & \ccB[82.8300]{$82.83\wpm{ 0.05}$} & \ccC[60.5800]{$\mbns{60.58}\wpm{ 0.15}$} & \ccD[92.5428]{$\mbns{92.54}\wpm{ 0.20}$} & \ccE[39.4876]{$\mbns{39.49}\wpm{ 0.15}$} & \ccF[94.8075]{$\mbns{94.81}\wpm{ 0.11}$} & \ccG[55.0983]{$55.10\wpm{ 0.04}$} \\
\bottomrule
\end{tabular}
  }
}
\end{table}

\begin{table}[htb]
\small
  \centering
  \caption{%
Number of trainable parameters used in transfer learning experiments (corresponding to results shown in \autoref{tab:transfer-head-512}).
}
\label{tab:transfer-nparam-512}
\centerline{
  \scalebox{0.835}{

\begin{tabular}{lrrrrrrrrr}
\toprule
 & & & \multicolumn{7}{c}{\textnumero{} Trainable Parameters} \\
\cmidrule(l){4-10}
Activation function                                     & Map & Width & \multicolumn{1}{c}{Cal101} & \multicolumn{1}{c}{CIFAR10} & \multicolumn{1}{c}{CIFAR100} & \multicolumn{1}{c}{Flowers} & \multicolumn{1}{c}{StfCars} & \multicolumn{1}{c}{STL-10} & \multicolumn{1}{c}{SVHN} \\
\toprule
Linear layer only                                       & &  & 52k & 5k & 51k & 52k & 101k & 5k & 5k \\
\midrule
$\opn{ReLU}$                                            & $1\!\to\! 1$& 512 & 577k & 530k & 577k & 578k & 626k & 530k & 530k \\
LeakyReLU                                               & $1\!\to\! 1$& 512 & 577k & 530k & 577k & 578k & 626k & 530k & 530k \\
PReLU                                                   & $1\!\to\! 1$& 512 & 577k & 530k & 577k & 578k & 626k & 530k & 530k \\
Softplus                                                & $1\!\to\! 1$& 512 & 577k & 530k & 577k & 578k & 626k & 530k & 530k \\
\midrule
ELU                                                     & $1\!\to\! 1$& 512 & 577k & 530k & 577k & 578k & 626k & 530k & 530k \\
CELU                                                    & $1\!\to\! 1$& 512 & 577k & 530k & 577k & 578k & 626k & 530k & 530k \\
SELU                                                    & $1\!\to\! 1$& 512 & 577k & 530k & 577k & 578k & 626k & 530k & 530k \\
GELU                                                    & $1\!\to\! 1$& 512 & 577k & 530k & 577k & 578k & 626k & 530k & 530k \\
SiLU                                                    & $1\!\to\! 1$& 512 & 577k & 530k & 577k & 578k & 626k & 530k & 530k \\
Hardswish                                               & $1\!\to\! 1$& 512 & 577k & 530k & 577k & 578k & 626k & 530k & 530k \\
Mish                                                    & $1\!\to\! 1$& 512 & 577k & 530k & 577k & 578k & 626k & 530k & 530k \\
\midrule
Softsign                                                & $1\!\to\! 1$& 512 & 577k & 530k & 577k & 578k & 626k & 530k & 530k \\
Tanh                                                    & $1\!\to\! 1$& 512 & 577k & 530k & 577k & 578k & 626k & 530k & 530k \\
\midrule
GLU                                                     & $2\!\to\! 1$& 512 & 420k & 397k & 420k & 420k & 445k & 397k & 397k \\
\midrule
$\opn{Max}$                                             & $2\!\to\! 1$& 512 & 420k & 397k & 420k & 420k & 445k & 397k & 397k \\
$\opn{Max},\opn{Min}$ (d)                               & $2\!\to\! 2$& 512 & 577k & 530k & 577k & 578k & 626k & 530k & 530k \\
\midrule
SignedGeomean                                           & $2\!\to\! 1$& 512 & 420k & 397k & 420k & 420k & 445k & 397k & 397k \\
\midrule
$\opn{XNOR_{IL}}$                                       & $2\!\to\! 1$& 512 & 420k & 397k & 420k & 420k & 445k & 397k & 397k \\
$\opn{OR_{IL}}$                                         & $2\!\to\! 1$& 512 & 420k & 397k & 420k & 420k & 445k & 397k & 397k \\
$\opn{OR}, \opn{AND_{IL}}$ (d)                          & $2\!\to\! 2$& 512 & 577k & 530k & 577k & 578k & 626k & 530k & 530k \\
$\opn{OR}, \opn{XNOR_{IL}}$ (p)                         & $2\!\to\! 1$& 512 & 420k & 397k & 420k & 420k & 445k & 397k & 397k \\
$\opn{OR}, \opn{XNOR_{IL}}$ (d)                         & $2\!\to\! 2$& 512 & 577k & 530k & 577k & 578k & 626k & 530k & 530k \\
$\opn{OR}, \opn{AND}, \opn{XNOR_{IL}}$ (p)              & $2\!\to\! 1$& 510 & 418k & 395k & 418k & 418k & 442k & 395k & 395k \\
$\opn{OR}, \opn{AND}, \opn{XNOR_{IL}}$ (d)              & $2\!\to\! 3$& 512 & 734k & 664k & 733k & 735k & 807k & 664k & 664k \\
\midrule
$\opn{XNOR_{NIL}}$                                      & $2\!\to\! 1$& 512 & 420k & 397k & 420k & 420k & 445k & 397k & 397k \\
$\opn{OR_{NIL}}$                                        & $2\!\to\! 1$& 512 & 420k & 397k & 420k & 420k & 445k & 397k & 397k \\
$\opn{OR}, \opn{AND_{NIL}}$ (d)                         & $2\!\to\! 2$& 512 & 577k & 530k & 577k & 578k & 626k & 530k & 530k \\
$\opn{OR}, \opn{XNOR_{NIL}}$ (p)                        & $2\!\to\! 1$& 512 & 420k & 397k & 420k & 420k & 445k & 397k & 397k \\
$\opn{OR}, \opn{XNOR_{NIL}}$ (d)                        & $2\!\to\! 2$& 512 & 577k & 530k & 577k & 578k & 626k & 530k & 530k \\
$\opn{OR}, \opn{AND}, \opn{XNOR_{NIL}}$ (p)             & $2\!\to\! 1$& 510 & 418k & 395k & 418k & 418k & 442k & 395k & 395k \\
$\opn{OR}, \opn{AND}, \opn{XNOR_{NIL}}$ (d)             & $2\!\to\! 3$& 512 & 734k & 664k & 733k & 735k & 807k & 664k & 664k \\
\midrule
$\opn{XNOR_{AIL}}$                                      & $2\!\to\! 1$& 512 & 420k & 397k & 420k & 420k & 445k & 397k & 397k \\
$\opn{OR_{AIL}}$                                        & $2\!\to\! 1$& 512 & 420k & 397k & 420k & 420k & 445k & 397k & 397k \\
$\opn{OR}, \opn{AND_{AIL}}$ (d)                         & $2\!\to\! 2$& 512 & 577k & 530k & 577k & 578k & 626k & 530k & 530k \\
$\opn{OR}, \opn{XNOR_{AIL}}$ (p)                        & $2\!\to\! 1$& 512 & 420k & 397k & 420k & 420k & 445k & 397k & 397k \\
$\opn{OR}, \opn{XNOR_{AIL}}$ (d)                        & $2\!\to\! 2$& 512 & 577k & 530k & 577k & 578k & 626k & 530k & 530k \\
$\opn{OR}, \opn{AND}, \opn{XNOR_{AIL}}$ (p)             & $2\!\to\! 1$& 510 & 418k & 395k & 418k & 418k & 442k & 395k & 395k \\
$\opn{OR}, \opn{AND}, \opn{XNOR_{AIL}}$ (d)             & $2\!\to\! 3$& 512 & 734k & 664k & 733k & 735k & 807k & 664k & 664k \\
\midrule
$\opn{XNOR_{NAIL}}$                                     & $2\!\to\! 1$& 512 & 420k & 397k & 420k & 420k & 445k & 397k & 397k \\
$\opn{OR_{NAIL}}$                                       & $2\!\to\! 1$& 512 & 420k & 397k & 420k & 420k & 445k & 397k & 397k \\
$\opn{OR}, \opn{AND_{NAIL}}$ (d)                        & $2\!\to\! 2$& 512 & 577k & 530k & 577k & 578k & 626k & 530k & 530k \\
$\opn{OR}, \opn{XNOR_{NAIL}}$ (p)                       & $2\!\to\! 1$& 512 & 420k & 397k & 420k & 420k & 445k & 397k & 397k \\
$\opn{OR}, \opn{XNOR_{NAIL}}$ (d)                       & $2\!\to\! 2$& 512 & 577k & 530k & 577k & 578k & 626k & 530k & 530k \\
$\opn{OR}, \opn{AND}, \opn{XNOR_{NAIL}}$ (p)            & $2\!\to\! 1$& 510 & 418k & 395k & 418k & 418k & 442k & 395k & 395k \\
$\opn{OR}, \opn{AND}, \opn{XNOR_{NAIL}}$ (d)            & $2\!\to\! 3$& 512 & 734k & 664k & 733k & 735k & 807k & 664k & 664k \\
\bottomrule
\end{tabular}
  }
}
\end{table}

We found that reducing the width from 650 to 512 (and hence reducing total number of trainable parameters) reduced the performance of $2\!\to\! 1$ activation functions $\opn{XNOR_{AIL}}$, $\opn{OR_{AIL}}$, \{$\opn{OR}, \opn{XNOR_{AIL}}$ (p)\}, and \{$\opn{OR}, \opn{AND}, \opn{XNOR_{AIL}}$ (p)\}.
Increasing the width from 438 to 512 increased the performance of \{$\opn{OR}, \opn{AND}, \opn{XNOR_{AIL}}$ (d)\}.


\FloatBarrier

\subsection{Abstract reasoning}
\label{s:abstractreasoning}
\label{a:abstractreasoning}


Abstract reasoning is challenging for neural networks to learn because their structure and objective function are more effective for tasks which come instinctively to humans \citep[``System~1'' of][]{Kahneman2011}, such as object recognition, as opposed to demanding 
(``System~2'') logic tasks.

In recent years, several challenges have been proposed to evaluate the ability of neural networks to perform abstract reasoning.
The Raven's Progressive Matrices \citep{Raven1938} are a long-standing IQ test, which have been emulated by \citet{pgm} with Procedurally Generated Matrices (PGM), and then the RAVEN task by \citep{raven}.
Recently \citet{sran} and \citet{ravenfair} have improved on that task with I-RAVEN and RAVEN-FAIR, respectively, both aiming to make the task more balanced.
Other abstract reasoning tasks have also been proposed \citep{svrt,clevr,pgm}.

We considered the application of AIL activation functions in the context of the I-RAVEN task, by adapting the Stratified Rule-Aware Network (SRAN) of \citet{sran} to include our AIL activation functions.
We first added LayerNorm to the network, and then swapped out the seven ReLU activations in the gating module.
The architecture for the three ResNet-18 \citep{resnet} base models were unchanged.
Where necessary, the number of units per layer was modified to facilitate the change in dimensionality caused by our activation function.
The networks were trained using the same procedure as described by \citet{sran}.

\begin{table}[h]
\small
  \centering
  \caption{%
Performance of SRAN-based models on the I-RAVEN dataset \citep{sran}.
Bold: best.
Underlined: second best.
Background: color scale from worst in to best, linear with accuracy value.
}
\label{tab:sran}
\def\ccAmin{53.5143}
\def\ccAmax{64.3071}
\def\ccA[#1]#2{\heatmapcell{#1}{#2}{\ccAmin}{\ccAmax}}
\def\ccBmin{67.25}
\def\ccBmax{84.4}
\def\ccB[#1]#2{\heatmapcell{#1}{#2}{\ccBmin}{\ccBmax}}
\def\ccCmin{43.75}
\def\ccCmax{50.05}
\def\ccC[#1]#2{\heatmapcell{#1}{#2}{\ccCmin}{\ccCmax}}
\def\ccDmin{39.7}
\def\ccDmax{44.0}
\def\ccD[#1]#2{\heatmapcell{#1}{#2}{\ccDmin}{\ccDmax}}
\def\ccEmin{62.75}
\def\ccEmax{71.55}
\def\ccE[#1]#2{\heatmapcell{#1}{#2}{\ccEmin}{\ccEmax}}
\def\ccFmin{44.5}
\def\ccFmax{49.55}
\def\ccF[#1]#2{\heatmapcell{#1}{#2}{\ccFmin}{\ccFmax}}
\def\ccGmin{57.45}
\def\ccGmax{76.55}
\def\ccG[#1]#2{\heatmapcell{#1}{#2}{\ccGmin}{\ccGmax}}
\def\ccHmin{57.3}
\def\ccHmax{77.0}
\def\ccH[#1]#2{\heatmapcell{#1}{#2}{\ccHmin}{\ccHmax}}
\centerline{
  \scalebox{0.92}{
\begin{tabular}{lrrrrrrrrr}
\toprule
 & & \multicolumn{8}{c}{I-RAVEN Test Acc (\%)} \\
\cmidrule(l){3-10}
Activation function & Params & Acc & Center & 2×2G & 3×3G & O-IC & O-IG & L-R & U-D \\
\toprule
ReLU, Base SRAN \citep{sran}                            & 44.0M & \ccA[60.8]{$60.8$} & \ccB[78.2]{$78.2$} &\ccC[50.1]{$\mbf{50.1}$}& \ccD[42.4]{$42.4$} & \ccE[68.2]{$68.2$} & \ccF[46.3]{$46.3$} & \ccG[70.1]{$70.1$} & \ccH[70.3]{$70.3$} \\
ReLU, SRAN+LayerNorm                                    & 45.6M & \ccA[62.95]{$\mbs{63.0}$} & \ccB[84.05]{$\mbs{84.0}$} & \ccC[50.05]{$\mbf{50.0}$} & \ccD[42.55]{$42.5$} & \ccE[69.9]{$\mbs{69.9}$} & \ccF[48.3]{$48.3$} & \ccG[73.5]{$73.5$} & \ccH[72.3]{$\mbs{72.3}$} \\
PReLU                                                   & 45.6M & \ccA[54.5429]{$54.5$} & \ccB[69.2]{$69.2$} & \ccC[45.4]{$45.4$} & \ccD[40.45]{$40.5$} & \ccE[64.4]{$64.4$} & \ccF[47.6]{$47.6$} & \ccG[57.45]{$57.5$} & \ccH[57.3]{$57.3$} \\
\midrule
CELU                                                    & 45.6M & \ccA[56.6071]{$56.6$} & \ccB[73.55]{$73.5$} & \ccC[46.5]{$46.5$} & \ccD[40.95]{$41.0$} & \ccE[66.6]{$66.6$} & \ccF[46.15]{$46.1$} & \ccG[62.8]{$62.8$} & \ccH[59.7]{$59.7$} \\
SELU                                                    & 45.6M & \ccA[53.5143]{$53.5$} & \ccB[67.25]{$67.2$} & \ccC[43.75]{$43.8$} & \ccD[40.15]{$40.1$} & \ccE[62.75]{$62.8$} & \ccF[44.5]{$44.5$} & \ccG[58.05]{$58.0$} & \ccH[58.15]{$58.1$} \\
GELU                                                    & 45.6M & \ccA[61.3643]{$61.4$} & \ccB[80.8]{$80.8$} & \ccC[49.25]{$49.2$} & \ccD[42.55]{$42.5$} & \ccE[69.15]{$69.2$} & \ccF[48.3]{$48.3$} & \ccG[70.3]{$70.3$} & \ccH[69.2]{$69.2$} \\
SiLU                                                    & 45.6M & \ccA[59.4286]{$59.4$} & \ccB[78.05]{$78.0$} & \ccC[47.2]{$47.2$} & \ccD[41.35]{$41.4$} & \ccE[66.3]{$66.3$} & \ccF[47.75]{$47.8$} & \ccG[69.1]{$69.1$} & \ccH[66.25]{$66.2$} \\
\midrule
$\opn{Max}$                                             & 44.6M & \ccA[57.8]{$57.8$} & \ccB[76.3]{$76.3$} & \ccC[45.2]{$45.2$} & \ccD[39.7]{$39.7$} & \ccE[65.35]{$65.3$} & \ccF[48.7]{$\mbs{48.7}$} & \ccG[64.6]{$64.6$} & \ccH[64.7]{$64.7$} \\
$\opn{Max},\opn{Min}$ (d)                               & 45.6M & \ccA[60.1857]{$60.2$} & \ccB[80.25]{$80.2$} & \ccC[49.55]{$\mbs{49.5}$} & \ccD[41.85]{$41.9$} & \ccE[66.45]{$66.5$} & \ccF[47.0]{$47.0$} & \ccG[68.55]{$68.5$} & \ccH[67.65]{$67.7$} \\
\midrule
$\opn{XNOR_{AIL}}$                                      & 44.6M & \ccA[57.6857]{$57.7$} & \ccB[74.65]{$74.7$} & \ccC[45.95]{$46.0$} & \ccD[39.95]{$40.0$} & \ccE[66.8]{$66.8$} & \ccF[47.7]{$47.7$} & \ccG[65.6]{$65.6$} & \ccH[63.2]{$63.2$} \\
$\opn{OR_{AIL}}$                                        & 44.6M & \ccA[64.3071]{$\mbf{64.3}$} & \ccB[84.4]{$\mbf{84.4}$} & \ccC[49.55]{$\mbs{49.5}$} & \ccD[44.0]{$\mbf{44.0}$} & \ccE[71.55]{$\mbf{71.5}$} & \ccF[47.1]{$47.1$} & \ccG[76.55]{$\mbf{76.5}$} & \ccH[77.0]{$\mbf{77.0}$} \\
$\opn{OR}, \opn{AND_{AIL}}$ (d)                         & 45.6M & \ccA[57.4857]{$57.5$} & \ccB[74.65]{$74.7$} & \ccC[45.6]{$45.6$} & \ccD[41.3]{$41.3$} & \ccE[68.35]{$68.3$} & \ccF[44.9]{$44.9$} & \ccG[64.45]{$64.5$} & \ccH[63.05]{$63.0$} \\
$\opn{OR}, \opn{XNOR_{AIL}}$ (p)                        & 44.6M & \ccA[59.8]{$59.8$} & \ccB[80.55]{$80.5$} & \ccC[45.75]{$45.8$} & \ccD[41.4]{$41.4$} & \ccE[67.15]{$67.2$} & \ccF[48.2]{$48.2$} & \ccG[67.45]{$67.5$} & \ccH[68.1]{$68.1$} \\
$\opn{OR}, \opn{XNOR_{AIL}}$ (d)                        & 45.6M & \ccA[62.8429]{$62.8$} & \ccB[83.7]{$83.7$} & \ccC[49.1]{$49.1$} & \ccD[43.3]{$\mbs{43.3}$} & \ccE[68.1]{$68.1$} & \ccF[49.55]{$\mbf{49.5}$} & \ccG[73.85]{$\mbs{73.8}$} & \ccH[72.25]{$72.2$} \\
$\opn{OR}, \opn{AND}, \opn{XNOR_{AIL}}$ (p)             & 44.6M & \ccA[55.0]{$55.0$} & \ccB[68.25]{$68.2$} & \ccC[47.2]{$47.2$} & \ccD[41.1]{$41.1$} & \ccE[65.05]{$65.0$} & \ccF[45.0]{$45.0$} & \ccG[60.95]{$61.0$} & \ccH[57.45]{$57.5$} \\
\bottomrule
\end{tabular}
  }
}
\end{table}

We found the network using $\opn{OR_{AIL}}$ activation function performed best overall, and across most of the subtasks.
The second best performing activation functions were \{$\opn{OR_{AIL}}$, $\opn{XNOR_{AIL}}$ (d)\} and ReLU.

\FloatBarrier

\subsection{Compositional Zero-Shot Learning}
\label{s:compositional}
\label{s:czsl}
\label{a:compositional}
\label{a:czsl}

Zero-shot learning encompasses all problems which involve completing novel tasks which the subject has never seen before.
The subject must infer both the task and its solution based on their previous experiences (a meta-learning task).
Compositional zero-shot learning is a subset of zero-shot learning which involves combining knowledge about multiple stimulus properties together in novel pairings.
For instance, if the network has been trained on ``sliced bread,'' ``sliced pear,'' and ``caramelized pear,'' is it able to classify images of ``caramelized bread'' despite having never seen an example of this before?

We based our experiments on the Task-driven Modular Networks (TMN) proposed by \citet{taskdriven}.
We used the code shared by the authors, but were unable to replicate the results they reported in the paper when using a different random seed (see \autoref{tab:tmn-og}).
We adapted this network by changing out all the ReLU activation functions in the gate and module networks with a different activation function.
Because the modules each terminate with an activation function, we needed to double the size of the hidden layer for some of our networks in order to maintain the dimensionality of the output.
Consequently, some experiments had around 50\% more parameters in total than others.

Experiments were performed on the MIT-States dataset \citep{mit-states}.
We trained and tested on the corresponding partitions of the dataset as introduced by \citet{taskdriven}.
We used the same paradigm as \citet{taskdriven}: ADAM \citep{adam} with a learning rate of 0.001 for the module network and 0.01 for the gating network, momentum 0.9, batch size 256, weight decay \num{5e-5}.
We evaluated the network using the AUC between seen and unseen samples \citep{taskdriven}.
The network was trained until the validation AUC had plateaued, determined by not increasing for 5 epochs.
We selected the model from the epoch with highest validation AUC to apply to the test set.
The best performance was typically attained after around 5 epochs.

As shown in \autoref{tab:tmn-og}, we find that $\opn{OR_{AIL}}$ performs best, but uses more parameters, and is very closely followed by \{$\opn{OR}, \opn{XNOR_{AIL}}$ (d)\},.

\begin{table}[htp]
\small
  \centering
  \caption{%
Performance of TMN-based networks at compositional zero-shot learning (CZSL) on the MIT-States dataset.
Mean (standard error) of $n\!=\!5$ random initializations. Bold: best. Underlined: top two. Background: linear color scale from ReLU baseline (white) to best (black).
}
\label{tab:tmn-og}
\def\ccAmin{2.47}%
\def\ccAmax{2.65}%
\def\ccA[#1]#2{\heatmapcell{#1}{#2}{\ccAmin}{\ccAmax}}%
\def\ccBmin{6.42}%
\def\ccBmax{6.80}%
\def\ccB[#1]#2{\heatmapcell{#1}{#2}{\ccBmin}{\ccBmax}}%
\def\ccCmin{10.27}%
\def\ccCmax{10.78}%
\def\ccC[#1]#2{\heatmapcell{#1}{#2}{\ccCmin}{\ccCmax}}%
  \scalebox{0.96}{
\begin{tabular}{lccrrr}
\toprule
 & & & \multicolumn{3}{c}{MIT-States Test AUC (\%)} \\
\cmidrule(l){4-6}
Activation function                                     & Mapping      & Params & Top-1           & Top-2           & Top-3           \\
\toprule
TMN \citep{taskdriven}                      & $1\!\to\! 1$ & & \multicolumn{1}{c}{$2.9$}    & \multicolumn{1}{c}{$7.1$} & \multicolumn{1}{c}{$11.5$} \\
TMN repro. ($\opn{ReLU}$)                   & $1\!\to\! 1$ & 438\,k & \ccA[ 2.47]{$ 2.47\wpm{0.07}$}      & \ccB[ 6.42]{$ 6.42\wpm{0.09}$}       & \ccC[10.27]{$10.27\wpm{0.11}$} \\
\midrule
$\opn{Max}$                                 & $2\!\to\! 1$ & 650\,k & \ccA[ 2.42]{$ 2.42\wpm{0.07}$}      & \ccB[ 6.37]{$ 6.37\wpm{0.08}$}       & \ccC[10.28]{$10.28\wpm{0.06}$} \\
$\opn{Max},\opn{Min}$ (d)                   & $1\!\to\! 1$ & 438\,k & \ccA[ 2.53]{$ 2.53\wpm{0.06}$}      & \ccB[ 6.69]{$ 6.69\wpm{0.11}$}       & \ccC[10.61]{$10.61\wpm{0.14}$} \\
\midrule
$\opn{XNOR_{AIL}}$                          & $2\!\to\! 1$ & 650\,k & \ccA[ 1.22]{$ 1.22\wpm{0.05}$}      & \ccB[ 3.47]{$ 3.47\wpm{0.12}$}       & \ccC[ 5.82]{$ 5.82\wpm{0.19}$} \\
$\opn{OR_{AIL}}$                            & $2\!\to\! 1$ & 650\,k & \ccA[ 2.65]{$\mbf{2.65}\wpm{0.05}$} & \ccB[ 6.80]{$\mbf{6.80}\wpm{0.08}$}  & \ccC[10.78]{$\mbf{10.78}\wpm{0.13}$} \\
$\opn{OR}, \opn{AND_{AIL}}$ (d)             & $2\!\to\! 2$ & 438\,k & \ccA[ 2.61]{$\mbs{2.61}\wpm{0.05}$} & \ccB[ 6.73]{$\mbs{6.73}\wpm{0.05}$}  & \ccC[10.77]{$\mbs{10.77}\wpm{0.12}$} \\
$\opn{OR}, \opn{XNOR_{AIL}}$ (d)            & $2\!\to\! 2$ & 438\,k & \ccA[ 1.89]{$ 1.89\wpm{0.13}$}      & \ccB[ 5.12]{$ 5.12\wpm{0.21}$}       & \ccC[ 8.35]{$ 8.35\wpm{0.24}$} \\
\bottomrule
\end{tabular}
  }
\end{table}

Since we could not flexibly scale the total number of parameters in the network with the original architecture, we modified the TMN architecture by adding an additional linear layer to the end of each module which projects from the activation function to the embedding space.
The modules would otherwise terminate with an activation function, which makes it difficult to handle activation functions which map $2\!\to\! 1$.
We dub the modified version of the network ``TMNx.''

Comparing the TMN results in \autoref{tab:tmn-og} to TMNx in \autoref{tab:tmnx}, we can see that adding the extra linear layer improved performance of the network in of itself.
Intuitively, this makes sense since the output of the TMN modules are weighted with the output of the gating network and then summed, and this weighting and summing of evidence is best performed with on logits instead of the truncated output of ReLU units.
But also, performance may have improved just because the model became larger.

We performed a hyperparameter search across the training parameters for the new network against the validation set using the ReLU activation function only.
We adopted the hyperparameters discovered for ReLU for all other activation functions.
The batch size was reduced to 128 due to the increase in size of the model.
The discovered hyperparameters were a learning rate of \num{3e-3} for both the module and gating network, and a weight decay of \num{1e-5}.
Other training hyperparameters, such as the ratio of negative samples to present, were left unchanged.

In order to match the number of parameters in the network, we used the original TMN hidden width of $64$ for the module and gater networks with activation functions which map $1\!\to \!1$, and increased this to $70$ for activation function which map $2\!\to \!1$, to maintain the total number of trainable parameters.
To investigate a comprehensive set of baselines, we compared against every activation function implemented in PyTorch~v1.10.
The results are shown in \autoref{tab:tmnx}.

\hider{
\begin{table}[h]
\small
  \centering
  \caption{%
Performance of TMNx networks at compositional zero-shot learning (CZSL) on the MIT-States dataset.
Pre-activation width of 64 or 70 (depending on activation function, to keep the number of parameters approximately constant).
Mean (standard error) of $n\!=\!5$ random initializations.
Bold: best.
Underlined: top two.
Italic: no significant difference from best (two-sided Student's $t$-test, $p\!>\!0.05$).
Background: color scale from second-worst in column to best, linear with accuracy value.
}
\label{tab:tmnx}
\def\ccAmin{2.4543879857411404}%
\def\ccAmax{3.173191639676634}%
\def\ccA[#1]#2{\heatmapcell{#1}{#2}{\ccAmin}{\ccAmax}}%
\def\ccBmin{6.511783811152222}%
\def\ccBmax{7.805374189785475}%
\def\ccB[#1]#2{\heatmapcell{#1}{#2}{\ccBmin}{\ccBmax}}%
\def\ccCmin{10.450496634078725}%
\def\ccCmax{12.366833171749144}%
\def\ccC[#1]#2{\heatmapcell{#1}{#2}{\ccCmin}{\ccCmax}}%
  \scalebox{0.96}{
\begin{tabular}{lccrrr}
\toprule
 & & & \multicolumn{3}{c}{MIT-States Test Accuracy (\%)} \\
\cmidrule(r){4-6}
Activation function                                     & Mapping & Params & \multicolumn{1}{c}{Top-1} & \multicolumn{1}{c}{Top-2} & \multicolumn{1}{c}{Top-3} \\
\toprule
$\opn{ReLU}$                                            & $1\!\to\! 1$ & 1.15M & \ccA[ 2.7594]{$ 2.76\wpm{0.08}$} & \ccB[ 7.1055]{$ 7.11\wpm{0.08}$} & \ccC[11.2569]{$11.26\wpm{0.09}$} \\
LeakyReLU                                               & $1\!\to\! 1$ & 1.15M & \ccA[ 2.7165]{$ 2.72\wpm{0.09}$} & \ccB[ 6.9855]{$ 6.99\wpm{0.17}$} & \ccC[10.9468]{$10.95\wpm{0.20}$} \\
PReLU                                                   & $1\!\to\! 1$ & 1.15M & \ccA[ 2.8280]{$ 2.83\wpm{0.06}$} & \ccB[ 7.2549]{$ 7.25\wpm{0.14}$} & \ccC[11.4415]{$11.44\wpm{0.17}$} \\
Softplus                                                & $1\!\to\! 1$ & 1.15M & \ccA[ 2.9215]{$\mbns{ 2.92}\wpm{0.11}$} & \ccB[ 7.4650]{$\mbns{ 7.46}\wpm{0.19}$} & \ccC[11.7499]{$11.75\wpm{0.18}$} \\
\midrule
ELU                                                     & $1\!\to\! 1$ & 1.15M & \ccA[ 3.1261]{$\mbs{\mbns{ 3.13}}\wpm{0.05}$} & \ccB[ 7.7834]{$\mbs{\mbns{ 7.78}}\wpm{0.14}$} & \ccC[12.0398]{$\mbs{\mbns{12.04}}\wpm{0.20}$} \\
CELU                                                    & $1\!\to\! 1$ & 1.15M & \ccA[ 3.1732]{$\mbf{\mbns{ 3.17}}\wpm{0.06}$} & \ccB[ 7.8054]{$\mbf{\mbns{ 7.81}}\wpm{0.15}$} & \ccC[12.0013]{$\mbns{12.00}\wpm{0.19}$} \\
SELU                                                    & $1\!\to\! 1$ & 1.15M & \ccA[ 3.0508]{$\mbns{ 3.05}\wpm{0.04}$} & \ccB[ 7.7724]{$\mbns{ 7.77}\wpm{0.10}$} & \ccC[12.3668]{$\mbf{\mbns{12.37}}\wpm{0.16}$} \\
GELU                                                    & $1\!\to\! 1$ & 1.15M & \ccA[ 2.8089]{$ 2.81\wpm{0.06}$} & \ccB[ 7.1900]{$ 7.19\wpm{0.11}$} & \ccC[11.3412]{$11.34\wpm{0.15}$} \\
SiLU                                                    & $1\!\to\! 1$ & 1.15M & \ccA[ 2.8714]{$ 2.87\wpm{0.08}$} & \ccB[ 7.3431]{$\mbns{ 7.34}\wpm{0.14}$} & \ccC[11.6002]{$11.60\wpm{0.18}$} \\
Hardswish                                               & $1\!\to\! 1$ & 1.15M & \ccA[ 2.8420]{$ 2.84\wpm{0.08}$} & \ccB[ 7.2270]{$ 7.23\wpm{0.11}$} & \ccC[11.4838]{$11.48\wpm{0.11}$} \\
Mish                                                    & $1\!\to\! 1$ & 1.15M & \ccA[ 2.8995]{$ 2.90\wpm{0.08}$} & \ccB[ 7.3722]{$\mbns{ 7.37}\wpm{0.17}$} & \ccC[11.5904]{$11.59\wpm{0.13}$} \\
\midrule
Softsign                                                & $1\!\to\! 1$ & 1.15M & \ccA[ 2.8978]{$ 2.90\wpm{0.04}$} & \ccB[ 7.2768]{$ 7.28\wpm{0.09}$} & \ccC[11.5322]{$11.53\wpm{0.18}$} \\
Tanh                                                    & $1\!\to\! 1$ & 1.15M & \ccA[ 2.9276]{$ 2.93\wpm{0.09}$} & \ccB[ 7.4695]{$\mbns{ 7.47}\wpm{0.16}$} & \ccC[11.6358]{$11.64\wpm{0.20}$} \\
\midrule
GLU                                                     & $2\!\to\! 1$ & 1.16M & \ccA[ 2.5867]{$ 2.59\wpm{0.06}$} & \ccB[ 6.8578]{$ 6.86\wpm{0.15}$} & \ccC[10.9114]{$10.91\wpm{0.24}$} \\
\midrule
$\opn{Max}$                                             & $2\!\to\! 1$ & 1.16M & \ccA[ 2.4544]{$ 2.45\wpm{0.08}$} & \ccB[ 6.5181]{$ 6.52\wpm{0.18}$} & \ccC[10.5336]{$10.53\wpm{0.29}$} \\
$\opn{Max},\opn{Min}$ (d)                               & $1\!\to\! 1$ & 1.15M & \ccA[ 2.5334]{$ 2.53\wpm{0.10}$} & \ccB[ 6.6133]{$ 6.61\wpm{0.15}$} & \ccC[10.6482]{$10.65\wpm{0.15}$} \\
\midrule
$\opn{XNOR_{AIL}}$                                      & $2\!\to\! 1$ & 1.16M & \ccA[ 1.8818]{$ 1.88\wpm{0.06}$} & \ccB[ 4.9391]{$ 4.94\wpm{0.15}$} & \ccC[ 7.9910]{$ 7.99\wpm{0.19}$} \\
$\opn{OR_{AIL}}$                                        & $2\!\to\! 1$ & 1.16M & \ccA[ 2.6113]{$ 2.61\wpm{0.08}$} & \ccB[ 6.8346]{$ 6.83\wpm{0.12}$} & \ccC[10.9730]{$10.97\wpm{0.20}$} \\
$\opn{OR}, \opn{AND_{AIL}}$ (d)                         & $1\!\to\! 1$ & 1.15M & \ccA[ 2.8054]{$ 2.81\wpm{0.06}$} & \ccB[ 7.1158]{$ 7.12\wpm{0.10}$} & \ccC[11.2698]{$11.27\wpm{0.14}$} \\
$\opn{OR}, \opn{XNOR_{AIL}}$ (p)                        & $2\!\to\! 1$ & 1.16M & \ccA[ 2.6113]{$ 2.61\wpm{0.08}$} & \ccB[ 6.8346]{$ 6.83\wpm{0.12}$} & \ccC[10.9730]{$10.97\wpm{0.20}$} \\
$\opn{OR}, \opn{XNOR_{AIL}}$ (d)                        & $1\!\to\! 1$ & 1.15M & \ccA[ 2.4614]{$ 2.46\wpm{0.04}$} & \ccB[ 6.5118]{$ 6.51\wpm{0.11}$} & \ccC[10.4505]{$10.45\wpm{0.17}$} \\
$\opn{OR}, \opn{AND}, \opn{XNOR_{AIL}}$ (p)             & $2\!\to\! 1$ & 1.16M & \ccA[ 2.7168]{$ 2.72\wpm{0.11}$} & \ccB[ 7.1466]{$ 7.15\wpm{0.19}$} & \ccC[11.3242]{$11.32\wpm{0.22}$} \\
$\opn{OR}, \opn{AND}, \opn{XNOR_{AIL}}$ (d)             & $2\!\to\! 3$ & 1.22M & \ccA[ 2.7258]{$ 2.73\wpm{0.10}$} & \ccB[ 6.9483]{$ 6.95\wpm{0.16}$} & \ccC[11.0251]{$11.03\wpm{0.09}$} \\
\bottomrule
\end{tabular}
  }
\end{table}
}

\begin{table}[h]
\small
  \centering
  \caption{%
Performance of TMNx networks at compositional zero-shot learning (CZSL) on the MIT-States dataset.
Pre-activation width of 60, 64, 70, or 72 (depending on activation function, to keep the number of parameters approximately constant).
Mean (standard error) of $n\!=\!5$ random initializations.
Bold: best.
Underlined: top two.
Italic: no significant difference from best (two-sided Student's $t$-test, $p\!>\!0.05$).
Background: color scale from second-worst in column to best, linear with accuracy value.
}
\label{tab:tmnx}
  \scalebox{0.96}{
\def\ccAmin{1.7823763506491868}%
\def\ccAmax{3.173191639676634}%
\def\ccA[#1]#2{\heatmapcell{#1}{#2}{\ccAmin}{\ccAmax}}%
\def\ccBmin{4.840848213588182}%
\def\ccBmax{7.805374189785475}%
\def\ccB[#1]#2{\heatmapcell{#1}{#2}{\ccBmin}{\ccBmax}}%
\def\ccCmin{7.950284499804558}%
\def\ccCmax{12.366833171749144}%
\def\ccC[#1]#2{\heatmapcell{#1}{#2}{\ccCmin}{\ccCmax}}%
\begin{tabular}{lcccrrr}
\toprule
 & & & \multicolumn{3}{c}{MIT-States Test Accuracy (\%)} \\
\cmidrule(r){4-6}
Activation function                                     & Mapping & Width & Params & \multicolumn{1}{c}{Top-1} & \multicolumn{1}{c}{Top-2} \\
\toprule
$\opn{ReLU}$                                            & $1\!\to\! 1$ & 64 & 1.15M & \ccA[ 2.7594]{$ 2.76\wpm{0.08}$} & \ccB[ 7.1055]{$ 7.11\wpm{0.08}$} & \ccC[11.2569]{$11.26\wpm{0.09}$} \\
LeakyReLU                                               & $1\!\to\! 1$ & 64 & 1.15M & \ccA[ 2.7165]{$ 2.72\wpm{0.09}$} & \ccB[ 6.9855]{$ 6.99\wpm{0.17}$} & \ccC[10.9468]{$10.95\wpm{0.20}$} \\
PReLU                                                   & $1\!\to\! 1$ & 64 & 1.15M & \ccA[ 2.8280]{$ 2.83\wpm{0.06}$} & \ccB[ 7.2549]{$ 7.25\wpm{0.14}$} & \ccC[11.4415]{$11.44\wpm{0.17}$} \\
Softplus                                                & $1\!\to\! 1$ & 64 & 1.15M & \ccA[ 2.9215]{$\mbns{ 2.92}\wpm{0.11}$} & \ccB[ 7.4650]{$\mbns{ 7.46}\wpm{0.19}$} & \ccC[11.7499]{$11.75\wpm{0.18}$} \\
\midrule
ELU                                                     & $1\!\to\! 1$ & 64 & 1.15M & \ccA[ 3.1261]{$\mbs{\mbns{ 3.13}}\wpm{0.05}$} & \ccB[ 7.7834]{$\mbs{\mbns{ 7.78}}\wpm{0.14}$} & \ccC[12.0398]{$\mbs{\mbns{12.04}}\wpm{0.20}$} \\
CELU                                                    & $1\!\to\! 1$ & 64 & 1.15M & \ccA[ 3.1732]{$\mbf{\mbns{ 3.17}}\wpm{0.06}$} & \ccB[ 7.8054]{$\mbf{\mbns{ 7.81}}\wpm{0.15}$} & \ccC[12.0013]{$\mbns{12.00}\wpm{0.19}$} \\
SELU                                                    & $1\!\to\! 1$ & 64 & 1.15M & \ccA[ 3.0508]{$\mbns{ 3.05}\wpm{0.04}$} & \ccB[ 7.7724]{$\mbns{ 7.77}\wpm{0.10}$} & \ccC[12.3668]{$\mbf{\mbns{12.37}}\wpm{0.16}$} \\
GELU                                                    & $1\!\to\! 1$ & 64 & 1.15M & \ccA[ 2.8089]{$ 2.81\wpm{0.06}$} & \ccB[ 7.1900]{$ 7.19\wpm{0.11}$} & \ccC[11.3412]{$11.34\wpm{0.15}$} \\
SiLU                                                    & $1\!\to\! 1$ & 64 & 1.15M & \ccA[ 2.8714]{$ 2.87\wpm{0.08}$} & \ccB[ 7.3431]{$\mbns{ 7.34}\wpm{0.14}$} & \ccC[11.6002]{$11.60\wpm{0.18}$} \\
Hardswish                                               & $1\!\to\! 1$ & 64 & 1.15M & \ccA[ 2.8420]{$ 2.84\wpm{0.08}$} & \ccB[ 7.2270]{$ 7.23\wpm{0.11}$} & \ccC[11.4838]{$11.48\wpm{0.11}$} \\
Mish                                                    & $1\!\to\! 1$ & 64 & 1.15M & \ccA[ 2.8995]{$ 2.90\wpm{0.08}$} & \ccB[ 7.3722]{$\mbns{ 7.37}\wpm{0.17}$} & \ccC[11.5904]{$11.59\wpm{0.13}$} \\
\midrule
Softsign                                                & $1\!\to\! 1$ & 64 & 1.15M & \ccA[ 2.8978]{$ 2.90\wpm{0.04}$} & \ccB[ 7.2768]{$ 7.28\wpm{0.09}$} & \ccC[11.5322]{$11.53\wpm{0.18}$} \\
Tanh                                                    & $1\!\to\! 1$ & 64 & 1.15M & \ccA[ 2.9276]{$ 2.93\wpm{0.09}$} & \ccB[ 7.4695]{$\mbns{ 7.47}\wpm{0.16}$} & \ccC[11.6358]{$11.64\wpm{0.20}$} \\
\midrule
GLU                                                     & $2\!\to\! 1$ & 70 & 1.16M & \ccA[ 2.5867]{$ 2.59\wpm{0.06}$} & \ccB[ 6.8578]{$ 6.86\wpm{0.15}$} & \ccC[10.9114]{$10.91\wpm{0.24}$} \\
\midrule
$\opn{Max}$                                             & $2\!\to\! 1$ & 70 & 1.16M & \ccA[ 2.4544]{$ 2.45\wpm{0.08}$} & \ccB[ 6.5181]{$ 6.52\wpm{0.18}$} & \ccC[10.5336]{$10.53\wpm{0.29}$} \\
$\opn{Max},\opn{Min}$ (d)                               & $2\!\to\! 2$ & 64 & 1.15M & \ccA[ 2.5334]{$ 2.53\wpm{0.10}$} & \ccB[ 6.6133]{$ 6.61\wpm{0.15}$} & \ccC[10.6482]{$10.65\wpm{0.15}$} \\
\midrule
SignedGeomean                                           & $2\!\to\! 1$ & 70 & 1.16M & \ccA[ 1.6482]{$ 1.65\wpm{0.04}$} & \ccB[ 4.4034]{$ 4.40\wpm{0.08}$} & \ccC[ 7.1752]{$ 7.18\wpm{0.07}$} \\
\midrule
$\opn{XNOR_{IL}}$                                       & $2\!\to\! 1$ & 70 & 1.16M & \ccA[ 2.0276]{$ 2.03\wpm{0.04}$} & \ccB[ 5.4294]{$ 5.43\wpm{0.02}$} & \ccC[ 8.8167]{$ 8.82\wpm{0.05}$} \\
$\opn{OR_{IL}}$                                         & $2\!\to\! 1$ & 70 & 1.16M & \ccA[ 2.8978]{$ 2.90\wpm{0.05}$} & \ccB[ 7.3965]{$\mbns{ 7.40}\wpm{0.11}$} & \ccC[11.6757]{$11.68\wpm{0.14}$} \\
$\opn{OR}, \opn{AND_{IL}}$ (d)                          & $2\!\to\! 2$ & 64 & 1.15M & \ccA[ 2.6808]{$ 2.68\wpm{0.06}$} & \ccB[ 7.0494]{$ 7.05\wpm{0.08}$} & \ccC[11.1687]{$11.17\wpm{0.12}$} \\
$\opn{OR}, \opn{XNOR_{IL}}$ (p)                         & $2\!\to\! 1$ & 72 & 1.19M & \ccA[ 2.4150]{$ 2.41\wpm{0.09}$} & \ccB[ 6.1987]{$ 6.20\wpm{0.13}$} & \ccC[ 9.8223]{$ 9.82\wpm{0.14}$} \\
$\opn{OR}, \opn{XNOR_{IL}}$ (d)                         & $2\!\to\! 2$ & 64 & 1.15M & \ccA[ 2.5995]{$ 2.60\wpm{0.04}$} & \ccB[ 6.6811]{$ 6.68\wpm{0.12}$} & \ccC[10.7050]{$10.71\wpm{0.17}$} \\
$\opn{OR}, \opn{AND}, \opn{XNOR_{IL}}$ (p)              & $2\!\to\! 1$ & 72 & 1.19M & \ccA[ 2.4957]{$ 2.50\wpm{0.04}$} & \ccB[ 6.3403]{$ 6.34\wpm{0.16}$} & \ccC[10.0535]{$10.05\wpm{0.24}$} \\
$\opn{OR}, \opn{AND}, \opn{XNOR_{IL}}$ (d)              & $2\!\to\! 3$ & 60 & 1.15M & \ccA[ 2.3901]{$ 2.39\wpm{0.05}$} & \ccB[ 6.3057]{$ 6.31\wpm{0.07}$} & \ccC[10.1745]{$10.17\wpm{0.12}$} \\
\midrule
$\opn{XNOR_{NIL}}$                                      & $2\!\to\! 1$ & 70 & 1.16M & \ccA[ 2.0773]{$ 2.08\wpm{0.07}$} & \ccB[ 5.5461]{$ 5.55\wpm{0.09}$} & \ccC[ 9.1088]{$ 9.11\wpm{0.10}$} \\
$\opn{OR_{NIL}}$                                        & $2\!\to\! 1$ & 70 & 1.16M & \ccA[ 2.9194]{$ 2.92\wpm{0.07}$} & \ccB[ 7.4156]{$\mbns{ 7.42}\wpm{0.13}$} & \ccC[11.7442]{$11.74\wpm{0.14}$} \\
$\opn{OR}, \opn{AND_{NIL}}$ (d)                         & $2\!\to\! 2$ & 64 & 1.15M & \ccA[ 2.8190]{$ 2.82\wpm{0.11}$} & \ccB[ 7.4345]{$\mbns{ 7.43}\wpm{0.23}$} & \ccC[11.6912]{$11.69\wpm{0.19}$} \\
$\opn{OR}, \opn{XNOR_{NIL}}$ (p)                        & $2\!\to\! 1$ & 72 & 1.19M & \ccA[ 2.4975]{$ 2.50\wpm{0.08}$} & \ccB[ 6.5075]{$ 6.51\wpm{0.14}$} & \ccC[10.4027]{$10.40\wpm{0.17}$} \\
$\opn{OR}, \opn{XNOR_{NIL}}$ (d)                        & $2\!\to\! 2$ & 64 & 1.15M & \ccA[ 2.4792]{$ 2.48\wpm{0.03}$} & \ccB[ 6.5568]{$ 6.56\wpm{0.11}$} & \ccC[10.6346]{$10.63\wpm{0.12}$} \\
$\opn{OR}, \opn{AND}, \opn{XNOR_{NIL}}$ (p)             & $2\!\to\! 1$ & 72 & 1.19M & \ccA[ 2.4836]{$ 2.48\wpm{0.03}$} & \ccB[ 6.5401]{$ 6.54\wpm{0.07}$} & \ccC[10.3754]{$10.38\wpm{0.08}$} \\
$\opn{OR}, \opn{AND}, \opn{XNOR_{NIL}}$ (d)             & $2\!\to\! 3$ & 60 & 1.15M & \ccA[ 2.5370]{$ 2.54\wpm{0.05}$} & \ccB[ 6.7393]{$ 6.74\wpm{0.11}$} & \ccC[10.7164]{$10.72\wpm{0.11}$} \\
\midrule
$\opn{XNOR_{AIL}}$                                      & $2\!\to\! 1$ & 70 & 1.16M & \ccA[ 1.8818]{$ 1.88\wpm{0.06}$} & \ccB[ 4.9391]{$ 4.94\wpm{0.15}$} & \ccC[ 7.9910]{$ 7.99\wpm{0.19}$} \\
$\opn{OR_{AIL}}$                                        & $2\!\to\! 1$ & 70 & 1.16M & \ccA[ 2.6113]{$ 2.61\wpm{0.08}$} & \ccB[ 6.8346]{$ 6.83\wpm{0.12}$} & \ccC[10.9730]{$10.97\wpm{0.20}$} \\
$\opn{OR}, \opn{AND_{AIL}}$ (d)                         & $2\!\to\! 2$ & 64 & 1.15M & \ccA[ 2.8054]{$ 2.81\wpm{0.06}$} & \ccB[ 7.1158]{$ 7.12\wpm{0.10}$} & \ccC[11.2698]{$11.27\wpm{0.14}$} \\
$\opn{OR}, \opn{XNOR_{AIL}}$ (p)                        & $2\!\to\! 1$ & 72 & 1.19M & \ccA[ 2.4955]{$ 2.50\wpm{0.06}$} & \ccB[ 6.3405]{$ 6.34\wpm{0.14}$} & \ccC[10.0732]{$10.07\wpm{0.21}$} \\
$\opn{OR}, \opn{XNOR_{AIL}}$ (d)                        & $2\!\to\! 2$ & 64 & 1.15M & \ccA[ 2.4614]{$ 2.46\wpm{0.04}$} & \ccB[ 6.5118]{$ 6.51\wpm{0.11}$} & \ccC[10.4505]{$10.45\wpm{0.17}$} \\
$\opn{OR}, \opn{AND}, \opn{XNOR_{AIL}}$ (p)             & $2\!\to\! 1$ & 72 & 1.19M & \ccA[ 2.4478]{$ 2.45\wpm{0.12}$} & \ccB[ 6.3331]{$ 6.33\wpm{0.21}$} & \ccC[10.0897]{$10.09\wpm{0.23}$} \\
$\opn{OR}, \opn{AND}, \opn{XNOR_{AIL}}$ (d)             & $2\!\to\! 3$ & 60 & 1.15M & \ccA[ 2.5656]{$ 2.57\wpm{0.03}$} & \ccB[ 6.8311]{$ 6.83\wpm{0.11}$} & \ccC[10.9724]{$10.97\wpm{0.17}$} \\
\midrule
$\opn{XNOR_{NAIL}}$                                     & $2\!\to\! 1$ & 70 & 1.16M & \ccA[ 1.7824]{$ 1.78\wpm{0.04}$} & \ccB[ 4.8408]{$ 4.84\wpm{0.11}$} & \ccC[ 7.9503]{$ 7.95\wpm{0.19}$} \\
$\opn{OR_{NAIL}}$                                       & $2\!\to\! 1$ & 70 & 1.16M & \ccA[ 2.5110]{$ 2.51\wpm{0.03}$} & \ccB[ 6.5843]{$ 6.58\wpm{0.12}$} & \ccC[10.5557]{$10.56\wpm{0.24}$} \\
$\opn{OR}, \opn{AND_{NAIL}}$ (d)                        & $2\!\to\! 2$ & 64 & 1.15M & \ccA[ 2.6745]{$ 2.67\wpm{0.04}$} & \ccB[ 6.9810]{$ 6.98\wpm{0.06}$} & \ccC[11.0236]{$11.02\wpm{0.16}$} \\
$\opn{OR}, \opn{XNOR_{NAIL}}$ (p)                       & $2\!\to\! 1$ & 72 & 1.19M & \ccA[ 2.2920]{$ 2.29\wpm{0.10}$} & \ccB[ 6.1344]{$ 6.13\wpm{0.19}$} & \ccC[ 9.9465]{$ 9.95\wpm{0.17}$} \\
$\opn{OR}, \opn{XNOR_{NAIL}}$ (d)                       & $2\!\to\! 2$ & 64 & 1.15M & \ccA[ 2.4304]{$ 2.43\wpm{0.06}$} & \ccB[ 6.3182]{$ 6.32\wpm{0.09}$} & \ccC[10.2828]{$10.28\wpm{0.18}$} \\
$\opn{OR}, \opn{AND}, \opn{XNOR_{NAIL}}$ (p)            & $2\!\to\! 1$ & 72 & 1.19M & \ccA[ 2.4494]{$ 2.45\wpm{0.07}$} & \ccB[ 6.3426]{$ 6.34\wpm{0.14}$} & \ccC[10.1063]{$10.11\wpm{0.12}$} \\
$\opn{OR}, \opn{AND}, \opn{XNOR_{NAIL}}$ (d)            & $2\!\to\! 3$ & 60 & 1.15M & \ccA[ 2.5783]{$ 2.58\wpm{0.09}$} & \ccB[ 6.8522]{$ 6.85\wpm{0.18}$} & \ccC[10.9399]{$10.94\wpm{0.24}$} \\
\bottomrule
\end{tabular}
  }
\end{table}

We found that the ELU family of activations (ELU, CELU, SELU) performed best across all three top-k values.
Our best activation functions were $\opn{OR_{(N)IL}}$ and \{$\opn{OR}, \opn{AND_{(N)IL}}$ (d)\}, which came next alongside Softplus and Tanh, outperforming ReLU.
Activation functions using $\opn{XNOR}$ and our AIL approximations performed less well on this task, and were did not surpass ReLU.
These results suggests this is a domain where logical activation functions are less well suited.

\end{document}